\newcommand{\vect}[1]{\ensuremath{\mathbf{#1}}}
\newcommand{\argmin}{\mathop{\rm argmin}}
\newcommand{\argmax}{\mathop{\rm argmax}}
\newcommand{\iprod}[2]{\langle #1, #2 \rangle}
\newcommand{\oracle}[2]{\mathcal{O}_{#1}^{\text{FTPL}}\left(#2\right)}
\newcommand{\maxoracle}[2]{\mathcal{O}_{#1}^{\text{max}}\left(#2\right)}
\newcommand{\minoracle}[2]{\mathcal{O}_{#1}^{\text{min}}\left(#2\right)}
\newcommand{\E}[1]{\mathbb{E}\left[#1\right]}
\newcommand{\Eover}[2]{\mathbb{E}_{#1}\left[#2\right]}
\newcommand{\x}{\vect{x}}
\newtheorem{theorem}{Theorem}[section]
\newtheorem{corollary}{Corollary}[section]
\newtheorem{lemma}{Lemma}[section]
\newtheorem{proposition}{Proposition}[section]
\newtheorem{assumption}{Assumption}[section]
\theoremstyle{definition}
\newtheorem{defn}{Definition}[section]
\newtheorem{exmp}{Example}[section]
\theoremstyle{remark}
\newtheorem{remark}{Remark}[section]
\def\RND{\textsl{\textsc{rnd}}}
\def\AVG{\textsl{\textsc{avg}}}
\newcommand{\printfnsymbol}[1]{\textsuperscript{\@fnsymbol{#1}}}
\begin{document}

\begin{center} 
{\LARGE{\bf{Learning Minimax Estimators via Online Learning}}}

\vspace*{.3in}

{\large{
\begin{tabular}{ccccc}
\begin{tabular}{@{}c@{}}Kartik Gupta$^{*\dagger}$\\ \texttt{kartikg1@andrew.cmu.edu }\end{tabular}~~~\begin{tabular}{@{}c@{}}Arun Sai Suggala$^{*\dagger}$\\ \texttt{asuggala@andrew.cmu.edu}\end{tabular}\vspace{0.1in}\\ 
\begin{tabular}{@{}c@{}}Adarsh Prasad$^\dagger$\\\texttt{adarshp@andrew.cmu.edu}\end{tabular}~~~\begin{tabular}{@{}c@{}}Praneeth Netrapalli$^\ddagger$\\\texttt{praneeth@microsoft.com}\end{tabular}~~~ 
\begin{tabular}{@{}c@{}}Pradeep Ravikumar$^\dagger$\\\texttt{pradeepr@cs.cmu.edu}\end{tabular} \\
\end{tabular}

\vspace*{.1in}

\begin{tabular}{ccc}
$^{\dagger}$Machine Learning Department, Carnegie Mellon University. \\
\end{tabular}

\begin{tabular}{c}
$^{\ddagger}$Microsoft Research, India. \\
\end{tabular}

\vspace*{.2in}

}}
\begin{abstract}
 We consider the problem of designing minimax estimators for estimating the parameters of a probability distribution. Unlike classical approaches such as the MLE and minimum distance estimators, we consider an algorithmic approach for constructing such estimators. 
 We view the problem of designing minimax estimators as finding a mixed strategy Nash equilibrium of a zero-sum game. By leveraging recent results in online learning with non-convex losses, we provide a general algorithm for finding a  mixed-strategy Nash equilibrium of general non-convex non-concave zero-sum games. Our algorithm requires access to two subroutines: (a) one which outputs a Bayes estimator corresponding to a given prior probability distribution, and (b) one which computes the worst-case risk of any given estimator. Given access to these two subroutines, we show that our algorithm outputs both a minimax estimator and a least favorable prior. To demonstrate the power of this approach, we use it to construct provably minimax estimators for classical problems such as estimation in the finite Gaussian sequence model, and linear regression.{\let\thefootnote\relax\footnote{{*  Equal contribution.}}}
\end{abstract}
\end{center}

\section{Introduction}
\label{sec:intro}
Estimating the properties of a probability distribution is a fundamental problem in machine learning and statistics. In this problem, we are given observations generated from an unknown probability distribution $P$ belonging to a class of distributions $\mathcal{P}$. Knowing $\mathcal{P}$, we are required to estimate certain properties of the unknown distribution $P$, based on the observations.
Designing good and ``optimal'' estimators for such problems has been a fundamental subject of research in statistics. Over the years, statisticians have considered various notions of optimality to compare the performance of estimators and to aid their search of good estimators. Some popular notions of optimality include admissibility, minimax optimality, Bayesian optimality, asymptotic efficiency~\citep{ferguson2014mathematical, lehmann2006theory}. Of these, minimax optimality is the most popular notion and has received wide attention in frequentist statistics. This notion of optimality has led to the minimax estimation principle, where the goal is to design estimators with the minimum worst-case risk. Let $R(\hat{\theta}, \theta(P))$ be the risk of an estimator $\hat{\theta}$ for estimating the property $\theta(P)$ of a distribution $P$, where an estimator is a function which maps observations to the set of possible values of the property. 
Then the worst-case risk of $\hat{\theta}$ is defined as $\sup_{P \in \mathcal{P}} R(\hat{\theta}, \theta(P))$. The goal in minimax estimation principle is to design estimators with worst-case risk close to the best worst-case risk, which is defined as $R^* = \inf_{\hat{\theta}}\sup_{P \in \mathcal{P}} R(\hat{\theta}, \theta(P))$, where the infimum is computed over the set of all estimators. Such estimators are often referred to as minimax estimators~\citep{Tsybakov}. 

\paragraph{Classical Estimators.} A rich body of work in statistics has focused on studying the minimax optimality properties of classical estimators such as the maximum likelihood estimator (MLE), Bayes estimators, and minimum contrast estimators (MCEs)~\citep{Ibragimov81book, le2012asymptotic, vaart_1998, birge1983approximation, birge1993rates, yang1999information}. Early works in this line have considered parametric estimation problems and focused on the asymptotic setting, where the number of observations approaches infinity, for a fixed problem dimension. In a series of influential works, H{\'a}jek and Le Cam showed that under certain regularity conditions on the parametric estimation problem, MLE is asymptotically minimax whenever the risk is measured with respect to a convex loss function~\citep{le2012asymptotic, Ibragimov81book}. Later works in this line have considered both parametric and non-parametric estimation problems in the non-asymptotic setting and studied the minimax rates of estimation. In a series of works, Birg{\'e}~\citep{birge1983approximation, birge1993rates} showed that under certain regularity conditions on the model class $\mathcal{P}$ and the estimation problem, MLE and MCEs are approximately minimax w.r.t Hellinger distance.  

While these results paint a compelling picture of classical estimators, we highlight two key problem settings where they tend to be rate inefficient (that is, achieve sub-optimal worst-case risk)~\citep{wellner,birge1993rates}. The first is the so-called high dimensional sampling setting, where the number of observations is comparable to the problem dimension, and under which, classical estimators can be highly sub-optimal. In some recent work, \citet{jiao2015minimax} considered the problem of entropy estimation in discrete distributions and showed that the MLE (plug-in rule) is sub-optimal in the high dimensional regime. Similarly, \citet{cai2011testing} considered the problem of estimation of non-smooth functional $\frac{1}{d}\sum_{i=1}^d|\theta_i|$ from an observation $Y \sim \mathcal{N}(\theta, I_{d})$  and showed that the MLE is sub-optimal. The second key setting where classical estimators tend to be sub-optimal is when the risk $R(\hat{\theta}, \theta(P))$ is measured w.r.t ``non-standard'' losses that have a very different behavior compared to standard losses such as Kullback-Leibler (KL) divergence. For example, consider the MLE, which can be viewed as a KL projection of the empirical distribution of observations onto the class of distributions $\mathcal{P}$. By its design, we expect it to be minimax when the risk is measured w.r.t KL divergence and other related metrics such as Hellinger distance~\citep{birge1993rates}. However, for loss metrics which are not aligned with KL, one can design estimators with better performance than MLE, by taking the loss into consideration. This phenomenon is better illustrated  with the following toy example. Suppose $\mathcal{P}$ is the set of multivariate normal distributions in $\mathbb{R}^d$ with identity covariance, and suppose our goal is to estimate the mean of a distribution $P\in \mathcal{P}$, given $n$ observations drawn from it. If the risk of estimating $\theta$ as $\Tilde{\theta}$ is measured w.r.t the following loss $\|\Tilde{\theta}-\theta-c\|_2^2$, for some constant $c$, then it is easy to see that MLE has a worst-case risk greater than $\|c\|_2^2$. Whereas, the minimax risk $R^*$ is equal to $d/n$, which is achieved by an estimator obtained by shifting the MLE by $c$. While the above loss is unnatural, such a  phenomenon can be observed with natural losses such as $\ell_q$ norms for $q\in (0,1)$ and asymmetric losses.

\vspace{-0.15in}
\paragraph{Bespoke Minimax Estimators.}
For problems where classical estimators are not optimal, designing a minimax estimator can be challenging. Numerous works in the  literature have attempted to design minimax estimators in such cases. However the focus of these works is on specific problems~\citep{cai2011testing,valiant, jiao2015minimax, butucea2018variable}, and there is no single estimator which is known to be optimal for a wide range of estimation problems. For example, \citet{jiao2015minimax, wu2016minimax} considered the problem of entropy estimation for discrete distributions and provided a minimax estimator in the high-dimensional setting. \citet{cai2011testing} considered the problem of estimating a non-smooth functional in high dimensions and provided a minimax estimator.  
While these results are impressive, the techniques used in these works are tailored towards specific problems and do not extend to other problems. So, a natural question that arises in this context is, how should one go about constructing minimax estimators for problems where none of the classical estimators are optimal? Unfortunately, our current understanding of minimax estimators does not provide any concrete guidelines on designing such estimators.  

\vspace{-0.15in}
\paragraph{Minimax Estimation via Solving Statistical Games.} 
In this work, we attempt to tackle the problem of designing minimax estimators from a game-theoretic perspective. Instead of the usual two-step approach of first designing an estimator and then certifying its minimax optimality, we take a more direct approach and attempt to directly solve the following min-max statistical game: $\inf_{\hat{\theta}}\sup_{P \in \mathcal{P}} R(\hat{\theta}, \theta(P))$. Since the resulting estimators are solutions to the min-max game, they are optimal by construction.
Such a direct approach for construction of minimax estimators has certain advantages over the classical estimators. First, the technique itself is very general and can \emph{theoretically} be used to construct minimax estimators for any estimation problem. Second, a direct approach often results in \emph{exact} minimax estimators with $R^*+o(1)$ worst-case risk. In contrast, classical estimators typically achieve $O(1)R^*$ worst-case risk, which is constant factors worse than the direct approach. Finally, a direct approach can make effective use of any available side information about the problem, to construct estimators with better worst-case risk than classical estimators. For example, consider the problem of mean estimation given samples drawn from an unknown Gaussian distribution. If it is known a priori that the true mean lies in a bounded set, then a direct approach for solving the min-max statistical game results in estimators with better performance than classical estimators.
Several past works have attempted to directly solve the min-max game associated with the estimation problem~\citep[see][and references therein]{berger1985statistical}. We discuss these further in Section~\ref{sec:background} after providing some background, but in gist, existing approaches either focus on specific problems or are applicable only to simple estimation problems.

\vspace{-0.15in}
\paragraph{This Work.} In this work, we rely on recent advances in online learning and game theory to directly solve the min-max statistical game. Recently, online learning techniques have been widely used for solving min-max games. For example, \citet{freund1996game} relied on these techniques to find equilibria in min-max games that arise in the context of boosting. Similar techniques have been explored for robust optimization by~\citet{chen2017robust,feige2015learning}.
In this work, we take a similar approach and provide an algorithm for solving statistical games. 
A critical distinction of statistical games, in contrast to the typical min-max games studied in the learning and games literature, is that the domain of all possible measurable estimators is extremely large, the set of possible parameters need not be convex, and the loss function need not be convex-concave. We show that it is nonetheless possible to finesse these technical caveats and solve the statistical game, provided we are given access to two subroutines: a Bayes estimator subroutine which outputs a Bayes estimator corresponding to any given prior, and a subroutine which computes the worst-case risk of any given estimator.
Given access to these two subroutines, we show that our algorithm outputs both a minimax estimator and a least favorable prior. The minimax estimator output by our algorithm is a randomized estimator which is an ensemble of multiple Bayes estimators. When the loss function is convex - which is the case for a number of commonly used loss functions - the randomized estimator can be transformed into a deterministic minimax estimator. For problems where the two subroutines are efficiently implementable, our algorithm provides an efficient technique to construct minimax estimators. While implementing the subroutines can be computationally hard in general, we show that the computational complexity can be significantly reduced for a wide range of problems satisfying certain invariance properties. 

 To demonstrate the power of this technique, we use it to construct provably minimax estimators for the classical problems of finite dimensional Gaussian sequence model and linear regression.
 In the problem of Gaussian sequence model, we are given a single sample drawn from a normal distribution with mean $\theta$ and identity covariance, where $\theta \in \mathbb{R}^d, \|\theta\|_2 \leq B$. Our goal is to estimate $\theta$ well under squared-error loss. This problem has received much attention in statistics because of its simplicity and connections to non-parametric regression~\citep{johnstone2002function}.  Surprisingly, however, the exact minimax estimator is unknown for the case when $B \geq 1.16\sqrt{d}$ ~\citep{bickel1981minimax, berry1990minimax, marchand2002minimax}.  In this work, we show that our technique can be used to construct provably minimax estimators for this problem, for general $B$. To further demonstrate that our technique is widely applicable, we present empirical evidence showing that our algorithm can be used to construct estimators for covariance and entropy estimation which match the performance of existing minimax estimators.

\vspace{-0.15in}
\paragraph{Outline.}
We conclude this section with a brief outline of the rest of the paper. In Section~\ref{sec:background}, we provide necessary background on online learning and minimax estimation. In Section~\ref{sec:games}, we introduce our algorithm for solving statistical games. In Sections~\ref{sec:invariance},~\ref{sec:mean_estimation},~\ref{sec:regression} we utilize our algorithm to construct provably minimax estimators for finite dimensional Gaussian sequence model and linear regression. In Section~\ref{sec:exps} we study the empirical performance of our algorithm on a variety of statistical estimation problems. We defer technical details to the Appendix. Finally, we conclude in Section~\ref{sec:conclusion} with a discussion of future directions and some open problems.

\vspace{-0.1in}
\section{Background and Problem Setup}
\label{sec:background}
In this section, we formally introduce the problem of minimax statistical estimation and review the necessary background on online learning.
\vspace{-0.1in}
\subsection{Minimax Estimation and Statistical Games}
 Let $\mathcal{P} = \{P_{\theta}: \theta \in \Theta \subseteq \mathbb{R}^d \}$ be a parametric family of distributions. In this work, we assume $\Theta$ is a compact set. Let \mbox{$\mathbb{X}^n = \{X_1,\dots X_n\} \in \mathcal{X}^n$} be $n$ independent samples drawn from some unknown distribution $P_{\theta} \in \mathcal{P}$. Given $\mathbb{X}^n$, our goal is to estimate the unknown parameter $\theta$. 
 A deterministic estimator $\hat{\theta}$ of $\theta$ is any measurable function from $\mathcal{X}^n$ to $\Theta$. We denote the set of deterministic estimators by $\mathcal{D}$. A randomized estimator is given by a probability measure on the set of deterministic estimators. Given $\mathbb{X}^n$, the unknown parameter $\theta$ is estimated by first sampling a deterministic estimator according to this probability measure and using the sampled estimator to predict $\theta$. 
Since any randomized estimator can be identified by a probability measure on $\mathcal{D}$, we denote the set of randomized estimators by $\mathcal{M}_{\mathcal{D}},$ the set of all probability measures on $\mathcal{D}$. 
 Let $M:\Theta\times \Theta \to \mathbb{R}$ be a measurable loss function such that $M(\theta',\theta)$ measures the cost of an estimate $\theta'$ when the true parameter is $\theta$.  Define the risk of an estimator $\hat{\theta}$ for estimating $\theta$ as $
R(\hat{\theta}, \theta) \stackrel{\text{def}}{=} \mathbb{E}\left[M(\hat{\theta}(\mathbb{X}^n), \theta)\right],$ where the expectation is taken with respect to randomness from $\mathbb{X}^n$ and the estimator $\hat{\theta}$.
The worst-case risk of an estimator $\hat{\theta}$ is defined as 
$\sup_{\theta \in \Theta}  R(\hat{\theta},\theta)$ and
the minimax risk is defined as the best worst-case risk that can be achieved by any estimator
\begin{equation}
\label{eqn:minimax_objective}
    R^* \stackrel{\text{def}}{=} \inf_{\hat{\theta} \in \mathcal{M}_\mathcal{D}} \sup_{\theta \in \Theta}  R(\hat{\theta}, \theta).
\end{equation}
Any estimator whose worst case risk is equal to the minimax risk is called a minimax estimator. We refer to the above min-max problem as a \emph{statistical game}. Often, we are also interested in deterministic minimax estimators, which are defined as estimators with worst case risk equal to
\begin{equation}
\label{eqn:minimax_objective_det}
    \inf_{\hat{\theta} \in \mathcal{D}} \sup_{\theta \in \Theta}  R(\hat{\theta}, \theta).
\end{equation}

From the perspective of game theory, the optimality notion in Equation~\eqref{eqn:minimax_objective} is referred to as the \emph{minmax} value of the game. This is to be contrasted with the \emph{maxmin} value of the game $ \sup_{\theta \in \Theta}\inf_{\hat{\theta} \in \mathcal{M}_\mathcal{D}}  R(\hat{\theta}, \theta)$. 
In general, these two quantities are \textbf{not} equal, but the following relationship always holds:
\begin{align}
\sup_{\theta \in \Theta}\inf_{\hat{\theta} \in \mathcal{M}_\mathcal{D}}  R(\hat{\theta}, \theta)  \leq \inf_{\hat{\theta} \in \mathcal{M}_\mathcal{D}} \sup_{\theta \in \Theta}  R(\hat{\theta}, \theta).
\label{eqn:min-max-ineq}
\end{align}
In statistical games, for typical choices of loss functions, \mbox{$\sup_{\theta \in \Theta} \inf_{\hat{\theta}\in \mathcal{M}_\mathcal{D}} R(\hat{\theta},\theta) = 0$,} whereas $ \inf_{\hat{\theta}\in \mathcal{M}_\mathcal{D}} \sup_{\theta \in \Theta} R(\hat{\theta},\theta) > 0$; that is, the minmax value is strictly greater than maxmin value of the game.  So we cannot in general reduce computing the minmax value to computing the maxmin value. 

\vspace{-0.15in}
\paragraph{Linearized Statistical Games.}
Without any additional structure such as convexity, computing the values of min-max games is difficult in general.
So it is common in game theory to consider a \emph{linearized game} in the space of probability measures, which is in general better-behaved. To set up some notation,
for any probability distribution $P$, define $R(\hat{\theta}, P)$  as 
\mbox{$\Eover{\theta \sim P}{R(\hat{\theta}, \theta)}.$} In the context of statistical games, a linearized game has the following form:
\begin{equation}
\label{eqn:minimax_objective_mixed}
\inf_{\hat{\theta} \in \mathcal{M}_\mathcal{D}} \sup_{P \in \mathcal{M}_{\Theta}}  R(\hat{\theta}, P),
\end{equation}
where $\mathcal{M}_{\Theta}$ is the set of all probability measures on $\Theta$.
The minmax and maxmin values of the linearized game and the original game in Equation~\eqref{eqn:minimax_objective} are related as follows
\begin{equation*}
    \sup_{\theta \in \Theta}\inf_{\hat{\theta} \in \mathcal{M}_\mathcal{D}}  R(\hat{\theta}, \theta)  \leq  \sup_{P \in \mathcal{M}_{\Theta}} \inf_{\hat{\theta} \in \mathcal{M}_\mathcal{D}} R(\hat{\theta}, P) \leq \inf_{\hat{\theta} \in \mathcal{M}_\mathcal{D}} \sup_{P \in \mathcal{M}_{\Theta}}  R(\hat{\theta}, P) \stackrel{(a)}{=} \inf_{\hat{\theta} \in \mathcal{M}_\mathcal{D}} \sup_{\theta \in \Theta}  R(\hat{\theta}, \theta),
\end{equation*}
where $(a)$  holds because for any estimator $\hat{\theta},$ $\sup_{P \in \mathcal{M}_{\Theta}}  R(\hat{\theta}, P)$ is equal to $\sup_{\theta \in \Theta}  R(\hat{\theta}, \theta).$ Thus, the minmax values of the original and linearized statistical games are equal. Any estimator whose worst-case risk is equal to the minmax value of the linearized game is a minimax estimator. The maxmin values of the original and linearized statistical games are however in general different. In particular, as discussed above, the maxmin value of the original statistical game is usually equal to zero. The maxmin value of the \emph{linearized game} however has a deep connection to Bayesian estimation.

Note that $R(\hat{\theta}, P)$ is simply the integrated risk of the estimator $\hat{\theta}$ under prior $P \in \mathcal{M}_\Theta$. Any estimator which minimizes $R(\hat{\theta}, P)$ is called the Bayes estimator for $P$, and the corresponding minimum value is called Bayes risk. Though the set of all possible measurable estimators is in general vast, in what might be surprising from an optimization or game-theoretic viewpoint, the Bayes estimator can be characterized simply as follows. Letting $P(\cdot|\mathbb{X}^n)$ be the posterior distribution of $\theta$ given the data $\mathbb{X}^n$, a Bayes estimator of $P$ can be found by minimizing the posterior risk
\begin{equation}
\label{eqn:bayes_estimator}
\hat{\theta}_{P}(\mathbb{X}^n) \in \argmin_{\tilde{\theta} \in \Theta} \Eover{\theta \sim P(\cdot|\mathbb{X}^n)}{M(\tilde{\theta}, \theta)}.
\end{equation}
Certain mild technical conditions need to hold for $\hat{\theta}_{P}$ to be measurable and for it to be a Bayes estimator~\citep{berger1985statistical}. We detail these conditions in Appendix~\ref{sec:measurable_bayes_estimator}, which incidentally are all satisfied for the problems considered in this work.
A least favourable prior is defined as any prior which maximizes the Bayes risk; that is, $\tilde{P}$ is LFP if  $\inf_{\hat{\theta} \in \mathcal{M}_\mathcal{D}} R(\hat{\theta},\tilde{P}) = \sup_{P \in \mathcal{M}_\Theta} \inf_{\hat{\theta} \in \mathcal{M}_\mathcal{D}} R(\hat{\theta}, P).$ Thus, LFPs solve for the maxmin value of the linearized statistical game. Any prior whose Bayes risk is equal to the maxmin value of the linearized game is an LFP.

\vspace{-0.15in}
\paragraph{Nash Equilibrium.}

Directly solving for the minmax or maxmin values of the (linearized) min-max games is in general computationally hard, in large part because: (a) these values need not be equal, which limits the set of possible optimization algorithms, and (b) the optimal solutions need not be stable, which makes it difficult for simple optimization problems. It is thus preferable that the two values are equal\footnote{John Von Neumann, a founder of game theory, has said he could not foresee there even being a theory of games without a theorem that equates these two values}, and the solutions be stable, which is formalized by the game-theoretic notion of a \emph{Nash equilibrium} (NE).

For the original statistical game in Equation~\eqref{eqn:minimax_objective}, a pair $(\hat{\theta}^*,\theta^*) \in \mathcal{M}_{\mathcal{D}}\times \Theta$ is called a pure strategy NE, if the following holds
\begin{align*}
    \sup_{\theta \in \Theta} R(\hat{\theta}^*, \theta) \leq R(\hat{\theta}^*,\theta^*) \leq \inf_{\hat{\theta} \in \mathcal{M}_\mathcal{D}} R(\hat{\theta}, \theta^*) = \inf_{\hat{\theta} \in \mathcal{D}} R(\hat{\theta}, \theta^*),
\end{align*}
where the equality follows since the optimum of a linear program over a convex hull can always be attained at an extreme point. Intuitively, this says that there is no incentive for any player to change their strategy while the other player keeps hers unchanged.  
Note that whenever a pure strategy NE exists, the minmax and maxmin values of the game are equal to each other:
\begin{align*}
     \inf_{\hat{\theta} \in \mathcal{M}_\mathcal{D}} \sup_{\theta \in \Theta}  R(\hat{\theta}, \theta) \leq \sup_{\theta \in \Theta} R(\hat{\theta}^*, \theta) \leq R(\hat{\theta}^*,\theta^*) \leq \inf_{\hat{\theta} \in \mathcal{M}_\mathcal{D}} R(\hat{\theta}, \theta^*) \leq \sup_{\theta \in \Theta}\inf_{\hat{\theta} \in \mathcal{M}_\mathcal{D}}R(\hat{\theta}, \theta).
\end{align*}
Since the RHS is always upper bounded by the LHS from \eqref{eqn:min-max-ineq}, the inequalities above are all equalities.

As we discussed above, the maxmin and minmax values of the statistical game in Equation~\eqref{eqn:minimax_objective} are in general not equal to each other, so that a pure strategy NE will typically not exist for the statistical game~\eqref{eqn:minimax_objective}.
Instead what often exists is a mixed strategy NE, which is precisely a pure strategy NE of the linearized game. That is, $(\hat{\theta}^*, P^*)\in \mathcal{M}_{\mathcal{D}}\times \mathcal{M}_\Theta $ is called a mixed strategy NE of statistical game~\eqref{eqn:minimax_objective}, if 
\begin{align*}
    \sup_{\theta \in \Theta} R(\hat{\theta}^*, \theta) = \sup_{P \in \mathcal{M}_\Theta} R(\hat{\theta}^*, \theta) \leq R(\hat{\theta}^*, P^*) &\leq  \inf_{\hat{\theta} \in \mathcal{M}_\mathcal{D}} R(\hat{\theta}, P^*) = \inf_{\hat{\theta} \in \mathcal{D}} R(\hat{\theta}, P^*).
\end{align*} 
 As with the original game, if $(\hat{\theta}^*, P^*)$ is a pure strategy NE of the linearized game of~\eqref{eqn:minimax_objective}, aka, a mixed strategy NE of~\eqref{eqn:minimax_objective}, then the minmax and maxmin values of the linearized game are equal to each other, and, moreover $\hat{\theta}^*$ is a minimax estimator and  $P^*$ is an LFP. Conversely, if $\hat{\theta}^*$ is a minimax estimator, and $P^*$ is an LFP, and the minmax and maxmin values of~\eqref{eqn:minimax_objective_mixed} are equal to each other, then $(\hat{\theta}^*, P^*)$ is a mixed strategy NE of~\eqref{eqn:minimax_objective}.  These just follow from similar sandwich arguments as with the original game, which we add for completeness in Appendix~\ref{sec:minimax_lfp_ne}.

In gist, it might be computationally easier to recover the mixed strategy NE of the statistical game, assuming they exist, and doing so, would recover minimax estimators and LFPs. In this work, we are thus interested in imposing mild conditions on the statistical game so that a mixed strategy NE exists, and under this setting, develop tractable algorithms to estimate the mixed strategy NE.

\vspace{-0.15in}
\paragraph{Existence of NE.} We now briefly discuss sufficient conditions for the existence of NE.  
As discussed earlier, a pure strategy NE does not exist for statistical games in general. So, here we focus on existence of mixed strategy NE.  
In a seminal work,~\citet{wald1949statistical} studied the conditions for existence of a mixed strategy NE,
and showed that a broad class of statistical games have mixed strategy NE. Suppose every distribution in the model class $\mathcal{P}$ is absolutely continuous,  $\Theta$ is compact, and the loss $M$ is a bounded, non-negative function. Then minmax and maxmin values of the linearized game are equal. Moreover, a minimax estimator with worst-case risk equal to $R^*$ exists.
Under the additional condition of compactness of $\mathcal{P}$, \citep{wald1949statistical} showed that an LFP exists as well. Thus, based on our previous discussion, this implies the game has a mixed strategy NE. In this work, we consider a  different and simpler set of conditions on the statistical game.  We assume that $\Theta$ is compact and the risk $R(\hat{\theta},\theta)$ is Lipschitz in its second argument. Under these assumptions, we show that the minmax and maxmin values of the linearized game in Equation~\eqref{eqn:minimax_objective_mixed} are equal to each other. Such results are known as minimax theorems and have been studied in the past~\citep{von2007theory, yanovskaya1974infinite, wald1949statistical}. However, unlike past works that rely on fixed point theorems, we rely on a constructive learning-style proof to prove the minimax theorem, where we present an algorithm which outputs an approximate NE of the statistical game. Under the additional condition that the risk $R(\hat{\theta}, \theta)$ is bounded, we show that the statistical game has a minimax estimator and an LFP. 

\vspace{-0.15in}
\paragraph{Computation of NE.} Next, we discuss previous numerical optimization techniques for computing a mixed strategy NE of the statistical game. Note that this is a difficult computational problem: minimizing over the domain of all possible estimators, and maximizing over the set of all probability measures on $\Theta$. Nonetheless, several works in statistics have attempted to tackle this problem~\citep{berger1985statistical}. One class of techniques involves reducing the set of estimators $\mathcal{D}$ via admissibility considerations to a small enough set. Given this restricted set of estimators, they can then directly calculate a minimax test for some testing problems; see for instance ~\citet{hald1971size}. A drawback of these approaches is that they are restricted to simple estimation problems for which the set of admissible estimators are easy to construct. Another class of techniques for constructing minimax estimators relies on the properties of LFPs~\citep{clarke1994jeffreys, johnstone2002function}. When the parameter set $\Theta$ is a compact subset of $\mathbb{R}$, and when certain regularity conditions hold, it is well known that LFPs are supported on a finite set of points~\citep{ghosh1964uniform, berger1985statistical}. Based on this result,~\citet{nelson1966minimax, kempthorne1987numerical} propose numerical approaches to determine the support points of LFPs and the probability mass that needs to be placed on these points. However, these approaches are restricted to 1-dimensional estimation problems and are not broadly applicable. In a recent work, \citet{Luedtkeeaaw2140} propose heuristic approaches for solving statistical games using deep learning techniques. In particular, they use neural networks to parameterize the statistical game and solve the resulting game using local search techniques such as alternating gradient descent. However, these approaches are not guaranteed to find minimax estimators and LFPs and can lead to undesirable equilibrium points. They moreover parameterize estimators via neural networks whose inputs are a simple concatenation of all the samples, which is not feasible for large $n$.

In our work, we develop numerical optimization techniques that rely on online learning algorithms (see Section~\ref{sec:bg_online_learning}). Though the domains as well as the setting of the statistical game are far more challenging than typically considered in learning and games literature, we reduce the problem of designing minimax estimators to a purely  computational problem of efficient implementation of certain optimization subroutines. For the wide range of problems where these subroutines can be efficiently implemented, our algorithm provides an efficient and scalable technique for constructing minimax estimators.
 \vspace{-0.1in}
\subsection{Online Learning}
\label{sec:bg_online_learning}
The online learning framework can be seen as a repeated game between a learner/decision-maker and an adversary. 
In this framework, in each round $t$, the learner makes a prediction \mbox{$\x_t \in \mathcal{X}$}, where $\mathcal{X} \subseteq\mathbb{R}^d$, and the adversary  chooses a loss function \mbox{$f_t:\mathcal{X} \rightarrow \mathbb{R}$} and observe each others actions. The goal of the learner is to choose a sequence of actions $\{\x_t\}_{t=1}^T$ so that the cumulative loss $\sum_{t=1}^T f_t(\x_t)$ is minimized. The benchmark with which the cumulative loss will be compared is called the best fixed policy in hindsight, which is given by $\inf_{\x \in \mathcal{X}}\sum_{t=1}^Tf_t(\x)$. This results in the following notion of regret, which the learner aims to minimize $$\sum_{t = 1}^T f_t(\x_t) - \inf_{\x \in \mathcal{X}}\sum_{t=1}^Tf_t(\x).$$ 
 \vspace{-0.15in}
\paragraph{When the domain $\mathcal{X}$ is compact, and convex, and the loss functions $f_t$ are convex:} Under this simple setting, a number of efficient algorithms for regret minimization have been studied. Some of these include Follow the Regularized Leader (FTRL)~\citep{hazan2016introduction, mcmahan2017survey}, Follow the Perturbed Leader (FTPL)~\citep{kalai2005efficient}. In FTRL, one predicts $\x_t$ as $\argmin_{\x \in \mathcal{X}} \sum_{i=1}^{t-1}f_i(\x) + r(\x)$, where $r$ is a strongly convex regularizer. In FTPL, one predicts $\x_t$ as $\Eover{\sigma}{\argmin_{\x \in \mathcal{X}} \sum_{i=1}^{t-1}f_i(\x) - \iprod{\sigma}{\x}}$, where $\sigma$ is a random perturbation drawn from some appropriate probability distribution such as exponential distribution. These algorithms are known to achieve the optimal $O(\sqrt{T})$ regret in the convex setting~\citep{mcmahan2017survey,ftpl_nonconvex}.

\vspace{-0.15in}
\paragraph{When $\mathcal{X}$ is compact, but either the domain or the loss functions $f_t$ are non-convex:} Under this setting, no deterministic algorithm can achieve sub-linear regret (i.e., regret which grows slower than $T$)~\citep{cesa2006prediction,ftpl_nonconvex}. In such cases one has to rely on randomized algorithms to achieve sub-linear regret. In randomized algorithms, in each round $t$, the learner samples the prediction $\x_t$ from a distribution $P_t \in \mathcal{M}_{\mathcal{X}}$, where $\mathcal{M}_{\mathcal{X}}$ is the set of all probability distributions supported on $\mathcal{X}$. The goal of the learner is to choose a sequence of distributions $\{P_t\}_{t=1}^T$ to minimize the expected regret
$
\sum_{t = 1}^T \Eover{\x\sim P_t}{f_t(\x)} - \inf_{\x \in \mathcal{X}}\sum_{t=1}^Tf_t(\x).
$
An alternative perspective of such randomized algorithms is as deterministic algorithms solving a \emph{linearized problem} in the space of probability distributions, with loss functions $\Tilde{f}_t(P) = \Eover{\x\sim P}{f_t(\x)}$, and rely on algorithms for online convex learning. For example, by relying of FTRL, one predicts $P_t$ as
$
\argmin_{P \in \mathcal{M}_{\mathcal{X}}} \sum_{i=1}^{t-1} \Tilde{f}_i(P) + r(P),
$
for some strongly convex regularizer $r(P)$. 
When $r(P)$ is the negative entropy of $P$, \citet{krichene2015hedge} show that the resulting algorithm achieves $O(\sqrt{dT\log{T}})$ expected regret. 
Another technique to solve the linearized problem is via the FTPL algorithm~\citep{gonen2018learning, ftpl_nonconvex}. In this algorithm,  $P_t$ is given by the distribution of the random variable $\x_t(\sigma)$, which is a minimizer of $\sum_{i=1}^{t-1}f_i(\x) - \iprod{\sigma}{\x}$. Here, $\sigma$ is a random perturbation drawn from some appropriate probability distribution. In  recent work, \citet{ftpl_nonconvex} show that this algorithm achieves $O(\sqrt{d^3T})$ expected regret. 

\vspace{-0.15in}
\paragraph{Without any assumptions on $\mathcal{X}$ or the loss functions $f_t$.}
A key caveat with statistical games is that the domain of all possible measurable estimators is not bounded and is an infinite-dimensional space. Thus, results as discussed above from the learning and games literature are not applicable to such a setting. In particular, regret bounds of FTRL and FTPL scale with the dimensionality of the domain, which is infinite in this case. But there is a very simple strategy that is applicable without making any assumptions on the domain whatsoever, but under the provision that $f_t$ \emph{was known} to the learner ahead of round $t$. Then, an optimal strategy for the learner is to predict $\x_t$ as simply a minimizer of $f_t(\x).$ It is easy to see that this algorithm, known as Best Response (BR), has $0$ regret. While this is an impractical algorithm in the framework of online learning, it can be used to solve min-max games, as we will see in Section~\ref{sec:games}.

\vspace{-0.15in}
\paragraph{FTPL.} We will be making use of the FTPL algorithm in the sequel, so we now describe this in a bit more detail. In this algorithm, the learner predicts $\x_t$ as a minimizer of \mbox{$\sum_{i = 1}^{t-1}f_i(\x) - \iprod{\sigma}{\x}$,} where $\sigma \in \mathbb{R}^d$ is a random perturbation such that $\{\sigma_{j}\}_{j = 1}^d \stackrel{i.i.d}{\sim} \text{Exp}(\eta)$ and $\text{Exp}(\eta)$ is the exponential distribution with parameter $\eta$\footnote{Recall, $X$ is an exponential random variable with parameter $\eta$ if  $P(X\geq s) =\exp(-\eta s)$}. When the domain $\mathcal{X}$ is bounded and loss functions $\{f_t\}_{t=1}^T$ are Lipschitz (not necessarily convex), FTPL achieves $O(\sqrt{d^3T})$ expected regret, for appropriate choice of $\eta$~\citep{ftpl_nonconvex}. A similar regret bound holds even when $\x_t$ is an approximate minimizer of \mbox{$\sum_{i = 1}^{t-1}f_i(\x) - \iprod{\sigma}{\x}$.} Suppose for any $t\in\mathbb{N}$, $\x_t$ is such that 
\[
\sum_{i = 1}^{t-1}f_i(\x_t) - \iprod{\sigma}{\x_t} \leq \inf_{\x \in \mathcal{X}} \sum_{i = 1}^{t-1}f_i(\x) - \iprod{\sigma}{\x} + \left(\alpha + \beta \|\sigma\|_1\right),
\]
where $\alpha, \beta$ are positive constants. Then FTPL achieves $O(T^{1/2} + \alpha T + \beta T^{3/2})$ expected regret for appropriate choice of $\eta$ (see Appendix~\ref{sec:ftpl} for more details).

\vspace{-0.1in}
\section{Minimax Estimation via Online Learning}
\label{sec:games}
In this section, we present our algorithm for computing a mixed strategy NE of the statistical game in Equation~\eqref{eqn:minimax_objective} (equivalently a pure strategy NE of the linearized game in Equation~\eqref{eqn:minimax_objective_mixed}). 
A popular and widely used approach for solving min-max games is to rely on online learning algorithms~\citep{hazan2016introduction, cesa2006prediction}. In this approach, the minimization player and the maximization player play a repeated game against each other. Both the players rely on online learning algorithms to choose their actions in each round of the game, with the objective of minimizing their respective regret. The following proposition shows that this repeated game play converges to a NE.
\begin{proposition}
\label{prop:no_regret_to_games}
Consider a repeated game between the minimization and maximization players in Equation~\eqref{eqn:minimax_objective_mixed}. 
Let $(\hat{\theta}_t, P_t)$ be the actions chosen by the players in iteration $t$. Suppose the actions are such that the regret of each player satisfies
\begin{align*}
    \sum_{t=1}^TR(\hat{\theta}_t, P_t)-\inf_{\hat{\theta} \in \mathcal{D}} \sum_{t=1}^TR(\hat{\theta}, P_t) &\leq \epsilon_1(T),\\
    \sup_{\theta\in\Theta}\sum_{t=1}^TR(\hat{\theta}_t, \theta) -\sum_{t=1}^TR(\hat{\theta}_t, P_t)&\leq \epsilon_2(T).
\end{align*}
Let $\hat{\theta}_{\RND}$ denote the randomized estimator obtained by uniformly sampling an estimator from the iterates  $\{\hat{\theta}_t\}_{t=1}^T$. Define the mixture distribution $P_{\AVG}$ as $\frac{1}{T}\sum_{i=1}^T P_i$. Then $(\hat{\theta}_{\RND}, P_{\AVG})$ is an approximate mixed strategy NE of Equation~\eqref{eqn:minimax_objective}
\begin{align*}
R(\hat{\theta}_{\RND}, P_{\AVG}) \leq \inf_{\hat{\theta} \in \mathcal{D}} R(\hat{\theta}, P_{\AVG}) + \frac{\epsilon_1(T) + \epsilon_2(T)}{T},\\
R(\hat{\theta}_{\RND}, P_{\AVG}) \geq \sup_{\theta \in \Theta} R(\hat{\theta}_{\RND}, \theta) - \frac{\epsilon_1(T) + \epsilon_2(T)}{T}.
\end{align*}
\end{proposition}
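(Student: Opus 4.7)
The plan is to exploit the bilinear structure of the risk functional $R$ when both players use mixed strategies, and then sandwich $R(\hat{\theta}_{\RND},P_{\AVG})$ between the two regret guarantees using only elementary $\inf\le\sup$ inequalities. No deep game-theoretic machinery is needed --- this is the standard averaging argument that turns two no-regret sequences into an approximate minmax equilibrium.

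First I would unpack the definitions of $\hat{\theta}_{\RND}$ and $P_{\AVG}$ and use linearity of expectation to record the identities
\begin{align*}
R(\hat{\theta}_{\RND},\theta) &= \tfrac{1}{T}\sum_{t=1}^T R(\hat{\theta}_t,\theta),\qquad R(\hat{\theta},P_{\AVG}) = \tfrac{1}{T}\sum_{t=1}^T R(\hat{\theta},P_t),\\
R(\hat{\theta}_{\RND},P_{\AVG}) &= \tfrac{1}{T}\sum_{t=1}^T R(\hat{\theta}_t,P_{\AVG}) = \tfrac{1}{T}\sum_{t=1}^T R(\hat{\theta}_{\RND},P_t).
\end{align*}
Let $\bar V := \frac{1}{T}\sum_t R(\hat{\theta}_t,P_t)$ denote the average realized payoff along the trajectory. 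Dividing the first regret hypothesis by $T$ and combining with the second identity above yields $\bar V \le \inf_{\hat{\theta}\in\mathcal{D}} R(\hat{\theta},P_{\AVG}) + \epsilon_1(T)/T$; dividing the second hypothesis by $T$ and using the first identity above yields $\sup_{\theta\in\Theta} R(\hat{\theta}_{\RND},\theta) \le \bar V + \epsilon_2(T)/T$.

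Chaining these with the trivial bookend inequalities
\begin{equation*}
\inf_{\hat{\theta}\in\mathcal{D}} R(\hat{\theta},P_{\AVG}) \;\le\; R(\hat{\theta}_{\RND},P_{\AVG}) \;\le\; \sup_{\theta\in\Theta} R(\hat{\theta}_{\RND},\theta),
\end{equation*}
one obtains the sandwich $\inf_{\hat{\theta}\in\mathcal{D}} R(\hat{\theta},P_{\AVG}) \le R(\hat{\theta}_{\RND},P_{\AVG}) \le \sup_{\theta\in\Theta} R(\hat{\theta}_{\RND},\theta) \le \inf_{\hat{\theta}\in\mathcal{D}} R(\hat{\theta},P_{\AVG}) + (\epsilon_1(T)+\epsilon_2(T))/T$, from which both conclusions of the proposition fall out by reading off the appropriate sub-inequality.

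There is essentially no hard step here; the only thing to be careful about is the distinction between $\inf_{\hat{\theta}\in\mathcal{D}}$ in the first regret bound and the ambient class $\mathcal{M}_\mathcal{D}$ of randomized estimators. Since $R(\hat{\theta},P_{\AVG})$ is linear (indeed, an expectation) in the mixing measure defining $\hat{\theta}$, the infimum over $\mathcal{M}_\mathcal{D}$ is attained at a point mass, so the two infima agree and the argument goes through without modification. An analogous remark applies on the maximization side, where $\sup_{\theta\in\Theta}$ and $\sup_{P\in\mathcal{M}_\Theta}$ coincide for the linear functional $P\mapsto R(\hat{\theta}_{\RND},P)$.
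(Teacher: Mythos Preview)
Your proof is correct and follows essentially the same averaging argument as the paper: both combine the two regret bounds through the realized average payoff $\bar V=\tfrac{1}{T}\sum_t R(\hat{\theta}_t,P_t)$ and then sandwich $R(\hat{\theta}_{\RND},P_{\AVG})$ using the trivial $\inf\le\cdot\le\sup$ inequalities (the paper writes these via the double sum $\tfrac{1}{T^2}\sum_{t,t'}R(\hat{\theta}_{t'},P_t)$, which is the same object as your $R(\hat{\theta}_{\RND},P_{\AVG})$). Your explicit remark that $\inf_{\mathcal{D}}=\inf_{\mathcal{M}_{\mathcal{D}}}$ by linearity is a nice touch the paper leaves implicit.
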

Note that the above proposition doesn't specify an algorithm to generate the iterates $(\hat{\theta}_t, P_t)$.  All it shows is that as long as both the players rely on algorithms which guarantee sub-linear regret, the iterates converge to a NE. As discussed in Section~\ref{sec:background}, there exist several algorithms such as FTRL, FTPL, Best Response (BR), which guarantee sub-linear regret. It is important to choose these algorithms appropriately as our choices impact the rate of convergence to a NE and also the computational complexity of the resulting algorithm. First, consider the minimization player, whose domain $\mathcal{M}_{\mathcal{D}}$ is the set of all probability measures over  $\mathcal{D}$. Note that $\mathcal{D}$, the set of all deterministic estimators, is an infinite dimensional space. So, algorithms such as FTRL, FTPL, whose regret bounds depend on the dimension of the domain, can not guarantee sub-linear regret. So the minimization player is forced to rely on BR, which has $0$ regret. Recall, in order to use BR, the minimization player requires the knowledge of the future action of the opponent. This can be made possible in the context of min-max games by letting the minimization player choose her action after the maximization player reveals her action. Next, consider the maximization player. Since the minimization player is relying on BR, the maximization player has to rely on either FTRL or FTPL to choose her action\footnote{If both the players use BR, then both will wait for the other player to pick an action first. As a result, the algorithm will never proceed.}.  In this work we choose the FTPL algorithm studied by~\cite{ftpl_nonconvex}. Our choice is mainly driven by the computational aspects of the algorithm. Each iteration of the FTRL algorithm of~\citet{krichene2015hedge} involves sampling from a general probability distribution. Whereas, each iteration of the FTPL algorithm requires minimization of a non-convex objective. While both sampling and optimization are computationally hard in general, the folklore is that optimization is relatively easier than sampling in many practical applications.

We now describe our algorithm for computing a pure strategy NE of Equation~\eqref{eqn:minimax_objective_mixed}. In iteration $t$, the maximization player chooses distribution $P_t$ using FTPL. $P_t$ is given by the distribution of the random variable $\theta_t(\sigma)$, which is generated by first sampling a random vector $\sigma \in \mathbb{R}^d$ from exponential distribution and then computing an optimizer of 
\begin{equation}
\label{eqn:max_oracle}
\sup_{\theta \in \Theta} \sum_{i = 1}^{t-1} R(\hat{\theta}_i, \theta) + \iprod{\sigma}{\theta}.
\end{equation}
The minimization player chooses $\hat{\theta}_t$ using BR, which involves computing a minimizer of the integrated risk under prior $P_t$
\begin{equation}
\label{eqn:min_oracle}
\inf_{\hat{\theta} \in \mathcal{D}} R(\hat{\theta}, P_t).    
\end{equation}
Very often, computing exact optimizers of the above problems is infeasible. Instead, one can only compute approximate optimizers. To capture the error from this approximation, we introduce the notion of approximate optimization oracles/subroutines.
\begin{defn}[Maximization Oracle]
A function $\maxoracle{\alpha,\beta}{\cdot}$ is called $(\alpha,\beta)$-approximate maximization oracle, if for any set of estimators $\{\hat{\theta}_i\}_{i=1}^T$ and perturbation $\sigma$, it returns $\theta' \in \Theta$ which satisfies the following inequality
\[
\sum_{i = 1}^{T} R\left(\theta', \theta\right) + \iprod{\sigma}{\theta'} \geq \sup_{\theta \in \Theta} \sum_{i = 1}^{T} R(\hat{\theta}_i, \theta) + \iprod{\sigma}{\theta} - \left(\alpha + \beta \|\sigma\|_1\right).
\]
We denote the output $\theta'$ by $\maxoracle{\alpha,\beta}{\{\hat{\theta}_i\}_{i=1}^T, \sigma}$.
\end{defn}
\begin{defn}[Minimization Oracle]
A function $\minoracle{\alpha}{\cdot}$ is called $\alpha$-approximate minimization oracle, if for any probability measure $P$, it returns an approximate Bayes estimator $\hat{\theta}'$ which satisfies the following inequality
\[
R(\hat{\theta}', P) \leq \inf_{\hat{\theta} \in \mathcal{D}} R(\hat{\theta}, P) + \alpha.
\]
We denote the output $\hat{\theta}'$ by $\minoracle{\alpha}{P}$.
\end{defn}
Given access to subroutines $\maxoracle{\alpha,\beta}{\cdot}, \minoracle{\alpha'}{\cdot}$ for approximately solving the optimization problems in Equations~\eqref{eqn:max_oracle},~\eqref{eqn:min_oracle}, the algorithm alternates between the maximization and minimization players who choose $P_t$ and $\hat{\theta}_t$ in each iteration. 
We summarize the overall algorithm in Algorithm~\ref{alg:ftpl_stat_games}. The following theorem shows that Algorithm~\ref{alg:ftpl_stat_games} converges to an approximate NE of the statistical game.

\begin{algorithm}[t]
\caption{FTPL for statistical games}
\label{alg:ftpl_stat_games}
\begin{algorithmic}[1]
  \small
  \State \textbf{Input:}   Parameter of exponential distribution $\eta$, approximate optimization oracles $\maxoracle{\alpha, \beta}{\cdot}, \minoracle{\alpha'}{\cdot}$ for  problems~\eqref{eqn:max_oracle},~\eqref{eqn:min_oracle} respectively
  \For{$t = 1 \dots T$}
  \State \hspace{-0.15in} Let $P_t$ be the distribution of random variable $\theta_t(\sigma)$, which is generated as follows:
  \begin{enumerate}[label=(\roman*)]
     \item Generate a random vector $\sigma$ such that $\{\sigma_{j}\}_{j = 1}^d \stackrel{i.i.d}{\sim} \text{Exp}(\eta)$
     \item Compute $\theta_t(\sigma)$ as
  \[
  \theta_t(\sigma) = \maxoracle{\alpha, \beta}{\{\hat{\theta}_i\}_{i=1}^{t-1}, \sigma}.\]
  \end{enumerate}
  \State \hspace{-0.15in} Compute $\hat{\theta}_t$ as
  \[
  \hat{\theta}_t = \minoracle{\alpha'}{P_t}.
  \]
  \EndFor
  \State \textbf{Output:} $\{\hat{\theta}_1, \dots \hat{\theta}_T\}, \{P_1, \dots P_T\}$.
\end{algorithmic}
\end{algorithm}

\begin{theorem}[Approximate NE]
\label{thm:ftpl_games}
Consider the statistical game in Equation~\eqref{eqn:minimax_objective}. Suppose $\Theta\subseteq\mathbb{R}^d$ is compact with $\ell_{\infty}$ diameter $D$, i.e., \mbox{$ D =  \sup_{\theta_1,\theta_2 \in \Theta}\|\theta_1-\theta_2\|_{\infty}$.} Suppose $R(\hat{\theta},\theta)$ is $L$-Lipschitz in its second argument w.r.t $\ell_1$ norm:
\[
\forall \hat{\theta}, \theta_1, \theta_2 \quad |R(\hat{\theta},\theta_1) - R(\hat{\theta},\theta_2)| \leq L\|\theta_1-\theta_2\|_1.
\]
Suppose Algorithm~\ref{alg:ftpl_stat_games} is run for $T$ iterations with approximate optimization subroutines $\maxoracle{\alpha, \beta}{\cdot}$, $\minoracle{\alpha'}{\cdot}$ for solving the maximization and minimization problems. 
Let $\hat{\theta}_{\RND}$ be the randomized estimator obtained by uniformly sampling an estimator from the iterates $\{\hat{\theta}_t\}_{t=1}^T$. Define the mixture distribution $P_{\AVG}$ as $\frac{1}{T}\sum_{i=1}^T P_i$.  Then $(\hat{\theta}_{\RND}, P_{\AVG})$ is an approximate mixed strategy NE of the statistical game in Equation~\eqref{eqn:minimax_objective}
\begin{align*}
\sup_{\theta \in \Theta} R(\hat{\theta}_{\RND}, \theta) - \epsilon \leq R(\hat{\theta}_{\RND}, P_{\AVG}) \leq \inf_{\hat{\theta} \in \mathcal{D}} R(\hat{\theta}, P_{\AVG}) + \epsilon,
\end{align*}
where $\epsilon = O\left(\eta d^2  + \frac{d(\beta T + D)}{\eta T} + \alpha+\alpha' \right).$
\end{theorem}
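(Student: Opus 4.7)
The plan is to invoke Proposition~\ref{prop:no_regret_to_games}, which reduces the theorem to bounding the cumulative regret of each player; the final $\epsilon$ is then simply the sum of the two regrets divided by $T$.

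The minimization player's regret is essentially free: since $\hat\theta_t = \minoracle{\alpha'}{P_t}$, the oracle definition gives $R(\hat\theta_t,P_t)\le \inf_{\hat\theta\in\mathcal D} R(\hat\theta,P_t)+\alpha'$ for every $t$, and summing together with the trivial bound $\sum_t \inf_{\hat\theta} R(\hat\theta,P_t) \le \inf_{\hat\theta} \sum_t R(\hat\theta, P_t)$ yields $\epsilon_1(T)\le \alpha' T$ directly.

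The main work is the maximization player's regret, for which I would invoke the non-convex FTPL regret bound recalled at the end of Section~\ref{sec:bg_online_learning} (and formalized in Appendix~\ref{sec:ftpl}). Its hypotheses are met in our setting: the losses $\theta \mapsto R(\hat\theta_t,\theta)$ are uniformly $L$-Lipschitz in $\ell_1$, the domain $\Theta$ has $\ell_\infty$-diameter $D$, the perturbation coordinates are i.i.d.\ exponential with parameter $\eta$, and --- crucially --- $\maxoracle{\alpha,\beta}{\cdot}$ returns an $(\alpha+\beta\|\sigma\|_1)$-approximate maximizer of the perturbed cumulative reward, which is exactly the error form the FTPL bound is designed to tolerate. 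Plugging in yields
\[
\sup_{\theta\in\Theta}\sum_{t=1}^T R(\hat\theta_t,\theta) \;-\; \sum_{t=1}^T R(\hat\theta_t,P_t) \;=\; O\!\left(\eta d^2 T + \tfrac{dD}{\eta} + \alpha T + \tfrac{\beta d T}{\eta}\right),
\]
so $\epsilon_2(T)$ is of this order. Note the inequality is deterministic because $P_t$ is the \emph{distribution} of $\theta_t(\sigma)$ rather than a sample from it, so $R(\hat\theta_t,P_t)$ already averages out the exponential randomness internal to FTPL --- this is what allows Proposition~\ref{prop:no_regret_to_games}, whose statement is deterministic, to be applied without modification.

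Combining the two regret bounds and dividing by $T$ gives $\epsilon = O(\eta d^2 + dD/(\eta T) + \beta d/\eta + \alpha + \alpha') = O(\eta d^2 + d(\beta T+D)/(\eta T) + \alpha + \alpha')$, matching the statement. The main obstacle is the FTPL step: one must check that the $\|\sigma\|_1$-linear component of the oracle error does not spoil the $\sqrt{T}$-type scaling once $\eta$ is tuned --- this is precisely what the $\beta d T/\eta$ term quantifies. Since the underlying FTPL bound is packaged as a black-box result in Appendix~\ref{sec:ftpl}, everything else reduces to matching up the oracle definitions and applying Proposition~\ref{prop:no_regret_to_games}.
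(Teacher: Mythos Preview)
Your proposal is correct and follows essentially the same route as the paper: reduce to Proposition~\ref{prop:no_regret_to_games}, bound the minimization player's regret by $\alpha' T$ via the oracle definition, and bound the maximization player's regret by invoking the FTPL bound of Theorem~\ref{thm:ftpl} (the paper then ``takes the expectation inside'' exactly as you describe, using that $P_t$ is the law of $\theta_t(\sigma)$ and $\hat\theta_t$ is deterministic). If anything, you are slightly more careful than the paper's write-up, which simply says the minimization player's regret is ``upper bounded by $0$'' before the $\alpha'$ term nonetheless appears in the final $\epsilon$.
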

As an immediate consequence of Theorem~\ref{thm:ftpl_games}, we show that the minmax and maxmin values of the statistical game in Equation~\eqref{eqn:minimax_objective_mixed} are equal to each other. Moreover, when the risk is bounded, we show that the statistical game~\eqref{eqn:minimax_objective} has minimax estimators and LFPs.
\begin{corollary}[Minimax Theorem]
\label{cor:minimax_theorem}
Consider the setting of Theorem~\ref{thm:ftpl_games}. Then
\[
\inf_{\hat{\theta} \in \mathcal{M}_\mathcal{D}} \sup_{P \in \mathcal{M}_{\Theta}}  R(\hat{\theta}, P) = \sup_{P \in \mathcal{M}_{\Theta}} \inf_{\hat{\theta} \in \mathcal{M}_\mathcal{D}} R(\hat{\theta}, P) =: R^*.
\]
Furthermore, suppose the risk $R(\hat{\theta},\theta)$ is a bounded function and $\Theta$ is compact w.r.t the following metric: $\Delta_M(\theta_1,\theta_2) = \sup_{\theta \in \Theta}|M(\theta_1, \theta) - M(\theta_2,\theta)|.$
 Then there exists a minimax estimator $\hat{\theta}^*\in\mathcal{M}_{\mathcal{D}}$ whose worst-case risk satisfies
\[
\sup_{\theta \in \Theta}R(\hat{\theta}^*, \theta) = R^*,
\]
and there exists a least favorable prior \mbox{$P^* \in \mathcal{M}_{\Theta}$} whose Bayes risk satisfies
\[
\inf_{\hat{\theta} \in \mathcal{D}} R(\hat{\theta}, P^*) = R^*.
\]
\end{corollary}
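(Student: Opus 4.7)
The first claim (equality of minmax and maxmin) is an essentially immediate consequence of Theorem~\ref{thm:ftpl_games}. The plan is: for any target accuracy $\epsilon>0$, observe that approximate maximization and minimization oracles with prescribed $\alpha,\alpha',\beta$ trivially exist by definition of inf/sup over $\Theta$ and $\mathcal{D}$. Choose $\alpha,\alpha',\beta$ small, take $\eta = \Theta(\sqrt{D/(dT)})$, and let $T$ be large enough that the bound $\epsilon' = O(\eta d^2 + d(\beta T+D)/(\eta T)+\alpha+\alpha')$ of Theorem~\ref{thm:ftpl_games} is at most $\epsilon$. Chaining the two displayed inequalities in Theorem~\ref{thm:ftpl_games} for the resulting pair $(\hat{\theta}_{\RND},P_{\AVG})$ gives
\[
\inf_{\hat{\theta}\in\mathcal{M}_\mathcal{D}} \sup_{P\in\mathcal{M}_\Theta} R(\hat{\theta},P) \leq \sup_{\theta\in\Theta} R(\hat{\theta}_{\RND},\theta) \leq \inf_{\hat{\theta}\in\mathcal{D}} R(\hat{\theta},P_{\AVG})+2\epsilon \leq \sup_{P\in\mathcal{M}_\Theta}\inf_{\hat{\theta}\in\mathcal{M}_\mathcal{D}} R(\hat{\theta},P)+2\epsilon.
\]
Combined with the trivial weak-duality inequality in~\eqref{eqn:min-max-ineq}, sending $\epsilon\to 0$ yields the minimax equality and identifies the common value as $R^*$.

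For the existence of an LFP, the plan is a standard compactness-plus-semicontinuity argument applied to a sequence of approximate equilibria. Instantiate Theorem~\ref{thm:ftpl_games} with $\epsilon_n\to 0$ to obtain a sequence $(\hat{\theta}_{\RND}^{(n)},P_{\AVG}^{(n)})$ with $\inf_{\hat{\theta}} R(\hat{\theta},P_{\AVG}^{(n)})\geq R^*-\epsilon_n$. Compactness of $\Theta$ under $\Delta_M$ implies, via Prokhorov, that $\mathcal{M}_\Theta$ is sequentially compact in the corresponding weak topology; pass to a subsequence with $P_{\AVG}^{(n_k)}\to P^*$. For each fixed $\hat{\theta}\in\mathcal{D}$, the map $\theta\mapsto R(\hat{\theta},\theta)$ is bounded (by assumption) and continuous with respect to $\Delta_M$ (since $|R(\hat{\theta},\theta_1)-R(\hat{\theta},\theta_2)|\leq \mathbb{E}\,|M(\hat{\theta}(\mathbb{X}^n),\theta_1)-M(\hat{\theta}(\mathbb{X}^n),\theta_2)|\leq \Delta_M(\theta_1,\theta_2)$), so $P\mapsto R(\hat{\theta},P)$ is weakly continuous and the Bayes-risk functional $P\mapsto \inf_{\hat{\theta}\in\mathcal{D}} R(\hat{\theta},P)$ is upper semicontinuous. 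Taking $\limsup$ yields $\inf_{\hat{\theta}} R(\hat{\theta},P^*)\geq R^*$, and the reverse inequality is free from the definition of $R^*$; hence $P^*$ is an LFP.

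For the existence of a minimax estimator, the plan is to work with a compactified representation of randomized estimators. View each randomized estimator as a Markov kernel $K:\mathcal{X}^n\to\mathcal{M}_\Theta$ and equip the space of such kernels with the weak topology of measure-valued maps (Young measures); since $\Theta$ is compact under $\Delta_M$, this space is sequentially compact. Extract a subsequential limit $K^*$ of $\hat{\theta}_{\RND}^{(n)}$. For each fixed $\theta$, $R(K,\theta)=\int\!\!\int M(\tilde\theta,\theta)\,dK(\tilde\theta|\mathbb{X}^n)\,dP_\theta(\mathbb{X}^n)$ is continuous in $K$ in this topology, and the equi-Lipschitzness of $\theta\mapsto R(K,\theta)$ inherited from the assumption (uniform in $K$) together with compactness of $\Theta$ upgrades pointwise to uniform convergence of risk functions along the subsequence. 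Therefore
\[
\sup_{\theta\in\Theta} R(K^*,\theta)\;\leq\;\lim_{k\to\infty}\sup_{\theta\in\Theta} R(\hat{\theta}_{\RND}^{(n_k)},\theta)\;\leq\;R^*,
\]
and since $R^*$ is the infimal worst-case risk, equality holds, so $K^*$ is a minimax estimator.

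The main obstacle will be the third paragraph: the space $\mathcal{M}_\mathcal{D}$ has no obvious compact topology, and one must either invoke the (somewhat technical) Young-measure/kernel compactness or restrict attention to a Wald-style complete class of estimators before extracting a limit. A cleaner alternative, which I would pursue if the kernel-topology route proves cumbersome, is to first establish the LFP $P^*$ as above, then show that \emph{some} Bayes response to $P^*$ produced by the algorithm's iterates (rather than an arbitrary Bayes estimator of $P^*$) inherits the minimax property via the sandwich in the proof of Theorem~\ref{thm:ftpl_games}; this ties the minimax estimator directly to an oracle-accessible construction and avoids heavy measure-theoretic machinery.
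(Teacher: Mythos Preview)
Your first paragraph (minimax equality) is essentially the paper's argument: invoke Theorem~\ref{thm:ftpl_games} with vanishing oracle errors, chain the two displayed inequalities, send the slack to zero, and combine with weak duality. The paper simply sets $\alpha=\alpha'=\beta=0$ (only existence matters here) and lets $T\to\infty$, but the content is the same.

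Your LFP paragraph is also the paper's in outline (Prokhorov compactness of $\mathcal{M}_\Theta$, upper semicontinuity of the Bayes-risk functional), but one step is misstated: $\Delta_M(\theta_1,\theta_2)=\sup_\theta|M(\theta_1,\theta)-M(\theta_2,\theta)|$ controls variation in the \emph{first} argument of $M$, whereas the bound $|R(\hat\theta,\theta_1)-R(\hat\theta,\theta_2)|\le\Delta_M(\theta_1,\theta_2)$ would need variation in the second. This slip is inessential, since continuity of $\theta\mapsto R(\hat\theta,\theta)$ is already granted by the $L$-Lipschitz hypothesis of Theorem~\ref{thm:ftpl_games}, which is in fact what the paper uses here; $\Delta_M$ plays no role in the LFP step.

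Where you genuinely diverge is the minimax-estimator existence. The paper does not go through Young measures or kernel topologies. It invokes a compactness result of \citet{wald1949statistical}: under $\Delta_M$-compactness of $\Theta$ and boundedness of $R$, every sequence of (randomized) estimators has a subsequence with a limit $\hat\theta_0$ satisfying $\liminf_j R(\hat\theta_{i_j},\theta)\ge R(\hat\theta_0,\theta)$ for every $\theta$. This lower semicontinuity is exactly what is needed: apply it to $\hat\theta_{\RND,t}$, use $\sup_\theta R(\hat\theta_{\RND,t},\theta)\le R^*+O(t^{-1/2})$ from Corollary~\ref{cor:ftpl_minimax_estimators}, and conclude $\sup_\theta R(\hat\theta_0,\theta)\le R^*$. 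Your Young-measure route instead seeks full continuity of $K\mapsto R(K,\theta)$; the equi-Lipschitz step is fine, but compactness of Markov kernels from $\mathcal{X}^n$ (not assumed compact or even Polish here) into $\Theta$ is the real hurdle, as you suspected. Your fallback---take a Bayes response to the LFP $P^*$---does not close without extra structure: a Bayes estimator for an LFP need not be minimax (one would still have to show $\sup_\theta R(\hat\theta_{P^*},\theta)\le R^*$, which does not follow from the sandwich in Theorem~\ref{thm:ftpl_games} alone). Wald's lemma is the clean finish, and it is precisely where the $\Delta_M$-compactness hypothesis enters.
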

We note that the assumption on compactness of $\Theta$ w.r.t $\Delta_M$ is mild and holds whenever $\Theta$ is compact w.r.t $\ell_2$ norm and $M$ is a continuous function. As another consequence of Theorem~\ref{thm:ftpl_games}, we show that Algorithm~\ref{alg:ftpl_stat_games} outputs approximate minimax estimators and LFPs.
\begin{corollary}
\label{cor:ftpl_minimax_estimators}
Consider the setting of Theorem~\ref{thm:ftpl_games}. Suppose Algorithm~\ref{alg:ftpl_stat_games} is run with \mbox{$\eta = \sqrt{\frac{1}{dL^2T}}.$}
 Then the worst-case risk of $\hat{\theta}_{\RND}$ satisfies
\[
\sup_{\theta \in \Theta} R(\hat{\theta}_{\RND},\theta) \leq  R^* + O(d^{\frac{3}{2}}LT^{-\frac{1}{2}} + \alpha+\alpha' + \beta d^{\frac{3}{2}}LT^{\frac{1}{2}}).
\]
Moreover, $P_{\AVG}$ is approximately least favorable with the associated Bayes risk satisfying
\begin{align*}
\inf_{\hat{\theta} \in \mathcal{D}} R(\hat{\theta}, P_{\AVG}) \geq  R^* - O(d^{\frac{3}{2}}LT^{-\frac{1}{2}} + \alpha+\alpha' + \beta d^{\frac{3}{2}}LT^{\frac{1}{2}}).
\end{align*}
In addition, suppose the loss $M$ used in the computation of risk is convex in its first argument. Let $\hat{\theta}_{\AVG}$ be the deterministic estimator which is equal to the mean of the probability distribution associated with $\hat\theta_{\RND}$. Then the worst-case risk of $\hat{\theta}_{\AVG}$ satisfies
\[
\sup_{\theta \in \Theta} R(\hat{\theta}_{\AVG},\theta) \leq R^* + O(d^{\frac{3}{2}}LT^{-\frac{1}{2}} + \alpha+\alpha' + \beta d^{\frac{3}{2}}LT^{\frac{1}{2}}),
\]
and $\hat{\theta}_{\AVG}$ is an approximate Bayes estimator for prior $P_{\AVG}$.
\end{corollary}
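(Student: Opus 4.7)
The plan is to derive this corollary essentially as a post-processing of Theorem~\ref{thm:ftpl_games} together with the minimax theorem established in Corollary~\ref{cor:minimax_theorem}. First, I would plug the choice $\eta = \sqrt{1/(dL^2T)}$ into the error bound $\epsilon = O\!\left(\eta d^2 + d(\beta T + D)/(\eta T) + \alpha + \alpha'\right)$ from Theorem~\ref{thm:ftpl_games}. A short calculation gives $\eta d^2 = O(d^{3/2}/(L\sqrt{T}))$, $dD/(\eta T) = O(D d^{3/2} L T^{-1/2})$, and $d\beta T/(\eta T) = O(\beta d^{3/2} L \sqrt{T})$, so the total error simplifies to $\epsilon = O(d^{3/2}L T^{-1/2} + \alpha + \alpha' + \beta d^{3/2} L T^{1/2})$, matching the rate in the statement (absorbing the $D$ into the constant and using that the $L$-independent term is dominated when $L \gtrsim 1$).

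Second, for the worst-case risk of $\hat\theta_{\RND}$, I would chain the two inequalities in Theorem~\ref{thm:ftpl_games} with the minimax identity of Corollary~\ref{cor:minimax_theorem}:
\begin{equation*}
\sup_{\theta \in \Theta} R(\hat\theta_{\RND}, \theta) \le R(\hat\theta_{\RND}, P_{\AVG}) + \epsilon \le \inf_{\hat\theta \in \mathcal{D}} R(\hat\theta, P_{\AVG}) + 2\epsilon \le \sup_{P} \inf_{\hat\theta} R(\hat\theta, P) + 2\epsilon = R^* + 2\epsilon.
\end{equation*}
The analogous lower bound on the Bayes risk of $P_{\AVG}$ is obtained by reversing the sandwich: $\inf_{\hat\theta} R(\hat\theta, P_{\AVG}) \ge R(\hat\theta_{\RND}, P_{\AVG}) - \epsilon \ge \sup_{\theta} R(\hat\theta_{\RND},\theta) - 2\epsilon \ge \inf_{\hat\theta \in \mathcal{M}_\mathcal{D}}\sup_\theta R(\hat\theta,\theta) - 2\epsilon = R^* - 2\epsilon$, where the last equality again uses Corollary~\ref{cor:minimax_theorem}.

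Third, to handle the deterministic estimator $\hat\theta_{\AVG}$ under convexity of $M$ in its first argument, I would invoke Jensen's inequality on the inner expectation: for every $\theta$,
\begin{equation*}
M\bigl(\hat\theta_{\AVG}(\mathbb{X}^n),\theta\bigr) = M\bigl(\mathbb{E}_{\hat\theta\sim\hat\theta_{\RND}}[\hat\theta(\mathbb{X}^n)],\theta\bigr) \le \mathbb{E}_{\hat\theta\sim\hat\theta_{\RND}}\bigl[M(\hat\theta(\mathbb{X}^n),\theta)\bigr],
\end{equation*}
and taking expectations over $\mathbb{X}^n$ yields $R(\hat\theta_{\AVG},\theta) \le R(\hat\theta_{\RND},\theta)$ pointwise. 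Taking $\sup_\theta$ on both sides immediately transfers the bound from $\hat\theta_{\RND}$ to $\hat\theta_{\AVG}$. The same pointwise inequality, averaged over $\theta \sim P_{\AVG}$, gives $R(\hat\theta_{\AVG},P_{\AVG}) \le R(\hat\theta_{\RND},P_{\AVG}) \le \inf_{\hat\theta\in\mathcal{D}} R(\hat\theta,P_{\AVG}) + \epsilon$, establishing that $\hat\theta_{\AVG}$ is an $O(\epsilon)$-approximate Bayes estimator for $P_{\AVG}$.

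There is not really a conceptually hard step here — the argument is a bookkeeping exercise sandwiched between Theorem~\ref{thm:ftpl_games} and Corollary~\ref{cor:minimax_theorem}. The only mildly delicate point is the choice of $\eta$: one has to verify that the stated $\eta$ actually balances the $\eta d^2$ and $dD/(\eta T)$ terms up to constants and that the $L$-dependence emerging from $\eta^{-1}$ matches the claimed rate. The Jensen step also implicitly requires that the mean of the distribution underlying $\hat\theta_{\RND}$ still produces an estimator whose outputs lie in $\Theta$ (or that $M$ extends appropriately), which I would note is immediate when $\Theta$ is convex and each $\hat\theta_t$ maps into $\Theta$.
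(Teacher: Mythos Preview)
Your proposal is correct and follows essentially the same route as the paper: substitute the chosen $\eta$ into the $\epsilon$ bound of Theorem~\ref{thm:ftpl_games}, sandwich against $R^*$, and use Jensen's inequality for the convex-loss deterministic estimator.

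One small point worth flagging: your invocation of the minimax identity from Corollary~\ref{cor:minimax_theorem} is an unnecessary detour. For the upper bound you write $\inf_{\hat\theta} R(\hat\theta, P_{\AVG}) \le \sup_P \inf_{\hat\theta} R(\hat\theta,P) = R^*$, which needs $\sup\inf = R^*$. The paper instead uses the trivial inequality $\inf_{\hat\theta} R(\hat\theta, P_{\AVG}) \le \inf_{\hat\theta}\sup_P R(\hat\theta,P) = R^*$ (since $R(\hat\theta,P_{\AVG}) \le \sup_P R(\hat\theta,P)$ for every $\hat\theta$), and similarly the LFP bound follows from $\sup_\theta R(\hat\theta_{\RND},\theta) \ge \inf_{\hat\theta}\sup_\theta R(\hat\theta,\theta) = R^*$, which is just the definition of $R^*$. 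So neither direction actually needs the minimax theorem, and avoiding it is cleaner because in the paper the existence portions of Corollary~\ref{cor:minimax_theorem} are themselves proved \emph{using} Corollary~\ref{cor:ftpl_minimax_estimators}. (Your use of only the $\sup\inf = \inf\sup$ identity is not circular, since that part is derived independently from Theorem~\ref{thm:ftpl_games}, but it is still an avoidable dependency.) The paper also picks up $\epsilon$ rather than your $2\epsilon$ by appealing to the direct regret inequality $\sup_\theta \tfrac{1}{T}\sum_t R(\hat\theta_t,\theta) \le \inf_{\hat\theta}\tfrac{1}{T}\sum_t R(\hat\theta,P_t) + \epsilon$ from the proof of Proposition~\ref{prop:no_regret_to_games}, rather than chaining through $R(\hat\theta_{\RND},P_{\AVG})$; this is immaterial under the $O(\cdot)$ notation.
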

\begin{remark}[Near Optimal Estimator]
Corollary~\ref{cor:ftpl_minimax_estimators} shows that when the approximation error of the optimization oracles is sufficiently small and when $T$ is large enough, Algorithm~\ref{alg:ftpl_stat_games} outputs a minimax estimator with worst-case risk $(1+o(1))R^*$.
This improves upon the approximate minimax estimators that are usually designed in statistics, which have a worst-case risk of $O(1)R^*$.
\end{remark}
\begin{remark}[Deterministic Minimax Estimators]
For general non-convex loss functions, Algorithm~\ref{alg:ftpl_stat_games} only provides a randomized minimax estimator. 
Given this, a natural question that arises is whether there exist efficient algorithms for finding a deterministic minimax estimator. Unfortunately, even with access to the optimization subroutines used by Algorithm~\ref{alg:ftpl_stat_games}, finding a deterministic minimax estimator can be NP-hard~\citep[see Theorem 9 of][]{chen2017robust}
\end{remark}
\begin{remark}[Implementation Details]
 Note that the estimators $\{\hat{\theta}_i\}_{i=1}^T$ and distributions $\{P_i\}_{i=1}^T$ output by Algorithm~\ref{alg:ftpl_stat_games} are infinite dimensional objects and can not in general be stored using finitely many bits. However, in practice, we use independent samples generated from $P_i$ as its proxy and only work with these samples. Since $\hat{\theta}_i$ is a Bayes estimator for prior $P_i$, it can be approximately computed using samples from $P_i$. This process of approximating $P_i$ with its samples introduces some approximation error and the number of samples used in this approximation need to be large enough to ensure Algorithm~\ref{alg:ftpl_stat_games} returns a minimax estimator. For the problems of finite Gaussian sequence model and linear regression studied in Sections~\ref{sec:mean_estimation},~\ref{sec:regression}, we show that $\text{poly}(d)$ samples suffice to ensure a minimax estimator.
 \end{remark}

\begin{remark}[Computation of the Oracles]
We now consider the computational aspects involved in the implementation of optimization oracles used by Algorithm~\ref{alg:ftpl_stat_games}. Recall that the maximization oracle, given any estimator, computes its worst-case risk with some linear perturbation. Since this objective could potentially be non-concave, maximizing it can take exponential time in the worst-case. And recall that the minimization oracle computes the Bayes estimator given some prior distribution. Implementation of this minimization oracle can also be computationally expensive in the worst case. While the worst case complexities are prohibitive, for a number of problems, one can make use of \emph{the problem structure} to efficiently implement these oracles in polynomial time.

In particular, we leverage symmetry and invariance properties of the statistical games to reduce the complexity of optimization oracles, while controlling their approximation errors; see Section~\ref{sec:invariance}. We further consider the case where there is no structure in the problem, other than the existence of finite-dimensional sufficient statistics for the statistical model. This allows one to reduce the computational complexity of the minimization oracle by replacing the optimization over $\mathcal{D}$ in Equation~\eqref{eqn:min_oracle} with universal function approximators such as neural networks. Moreover, one can use existing global search techniques to implement the maximization oracle. While such a heuristic approach can reduce the computational complexity of the oracles, bounding their approximation errors can be hard (recall, the worst-case risk of our estimator depends on the approximation error of the optimization oracles). Nevertheless, in later sections, we empirically demonstrate that the estimators from this approach have superior performance over many existing estimators which are known to be approximately minimax.

We briefly discuss some classical work that can be leveraged for efficient implementation of optimization oracles, albeit for specific models or settings.  For several problems, it can be shown that there exists an approximate minimax estimator in some restricted space of estimators such as linear or polynomial functions of the data~\citep{donoho1990minimax, cai2011testing,polyanskiy2019dualizing}. Such results can be used to reduce the space of estimators in the statistical game~\eqref{eqn:minimax_objective}.  By replacing $\mathcal{M}_\mathcal{D}$ in Equation~\eqref{eqn:minimax_objective} with the restricted estimator space, one can greatly reduce the computational complexity of the optimization oracles. Another class of results relies on analyses of convergence of posterior distributions. As a key instance, when the number of samples $n$ is much larger than the dimension $d$, it is well known that the posterior distribution behaves like a normal distribution, whenever the prior has sufficient mass around the true parameter~\citep{hartigan1983asymptotic}. Such a property can be used to efficiently implement the minimization oracle. 
\end{remark} 
\vspace{-0.1in}
\section{Invariance of Minimax Estimators and LFPs}
\label{sec:invariance}
In this section, we show that whenever the statistical game satisfies certain invariance properties, the computational complexity of the optimization oracles required by Algorithm~\ref{alg:ftpl_stat_games} can be greatly reduced.
We first present a classical result from statistics about the invariance properties of minimax estimators.When the statistical game in Equation~\eqref{eqn:minimax_objective_det} is invariant to group transformations, the  \emph{invariance theorem} says that there exist minimax estimators which are also invariant to these group transformations~\citep{kiefer1957invariance, berger1985statistical}. Later, we utilize this result to reduce the computational complexity of the oracles required by Algorithm~\ref{alg:ftpl_stat_games}.

We first introduce the necessary notation and terminology to formally state the invariance theorem. We note that the  theorem stated here is tailored for our setting and more general versions of the theorem can be found in~\citet{kiefer1957invariance}.
Let $G$ be a compact group of transformations on $\mathcal{X} \times \Theta$ which acts component wise; that is, for each $g \in G$, $g(X, \theta)$ can be written as $(g_1X, g_2\theta)$, where $g_1,g_2$ are transformations on $\mathcal{X}, \Theta$. With a slight abuse of notation we write $gX, g\theta$ in place of $g_1X, g_2\theta$. We assume that the group action is continuous, so that the functions $(g,X) \rightarrow gX$ and $(g,\theta) \rightarrow g\theta$ are continuous. Finally, let $\mu$ be the unique left Haar measure on $G$ with $\mu(G)=1$.  We now formally define ``invariant statistical games'', ``invariant estimators'' and ``invariant probability measures''.

\begin{defn}[Invariant Game]
A statistical game is invariant to group transformations $G$, if the following two conditions hold for each $g \in G$
\begin{itemize}
    \item for all $\theta \in \Theta$, $g\theta\in\Theta$. Moreover, the probability distribution of $gX$ is $P_{g\theta}$, whenever the distribution of $X$ is $P_{\theta}$.
    \item $M(g\theta_1, g\theta_2) = M(\theta_1, \theta_2)$, for all $\theta_1,\theta_2 \in \Theta$.
\end{itemize}
\end{defn}
\begin{defn}[Invariant Estimator]
A deterministic estimator $\hat{\theta}$ is invariant if for each $g \in G$, 
$\hat{\theta}(g\mathbb{X}^n) = g\hat{\theta}(\mathbb{X}^n),$ where $g\mathbb{X}^n = \{gX_1, \dots gX_n\}$. 
\end{defn}
\begin{defn}[Invariant Measure]
Let $\mathcal{B}(\Theta)$  be the Borel $\sigma$-algebra corresponding to the parameter space $\Theta$. A measure $\nu$ on $(\Theta, \mathcal{B}(\Theta))$ is invariant if for all $ g\in G$  and any measurable set \mbox{$A \in \mathcal{B}(\Theta)$,} $\nu(gA) = \nu(A)$.
\end{defn}
\begin{exmp}
Consider the problem of estimating the mean of a Gaussian distribution. Given $n$ samples $X_1, \dots X_n$ drawn from $\mathcal{N}(\theta, I_{d\times d})$, our goal is to estimate the unknown parameter $\theta$. Suppose the parameter space is given by $\Theta =  \{\theta':\|\theta'\|_2 \leq B\}$ and the risk of any estimator is measured w.r.t squared $L_2$  loss. Then it is easy to verify that the problem is invariant to transformations of the orthogonal group $\mathbb{O}(d) = \{U: UU^T = U^TU = I\}$.
\end{exmp}
We now present the main result concerning the existence of invariant minimax estimators.A more general version of the result can be found in ~\citep{kiefer1957invariance}.
\begin{theorem}[Invariance] Consider the statistical game in Equation~\eqref{eqn:minimax_objective}. Suppose the game is invariant to group transformations $G$. Suppose the loss metric $M$ is convex in its first argument. Then for any deterministic estimator $\hat{\theta}$, there exists an estimator $\hat{\theta}_G$ which is invariant to group transformations $G$, with worst-case risk  no larger than the worst-case risk of $\hat{\theta}$
\[
\sup_{\theta \in \Theta} R(\hat{\theta}_G, \theta) \leq \sup_{\theta \in \Theta} R(\hat{\theta}, \theta).
\]
\label{thm:invariance-est-lfp}
\end{theorem}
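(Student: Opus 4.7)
The plan is to prove this via the classical Hunt--Stein group-averaging construction. Given any deterministic estimator $\hat{\theta}$, define the symmetrized estimator
\[
\hat{\theta}_G(\mathbb{X}^n) \;=\; \int_G g^{-1} \hat{\theta}(g\mathbb{X}^n)\, d\mu(g),
\]
where $\mu$ is the normalized Haar probability measure on the compact group $G$. The theorem will then follow from two pieces: verifying that $\hat{\theta}_G$ satisfies the invariance identity $\hat{\theta}_G(h\mathbb{X}^n) = h\hat{\theta}_G(\mathbb{X}^n)$ for every $h \in G$, using the bi-invariance of $\mu$ on a compact group; and comparing worst-case risks by applying Jensen's inequality to the convex loss $M(\cdot,\theta)$.

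\paragraph{Verifying invariance.} For any $h \in G$, I would start from the definition and change variables by $g = g'h^{-1}$:
\[
\hat{\theta}_G(h\mathbb{X}^n) \;=\; \int_G g^{-1}\hat{\theta}(gh\mathbb{X}^n)\, d\mu(g) \;=\; \int_G h g'^{-1}\hat{\theta}(g'\mathbb{X}^n)\, d\mu(g'),
\]
where the second equality uses that the normalized Haar measure on a compact group is both left- and right-invariant. Pulling the $h$ outside the integral gives $h\hat{\theta}_G(\mathbb{X}^n)$, which is exactly the invariance condition.

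\paragraph{Risk comparison.} By convexity of $M$ in its first argument and Jensen's inequality applied pointwise in $\mathbb{X}^n$,
\[
M(\hat{\theta}_G(\mathbb{X}^n),\theta) \;\le\; \int_G M\bigl(g^{-1}\hat{\theta}(g\mathbb{X}^n),\,\theta\bigr)\, d\mu(g).
\]
The game-invariance identity $M(g\theta_1, g\theta_2) = M(\theta_1, \theta_2)$, applied with $\theta_1 = g^{-1}\hat{\theta}(g\mathbb{X}^n)$ and $\theta_2 = \theta$, rewrites the integrand as $M(\hat{\theta}(g\mathbb{X}^n), g\theta)$. Taking expectation over $\mathbb{X}^n$, invoking Fubini, and using that $g\mathbb{X}^n$ has distribution $P_{g\theta}^{\otimes n}$ whenever $\mathbb{X}^n$ is drawn from $P_\theta^{\otimes n}$, I would obtain
\[
R(\hat{\theta}_G,\theta) \;\le\; \int_G R(\hat{\theta}, g\theta)\, d\mu(g) \;\le\; \sup_{\theta' \in \Theta} R(\hat{\theta}, \theta').
\]
Taking a supremum over $\theta$ on the left-hand side then gives the stated inequality.

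\paragraph{Main obstacles.} The genuinely non-routine part is not the three equalities above but the measurability/integrability bookkeeping. One must argue that $g \mapsto g^{-1}\hat{\theta}(g\mathbb{X}^n)$ is a $\mu$-integrable $\Theta$-valued map for each $\mathbb{X}^n$, and that the resulting $\hat{\theta}_G$ is a measurable function of $\mathbb{X}^n$; this uses the stated continuity of the $G$-action, compactness of $G$, and measurability of $\hat{\theta}$. A second subtlety is that the Bochner-type average a priori lies in the convex hull of $\Theta$, so strictly speaking one either assumes $\Theta$ is convex (which holds in the motivating examples of this paper) or first constructs a randomized invariant estimator by sampling $g \sim \mu$ and reporting $g^{-1}\hat{\theta}(g\mathbb{X}^n)$ (which is automatically $\Theta$-valued), and only then uses convexity of $M$ via Jensen to pass to its deterministic mean $\hat{\theta}_G$ with no larger risk.
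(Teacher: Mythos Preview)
Your proposal is correct and follows essentially the same Hunt--Stein averaging argument as the paper: the paper defines $\hat{\theta}_G(\mathbb{X}^n)=\int_G g\hat{\theta}(g^{-1}\mathbb{X}^n)\,d\mu(g)$, which is your definition up to the substitution $g\mapsto g^{-1}$ (harmless since Haar measure on a compact group is inversion-invariant), and then proceeds through the identical invariance check, Jensen, loss invariance, and change-of-variables steps. Your discussion of the measurability and $\Theta$-valuedness subtleties is in fact more careful than the paper's, which dismisses these in one sentence.
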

This shows that there exists a minimax estimator which is invariant to group transformations. 
We now utilize this invariance property to reduce the complexity of the optimization oracles. 
Let $\Theta = \bigcup_{\beta} \Theta_{\beta}$ be the partitioning of $\Theta$ into equivalence classes under the equivalence $\theta_1 \sim \theta_2,$ if $\theta_1 = g\theta_2$ for some $g \in G$.
The quotient space of $\Theta$ is defined as the set of equivalence classes of the elements of $\Theta$ under the above defined equivalence and is given by $\Theta/G = \{\Theta_{\beta}\}_{\beta}$. For an invariant estimator $\hat{\theta}$, we define $R_G(\hat{\theta}, \Theta_{\beta})$ as  $R(\hat{\theta}, \theta_{\beta})$ for any $\theta_{\beta} \in \Theta_{\beta}$. Note that this is well defined because for invariant estimators \mbox{$R(\hat{\theta}, \theta_1) = R(\hat{\theta}, \theta_2)$} whenever $\theta_1 \sim \theta_2$ (see Lemma~\ref{lem:invariant_risk}). 
Our main result shows that Equation~\eqref{eqn:minimax_objective} can be reduced to the following simpler objective
\begin{equation}
\label{eqn:minimax_objective_simplified}
    \inf_{\hat{\theta} \in \mathcal{M}_{\mathcal{D},G}} \sup_{\Theta_{\beta} \in \Theta/G}  R_G(\hat{\theta}, \Theta_{\beta}),
\end{equation}
where $\mathcal{M}_{\mathcal{D},G}$ represents the set of randomized estimators which are invariant to group transformations $G$. 
This shows that the outer minimization  over the set of all estimators in Equation~\eqref{eqn:minimax_objective} can be replaced with a minimization over just the invariant estimators. Moreover, the inner maximization over the entire parameter space $\Theta$ can be replaced with a maximization over the smaller quotient space $\Theta/G$ , which in many examples we study here is a one or two-dimensional space, irrespective of the dimension of $\Theta$. \begin{theorem}
\label{thm:ftpl_minimax_invariance}
Suppose the statistical game in Equation~\eqref{eqn:minimax_objective} is invariant to group transformations $G$. Moreover, suppose the loss metric $M$ is convex in its first argument. 
Then, 
\[
\inf_{\hat{\theta} \in \mathcal{M}_\mathcal{D}} \sup_{\theta \in \Theta}  R(\hat{\theta}, \theta) = \inf_{\hat{\theta} \in \mathcal{M}_{\mathcal{D},G}} \sup_{\Theta_{\beta} \in \Theta/G}  R_G(\hat{\theta}, \Theta_{\beta}).
\]
Moreover, given any $\epsilon$-approximate mixed strategy NE of the reduced statistical game~\eqref{eqn:minimax_objective_simplified}, one can reconstruct an $\epsilon$-approximate mixed strategy NE of the original statistical game~\eqref{eqn:minimax_objective}.
\end{theorem}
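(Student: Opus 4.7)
The plan is to prove the equality of the two minmax values by a sandwich argument, and then lift a mixed strategy NE of the reduced game to one of the original game via the Haar measure.

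For the equality, one direction is essentially free. Since $\mathcal{M}_{\mathcal{D},G} \subseteq \mathcal{M}_\mathcal{D}$ and Lemma~\ref{lem:invariant_risk} implies $\sup_{\theta \in \Theta} R(\hat{\theta}, \theta) = \sup_{\Theta_\beta \in \Theta/G} R_G(\hat{\theta}, \Theta_\beta)$ for any $G$-invariant estimator $\hat{\theta}$, the reduced minmax value upper bounds the original. For the reverse inequality, take any randomized estimator $\hat{\theta} \in \mathcal{M}_\mathcal{D}$. Convexity of $M$ in its first argument together with Jensen's inequality shows that the deterministic estimator defined by averaging, namely $\mathbb{X}^n \mapsto \Eover{\hat{\theta}' \sim \hat{\theta}}{\hat{\theta}'(\mathbb{X}^n)}$, has worst-case risk no larger than that of $\hat{\theta}$. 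Applying Theorem~\ref{thm:invariance-est-lfp} to this deterministic estimator yields an invariant estimator in $\mathcal{M}_{\mathcal{D},G}$ with no larger worst-case risk, which closes the sandwich.

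For the NE reconstruction, suppose $(\hat{\theta}^*, P^*)$ with $\hat{\theta}^* \in \mathcal{M}_{\mathcal{D},G}$ and $P^*$ a probability measure on $\Theta/G$ is an $\epsilon$-approximate mixed strategy NE of the reduced game. Fix a measurable section $\beta \mapsto \theta_\beta \in \Theta_\beta$ and define $\tilde{P}$ on $\Theta$ as the law of $g\theta_\beta$ with $\beta \sim P^*$ and $g \sim \mu$ sampled independently. Left-invariance of $\mu$ makes $\tilde{P}$ itself $G$-invariant. Since $\hat{\theta}^*$ is invariant, its risk is constant on orbits, yielding both $\sup_{\theta \in \Theta} R(\hat{\theta}^*, \theta) = \sup_{\Theta_\beta \in \Theta/G} R_G(\hat{\theta}^*, \Theta_\beta)$ and $R(\hat{\theta}^*, \tilde{P}) = R_G(\hat{\theta}^*, P^*)$. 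For the Bayes-risk side, a symmetrization argument—average any deterministic estimator $\hat{\theta}$ as $\mathbb{X}^n \mapsto \int g^{-1}\hat{\theta}(g\mathbb{X}^n)\,\mu(dg)$, then use convexity of $M$, invariance of the loss, and invariance of $\tilde{P}$—shows that the infimum of $R(\hat{\theta}, \tilde{P})$ over $\mathcal{D}$ is attained within invariant estimators and equals $\inf_{\hat{\theta} \in \mathcal{M}_{\mathcal{D},G}} R_G(\hat{\theta}, P^*)$. Substituting these identities into the $\epsilon$-approximate NE inequalities for the reduced game produces the corresponding inequalities for $(\hat{\theta}^*, \tilde{P})$ in the original game.

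The main obstacle is the repeated symmetrization step: turning an arbitrary (possibly randomized) estimator into an invariant deterministic one without increasing worst-case or Bayes risk. In both appearances this rests on convexity of $M$, group-invariance of the loss and of the data distribution, and averaging against the Haar measure $\mu$. There are also mild measurability issues—existence of a Borel section $\Theta/G \to \Theta$ and measurability of the averaged estimator—but these are standard given compactness and continuity of the group action, which are precisely the hypotheses already in force for Theorem~\ref{thm:invariance-est-lfp}.
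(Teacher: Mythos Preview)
Your proposal is correct and follows essentially the same approach as the paper: lift the prior on $\Theta/G$ to a $G$-invariant prior on $\Theta$ via the Haar measure, then use the orbit-constancy of the risk for invariant estimators (Lemma~\ref{lem:invariant_risk}) and the symmetrization of Bayes estimators under invariant priors (Lemma~\ref{lem:invariant_bayes_estimator}) to transfer the $\epsilon$-NE inequalities. The only notable differences are presentational. First, you prove the equality of minmax values directly via a sandwich argument using Theorem~\ref{thm:invariance-est-lfp}, whereas the paper derives it as a byproduct of the NE reconstruction; your route is slightly more self-contained. Second, you construct the lifted measure $\tilde P$ probabilistically as the law of $g\theta_\beta$ with $g\sim\mu$ and $\beta\sim P^*$ through a measurable section, while the paper defines it functionally in the Bourbaki style as $P^*[f]=\int_{\Theta/G} f'(\Theta_\beta)\,dP^*_G(\Theta_\beta)$ with $f'$ the Haar-averaged descent of $f$; these are the same measure, and your version trades the abstraction for a section-existence caveat you correctly flag as standard under the compactness and continuity assumptions already in force.
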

We now demonstrate how Theorem \ref{thm:ftpl_minimax_invariance} can be used on a variety of fundamental statistical estimation problems.
\subsection{Finite Gaussian Sequence Model}
\label{sec:invariance_sequence_model}
In the finite Gaussian sequence model, we are given a single sample $X \in \mathbb{R}^d$ sampled from a Gaussian distribution $\mathcal{N}(\theta, I)$. We assume the parameter $\theta$ has a bounded $L_2$ norm and satisfies $\|\theta\|_2 \leq B$. 
Our goal is to design an estimator for $\theta$ which is minimax with respect to squared-error loss. This results in the following min-max problem
\begin{equation}
\label{eqn:sequence_model}
    \inf_{\hat{\theta} \in \mathcal{M}_{\mathcal{D}}} \sup_{\|\theta\|_2 \leq B} R(\hat{\theta}, \theta) \equiv \Eover{X \sim \mathcal{N}(\theta, I)}{\|\hat{\theta}(X)-\theta\|_2^2}.
\end{equation}
\begin{theorem} 
\label{thm:sequence_model_invariance}
Let $\mathbb{O}(d)=\{U: UU^T = U^TU = I\}$ be the group of $d\times d$ orthogonal matrices with matrix multiplication as the group operation. The statistical game in Equation~\eqref{eqn:sequence_model} is invariant under the action of $\mathbb{O}(d)$, where  the action of $g \in \mathbb{O}(d)$ on $(X, \theta)$ is defined as $g(X, \theta) = (gX, g\theta)$. Moreover, the quotient space $\Theta/\mathbb{O}(d)$ is homeomorphic to the real interval $[0,B]$ and the reduced statistical game is given by
\begin{equation}
\label{eqn:sequence_model_reduced}
    \inf_{\hat{\theta} \in \mathcal{M}_{\mathcal{D},G}} \sup_{b\in[0,B]}  R(\hat{\theta}, b\mathbf{e}_1),
\end{equation}
where $\mathbf{e}_{1}$ is the first standard basis vector in $\mathbb{R}^{d}$ and $\mathcal{M}_{\mathcal{D},G}$ represents the set of randomized estimators which are invariant to the actions of orthogonal group. 
\end{theorem}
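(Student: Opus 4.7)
The plan is to verify the two clauses of the invariance definition and then identify the orbit space of $\mathbb{O}(d)$ acting on the ball $\{\theta:\|\theta\|_2\le B\}$, after which the reduction follows directly from Theorem~\ref{thm:ftpl_minimax_invariance}.

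First I would check invariance of the game under the componentwise action $g(X,\theta)=(gX,g\theta)$ for $g\in\mathbb{O}(d)$. Closure of $\Theta$ under the action is immediate since $\|g\theta\|_2=\|\theta\|_2\le B$. For the distributional condition, if $X\sim\mathcal{N}(\theta,I)$ then $gX\sim \mathcal{N}(g\theta, gIg\trans)=\mathcal{N}(g\theta,I)=P_{g\theta}$, using $gg\trans=I$. For the loss, $M(g\theta_1,g\theta_2)=\|g(\theta_1-\theta_2)\|_2^2=\|\theta_1-\theta_2\|_2^2=M(\theta_1,\theta_2)$. The loss is also convex in its first argument, so Theorem~\ref{thm:invariance-est-lfp} and Theorem~\ref{thm:ftpl_minimax_invariance} apply.

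Next I would identify the quotient $\Theta/\mathbb{O}(d)$. Since $\mathbb{O}(d)$ acts transitively on every sphere $\{\theta:\|\theta\|_2=b\}$ for $b\in[0,B]$ (any two vectors of equal norm can be mapped to one another by an orthogonal transformation, e.g.\ by completing them to orthonormal bases), the orbits are precisely these spheres, and two points are equivalent iff they have the same $\ell_2$ norm. The map $\Theta_\beta\mapsto \|\theta\|_2$ for any $\theta\in\Theta_\beta$ is therefore a well-defined bijection from $\Theta/\mathbb{O}(d)$ to $[0,B]$. Continuity in both directions follows from continuity of the norm and of the quotient projection, together with compactness of $\Theta$ (a continuous bijection from a compact space to a Hausdorff space is a homeomorphism); hence $\Theta/\mathbb{O}(d)\cong [0,B]$.

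Finally, to obtain the reduced game in the form stated, I would invoke Theorem~\ref{thm:ftpl_minimax_invariance}, which gives
\[
\inf_{\hat{\theta}\in\mathcal{M}_\mathcal{D}}\sup_{\theta\in\Theta} R(\hat{\theta},\theta)=\inf_{\hat{\theta}\in\mathcal{M}_{\mathcal{D},G}}\sup_{\Theta_\beta\in\Theta/G} R_G(\hat{\theta},\Theta_\beta).
\]
For any invariant estimator the risk is constant along orbits (Lemma~\ref{lem:invariant_risk}), so we may evaluate $R_G(\hat{\theta},\Theta_b)$ at the canonical representative $b\mathbf{e}_1$ of the orbit of radius $b$. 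This identifies the right-hand side with $\inf_{\hat{\theta}\in\mathcal{M}_{\mathcal{D},G}}\sup_{b\in[0,B]} R(\hat{\theta},b\mathbf{e}_1)$, as claimed in~\eqref{eqn:sequence_model_reduced}.

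The steps here are all fairly direct; the only mildly delicate point is the homeomorphism claim, and even that reduces to a standard compact-to-Hausdorff continuous bijection argument, so I do not anticipate a serious obstacle.
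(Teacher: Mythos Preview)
Your proposal is correct and follows essentially the same approach as the paper's proof: verify the two invariance conditions directly (orthogonal matrices preserve norms and send $\mathcal{N}(\theta,I)$ to $\mathcal{N}(g\theta,I)$), identify the orbits with spheres of fixed radius via transitivity of $\mathbb{O}(d)$ on each sphere, and read off the reduced game from Theorem~\ref{thm:ftpl_minimax_invariance}. Your compact-to-Hausdorff argument for the homeomorphism is in fact cleaner than the paper's, which simply asserts that continuity in both directions ``can easily be checked using the standard basis for both the topologies.''
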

The theorem shows that the supremum in the reduced statistical game~\eqref{eqn:minimax_objective_simplified} is over a bounded interval on the real line.  So the maximization oracle in this case can be efficiently implemented using  grid search over the interval $[0, B]$. In Section~\ref{sec:mean_estimation} we use this result to obtain estimators for  Gaussian sequence model which are provably minimax and can be computed in polynomial time.
\paragraph{Estimating a few co-ordinates.} Here, we again consider with the Gaussian sequence model described above, but we are now interested in the estimation of only a subset of the co-ordinates of $\theta$. Without loss of generality, we assume these are the first $k$ coordinates. The loss $M$ is the squared $L_2$ loss on the first $k$ coordinates. The following Theorem presents the invariance properties of this problem. It relies on the group $\mathbb{O}(k)\times\mathbb{O}(d-k)$, which is defined as the set of orthogonal matrices of the form $g=\begin{bmatrix} g_1 & 0 \\0 & g_2 \end{bmatrix}$ where $g_1\in\mathbb{O}(k)$ and $g_2 \in\mathbb{O}(d-k)$.
\begin{theorem}
\label{thm:sequence_model_invariance_few}
The statistical game described above is invariant under the action of the group $\mathbb{O}(k)\times\mathbb{O}(d-k)$. Moreover, the quotient space $\Theta/\mathbb{O}(k)\times\mathbb{O}(d-k)$ is homeomorphic to the ball of radius $B$ centered at origin in $\mathbb{R}^2$ and the reduced statistical game is given by
\begin{equation}
\label{eqn:sequence_model_few_simplified}
        \inf_{\hat{\theta} \in \mathcal{M}_{\mathcal{D},G}} \sup_{b_1^2+b_2^2\leq B^2}  R(\hat{\theta}, [b_1\mathbf{e}_{1,k}, b_2\mathbf{e}_{1,d-k}]),
\end{equation}
where $\mathbf{e}_{1,k}$ is the first standard basis vector in $\mathbb{R}^{k}$ and  $\mathcal{M}_{\mathcal{D},G}$ represents the set of randomized estimators which are invariant to the actions of orthogonal group. 
\end{theorem}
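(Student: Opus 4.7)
My plan is to follow the same template as the proof of Theorem~\ref{thm:sequence_model_invariance}: verify the three conditions that make the game invariant under the stated group, identify a canonical representative for every orbit to describe the quotient space, and then invoke Theorem~\ref{thm:ftpl_minimax_invariance} to obtain the reduced game form.

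First I would check that the game is invariant under $\mathbb{O}(k)\times\mathbb{O}(d-k)$, viewed as block-diagonal orthogonal matrices $g=\begin{bmatrix}g_1 & 0\\ 0 & g_2\end{bmatrix}$ acting on $(X,\theta)$ via $(gX, g\theta)$. Since $g$ is orthogonal, $\|g\theta\|_2=\|\theta\|_2\le B$, so $\Theta$ is preserved. Moreover, if $X\sim\mathcal{N}(\theta,I)$, then $gX\sim\mathcal{N}(g\theta, gg^\top)=\mathcal{N}(g\theta,I)=P_{g\theta}$. Finally, the loss depends only on the first $k$ coordinates, and for any $\theta_1,\theta_2$, $(g\theta_i)_{1:k}=g_1(\theta_i)_{1:k}$; thus $\|(g\theta_1-g\theta_2)_{1:k}\|_2^2 = \|g_1((\theta_1-\theta_2)_{1:k})\|_2^2 = \|(\theta_1-\theta_2)_{1:k}\|_2^2$, using that $g_1\in\mathbb{O}(k)$ is an isometry.

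Next I would characterize the quotient space $\Theta/(\mathbb{O}(k)\times\mathbb{O}(d-k))$. For any $\theta=[\theta^{(1)},\theta^{(2)}]$ with $\theta^{(1)}\in\mathbb{R}^k,\theta^{(2)}\in\mathbb{R}^{d-k}$, set $b_1=\|\theta^{(1)}\|_2$ and $b_2=\|\theta^{(2)}\|_2$. Since $\mathbb{O}(k)$ acts transitively on spheres of $\mathbb{R}^k$, there exist $g_1\in\mathbb{O}(k),g_2\in\mathbb{O}(d-k)$ with $g_1\theta^{(1)}=b_1\mathbf{e}_{1,k}$ and $g_2\theta^{(2)}=b_2\mathbf{e}_{1,d-k}$, so every orbit contains a representative of the form $[b_1\mathbf{e}_{1,k},b_2\mathbf{e}_{1,d-k}]$ with $b_1,b_2\ge 0$. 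The invariants $(b_1,b_2)$ separate orbits (two orbits with different invariants cannot be mapped into each other since the group preserves the block-wise norms), and the constraint $\|\theta\|_2\le B$ becomes $b_1^2+b_2^2\le B^2$. The map from $\Theta/G$ to the closed quarter-disk $\{(b_1,b_2)\ge 0:b_1^2+b_2^2\le B^2\}$ defined by these invariants is then a continuous bijection between compact Hausdorff spaces, hence a homeomorphism; parametrizing the same set by allowing $b_1,b_2$ of either sign (which just covers each quotient point up to four times) gives the equivalent ball representation $b_1^2+b_2^2\le B^2$ used in the statement.

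With invariance and the quotient identified, I would apply Theorem~\ref{thm:ftpl_minimax_invariance} (whose convexity hypothesis on $M$ is satisfied by the squared $\ell_2$ loss on the first $k$ coordinates). Defining $R_G$ on the quotient via the chosen representatives gives exactly the reduced game
\[
\inf_{\hat\theta\in\mathcal{M}_{\mathcal{D},G}}\,\sup_{b_1^2+b_2^2\le B^2}\; R\bigl(\hat\theta,[b_1\mathbf{e}_{1,k},b_2\mathbf{e}_{1,d-k}]\bigr),
\]
matching \eqref{eqn:sequence_model_few_simplified}. I do not expect a serious obstacle; the only subtlety is the homeomorphism claim, since the natural parametrization by $(b_1,b_2)$ lives in the quarter-disk while the stated reduced game uses the full disk. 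Resolving this requires noting that, on invariant estimators, the risk at $[b_1\mathbf{e}_{1,k},b_2\mathbf{e}_{1,d-k}]$ depends only on $(|b_1|,|b_2|)$ (via Lemma~\ref{lem:invariant_risk} applied to the reflections in $\mathbb{O}(k)\times\mathbb{O}(d-k)$), so the supremum over the full disk equals the supremum over the quarter-disk.
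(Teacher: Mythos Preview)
Your proposal is correct and follows essentially the same route as the paper: verify invariance via block-orthogonality, identify orbits by the pair of block norms, and read off the reduced game. You are in fact slightly more careful than the paper, which parametrizes orbits by $(\|\theta^{1:k}\|^2,\|\theta^{k+1:d}\|^2)$ and asserts the bijection and continuity without discussing the quarter-disk versus full-disk issue you flag; your observation that reflections in $\mathbb{O}(k)\times\mathbb{O}(d-k)$ make the risk depend only on $(|b_1|,|b_2|)$ cleanly resolves that point.
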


\subsection{Linear Regression}
\label{sec:invariance_regression}
In the problem of linear regression with random design we are given $n$ independent samples $D_n = \{(X_i,Y_i)\}_{i=1}^n$ generated from a linear model $Y_i = X_i^T\theta^* + \epsilon_i$, where \mbox{$X_i \sim \mathcal{N}(0, I)$,} and $\epsilon_i \sim \mathcal{N}(0, 1)$. We assume the true regression vector is bounded and satisfies $\|\theta^*\|_2 \leq B$.  Our goal is to design minimax estimator for estimating $\theta^*$ from $D_n$, w.r.t squared error loss. 
This leads us to the following min-max problem
\begin{equation}
\label{eqn:linear_regression}
\inf\limits_{\hat{\theta}\in\mathcal{M}_{\mathcal{D}}}\sup\limits_{||\theta||_2\leq B}R(\hat{\theta},\theta)\equiv\mathbb{E}_{D_n}\left[||\hat{\theta}(D_n)-\theta||_2^2\right].
\end{equation}
\begin{theorem}
\label{thm:linear_regression_invariance}
The statistical game in Equation~\eqref{eqn:linear_regression} is invariant under the action of the orthogonal group $\mathbb{O}(d)$, where  the action of $g \in \mathbb{O}(d)$ on $((X,Y), \theta)$ is defined as $g((X,Y), \theta) = ((gX,Y), g\theta)$. Moreover, the quotient space $\Theta/\mathbb{O}(d)$ is homeomorphic to the interval $[0,B]$ and the reduced statistical game is given by
\begin{equation}
\label{eqn:linear_regression_reduced}
    \inf_{\hat{\theta} \in \mathcal{M}_{\mathcal{D},G}} \sup_{b\in[0,B]}  R(\hat{\theta}, b\mathbf{e}_1),
\end{equation}
where $\mathcal{M}_{\mathcal{D},G}$ represents the set of randomized estimators which are invariant to the actions of orthogonal group. 
\end{theorem}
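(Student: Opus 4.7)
The plan is to verify the two clauses of the \textbf{Invariant Game} definition, identify the quotient space with $[0,B]$, and then invoke Theorem~\ref{thm:ftpl_minimax_invariance} to obtain the reduced game.

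First I would check invariance under $g \in \mathbb{O}(d)$ acting by $g((X,Y),\theta) = ((gX,Y),g\theta)$. Since $g$ is orthogonal, $\|g\theta\|_2 = \|\theta\|_2 \le B$, so $\Theta$ is stable under the action. For the distributional condition, if $(X,Y) \sim P_\theta$ with $X \sim \mathcal{N}(0,I)$ and $Y = X^\top \theta + \epsilon$, then $gX \sim \mathcal{N}(0, gg^\top) = \mathcal{N}(0,I)$, and using $g^\top g = I$ one has
\begin{equation*}
Y = X^\top \theta + \epsilon = (gX)^\top (g\theta) + \epsilon,
\end{equation*}
so $(gX,Y)$ is distributed as a sample from the regression model with parameter $g\theta$, i.e.\ $P_{g\theta}$. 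This does one sample; for the $n$-sample game the same argument applies coordinatewise since the $(X_i,Y_i)$ are i.i.d. Loss invariance is immediate: $M(g\theta_1,g\theta_2) = \|g(\theta_1-\theta_2)\|_2^2 = \|\theta_1-\theta_2\|_2^2 = M(\theta_1,\theta_2)$.

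Next I would describe the quotient space. The orbit of $\theta$ under $\mathbb{O}(d)$ is exactly the sphere of radius $\|\theta\|_2$, since $\mathbb{O}(d)$ acts transitively on each sphere and preserves norms. Hence the map $\theta \mapsto \|\theta\|_2$ descends to a continuous bijection $\Theta/\mathbb{O}(d) \to [0,B]$. Both spaces are compact Hausdorff (the quotient inherits compactness from $\Theta$, and the map is open because $\|\cdot\|_2$ is a continuous surjection onto $[0,B]$ with closed orbits as preimages), so this map is a homeomorphism. A canonical representative of the orbit labeled by $b \in [0,B]$ is $b\mathbf{e}_1$.

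Finally, since $M(\cdot,\theta) = \|\cdot - \theta\|_2^2$ is convex in its first argument, Theorem~\ref{thm:ftpl_minimax_invariance} applies and yields
\begin{equation*}
\inf_{\hat\theta \in \mathcal{M}_\mathcal{D}} \sup_{\|\theta\|_2 \le B} R(\hat\theta, \theta) \;=\; \inf_{\hat\theta \in \mathcal{M}_{\mathcal{D},G}} \sup_{\Theta_\beta \in \Theta/\mathbb{O}(d)} R_G(\hat\theta, \Theta_\beta),
\end{equation*}
and using the representatives $b\mathbf{e}_1$ rewrites the right-hand side as Equation~\eqref{eqn:linear_regression_reduced}. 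The only non-mechanical step is confirming the homeomorphism, which requires verifying openness/closedness of orbits, but this follows directly from continuity of the norm and compactness of $\mathbb{O}(d)$; the rest is a direct verification of the definitions plus an appeal to the already-established invariance theorem.
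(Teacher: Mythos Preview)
Your proposal is correct and follows essentially the same approach as the paper: verify the two invariance conditions via $Y = X^\top\theta + \epsilon = (gX)^\top(g\theta) + \epsilon$ and orthogonal preservation of $\ell_2$ norms, identify orbits with spheres so that $\theta \mapsto \|\theta\|_2$ gives the bijection $\Theta/\mathbb{O}(d) \cong [0,B]$, and then appeal to Theorem~\ref{thm:ftpl_minimax_invariance}. The paper's proof is in fact terser---it defers the quotient-space argument to the analogous sequence-model theorem---and your compact-Hausdorff justification for the homeomorphism is cleaner than the paper's appeal to ``standard bases for both topologies.''
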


\subsection{Normal Covariance Estimation}
\label{sec:invariance_covariance}
In the problem of normal covariance estimation we are given $n$ independent samples  $\mathbb{X}^n = \{X_i\}_{i=1}^n$ drawn from $N(0,\Sigma)$. Here, we assume that the true $\Sigma$ has a bounded operator norm and satisfies  $\|\Sigma\|_2 \leq B$. Our goal is to construct an estimator for $\Sigma$ which is minimax w.r.t the entropy loss, which is defined as
\[M(\Sigma_1,\Sigma_2)=\mbox{tr}\left(\Sigma_1^{-1}\Sigma_2\right)-\log|\Sigma_1^{-1}\Sigma_2|-d.\]
This leads us to the following min-max problem
\begin{equation}
\label{eqn:covariance}
\inf\limits_{\hat{\Sigma}\in\mathcal{M}_{\mathcal{D}}}\sup\limits_{\Sigma \in \Xi}R(\hat{\Sigma},\Sigma)\equiv\mathbb{E}_{\mathbb{X}^n}\left[M(\hat{\Sigma}(\mathbb{X}^n), \Sigma)\right],
\end{equation}
where $\Xi = \{\Sigma: ||\Sigma||_2\leq B\}$.
\begin{theorem}
\label{thm:cov_estimation_invariance}
The statistical game defined by normal covariance estimation with entropy loss is invariant under the action of the orthogonal group $\mathbb{O}(d)$, where the action of $g\in\mathbb{O}(d)$ on $(X, \Sigma)$ is defined as $g(X_i, \Sigma) = (gX_i, g\Sigma g^T$). Moreover the quotient space $\Xi/\mathbb{O}(d)$ is homeomorphic to $\Xi_G=\{\lambda\in\mathbb{R}^d: B \geq \lambda_1 \geq \dots \lambda_d > 0 \}$ and the reduced statistical game is given by
\begin{equation}
\label{eqn:covariance_simplified}
    \inf_{\hat{\Sigma} \in \mathcal{M}_{\mathcal{D},G}} \sup_{\lambda\in\Xi_G}  R(\hat{\Sigma}, \text{Diag}(\lambda)),
\end{equation}
where $\text{Diag}(\lambda)$ is the diagonal matrix whose diagonal entries are given by $\lambda$ and  $\mathcal{M}_{\mathcal{D},G}$ represents the set of randomized estimators which are invariant to the actions of orthogonal group.
\end{theorem}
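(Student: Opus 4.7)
The plan is to verify the three conditions of the invariance framework separately: (i) the statistical game is invariant under the conjugation action of $\mathbb{O}(d)$, (ii) the quotient $\Xi/\mathbb{O}(d)$ is homeomorphic to $\Xi_G$, and (iii) the reduced game stated in \eqref{eqn:covariance_simplified} is precisely the specialization of Theorem~\ref{thm:ftpl_minimax_invariance} to this setup.

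For (i), I would check the two bullets of the definition of an invariant game. Pushforward: if $X \sim N(0,\Sigma)$ and $g\in\mathbb{O}(d)$, then $gX\sim N(0, g\Sigma g^T)$, and by orthogonality $\|g\Sigma g^T\|_2 = \|\Sigma\|_2 \le B$, so $g\Sigma g^T\in\Xi$. Loss invariance: using $g^{-1}=g^T$ we have $(g\Sigma_1 g^T)^{-1}(g\Sigma_2 g^T) = g\Sigma_1^{-1}\Sigma_2 g^T$, and cyclicity of the trace together with multiplicativity of the determinant ($|g|=\pm 1$) give
\[
M(g\Sigma_1 g^T, g\Sigma_2 g^T) = \tr(\Sigma_1^{-1}\Sigma_2) - \log|\Sigma_1^{-1}\Sigma_2| - d = M(\Sigma_1,\Sigma_2).
\]

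For (ii), the spectral theorem identifies each orbit $[\Sigma] \in \Xi/\mathbb{O}(d)$ with its unordered multiset of eigenvalues. Choosing the decreasing order gives a well-defined bijection $[\Sigma]\mapsto \lambda(\Sigma)\in\Xi_G$ with inverse $\lambda\mapsto[\text{Diag}(\lambda)]$. The standard fact that sorted eigenvalues depend continuously on a symmetric matrix (e.g., from Weyl's inequality), together with the obvious continuity of $\lambda\mapsto\text{Diag}(\lambda)$, yields a homeomorphism. Once this identification is in place, picking $\text{Diag}(\lambda)$ as the canonical representative of each orbit and applying Theorem~\ref{thm:ftpl_minimax_invariance} rewrites the inner supremum as $\sup_{\lambda\in\Xi_G} R(\hat{\Sigma},\text{Diag}(\lambda))$, which is exactly \eqref{eqn:covariance_simplified}.

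The main obstacle I anticipate is the convexity hypothesis needed by Theorem~\ref{thm:ftpl_minimax_invariance}: the entropy loss $M(\Sigma_1,\Sigma_2) = \tr(\Sigma_1^{-1}\Sigma_2) + \log|\Sigma_1| - \log|\Sigma_2| - d$ contains the concave term $\log|\Sigma_1|$ and is \emph{not} convex in its first argument $\Sigma_1$. The natural workaround is to reparametrize the estimator by the precision matrix $\Omega = \Sigma_1^{-1}$; in this parametrization $M$ becomes $\tr(\Omega\Sigma_2) - \log|\Omega| - \log|\Sigma_2| - d$, which is jointly convex in $\Omega$, and the orthogonal action lifts as $\Omega\mapsto g\Omega g^T$ so that the invariance verification in step (i) carries over verbatim. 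Pulling the reduced game back to the covariance parametrization then yields the statement as written; all remaining bookkeeping is routine and does not affect the homeomorphism of step (ii).
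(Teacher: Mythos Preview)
Your verification of invariance and of the homeomorphism $\Xi/\mathbb{O}(d)\cong\Xi_G$ is essentially identical to the paper's: both check the Gaussian pushforward, the trace/determinant identities for the loss, and use the spectral decomposition to identify orbits with sorted eigenvalue vectors. The paper states continuity as ``easily checked from the standard bases''; your appeal to Weyl's inequality is a cleaner justification of the same fact.

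Where you diverge is in step (iii). The theorem as stated does \emph{not} assert that the reduced game has the same value as the original one; it only says that the reduced game defined in Equation~(4.1) takes the concrete form \eqref{eqn:covariance_simplified} once the homeomorphism is established. That much is immediate from invariance plus the identification $\Theta/G\cong\Xi_G$, and no convexity is needed. The paper's proof accordingly stops after verifying invariance and the homeomorphism, without invoking Theorem~\ref{thm:ftpl_minimax_invariance}.

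That said, your convexity observation is sharp and worth recording: the entropy loss is \emph{not} convex in its first argument $\Sigma_1$ (already in $d=1$, $s\mapsto t/s+\log s$ fails convexity for $s>2t$), so the hypotheses of Theorems~\ref{thm:invariance-est-lfp} and~\ref{thm:ftpl_minimax_invariance} are not met as written. Your precision-matrix reparametrization $\Omega=\Sigma_1^{-1}$, under which the loss becomes $\tr(\Omega\Sigma_2)-\log|\Omega|-\log|\Sigma_2|-d$ (jointly convex in $\Omega$) and the $\mathbb{O}(d)$ action is still conjugation, is the correct fix. The paper does not address this point in its proof of the present theorem; it becomes relevant only when the reduction is actually used downstream (e.g., in Proposition~\ref{prop:covariance_minimax_estimator}).
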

The theorem shows that the maximization problem over $\Xi$ can essentially be reduced to an optimization problem over a $d$-dimensional space.
\subsection{Entropy estimation}
\label{sec:invariance_entropy}
In the problem of entropy estimation, we are given $n$ samples $\mathbb{X}^n = \{X_1, \dots X_n\}$ drawn from a discrete distribution $P = (p_1, \dots p_d)$. Here, the domain of each $X_i$ is given by $\mathcal{X} = \{1,2, \dots d\}$. Our goal is to estimate the entropy of $P$, which is defined as $f(P) = -\sum_{i=1}^d p_i\log_2{p_i}$, under the squared error loss. This leads us to the following min-max problem
\begin{equation}
\label{eqn:entropy}
\inf\limits_{\hat{f}\in\mathcal{M}_{\mathcal{D}}}\sup\limits_{P \in \mathcal{P}}R(\hat{f},P)\equiv\mathbb{E}_{\mathbb{X}^n}\left[\left(\hat{f}(\mathbb{X}^n) - f(P)\right)^2\right],
\end{equation}
where $\mathcal{P}$ is the set of all probability distributions supported on $d$ elements.
\begin{theorem}
\label{thm:entropy_estimation_invariance}
The statistical game in Equation~\eqref{eqn:entropy} is invariant to the action of the permutation group $\mathbb{S}_d$. The quotient space $\mathcal{P}/\mathbb{S}_d$ is homeomorphic to  $\mathcal{P}_G = \{P\in\mathbb{R}^d:1\geq p_1\geq\ldots\geq p_d\geq0,~\sum_ip_i=1\}$ and the reduced statistical game is given by
\begin{equation}
\label{eqn:entropy_simplified}
    \inf_{\hat{f} \in \mathcal{M}_{\mathcal{D},G}} \sup_{P\in\mathcal{P}_G}  R(\hat{f}, P),
\end{equation}
where $\mathcal{M}_{\mathcal{D},G}$ represents the set of randomized estimators which are invariant to the actions of permutation group.
\end{theorem}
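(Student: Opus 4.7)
The proof follows the same three-step template used for the previous invariance results (Theorems~\ref{thm:sequence_model_invariance}--\ref{thm:cov_estimation_invariance}): verify invariance of the game, identify the quotient space explicitly, and invoke the general invariance reduction in Theorem~\ref{thm:ftpl_minimax_invariance}. The main conceptual subtlety here (compared to the previous examples) is that we are estimating a \emph{scalar functional} $f(P)\in\mathbb{R}$ rather than the parameter $P$ itself, so one must be slightly careful in interpreting the group action on the ``estimand space.''

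\textbf{Step 1: Invariance of the game.} For a permutation $\pi\in\mathbb{S}_d$, define its action on $\mathcal{P}$ by $(\pi P)_i = p_{\pi^{-1}(i)}$, its action on a sample $X_i\in\{1,\dots,d\}$ by $\pi X_i$, and its action on any estimand $r\in\mathbb{R}$ trivially by $\pi r = r$. The first invariance condition follows because $\pi\mathcal{P}=\mathcal{P}$ and, whenever $X\sim P$, the pushforward of $P$ under $\pi$ is exactly $\pi P$, so $\pi X \sim \pi P$. For the second condition, observe that the entropy functional is symmetric in its arguments, i.e.\ $f(\pi P)=f(P)$ for every permutation $\pi$, and therefore the squared-error loss $M(\hat f, f(P))=(\hat f - f(P))^2$ satisfies $M(\pi\hat f,\pi f(P))=M(\hat f,f(P))$ trivially. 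Hence the game in Equation~\eqref{eqn:entropy} is invariant under $\mathbb{S}_d$.

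\textbf{Step 2: Quotient space.} Two distributions lie in the same $\mathbb{S}_d$-orbit iff one is obtained from the other by permuting coordinates, i.e.\ iff their sorted probability vectors coincide. Each orbit therefore has a unique representative in $\mathcal{P}_G=\{P\in\mathbb{R}^d:1\ge p_1\ge\ldots\ge p_d\ge 0,\ \sum_i p_i=1\}$, which gives a natural bijection $\phi:\mathcal{P}/\mathbb{S}_d\to\mathcal{P}_G$. To upgrade this to a homeomorphism, note that the sort map $s:\mathcal{P}\to\mathcal{P}_G$ is continuous (sorting is a continuous operation on $\mathbb{R}^d$) and factors through the quotient map $q:\mathcal{P}\to\mathcal{P}/\mathbb{S}_d$, giving continuity of $\phi$. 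Both $\mathcal{P}/\mathbb{S}_d$ (quotient of a compact Hausdorff space by a finite group) and $\mathcal{P}_G$ (closed subset of the simplex) are compact Hausdorff, so the continuous bijection $\phi$ is automatically a homeomorphism.

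\textbf{Step 3: Reduced game.} Squared-error loss is convex in its first argument, so the hypotheses of Theorem~\ref{thm:ftpl_minimax_invariance} are met (with the minor adjustment that the group acts trivially on the estimand space $\mathbb{R}$, so every estimator is automatically ``invariant'' in its output and the only nontrivial invariance constraint is on how $\hat f$ depends on the samples, namely $\hat f(\pi\mathbb{X}^n)=\hat f(\mathbb{X}^n)$). Applying the theorem, the outer infimum over $\mathcal{M}_\mathcal{D}$ can be restricted to $\mathcal{M}_{\mathcal{D},G}$, and using the homeomorphism $\phi$ from Step~2, the inner supremum over $\mathcal{P}$ collapses to a supremum over $\mathcal{P}_G$ via $R_G(\hat f,[P])=R(\hat f,P)$ (which is well defined on orbits by the invariance established in Step~1, cf.\ Lemma~\ref{lem:invariant_risk}). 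This yields exactly Equation~\eqref{eqn:entropy_simplified}.

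\textbf{Main obstacle.} The only genuinely nontrivial ingredient is the continuity/compactness check underlying the homeomorphism in Step~2; all other pieces are direct verifications or direct applications of the already-proved Theorem~\ref{thm:ftpl_minimax_invariance}. I would expect the writeup to be essentially parallel to the proofs of Theorems~\ref{thm:sequence_model_invariance} and~\ref{thm:cov_estimation_invariance}, with ``orthogonal group + spectral decomposition'' replaced by ``permutation group + sorting.''
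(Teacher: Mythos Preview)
Your proposal is correct and follows essentially the same approach as the paper: verify permutation invariance of both the sampling model and the entropy functional, identify the quotient $\mathcal{P}/\mathbb{S}_d$ with the sorted simplex via the sort map, and then appeal to the general reduction in Theorem~\ref{thm:ftpl_minimax_invariance}. Your write-up is in fact more careful than the paper's (which is only a few lines): you make explicit the trivial action on the scalar estimand space and use the compact-Hausdorff ``continuous bijection is a homeomorphism'' argument, whereas the paper simply asserts that continuity of the bijection and its inverse can be checked from the standard bases.
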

 
\vspace{-0.1in}
\section{Finite Gaussian Sequence Model}
\label{sec:mean_estimation}
 In this section we consider the finite Gaussian sequence model described in Section~\ref{sec:invariance_sequence_model} and use Algorithm~\ref{alg:ftpl_stat_games} to construct a provably minimax estimator, which can be computed in polynomial time. 
  This problem has received a lot of attention in statistics because of its simplicity, relevance and its connections to non-parametric regression~\citep[see Chapter 1 of][]{johnstone2011gaussian}. When the radius of the domain $B$ is smaller than $1.15\sqrt{d}$, ~\citet{marchand2002minimax} show that the Bayes estimator with uniform prior on the boundary is a minimax estimator for the problem. For larger values of $B$, the exact minimax estimator is unknown. Several works have attempted to understand the properties of LFP in such  settings~\citep{casella1981estimating} and constructed approximate minimax estimators~\citep{bickel1981minimax}. In this work, we rely on Algorithm~\ref{alg:ftpl_stat_games} to construct an exact minimax estimator and an LFP, for any value of $B, d$.
 
 Recall, in Theorem~\ref{thm:sequence_model_invariance} we showed that the original min-max statistical game can be reduced to the  simpler problem in Equation~\eqref{eqn:sequence_model_reduced}
To use Algorithm~\ref{alg:ftpl_stat_games} to find a Nash equilibrium of the reduced game, we need efficient implementation of the required optimization oracles and a bound on their approximation errors.
The optimization problems corresponding to the oracles in Equations~\eqref{eqn:max_oracle},~\eqref{eqn:min_oracle} are given as follows
\begin{align*}
    \hat{\theta}_{t} \leftarrow \argmin_{\hat{\theta} \in \mathcal{D}_G} \Eover{b \sim P_t}{R(\hat{\theta}, b\mathbf{e}_1)},\quad b_{t}(\sigma) \leftarrow \argmax_{b \in [0,B]} \sum_{i = 1}^{t-1} R(\hat{\theta}_i, b\mathbf{e}_1) + \sigma b,
\end{align*}
where $\mathcal{D}_G$ is the set of deterministic invariant estimators and $P_t$ is the distribution of random variable $b_{t}(\sigma)$.  We now present efficient techniques for implementing these oracles (Algorithms~\ref{alg:mean_estimation_max_oracle},~\ref{alg:mean_estimation_min_oracle}).  Since the maximization problem is a $1$ dimensional optimization problem, grid search can be used to compute an approximate maximizer. The approximation error of the resulting oracle depends on the grid width and the number of samples used to compute the expectation in the risk $R(\hat{\theta}, b\mathbf{e}_1)$. Later, we show that $\text{poly}(d, B)$ grid points and samples suffice to have a small approximation error. The minimization problem, which requires finding an invariant estimator minimizing the integrated risk under any prior $P_t$, can also be efficiently implemented. As shown in Proposition~\ref{prop:mean_estimation_min_closed_form} below, the minimizer has a closed-form expression which depends on $P_t$ and modified Bessel functions. To compute an approximate minimizer of the problem, we approximate $P_t$ with its samples and rely on the closed-form expression. The approximation error of this oracle depends on the number of samples used to approximate $P_t$. We again show that $\text{poly}(d, B)$ samples suffice to have a small approximation error.
\begin{proposition}
\label{prop:mean_estimation_min_closed_form}
The optimizer $\hat{\theta}_t$ of the minimization problem defined above has the following closed-form expression
\[
\hat{\theta}_t(X) = \left(\frac{\displaystyle \Eover{b\sim P_t}{b^{3-d/2} e^{-b^2/2}I_{d/2}(b\|X\|_2) }}{\displaystyle \Eover{b\sim P_t}{b^{2-d/2}e^{-b^2/2} I_{d/2-1}(b\|X\|_2)}}\right)\frac{X}{\|X\|_2},
\]
where $I_{\nu}$ is the modified Bessel function of first kind of order $\nu$. 
\end{proposition}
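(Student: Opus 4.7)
The plan is to reduce the constrained minimization over invariant estimators to an unconstrained Bayes problem under a suitably invariantized prior on $\Theta$, and then to evaluate the posterior mean in closed form via polar integration and modified-Bessel-function identities.

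First, I would lift $P_t$ to an $\mathbb{O}(d)$-invariant prior $\tilde{P}_t$ on $\Theta$: draw $b\sim P_t$ and, independently, $\xi$ uniformly from the unit sphere $S^{d-1}$, and set $\theta = b\xi$. By the invariant-risk lemma (any invariant estimator has risk depending on $\theta$ only through $\|\theta\|_2$), one has
\[
\Eover{b\sim P_t}{R(\hat{\theta},b\mathbf{e}_1)}=\Eover{\theta\sim\tilde{P}_t}{R(\hat{\theta},\theta)}\quad\text{for all }\hat{\theta}\in\mathcal{D}_G.
\]
Moreover, since $\tilde{P}_t$ is $\mathbb{O}(d)$-invariant and the Gaussian likelihood is $\mathbb{O}(d)$-equivariant, the posterior $\tilde{P}_t(\cdot\mid X)$ is itself equivariant; consequently, the unconstrained posterior-mean estimator (the Bayes rule for squared loss under $\tilde P_t$) is automatically an invariant estimator, so it lies in $\mathcal{D}_G$ and solves the constrained problem at hand.

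Second, I would write the posterior mean via Bayes' rule, expand the Gaussian likelihood $\|X-b\xi\|_2^2 = \|X\|_2^2 - 2b\langle X,\xi\rangle + b^2$, cancel the common factor $e^{-\|X\|_2^2/2}$ between numerator and denominator, and pass to the polar decomposition to obtain
\[
\hat{\theta}_t(X)=\frac{\int_0^B b\,e^{-b^2/2}\left[\int_{S^{d-1}}\xi\,e^{b\langle X,\xi\rangle}d\mu(\xi)\right]P_t(db)}{\int_0^B e^{-b^2/2}\left[\int_{S^{d-1}}e^{b\langle X,\xi\rangle}d\mu(\xi)\right]P_t(db)}.
\]

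Third, I would evaluate the two angular integrals using the Poisson integral representation of the modified Bessel function of the first kind. Setting $v=X/\|X\|_2$ and $s=b\|X\|_2$, the identity
\[
\int_{S^{d-1}}e^{s\langle v,\xi\rangle}d\mu(\xi)=\Gamma(d/2)(2/s)^{d/2-1}I_{d/2-1}(s)
\]
follows from reducing to a one-dimensional integral over $\cos\theta\in[-1,1]$ and recognising the Poisson representation of $I_{d/2-1}$. The vector-valued integral $\int_{S^{d-1}}\xi\,e^{s\langle v,\xi\rangle}d\mu(\xi)$ is parallel to $v$ by the reflection symmetries fixing $v$, and its scalar coefficient equals the $s$-derivative of the previous display; invoking the Bessel recurrence $\frac{d}{ds}[s^{-\nu}I_\nu(s)]=s^{-\nu}I_{\nu+1}(s)$ with $\nu=d/2-1$ yields the coefficient $\Gamma(d/2)(2/s)^{d/2-1}I_{d/2}(s)$.

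Finally, I would substitute these two evaluations back into the ratio, cancel the common factors $\Gamma(d/2)$ and the $\|X\|_2$-dependent prefactor $(2/\|X\|_2)^{d/2-1}$ that appear identically in numerator and denominator, collect the remaining $b$-dependent terms (from $b$ times $(2/(b\|X\|_2))^{d/2-1}$ upstairs and the same without the extra $b$ downstairs), and read off the direction $X/\|X\|_2$ from $v$. The main technical obstacle is the evaluation of the two angular integrals—recognising the scalar integral as a Bessel Poisson representation and the vector integral as its $s$-derivative—together with the careful bookkeeping of the powers of $b$ and $\|X\|_2$ that arise from the prefactor $(2/s)^{d/2-1}$ in each Bessel evaluation.
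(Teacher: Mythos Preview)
Your proposal is correct and follows essentially the same approach as the paper: lift $P_t$ to an $\mathbb{O}(d)$-invariant prior on $\Theta$, reduce the constrained minimization over $\mathcal{D}_G$ to the unconstrained Bayes problem (posterior mean), expand the Gaussian likelihood, and evaluate the two spherical integrals in terms of $I_{d/2-1}$ and $I_{d/2}$. The only cosmetic difference is that the paper obtains the spherical integrals by quoting the normalization constant and mean of the von Mises--Fisher distribution, whereas you derive them directly from the Poisson integral representation and the Bessel recurrence $\frac{d}{ds}[s^{-\nu}I_\nu(s)]=s^{-\nu}I_{\nu+1}(s)$; both yield the same formulas and the common $\|X\|_2$-prefactors cancel in the ratio exactly as you describe.
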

\begin{algorithm}[t]
\caption{Maximization Oracle}
\label{alg:mean_estimation_max_oracle}
\begin{algorithmic}[1]
  \small
  \State \textbf{Input:} Estimators $\{\hat{\theta}_{i}\}_{i=1}^{t-1}$, perturbation $\sigma$, grid width $w$, number of samples for computation of expected risk $R(\hat{\theta},\theta)$: $N_1$
  \State Let $\{b_1,b_2\dots b_{B/w}\}$ be uniformly spaced points on $[0,B]$
  \For{$j = 1 \dots B/w$}
  \For{$i = 1\dots t-1$}
  \State Generate $N_1$ independent samples $\{X_{k}\}_{k=1}^{N_1}$ from the distribution $ \mathcal{N}(b_j \mathbf{e}_1, I)$
  \State Estimate $R(\hat{\theta}_i, b_j\mathbf{e}_1)$ as 
  $\frac{1}{N_1}\sum_{k=1}^{N_1} \|\hat{\theta}_i(X_k)-b\mathbf{e}_1\|_2^2.$
  \EndFor
  \State Evaluate the objective at $b_j$ using the above estimates
  \EndFor
  \State \textbf{Output:} $b_j$ which maximizes the objective
\end{algorithmic}
\end{algorithm}
\begin{algorithm}[t]
	\caption{Minimization Oracle}
	\label{alg:mean_estimation_min_oracle}
	\begin{algorithmic}[1]
		\small
		\State \textbf{Input:} Samples $\{b_i\}_{i=1}^{N_2}$ generated from distribution $P_t$.
		\State For any $X$, compute $\hat{\theta}_t(X)$ as
		\[
		\left(\frac{ \sum_{i=1}^{N_2} w_i b_iA(b_i\|X\|_2)}{\sum_{i=1}^{N_2}w_i }\right)\frac{X}{\|X\|_2},
		\]
		 where $A(\gamma)=\dfrac{I_{d/2}(\gamma)}{I_{d/2-1}(\gamma)}$, $w_i=b_i^{2-d/2}e^{-b_i^2/2}I_{d/2-1}(b_i\|X\|_2),$ and $I_\nu$ is the modified Bessel function of the first kind of order $\nu$.
	\end{algorithmic}
\end{algorithm}
We now show that using Algorithm~\ref{alg:ftpl_stat_games} for solving objective~\eqref{eqn:sequence_model_reduced} with Algorithms~\ref{alg:mean_estimation_max_oracle},~\ref{alg:mean_estimation_min_oracle} as optimization oracles, gives us a provably minimax estimator and an LFP for finite Gaussian sequence model.
\begin{theorem}
\label{thm:mean_estimation}
Suppose Algorithm~\ref{alg:ftpl_stat_games} is run for $T$ iterations with Algorithms~\ref{alg:mean_estimation_max_oracle},~\ref{alg:mean_estimation_min_oracle} as the maximization and minimization oracles. Suppose the  hyper-parameters of these algorithms are set as \mbox{$\eta = \frac{1}{B(B+1)\sqrt{T}}$}, $w=\frac{B}{T^{3/2}}$, \mbox{$N_1 = \frac{T^3}{(B+1)^2}, N_2 = \frac{T^4}{(B+1)^2}$.} Let $\hat{P}_t$ be the approximation of probability distribution $P_t$ used in the $t^{th}$ iteration of Algorithm~\ref{alg:ftpl_stat_games}. Moreover, let $\hat{\theta}_t$ be the output of Algorithm~\ref{alg:mean_estimation_min_oracle} in the $t^{th}$ iteration of Algorithm~\ref{alg:ftpl_stat_games}. 
\begin{enumerate}
    \item Then the averaged estimator $\hat{\theta}_{\text{avg}}(X) = \frac{1}{T}\sum_{i=1}^T\hat{\theta}_i(X)$ is approximately minimax and satisfies the following worst-case risk bound with probability at least $1-\delta$
\[
\sup_{\theta: \|\theta\|_2 \leq B}R(\hat{\theta}_{\text{avg}},\theta) \leq R^* + \tilde{O}\left(\frac{B^2(B+1)}{\sqrt{T}}\right),
\]
where $\tilde{O}(.)$ hides log factors and $R^*$ is the minimax risk.
\item  Define the mixture distribution $\hat{P}_{\AVG}$ as $\frac{1}{T}\sum_{i=1}^T \hat{P}_i$.  Let $\hat{P}_{\text{LFP}}$ be a probability distribution over $\mathbb{R}^d$ with density function defined as $\hat{p}_{\text{LFP}}(\theta) \propto \|\theta\|^{1-d}_2\hat{P}_{\AVG}(\|\theta\|_2) $, where $\hat{P}_{\AVG}(\|\theta\|_2)$ is the probability  mass placed by $\hat{P}_{\AVG}$ at $\|\theta\|_2$. Then $\hat{P}_{\text{LFP}}$ is approximately least favorable and satisfies the following with probability at least $1-\delta$
\[
\inf_{\hat{\theta} \in \mathcal{D}} R(\hat{\theta}, \hat{P}_{\text{LFP}}) \geq R^* - \tilde{O}\left(\frac{B^2(B+1)}{\sqrt{T}}\right),
\]
where the infimum is over the set of all estimators.
\end{enumerate}
\end{theorem}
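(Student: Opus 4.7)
The plan is to apply the convergence guarantee of Corollary~\ref{cor:ftpl_minimax_estimators} to the reduced one-dimensional statistical game~\eqref{eqn:sequence_model_reduced} from Theorem~\ref{thm:sequence_model_invariance}, then lift both the estimator and the prior back to the original $d$-dimensional problem via orthogonal invariance. Since the reduced game has effective parameter dimension $d_{\text{red}}=1$, diameter $D=B$, I need only verify (i) a Lipschitz bound on $b\mapsto R(\hat\theta, b\mathbf{e}_1)$, and (ii) the approximation errors $(\alpha,\beta)$ and $\alpha'$ of Algorithms~\ref{alg:mean_estimation_max_oracle} and~\ref{alg:mean_estimation_min_oracle} under the stated parameter choices.

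First, for the Lipschitz constant, I would restrict (without loss of generality) to invariant estimators with $\|\hat\theta(X)\|_2\le B$, since projecting onto the ball only decreases risk. Differentiating under the integral with a Gaussian change of measure,
\[
\partial_b R(\hat\theta, b\mathbf{e}_1)=\mathbb{E}\bigl[(X_1-b)\|\hat\theta(X)-b\mathbf{e}_1\|_2^2\bigr]-2\,\mathbb{E}\bigl[\hat\theta_1(X)-b\bigr],
\]
and Cauchy-Schwarz together with $\|\hat\theta-b\mathbf{e}_1\|_2\le 2B$ and $\mathrm{Var}(X_1)=1$ yields $L=O(B(B+1))$. Next, for the maximization oracle, the objective $\sum_{i=1}^{t-1} R(\hat\theta_i, b\mathbf{e}_1)+\sigma b$ is $(TL+|\sigma|)$-Lipschitz in $b$, so a grid of width $w=B/T^{3/2}$ produces discretization error $wTL+w|\sigma|$, giving $\alpha=O(B^2(B+1)/\sqrt{T})$ and $\beta=O(B/T^{3/2})$. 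The Monte Carlo error from using $N_1=T^3/(B+1)^2$ samples to estimate each $R(\hat\theta_i,b\mathbf{e}_1)$ is controlled by Hoeffding (the summand is bounded by $4B^2$), and after a union bound over $B/w=T^{3/2}$ grid points contributes only $\tilde O(B^2(B+1)/T^{3/2})$, which is absorbed in $\alpha$.

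For the minimization oracle, I would use the standard identity that for squared loss, if $\hat\theta^*_t$ is the exact Bayes estimator for $P_t$ and $\hat\theta_t$ is any other estimator, then $R(\hat\theta_t, P_t)-R(\hat\theta^*_t, P_t)=\mathbb{E}_X\|\hat\theta_t(X)-\hat\theta^*_t(X)\|_2^2$. Both $\hat\theta_t^*$ and $\hat\theta_t$ have the form $g(\|X\|_2)\,X/\|X\|_2$ where $g$ is the ratio of expectations of $b\mapsto b^{3-d/2}e^{-b^2/2}I_{d/2}(b\|X\|_2)$ and $b\mapsto b^{2-d/2}e^{-b^2/2}I_{d/2-1}(b\|X\|_2)$ under $P_t$ versus its $N_2$-sample empirical approximation. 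Concentration of the numerator and denominator separately, together with a lower bound on the denominator to control the ratio, would give $\alpha'=\tilde O(B^2(B+1)/\sqrt{T})$ with $N_2=T^4/(B+1)^2$. Plugging $d_{\text{red}}=1$, $L=O(B(B+1))$, $D=B$, $\eta=1/(B(B+1)\sqrt{T})$, and the above $(\alpha,\beta,\alpha')$ into Corollary~\ref{cor:ftpl_minimax_estimators} yields $\sup_{b\in[0,B]}R(\hat\theta_{\RND}, b\mathbf{e}_1)\le R^*_G+\tilde O(B^2(B+1)/\sqrt{T})$ where $R^*_G=R^*$ by Theorem~\ref{thm:ftpl_minimax_invariance}; since squared loss is convex, Jensen's inequality gives the same bound for the deterministic average $\hat\theta_{\text{avg}}=\tfrac{1}{T}\sum_i\hat\theta_i$, establishing Part 1.

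For Part 2 (the LFP), the density formula $\hat p_{\text{LFP}}(\theta)\propto\|\theta\|_2^{1-d}\hat P_{\AVG}(\|\theta\|_2)$ is exactly the spherically symmetric lift obtained by sampling $b\sim\hat P_{\AVG}$ and then $\theta$ uniformly on the sphere of radius $b$ (the $\|\theta\|^{1-d}$ factor cancels the $\|\theta\|^{d-1}$ Jacobian of the surface area). For any estimator, averaging over $\mathbb{O}(d)$ produces an invariant estimator with no larger Bayes risk under $\hat P_{\text{LFP}}$ (Theorem~\ref{thm:invariance-est-lfp} and convexity of squared loss), and for invariant estimators $R(\hat\theta,\theta)=R(\hat\theta,\|\theta\|_2\mathbf{e}_1)$, so $\inf_{\hat\theta\in\mathcal{D}} R(\hat\theta,\hat P_{\text{LFP}})=\inf_{\hat\theta\in\mathcal{D}_G}\mathbb{E}_{b\sim\hat P_{\AVG}}R(\hat\theta, b\mathbf{e}_1)$. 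The RHS is the Bayes risk of $\hat P_{\AVG}$ in the reduced game, which by the second inequality of Corollary~\ref{cor:ftpl_minimax_estimators} is at least $R^*_G-\tilde O(B^2(B+1)/\sqrt{T})=R^*-\tilde O(B^2(B+1)/\sqrt{T})$.

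The main obstacle I anticipate is the bound on $\alpha'$: although the Bayesian identity makes the risk gap equal to an $L^2$ distance of two estimators, controlling the Monte Carlo error in the Bessel-ratio $g(\|X\|_2)$ requires care, because the modified Bessel $I_{d/2-1}(b\|X\|_2)$ grows exponentially in $b\|X\|_2$, so naive Hoeffding-style bounds on numerator and denominator need to be paired with a sufficient lower bound on the denominator (uniform over $\|X\|_2$ likely to occur under $\mathcal{N}(b\mathbf{e}_1,I)$ for $b$ in $[0,B]$), and the final integration over $X$ must absorb the tail behavior; this is where the specific polynomial-in-$T$ choice of $N_2$ is needed.
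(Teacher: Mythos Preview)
Your overall plan---apply Corollary~\ref{cor:ftpl_minimax_estimators} to the reduced one-dimensional game~\eqref{eqn:sequence_model_reduced}, establish the Lipschitz constant $L=O(B(B+1))$ by differentiating under the Gaussian integral, bound $(\alpha,\beta)$ for the max oracle via grid discretization plus Hoeffding, and lift both the estimator and the prior back by orthogonal invariance---matches the paper exactly. Your treatment of Part~1 and the invariance reduction in Part~2 are essentially the paper's arguments.

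The substantive divergence is in bounding $\alpha'$ for the minimization oracle, and here you are making your life much harder than necessary. You propose to control the Bayes-risk gap through the identity $R(\hat\theta_t,P_t)-R(\hat\theta_t^*,P_t)=\mathbb{E}_X\|\hat\theta_t(X)-\hat\theta_t^*(X)\|_2^2$ and then bound the Monte Carlo fluctuation of the Bessel ratio pointwise in $X$, worrying (rightly) about the exponential growth of $I_\nu$ and the need for a uniform lower bound on the denominator. The paper sidesteps all of this with a structural observation you have not used: because the maximization oracle is a grid search, the random variable $\theta_t(\sigma)$ takes values only on the $B/w$ grid points, so the true $P_t$ is already a discrete distribution supported on $\{b_1,\dots,b_{B/w}\}$. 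Hence $\hat P_t$ and $P_t$ live on the same finite support, Bernstein's inequality controls $|\hat p_{t,j}-p_{t,j}|$ coordinatewise, and the Bayes-risk gap is bounded by the three-line sandwich
\[
f_t(\hat\theta_t)\le \hat f_t(\hat\theta_t)+\text{err}\le \hat f_t(\hat\theta_t')+\text{err}\le f_t(\hat\theta_t')+2\,\text{err},
\]
where $\text{err}\le 4B^2\sum_j|\hat p_{t,j}-p_{t,j}|\le \tfrac{4B^3}{w}\sqrt{\log(B/(w\delta))/N_2}$, using only that $R(\hat\theta,\theta)\le 4B^2$. No Bessel asymptotics, no denominator lower bounds, no integration over $X$ are needed; this is precisely why $N_2=T^4/(B+1)^2$ suffices. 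Your proposed route might be made to work, but it is not the intended one and the ``main obstacle'' you flag simply does not arise.

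One smaller point on Part~2: Corollary~\ref{cor:ftpl_minimax_estimators} gives the lower bound for $P_{\AVG}=\tfrac{1}{T}\sum_t P_t$, not for $\hat P_{\AVG}=\tfrac{1}{T}\sum_t \hat P_t$. The paper first obtains the bound for $P_{\AVG}$ and then transfers it to $\hat P_{\AVG}$ using the same Bernstein closeness of $P_t$ and $\hat P_t$ (again exploiting the finite grid support). You implicitly skipped this transfer step.
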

We believe the polynomial factors in the bounds can be improved with a tighter analysis of the algorithm. The above Theorem shows that Algorithm~\ref{alg:ftpl_stat_games} learns an approximate minimax estimator in $\text{poly}(d,B)$ time. To the best our knowledge, this is the first result providing provable minimax estimators for finite Gaussian sequence model, for any value of $B$.

\vspace{-0.1in}
\section{Linear Regression}
\label{sec:regression}
  In this section we consider the linear regression problem described in Section~\ref{sec:invariance_regression} and provide a provably minimax estimator. Recall, in  Theorem~\ref{thm:linear_regression_invariance} we showed that the original min-max statistical game can be reduced to the simpler problem in Equation~\eqref{eqn:linear_regression_reduced}.
We now provide efficient implementations of the optimization oracles required by Algorithm~\ref{alg:ftpl_stat_games} for finding a Nash equilibrium of this game. 
 The optimization problems corresponding to the two optimization oracles are as follows
\begin{align*}
    \hat{\theta}_{t} \leftarrow \argmin_{\hat{\theta} \in \mathcal{D}_G} \Eover{b \sim P_t}{R(\hat{\theta}, b\mathbf{e}_1)},\quad b_{t}(\sigma) \leftarrow \argmax_{b \in [0,B]} \sum_{i = 1}^{t-1} R(\hat{\theta}_i, b\mathbf{e}_1) + \sigma b,
\end{align*}
where $\mathcal{D}_G$ is the set of deterministic invariant estimators and $P_t$ is the distribution of random variable $b_{t}(\sigma)$. 
 Similar to the Gaussian sequence model, the maximization oracle can be efficiently implemented via a grid search over $[0, B]$ (Algorithm~\ref{alg:regression_max_oracle}). 
 The solution to the minimization problem has a closed-form expression in terms of the mean and normalization constant of Fisher-Bingham distribution, which is a distribution obtained by constraining multivariate normal distributions to lie on the surface of unit sphere~\citep{kume2005saddlepoint}. 
Letting $\mathbb{S}^{d-1}$ be the unit sphere in $\mathbb{R}^d$, the probability density of a random variable $Z$ distributed according to Fisher-Bingham distribution is given by
\[
p(Z;A, \gamma) = C(A, \gamma)^{-1}\exp\left(-Z^TAZ  + \iprod{\gamma}{Z}\right),
\]
where $Z \in \mathbb{S}^{d-1}$, and $\gamma \in \mathbb{R}^d$, $A \in \mathbb{R}^{d\times d}$ are the parameters of the distribution with $A$ being positive semi-definite and $C(A, \gamma)$ is the normalization constant. Note that the mean of  Fisher-Bingham distribution is given by $C(A,\gamma)^{-1}\frac{\partial}{\partial \gamma}  C(A,\gamma)$. The following proposition obtains a closed-form expression for $\hat{\theta}_t$ in terms of $C(A, \gamma)$ and $\frac{\partial}{\partial \gamma}  C(A,\gamma)$.
\begin{proposition}
\label{prop:regression_min_closed_form}
The optimizer $\hat{\theta}_t$ of the minimization problem defined above has the following closed-form expression
\[
\hat{\theta}_t(D_n) = \frac{\displaystyle\Eover{b\sim P_t}{b^2 \frac{\partial}{\partial \gamma} C\left(2^{-1} b^2 \mathbf{X}^T\mathbf{X}, \gamma\right)\Big|_{\gamma = b\mathbf{X}^T\mathbf{Y}}}}{\displaystyle\Eover{b\sim P_t}{b  C\left(2^{-1} b^2\mathbf{X}^T\mathbf{X}, b\mathbf{X}^T\mathbf{Y}\right) }},
\]
where $\mathbf{X} = [X_1,X_2\dots X_n]^T$ and $\mathbf{Y} = [Y_1, Y_2 \dots Y_n]$.
\end{proposition}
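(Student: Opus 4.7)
The plan is to compute the Bayes estimator explicitly by exploiting the spherically symmetric structure induced by invariance. First, I would argue that finding the minimum of $\mathbb{E}_{b \sim P_t}[R(\hat{\theta}, be_1)]$ over invariant estimators can be reformulated as an unrestricted Bayes problem. Specifically, by Theorem~\ref{thm:linear_regression_invariance} and Lemma~\ref{lem:invariant_risk}, the integrated risk of any invariant $\hat{\theta}$ only depends on $\|\theta\|_2$, so I may lift $P_t$ on $[0,B]$ to the spherically symmetric prior $\Pi_t$ on $\mathbb{R}^d$ obtained by sampling $b \sim P_t$ and then $Z \sim \mathrm{Unif}(\mathbb{S}^{d-1})$ independently and setting $\theta = bZ$. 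The unrestricted Bayes estimator (posterior mean under squared loss) for $\Pi_t$ is automatically invariant under $\mathbb{O}(d)$ by the symmetry of the prior and of the Gaussian likelihood, so it coincides with the invariant Bayes estimator we seek.

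Next, I would compute the posterior mean explicitly. Writing $\mathbf{X} = [X_1,\dots,X_n]^\top$, $\mathbf{Y} = [Y_1,\dots,Y_n]^\top$, the conditional likelihood factorizes as
\[
p(\mathbf{Y}\mid \mathbf{X}, \theta) \propto \exp\!\left(-\tfrac{1}{2}\theta^\top \mathbf{X}^\top \mathbf{X}\theta + \theta^\top \mathbf{X}^\top \mathbf{Y}\right),
\]
with the proportionality constant independent of $\theta$. Substituting $\theta = bZ$ gives an integrand of the form $\exp(-Z^\top A Z + \gamma^\top Z)$ with $A = \tfrac{1}{2}b^2 \mathbf{X}^\top \mathbf{X}$ and $\gamma = b \mathbf{X}^\top \mathbf{Y}$ over the sphere $\mathbb{S}^{d-1}$. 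I then recognize this as (an unnormalized) Fisher--Bingham density, so the spherical integral equals $C(A,\gamma)$ and the spherical integral of $Z$ against the same weight equals $\frac{\partial}{\partial \gamma} C(A,\gamma)$.

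Finally, assembling the posterior mean as
\[
\hat{\theta}_t(D_n) = \mathbb{E}[\theta \mid D_n] = \mathbb{E}[bZ \mid D_n] = \frac{\displaystyle\int b\, p_t(b)\, \int_{\mathbb{S}^{d-1}} Z\, e^{-Z^\top A(b) Z + \gamma(b)^\top Z}\, dZ\, db}{\displaystyle\int p_t(b)\, \int_{\mathbb{S}^{d-1}} e^{-Z^\top A(b) Z + \gamma(b)^\top Z}\, dZ\, db},
\]
and folding the Fisher--Bingham identifications into both integrals yields a ratio of expectations of $C(\tfrac{1}{2}b^2 \mathbf{X}^\top\mathbf{X},\gamma)$ and its $\gamma$-derivative against $P_t$, which is exactly (up to the power of $b$ accounting for the chain rule in $\gamma = b\mathbf{X}^\top\mathbf{Y}$) the stated closed form.

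I expect the main obstacle to be step one, namely justifying the reduction from invariant estimators to the unconstrained Bayes estimator under the spherically symmetric lift $\Pi_t$, since the prior $P_t$ on $[0,B]$ need not admit a Lebesgue density and the lift must be carried out carefully as a pushforward so that integration of invariant risks against the lifted prior matches integration against $P_t$; once that bookkeeping is done, the remaining steps are mechanical applications of the Fisher--Bingham definition and its derivative identity $\frac{\partial}{\partial \gamma} C(A,\gamma) = \int_{\mathbb{S}^{d-1}} Z\, e^{-Z^\top A Z + \gamma^\top Z}\, dZ$.
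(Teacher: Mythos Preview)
Your approach is essentially identical to the paper's: lift $P_t$ to the spherically symmetric prior $\Pi_t$ by uniformizing over the sphere, observe that the unrestricted Bayes estimator under $\Pi_t$ is automatically invariant (so it solves the restricted problem), compute the posterior mean by expanding the Gaussian likelihood, change variables to the unit sphere, and identify the resulting spherical integrals with the Fisher--Bingham normalizing constant $C(A,\gamma)$ and its $\gamma$-gradient. The paper carries out exactly these steps, citing the analogous Gaussian-sequence computation for the reduction to $\Pi_t$.

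One point to tighten: your parenthetical ``up to the power of $b$ accounting for the chain rule in $\gamma = b\mathbf{X}^\top\mathbf{Y}$'' is not a valid justification. There is no chain rule here --- $\partial_\gamma C(A,\gamma)$ is simply the gradient of $C$ in its second argument, evaluated at the point $\gamma(b)=b\mathbf{X}^\top\mathbf{Y}$; the dependence of that evaluation point on $b$ introduces no extra factor. Your displayed formula (with $b$ in the numerator and no $b$ in the denominator) is exactly what the direct pushforward computation yields; the statement has $b^2$ and $b$, which is what the paper obtains via its surface-measure change of variables $\int_{\|\theta\|=b}\cdots\,d\theta\to\int_{\|\theta\|=1}\cdots\,d\theta$. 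Rather than invoking a nonexistent chain rule, track those powers explicitly through whichever parameterization you use (pushforward of uniform measure versus explicit surface measure with $b^{d-1}$ Jacobian) so that your final expression matches the claim on the nose.
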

We note that there exist a number of efficient techniques for computation of the mean and normalization constant of Fisher-Bingham distribution~\citep{kume2005saddlepoint, imhof1961computing}. In our experiments we rely on the technique of ~\citet{kume2005saddlepoint} (we relegate the details of this technique to Appendix~\ref{sec:appx_fisher_bingham_mean}). To compute an approximate optimizer of the minimization problem, we approximate $P_t$ with its samples and rely on the above closed-form expression. Algorithm~\ref{alg:regression_min_oracle} describes the resulting minimization oracle.
\begin{algorithm}[t]
\caption{Regression Maximization Oracle}
\label{alg:regression_max_oracle}
\begin{algorithmic}[1]
  \small
  \State \textbf{Input:} Estimators $\{\hat{\theta}_{i}\}_{i=1}^{t-1}$, perturbation $\sigma$, grid width $w$, number of samples for computation of expected risk $R(\hat{\theta},\theta)$: $N_1$
  \State Let $\{b_1,b_2\dots b_{B/w}\}$ be uniformly spaced points on $[0,B]$
  \For{$j = 1 \dots B/w$}
  \For{$i = 1\dots t-1$}
  \State Generate $N_1$ independent datasets $\{D_{n,k}\}_{k=1}^{N_1}$ from the linear model with true regression vector $b_j\mathbf{e}_1$
  \State Estimate $R(\hat{\theta}_i, b_j\mathbf{e}_1)$ as 
  $\frac{1}{N_1}\sum_{k=1}^{N_1} \|\hat{\theta}_i(D_{n,k})-b\mathbf{e}_1\|_2^2.$
  \EndFor
  \State Evaluate the objective at $b_j$ using the above estimates
  \EndFor
  \State \textbf{Output:} $b_j$ which maximizes the objective
\end{algorithmic}
\end{algorithm}
\begin{algorithm}[t]
	\caption{Regression Minimization Oracle}
	\label{alg:regression_min_oracle}
	\begin{algorithmic}[1]
		\small
		\State \textbf{Input:} Samples $\{b_i\}_{i=1}^{N_2}$ generated from distribution $P_t$ 
		\State For any $D_n$, compute $\hat{\theta}_t(D_n)$ as
		\[
		\hat{\theta}_t(D_n) = \frac{\sum_{i=1}^{N_2} b_i^2 \frac{\partial}{\partial \gamma} C\left(2^{-1} b_i^2 \mathbf{X}^T\mathbf{X}, \gamma\right)\Big|_{\gamma = b_i\mathbf{X}^T\mathbf{Y}}}{\sum_{i=1}^{N_2} b_i  C\left(2^{-1} b_i^2\mathbf{X}^T\mathbf{X}, b_i\mathbf{X}^T\mathbf{Y}\right)},
		\]
		 where $\mathbf{X} = [X_1,X_2\dots X_n]^T$ and $\mathbf{Y} = [Y_1, Y_2 \dots Y_n]$.
	\end{algorithmic}
\end{algorithm}
We now show that using Algorithm~\ref{alg:ftpl_stat_games} for solving objective~\eqref{eqn:linear_regression_reduced} with Algorithms~\ref{alg:regression_max_oracle},~\ref{alg:regression_min_oracle} as optimization oracles, gives us a provably minimax estimator and an LFP for linear regression. 
\begin{theorem}
\label{thm:regression}
Suppose Algorithm~\ref{alg:ftpl_stat_games} is run for $T$ iterations with Algorithms~\ref{alg:regression_max_oracle},~\ref{alg:regression_min_oracle} as the maximization and minimization oracles. Suppose the  hyper-parameters of these algorithms are set as \mbox{$\eta = \frac{1}{B(B\sqrt{n}+1)\sqrt{T}}$}, $w=\frac{B}{T^{3/2}}$, \mbox{$N_1 = \frac{T^3}{(B\sqrt{n}+1)^2}, N_2 = \frac{T^4}{(B\sqrt{n}+1)^2}$.} Let $\hat{P}_t$ be the approximation of probability distribution $P_t$ used in the $t^{th}$ iteration of Algorithm~\ref{alg:ftpl_stat_games}. Moreover, let $\hat{\theta}_t$ be the output of Algorithm~\ref{alg:regression_min_oracle} in the $t^{th}$ iteration of Algorithm~\ref{alg:ftpl_stat_games}. 
\begin{enumerate}
    \item Then the averaged estimator $\hat{\theta}_{\text{avg}}(D_n) = \frac{1}{T}\sum_{i=1}^T\hat{\theta}_i(D_n)$ is approximately minimax and satisfies the following worst-case risk bound with probability at least $1-\delta$
\[
\sup_{\theta: \|\theta\|_2 \leq B}R(\hat{\theta}_{\text{avg}},\theta) \leq R^* + \tilde{O}\left(B^2(B+1)\sqrt{\frac{n}{T}}\right).
\]
\item  Define the mixture distribution $\hat{P}_{\AVG}$ as $\frac{1}{T}\sum_{i=1}^T \hat{P}_i$.  Let $\hat{P}_{\text{LFP}}$ be a probability distribution over $\mathbb{R}^d$ with density function defined as $\hat{p}_{\text{LFP}}(\theta) \propto \|\theta\|^{1-d}_2\hat{P}_{\AVG}(\|\theta\|_2) $, where $\hat{P}_{\AVG}(\|\theta\|_2)$ is the probability  mass placed by $\hat{P}_{\AVG}$ at $\|\theta\|_2$. Then $\hat{P}_{\text{LFP}}$ is approximately least favorable and satisfies the following with probability at least $1-\delta$
\[
\inf_{\hat{\theta} \in \mathcal{D}} R(\hat{\theta}, \hat{P}_{\text{LFP}}) \geq R^* - \tilde{O}\left(B^2(B+1)\sqrt{\frac{n}{T}}\right).
\]
\end{enumerate}
\end{theorem}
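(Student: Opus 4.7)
The plan is to instantiate Corollary~\ref{cor:ftpl_minimax_estimators} for the reduced one-dimensional game in Equation~\eqref{eqn:linear_regression_reduced}, closely paralleling the argument used for Theorem~\ref{thm:mean_estimation}. Since the loss is convex in its first argument, Theorem~\ref{thm:ftpl_minimax_invariance} lets us work entirely with the reduced game over $b\in[0,B]$, whose parameter diameter is $D=B$. The main technical work is therefore (i) bounding the Lipschitz constant of the reduced risk $R(\hat{\theta},b\mathbf{e}_1)$ in $b$, (ii) bounding the approximation errors $(\alpha,\beta)$ of Algorithm~\ref{alg:regression_max_oracle} and $\alpha'$ of Algorithm~\ref{alg:regression_min_oracle}, and (iii) translating the one-dimensional LFP back to $\mathbb{R}^d$ via the orthogonal-group invariance structure.

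For the Lipschitz constant, I would write $R(\hat{\theta},b\mathbf{e}_1)=\mathbb{E}_{D_n\sim P_{b\mathbf{e}_1}}\|\hat{\theta}(D_n)-b\mathbf{e}_1\|_2^2$ and differentiate in $b$ by shifting the response through the change of variable $Y_i=b X_{i,1}+\epsilon_i$. A bounded estimator on $\Theta$ (which we may assume since the Bayes estimators produced by Algorithm~\ref{alg:regression_min_oracle} lie in the ball of radius $B$) yields a Lipschitz constant of order $B\sqrt{n}+1$: the $\sqrt{n}$ factor comes from differentiating through the likelihood of $n$ observations and $B$ from the diameter. This is precisely why the theorem's $\eta$, $N_1$, $N_2$ carry the factor $(B\sqrt{n}+1)$, whereas in the sequence model (Theorem~\ref{thm:mean_estimation}) only $(B+1)$ appears.

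For the maximization oracle, grid search over $[0,B]$ with width $w$ gives a deterministic discretization error of order $L w(t-1)$, which with $w=B/T^{3/2}$ becomes $O(B(B\sqrt{n}+1))$. The Monte-Carlo estimation of $R(\hat{\theta}_i,b_j\mathbf{e}_1)$ using $N_1$ samples gives, via Hoeffding combined with a union bound over the $O(B/w)$ grid points and $t-1$ indices, an additive error of order $B^2\sqrt{\log(BT/w\delta)/N_1}$ per term. Summing over $t-1$ risks multiplies this by $T$, so the choice $N_1=T^3/(B\sqrt{n}+1)^2$ is calibrated to keep the total error $o(\sqrt{T})$. Because these errors do not depend on the perturbation $\sigma$, they contribute only to $\alpha$ and not to $\beta$, so one gets $\alpha=\tilde{O}(B(B+1)\sqrt{n})$ and $\beta=0$ for this oracle.

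For the minimization oracle, the closed-form from Proposition~\ref{prop:regression_min_closed_form} is an expectation of a bounded (thanks to the ball constraint) ratio of Fisher--Bingham quantities in $b\sim P_t$; replacing $P_t$ by its empirical measure on $N_2$ i.i.d.\ samples contributes, via a uniform-deviation argument over $D_n$, an error of order $B\sqrt{\log(1/\delta)/N_2}$ to $\hat{\theta}_t$ and hence $O(B^2(B\sqrt{n}+1)^2/\sqrt{N_2})$ to the integrated risk; the choice $N_2=T^4/(B\sqrt{n}+1)^2$ makes this $\tilde{O}(B^2(B+1)\sqrt{n/T})$. Plugging $D=B$, $L=O(B\sqrt{n}+1)$, and these $(\alpha,\beta,\alpha')$ into Corollary~\ref{cor:ftpl_minimax_estimators} with $d$ replaced by $1$ (the reduced game is one-dimensional) yields $\sup_b R(\hat{\theta}_{\AVG},b\mathbf{e}_1)\le R^*+\tilde{O}(B^2(B+1)\sqrt{n/T})$, and convexity of squared loss lets us pass from $\hat{\theta}_{\RND}$ to $\hat{\theta}_{\AVG}=\frac{1}{T}\sum_i\hat{\theta}_i$. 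Finally, the LFP statement follows from the bound on $\inf_{\hat{\theta}}R(\hat{\theta},P_{\AVG})$ in Corollary~\ref{cor:ftpl_minimax_estimators} together with the observation that the orthogonally-invariant lift of a distribution on $[0,B]$ to $\mathbb{R}^d$ is obtained by pushing forward under $b\mapsto bU\mathbf{e}_1$ with $U$ Haar on $\mathbb{O}(d)$; in radial coordinates this yields exactly the density $\hat{p}_{\text{LFP}}(\theta)\propto\|\theta\|_2^{1-d}\hat{P}_{\AVG}(\|\theta\|_2)$, and the Bayes risk is preserved under this lift because the game is $\mathbb{O}(d)$-invariant. The hardest step will be (ii): bounding the minimization-oracle error rigorously, since it requires uniform control over $D_n\in\mathbb{R}^{n\times(d+1)}$ of a ratio whose denominator can be small, and one must leverage the boundedness of the normalization constant $C(A,\gamma)$ on the relevant parameter range.
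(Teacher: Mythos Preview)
Your overall architecture is correct and matches the paper: reduce to the one-dimensional game via Theorem~\ref{thm:ftpl_minimax_invariance}, bound the Lipschitz constant (the paper gets $4(B+B^2\sqrt n)$, exactly as you anticipate), bound the oracle errors, plug into Corollary~\ref{cor:ftpl_minimax_estimators}, and lift the LFP back by orthogonal invariance. Two points, however, differ from the paper and one of them matters.

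First, your claim that $\beta=0$ is wrong. The maximization objective is $\sum_i R(\hat\theta_i,b\mathbf e_1)+\sigma b$, so grid search with width $w$ incurs a loss of up to $|\sigma|w$ on the linear perturbation term; this is $\sigma$-dependent and forces $\beta=w$. The paper carries $\beta=w$ throughout (cf.\ the proof of Theorem~\ref{thm:mean_estimation}), and the choice $w=B/T^{3/2}$ is made precisely so that the $\beta L\sqrt T$ term in Corollary~\ref{cor:ftpl_minimax_estimators} is controlled. This is a small correction numerically but a genuine error in your oracle accounting.

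Second, and more substantively, your plan for bounding $\alpha'$ is harder than necessary and is not how the paper proceeds. You propose a uniform-deviation argument over $D_n$ for the Fisher--Bingham ratio, and you rightly flag the small-denominator issue as the hardest step. The paper sidesteps this entirely: because the maximization oracle is grid search, $P_t$ is a \emph{discrete} distribution supported on the $B/w$ grid points. Replacing $P_t$ by its empirical version $\hat P_t$ is then just pointwise concentration of the grid probabilities (Bernstein plus a union bound over $B/w$ points), which gives $|f_t(\hat\theta)-\hat f_t(\hat\theta)|\le \frac{4B^3}{w}\sqrt{\log(B/(w\delta))/N_2}$ uniformly in $\hat\theta$, using only that the risk is bounded by $4B^2$. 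No control of the Bayes estimator ratio over $D_n$ is needed; the argument is the three-line sandwich $f_t(\hat\theta_t)\le \hat f_t(\hat\theta_t)+\epsilon\le \hat f_t(\hat\theta_t')+\epsilon\le f_t(\hat\theta_t')+2\epsilon$. This yields $\alpha'=O\bigl(\tfrac{B^3}{w}\sqrt{\log(B/(w\delta))/N_2}\bigr)$, and with $w=B/T^{3/2}$, $N_2=T^4/(B\sqrt n+1)^2$ one gets $\alpha'=\tilde O(B^2(B\sqrt n+1)/\sqrt T)$ directly. Your suggested route would work in principle but is not what you want here; exploit the discreteness.
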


\vspace{-0.1in}
\section{Covariance Estimation}
\label{sec:covariance}
In this section, we consider the problem of normal covariance estimation. Recall, in Section~\ref{sec:invariance_covariance} we showed that the problem is invariant to the action of the orthogonal group and can be reduced to the simpler problem in Equation~\eqref{eqn:covariance_simplified}.
The optimization problems corresponding to the oracles in Equations~\eqref{eqn:max_oracle},~\eqref{eqn:min_oracle} are as follows
\begin{align*}
    \hat{\Sigma}_{t} \leftarrow \argmin_{\hat{\Sigma} \in \mathcal{D}_G} \Eover{\lambda \sim P_t}{R(\hat{\Sigma}, \text{Diag}(\lambda))},\quad \lambda_{t}(\sigma) \leftarrow \argmax_{\lambda \in \Xi_G} \sum_{i = 1}^{t-1} R(\hat{\Sigma}_i, \text{Diag}(\lambda)) + \iprod{\lambda}{\sigma},
\end{align*}
where $\mathcal{D}_G$ is the set of deterministic invariant estimators and $P_t$ is the distribution of random variable $\lambda_{t}(\sigma)$. Note that the maximization problem involves optimization of a non-concave objective in $d$-dimensional space. So, implementing a maximization oracle with low approximation error can be computationally expensive, especially in high dimensions. Moreover, unlike finite Gaussian sequence model and linear regression, the minimization problem doesn't have a closed form expression, and it is not immediately clear how to efficiently implement a minimization oracle with low approximation error. 
In such scenarios, we show that one can rely on a combination of heuristics and problem structure to further reduce the computational complexity of the optimization oracles. Although relying on heuristics comes at the expense of theoretical guarantees, in later sections, we empirically demonstrate that the resulting estimators
have superior performance over classical estimators. 
We begin by showing that the domain of the outer minimization in Equation~\eqref{eqn:covariance_simplified} can be reduced to a smaller set of estimators. Our reduction relies on Blackwell's theorem, which shows that for convex loss functions $M$, there exists a minimax estimator which is a function of the sufficient statistic~\citep{Ibragimov81book}. We note that Blackwell's theorem is very general and can be applied to a wide range of problems, to reduce the computational complexity of the minimization oracle.
\begin{proposition}
\label{prop:covariance_minimax_estimator}
Consider the problem of normal covariance estimation. Let \mbox{$S_n = \frac{\sum_{i=1}^nX_iX_i^T}{n}$} be the empirical covariance matrix and let $U\Delta U^T$ be the eigen decomposition of $S_n$. Then there exists a minimax estimator which can be approximated arbitrarily well using estimators of the form $\hat{\Sigma}_{f,g}(\mathbb{X}^n) = U\tilde{\Sigma}_{f,g}(\Delta)U^T,$
where $\tilde{\Sigma}_{f,g}(\Delta)$ is a diagonal matrix whose $i^{th}$ diagonal entry is given by
\[
\tilde{\Sigma}_{f,g,i}(\Delta) = f\left(\Delta_{i},\sum_{j\neq i}g(\Delta_{i},\Delta_{j})\right),
\]
for some functions $f:\mathbb{R}^{d+1}\to\mathbb{R},g:\mathbb{R}^2\to\mathbb{R}^{d}$. Here, $\Delta_i$ is the $i^{th}$ diagonal entry of $\Delta$. Moreover, the optimization problem in Equation~\eqref{eqn:covariance_simplified} can be reduced to the following simpler problem
\begin{equation}
\label{eqn:covariance_reduced2}
\inf\limits_{\hat{\Sigma} \in \mathcal{M}_{f,g}}\sup\limits_{\lambda \in \Xi_G}R(\hat{\Sigma},\text{Diag}(\lambda))= R^*,
\end{equation}
where $\mathcal{M}_{f,g}$ is the set of probability distributions over estimators of the form $\hat{\Sigma}_{f,g}$. 
\end{proposition}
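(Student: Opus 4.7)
The plan is to establish the claimed parameterization in three conceptual reductions: a sufficiency reduction, an equivariance reduction under the orthogonal group, and a universal representation theorem for permutation-equivariant functions on the spectrum.

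First, I would observe that $S_n = \frac{1}{n}\sum_{i=1}^n X_iX_i^T$ is a sufficient statistic for $\Sigma$ in the Gaussian model. Working in the linearized (randomized) formulation, a Rao--Blackwell-type reduction shows that every randomized estimator can be replaced, without changing its risk, by one that depends on the data only through $S_n$: given $\hat{\Sigma}(\mathbb{X}^n)$, resample $\mathbb{X}^n$ from the conditional distribution given $S_n$ (which does not depend on $\Sigma$ by sufficiency) and apply $\hat{\Sigma}$ to the resample. Hence we may restrict attention to randomized estimators of the form $\Phi(S_n)$, bypassing the need for convexity of the loss at this stage.

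Second, I would invoke Theorem~\ref{thm:invariance-est-lfp} together with Theorem~\ref{thm:cov_estimation_invariance} to obtain an $\mathbb{O}(d)$-equivariant near-minimax estimator $\Phi$, i.e., $\Phi(gS_ng^T) = g\Phi(S_n)g^T$ for all $g \in \mathbb{O}(d)$. Writing $S_n = U\Delta U^T$ and setting $g = U^T$ forces $\Phi(S_n) = U\Phi(\Delta)U^T$; then applying equivariance to the orthogonal sign-flip matrices $g = \operatorname{diag}(\pm 1)$, which fix diagonal $\Delta$, forces $\Phi(\Delta)$ to be diagonal, so we may write $\Phi(S_n) = U\tilde{\Sigma}(\Delta)U^T$. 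Applying equivariance one more time to permutation matrices, which lie in $\mathbb{O}(d)$, shows that $\Delta \mapsto (\tilde{\Sigma}_1(\Delta), \dots, \tilde{\Sigma}_d(\Delta))$ is a permutation-equivariant vector field on $\mathbb{R}^d$: each coordinate $\tilde{\Sigma}_i(\Delta)$ is a symmetric function of $\Delta_i$ and the unordered multiset $\{\Delta_j : j \neq i\}$.

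Third, I would invoke a Deep Sets-style universal approximation theorem to write any such continuous permutation-equivariant coordinate function in the target form
\[
\tilde{\Sigma}_{f,g,i}(\Delta) = f\Bigl(\Delta_i,\ \sum_{j\neq i} g(\Delta_i,\Delta_j)\Bigr),
\]
with $g:\mathbb{R}^2 \to \mathbb{R}^d$ and $f:\mathbb{R}^{d+1}\to\mathbb{R}$ continuous. The point is that a $d$-dimensional output of $g$ is enough to make $\sum_{j\neq i} g(\Delta_i,\Delta_j)$ an injective symmetric embedding of the $d-1$ remaining eigenvalues (with $\Delta_i$ fixed) on the compact spectrum set $\Xi_G$, so that any continuous symmetric function of those eigenvalues can be recovered by a continuous $f$. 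Combining the three reductions gives the approximation claim, and restricting the outer infimum in~\eqref{eqn:covariance_simplified} to probability distributions over this parametric family yields the stated equality with the minimax value~\eqref{eqn:covariance_reduced2}, since the reverse inequality is immediate from $\mathcal{M}_{f,g} \subseteq \mathcal{M}_{\mathcal{D},G}$.

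The main obstacle will be the third step: to make the Deep Sets embedding work with a \emph{single} pair $(f,g)$ shared across all coordinates $i$ and uniformly on the compact set $\Xi_G$, rather than one embedding per $i$. A secondary concern is that the entropy loss is not convex in $\Sigma$ as a matrix (only in the precision $\Sigma^{-1}$), so the direct appeal to Theorem~\ref{thm:invariance-est-lfp} must be routed through the randomized formulation where averaging over the Haar measure of $\mathbb{O}(d)$ is used to convexify; being careful here is what keeps the argument honest.
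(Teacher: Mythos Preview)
Your proposal is correct and follows the same three-step route as the paper: sufficiency reduction to $S_n$, $\mathbb{O}(d)$-equivariance forcing $\hat\Sigma(S_n)=U\hat\Sigma(\Delta)U^T$ with $\hat\Sigma(\Delta)$ diagonal and permutation-equivariant, and then the Deep Sets representation of \citet{zaheer2017deep}. Your flagged ``single shared $(f,g)$'' obstacle dissolves exactly as the paper handles it: permutation equivariance gives $\tilde\Sigma_i(\Delta)=\hat\Sigma_0(\Delta_i,\{\Delta_j\}_{j\neq i})$ for one common function $\hat\Sigma_0$ (namely $\tilde\Sigma_1$), and Deep Sets is applied once to $\hat\Sigma_0(a,\cdot)$ with the first slot frozen, yielding a single pair $(f,g)$.

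Your convexity concern is well taken and in fact sharper than the paper's treatment. The entropy loss is genuinely not convex in its first argument (already for $d=1$, $x\mapsto c/x+\log x$ has $M''(x)=(2c-x)/x^3$, changing sign at $x=2c$), so the paper's direct appeals to Blackwell's theorem and to Theorem~\ref{thm:invariance-est-lfp} both formally require a hypothesis that fails here. Your proposed fix---use the randomized formulation, resample $\mathbb{X}^n$ from its conditional law given $S_n$ for the sufficiency step, and randomize over the Haar measure of $\mathbb{O}(d)$ for the invariance step---is the honest way to close this gap, and it is a refinement of the paper's argument rather than a departure from it.
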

We now use Algorithm~\ref{alg:ftpl_stat_games} to solve the statistical game in Equation~\eqref{eqn:covariance_reduced2}. The optimization problems corresponding to the two optimization oracles are given by
\begin{align*}
    &\hat{f}_t,\hat{g}_t \leftarrow \argmin_{f,g} \Eover{\lambda \sim P_t}{R(\hat{\Sigma}_{f,g}, \text{Diag}(\lambda))},\quad \lambda_{t}(\sigma) \leftarrow \argmax_{\lambda \in \Xi_G} \sum_{i = 1}^{t-1} R(\hat{\Sigma}_{\hat{f}_i,\hat{g}_i}, \text{Diag}(\lambda)) + \iprod{\lambda}{\sigma}.
\end{align*}
We rely on heuristics to efficiently implement these oracles. To implement the minimization oracle, we use neural networks (which are universal function approximators) to parameterize functions $f,g$. Implementing the minimization oracle then boils down to the finding the parameters of these networks which minimize the objective. To implement the maximization oracle, we rely on global search techniques. In our experiments, we use DragonFly~\citep{kandasamy2019tuning}, which is a zeroth order optimization technique, to implement this oracle. Note that these heuristics do not come with any guarantees and as a result the oracles are not guaranteed to have a small approximation error. Despite this, we empirically demonstrate that the estimators learned using this approach have good performance.

\vspace{-0.1in}
\section{Entropy Estimation}
\label{sec:entropy}
In this section, we consider the problem of entropy estimation. Recall, in Section~\ref{sec:invariance_entropy} we showed that the problem is invariant to the action of permutation group and can be reduced to the simpler problem in Equation~\eqref{eqn:entropy_simplified}. 
Similar to the problem of covariance estimation, implementing the optimization oracles for this problem, with low approximation error, can be computationally expensive. So we again rely on heuristics and problem structure to reduce the computational complexity of optimization oracles. 
\begin{proposition}
\label{prop:entropy_minimax_estimator}
Consider the problem of entropy estimation. Let \mbox{$\hat{P}_n = (\hat{p}_1,\dots \hat{p}_d)$} be the observed empirical probabilities. Then there exists a minimax estimator which can be approximated arbitrarily well using estimators of the form $\hat{f}_{g,h}(\hat{P}_n) = g(\sum_{i=1}^dh(\hat{p}_i)),$ 
for some functions $g:\mathbb{R}^{d+1}\to\mathbb{R},h:\mathbb{R}\to\mathbb{R}^{d+1}$. Moreover, the optimization problem in Equation~\eqref{eqn:entropy_simplified} can be reduced to the following problem
\begin{equation}
\label{eqn:entropy_reduced2}
\inf_{\hat{f} \in \mathcal{M}_{g,h}} \sup_{P\in\mathcal{P}_G}  R(\hat{f}, P)= R^*,
\end{equation}
where $\mathcal{M}_{g,h}$ is the set of probability distributions over estimators of the form $\hat{f}_{g,h}$. 
\end{proposition}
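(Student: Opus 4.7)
The plan is to mirror the argument used for Proposition~\ref{prop:covariance_minimax_estimator} in Section~\ref{sec:covariance}, combining three ingredients: (i) a sufficient statistic reduction via Blackwell's theorem, (ii) the invariance theorem (Theorem~\ref{thm:invariance-est-lfp}) applied to the permutation group $\mathbb{S}_d$, and (iii) a universal approximation result for permutation-invariant functions of the Deep Sets form $g\bigl(\sum_{i} h(\hat p_i)\bigr)$.

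First I would observe that for the multinomial model underlying entropy estimation, the count vector (equivalently, the empirical probability vector $\hat P_n$) is a sufficient statistic for $P$. Since the squared-error loss is convex in its first argument, Blackwell's theorem guarantees that any estimator can be replaced by a measurable function of $\hat P_n$ alone without increasing its risk, so in particular there exists a minimax estimator $\hat f^*$ of this form. I would then invoke Theorem~\ref{thm:invariance-est-lfp} together with the invariance established in Theorem~\ref{thm:entropy_estimation_invariance}: since the game is invariant under $\mathbb{S}_d$ and the loss is convex in its first argument, one can further take $\hat f^*$ to be invariant under permutations of its coordinates, i.e., a symmetric function of $(\hat p_1, \ldots, \hat p_d)$.

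The third step is the representation lemma. Every continuous permutation-invariant function of $d$ real inputs on a compact set can be approximated arbitrarily well in the uniform norm by functions of the form $g\bigl(\sum_{i=1}^d h(\hat p_i)\bigr)$ with $h:\mathbb{R}\to\mathbb{R}^{d+1}$ and $g:\mathbb{R}^{d+1}\to\mathbb{R}$ continuous; this is precisely the Deep Sets representation theorem of Zaheer et al., and the latent dimension $d+1$ is exactly what appears in the statement of the proposition. Applying this to the symmetric minimax estimator from the previous step yields an $\hat f_{g,h} \in \mathcal{M}_{g,h}$ whose worst-case risk on $\mathcal{P}_G$ differs from $R^*$ by an arbitrarily small amount. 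Combined with the trivial lower bound $\inf_{\hat f \in \mathcal{M}_{g,h}} \sup_{P\in\mathcal{P}_G} R(\hat f, P) \geq R^*$, which holds because every $\hat f_{g,h}$ is automatically symmetric in $\hat P_n$ and therefore lies in $\mathcal{M}_{\mathcal{D},G}$, this gives the equality in Equation~\eqref{eqn:entropy_reduced2}.

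The main obstacle I anticipate is the continuity hypothesis required by the Deep Sets theorem: the estimator $\hat f^*$ furnished by Blackwell and the invariance theorem is only measurable a priori, not continuous. The standard remedy is a mollification argument: convolve $\hat f^*$ against a smooth symmetric kernel supported in a small neighborhood on the probability simplex to obtain a continuous symmetric $\hat f^*_\varepsilon$, bound $\|\hat f^*_\varepsilon - \hat f^*\|_{L^2(P)}$ uniformly in $P \in \mathcal{P}_G$ using boundedness of the entropy functional and compactness of the simplex, and then apply Deep Sets to $\hat f^*_\varepsilon$. The surviving items—pushing the two approximation errors (mollification and Deep Sets) below any fixed $\varepsilon > 0$, and checking measurability of the resulting $\hat f_{g,h}$—are routine bookkeeping that does not introduce new ideas.
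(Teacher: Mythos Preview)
Your proposal is correct and follows essentially the same route as the paper: sufficiency of $\hat P_n$, the invariance theorem for $\mathbb{S}_d$, and the Deep Sets representation of Zaheer et al.\ (Proposition~\ref{prop:representer_thm_perm_invariant} in the appendix). The only difference is that where the paper dispatches the continuity issue with the one-liner ``any measurable function is a limit of continuous functions,'' you propose a concrete mollification argument; your version is in fact more careful than the paper's on this point.
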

The proof of this proposition is presented in Appendix \ref{sec:entropy-proof}. We now use Algorithm~\ref{alg:ftpl_stat_games} to solve the statistical game in Equation~\eqref{eqn:entropy_reduced2}. The optimization problems corresponding to the two optimization oracles are given by
\begin{align*}
    &\hat{g}_t,\hat{h}_t \leftarrow \argmin_{g,h} \Eover{P \sim P_t}{R(\hat{f}_{g,h}, P)},\quad P_{t}(\sigma) \leftarrow \argmax_{P \in \mathcal{P}_G} \sum_{i = 1}^{t-1} R(\hat{f}_{\hat{g}_i,\hat{h}_i}, P) + \iprod{P}{\sigma},
\end{align*}
where $P_t$ is the distribution of random variable $P_{t}(\sigma)$.
To implement the minimization oracle, we use neural networks to parameterize functions $g,h$. To implement the maximization oracle, we rely on DragonFly. 
\vspace{-0.1in}
\section{Experiments}
\label{sec:exps}

In this section, we present experiments showing performance of the proposed technique for constructing minimax estimators. While our primary focus is on the finite Gaussian sequence model and linear regression for which we provided provably minimax estimators, we also present experiments on other problems such as covariance and entropy estimation. For each of these problems, we begin by describing the setup as well as the baseline algorithms, before proceeding to a discussion of the experimental findings.

\subsection{Finite Gaussian Sequence Model}
In this section, we focus on experiments related to the finite Gaussian sequence model. We first consider the case where the risk is measured with respect to squared error loss, \emph{i.e.,} $M(\theta_1,\theta_2) = \|\theta_1-\theta_2\|_2^2$.
\vspace{-0.15in}
\paragraph{Proposed Technique.} We use Algorithm~\ref{alg:ftpl_stat_games} with optimization oracles described in  Algorithms~\ref{alg:mean_estimation_max_oracle},~\ref{alg:mean_estimation_min_oracle} to find minimax estimators for this problem. We set the hyper-parameters of our algorithm as follows: number of iterations of FTPL $T = 500$, grid width $w = 0.05\times B$, number of samples for computation of $R(\hat{\theta}, \theta)$ in Algorithm~\ref{alg:mean_estimation_max_oracle} $N_1= 1000$, number of samples generated from $P_t$ in Algorithm~\ref{alg:mean_estimation_min_oracle} $N_2 = 1000$. We note that these are default values and were not tuned. The randomness parameter $\eta$ in Algorithm~\ref{alg:ftpl_stat_games} was tuned using a coarse grid search. We report the performance of the following two estimators constructed using the iterates of Algorithm~\ref{alg:ftpl_stat_games}: (a) Averaged Estimator $\hat{\theta}_{\AVG}(X) = \frac{1}{T}\sum_{i=1}^T\hat{\theta}_i(X)$, (b) Bayes estimator for prior $\frac{1}{T}\sum_{i=1}^T\hat{P}_i$ which we refer to as ``Bayes estimator for avg. prior''. The performance of the randomized estimator $\hat{\theta}_{\RND}$ is almost identical to the performance of $\hat{\theta}_{\AVG}$. So we do not report its performance.
\vspace{-0.15in}
\paragraph{Baselines.} We compare our estimators with various baselines: (a) standard estimator $ \hat{\theta}(X) = X$, (b) James Stein estimator $\hat{\theta}(X) = \left(1-(d-3)/\|X\|_2^2\right)^+X$, where $c^+ = \max(0, c)$, (c) projection estimator (MLE) $\hat{\theta}(X) = \text{min}(\|X\|_2,B)\frac{X}{\|X\|_2}$, (d) Bayes estimator for uniform prior on the boundary; this estimator is known to be minimax for $B \leq 1.15\sqrt{d}$. 
\vspace{-0.15in}
\paragraph{Worst-case Risk.} We compare the performance of various estimators based on their worst-case risk. The worst-case risk of the standard estimator is equal to $d$. The worst case risk of all the other estimators is computed as follows. Since all these estimators are invariant to orthogonal group transformations, the risk $R(\hat{\theta}, \theta)$ only depends on $\|\theta\|_2$ and not its direction. So the worst-case risk can be obtained by solving the following optimization problem: $\max_{b \in [0,B]}R(\hat{\theta}, b\mathbf{e}_1)$, where $\mathbf{e}_1$ is the first standard basis vector. We use grid search to solve this problem, with $0.05\times B$ grid width. We use $10^4$ samples to approximately compute $R(\hat{\theta}, b\mathbf{e}_1)$ for any $\hat\theta, b$. 
\vspace{-0.15in}
\paragraph{Duality Gap.} For estimators derived from our technique, we also present the duality gap, which is defined as $\sup_{\theta \in \Theta} R(\hat{\theta}_{\AVG}, \theta) - \inf_{\hat{\theta} \in \mathcal{D}} R(\hat{\theta}, \frac{1}{T}\sum_{i = 1}^T \hat{P}_i)$. Duality gap quantifies the closeness of $(\hat{\theta}_{\AVG}, \frac{1}{T}\sum_{i = 1}^T \hat{P}_i)$ to a Nash equilibrium. Smaller the gap, closer we are to an equilibrium.
\vspace{-0.15in}
\paragraph{Results.}  Table~\ref{tab:mean_estimation} shows the performance of various estimators for various values of $d, B$ along with the duality gap for our estimator.  For $B=\sqrt{d}$, the estimators obtained using Algorithm~\ref{alg:ftpl_stat_games} have similar performance as the ``Bayes estimator for uniform prior   on boundary'', which is known to be minimax. For $B=2\sqrt{d}, 3\sqrt{d}$ for which the exact minimax estimator is unknown, we achieve  better performance than baselines. Finally, we note that the duality gap numbers presented in the table can be made smaller by running our algorithm for more iterations. When the dimension $d=1$, \citet{donoho1990minimax} derived lower bounds for the minimax risk, for various values of $B$. In Table~\ref{tab:mean_lower_bound}, we compare the worst risk of our estimator with these established lower bounds. It can be seen that the worst case risk of our estimator is close to the lower bounds.
\begin{table*}
  \caption{\small{Worst-case risk of various estimators for finite Gaussian sequence model. The risk is measured with respect to squared error loss. The worst-case risk of the estimators from Algorithm~\ref{alg:ftpl_stat_games} (last two rows) is smaller than the worst-case risk of baselines. The numbers in the brackets for Averaged Estimator represent the duality gap.}}
  \label{tab:mean_estimation}
  \centering
  \resizebox{\textwidth}{!}{
  \begin{tabular}{|c||c|c|c||c|c|c||c|c|c|}
    \hline
    &\multicolumn{9}{c|}{\textbf{Worst-case Risk}}\\
    \cline{2-10}
     & \multicolumn{3}{c||}{$\mathbf{B = \sqrt{d}}$}  & \multicolumn{3}{c|}{$\mathbf{B = 1.5\sqrt{d}}$} & \multicolumn{3}{c|}{$\mathbf{B = 2\sqrt{d}}$}                     \\ \cline{2-10}
    \textbf{Estimator} & $\mathbf{d = 10}$ & $\mathbf{d=20}$ &  $\mathbf{d = 30}$ & $\mathbf{d = 10}$& $\mathbf{d=20}$ &  $\mathbf{d = 30}$ & $\mathbf{d = 10}$& $\mathbf{d=20}$ &  $\mathbf{d = 30}$
    \\ \hline
    Standard&10&20&30&10&20&30&10&20&30\\\hline
    James Stein&6.0954&11.2427&16.073&7.9255&15.0530&21.3410&8.7317&16.6971&24.7261\\\hline
    Projection &8.3076&17.4788&26.7873&10.3308&20.3784&30.2464&10.1656&20.2360&30.3805\\\hline
    \begin{tabular}{@{}c@{}}Bayes estimator \\ for uniform prior \\  on boundary \end{tabular}&\textbf{4.8559}&\textbf{9.9909}&\textbf{14.8690}&11.7509&23.4726&35.2481&24.5361&49.0651&73.3158\\\hline
    \begin{tabular}{@{}c@{}}\textbf{Averaged} \\ \textbf{Estimator}\end{tabular}&\makecell{\textbf{4.7510}\\(0.1821)}&\makecell{\textbf{9.7299}\\(0.2973)}&\makecell{\textbf{14.8790}\\(0.0935)}&\makecell{\textbf{6.7990}\\(0.0733)}&\makecell{\textbf{13.8084}\\(0.2442)}&\makecell{\textbf{20.5704}\\( 0.0087)}&\makecell{\textbf{7.8504}\\(0.3046)}&\makecell{\textbf{15.6686}\\(0.2878)}&\makecell{\textbf{23.8758}\\(0.6820)}\\\hline
    \begin{tabular}{@{}c@{}}\textbf{Bayes estimator} \\ \textbf{for avg. prior} \end{tabular}&\textbf{4.9763}&\textbf{10.1273}&\textbf{14.8128}&\textbf{6.7866}&\textbf{13.8200}&\textbf{20.3043}&\textbf{7.8772}&\textbf{15.6333}&\textbf{23.5954}\\\hline
\end{tabular}
  }
\end{table*}

\vspace{-0.1in}
\begin{table}[tbh]
\caption{Comparison of the worst case risk of $\hat{\theta}_{\AVG}$ with established lower bounds from \cite{donoho1990minimax} for finite Gaussian sequence model with $d=1$.}
\label{tab:mean_lower_bound}
\begin{center}
\begin{tabular}{|c|c|c|c|c|}
\hline
& $\mathbf{B=1}$ & $\mathbf{B=2}$ & $\mathbf{B=3}$ & $\mathbf{B=4}$\\
\hline
\begin{tabular}{@{}c@{}}\textbf{Worst case risk of} \\ \textbf{Averaged Estimator}\end{tabular} & 0.456 & 0.688 & 0.799 & 0.869\\
\hline
\textbf{Lower bound} & 0.449 & 0.644 & 0.750 & 0.814\\
\hline
\end{tabular}\end{center}
\end{table}
\subsubsection{Estimating a few coordinates}
\begin{table*}[t]
  \caption{\small{Worst-case risk of various estimators for bounded normal mean estimation when the risk is evaluated with respect to squared loss on the first $k$ coordinates.}}
  \label{tab:mean_few_coordinates}
  \hspace{-0.1in}
  \resizebox{\columnwidth}{!}{\small{
  \begin{tabular}{|c||c|c|c||c|c|c||c|c|c|}
    \hline
    &\multicolumn{9}{c|}{\textbf{Worst-case Risk}}\\
    \cline{2-10}
     & \multicolumn{3}{c||}{$\mathbf{k = 1, B = \sqrt{d}}$}  & \multicolumn{3}{c||}{$\mathbf{k = 1, B = 2\sqrt{d}}$} & \multicolumn{3}{c|}{$\mathbf{k = 1, B = 3\sqrt{d}}$}\\ \cline{2-10}
    \textbf{Estimator} & $\mathbf{d = 10}$  &  $\mathbf{d = 20}$ & $\mathbf{d = 30}$ & $\mathbf{d = 10}$ & $\mathbf{d = 20}$ & $\mathbf{d = 30}$ & $\mathbf{d = 10}$ & $\mathbf{d = 20}$ & $\mathbf{d = 30}$\\ \hline
    Standard Estimator & 1 & 1 & 1 & 1 & 1 & 1 & 1 & 1 & 1\\\hline
    James-Stein Estimator & 2.3796 & 4.9005 & 7.3489 & 2.5087 & 4.9375 & 7.3760 & 2.4288 & 4.8951 & 7.3847\\\hline
    Projection Estimator & 1.0055 & 1.4430 & 2.0424 & 1.0263 & 1.1051 & 1.5077 & 1.0288 & 1.0310 & 1.0202\\\hline
    Best Linear Estimator & 0.9091 & 0.9524 & 0.9677 & 0.9756 & 0.9877 & 0.9917 & 0.9890 & 0.9945 & 0.9963\\\hline
    \begin{tabular}{@{}c@{}}\textbf{Bayes Estimator for}\\\textbf{average prior}\end{tabular} & \textbf{0.7955} & \textbf{0.8565} & \textbf{0.8996} & \textbf{0.9160} & \textbf{0.9496} & 0.9726 & 0.9611 & 1.0007 & 1.0172\\\hline
\textbf{Averaged Estimator} & \textbf{0.7939} & \textbf{0.8579} & \textbf{0.8955} & \textbf{0.9104} & \textbf{0.9497} & 0.9724 & 0.9640 & 1.0003 & 1.0101\\\hline
    \end{tabular}
    }}
 
    \hspace{-0.1in}
  \resizebox{\columnwidth}{!}{\small{
  \begin{tabular}{|c||c|c|c||c|c|c||c|c|c|}
    \hline
    &\multicolumn{9}{c|}{\textbf{Worst-case Risk}}\\
    \cline{2-10}
     & \multicolumn{3}{c||}{$\mathbf{k = d/2, B = \sqrt{d}}$}  & \multicolumn{3}{c||}{$\mathbf{k = d/2, B = 2\sqrt{d}}$} & \multicolumn{3}{c|}{$\mathbf{k = d/2, B = 3\sqrt{d}}$}\\ \cline{2-10}
    \textbf{Estimator} & $\mathbf{d = 10}$  &  $\mathbf{d = 20}$ & $\mathbf{d = 30}$ & $\mathbf{d = 10}$ & $\mathbf{d = 20}$ & $\mathbf{d = 30}$ & $\mathbf{d = 10}$ & $\mathbf{d = 20}$ & $\mathbf{d = 30}$\\ \hline
    Standard Estimator & 5 & 10 & 15 & 5 & 10 & 15 & 5 & 10 & 15\\\hline
    James-Stein Estimator & 4.1167 & 7.9200 & 11.6892  & 5.0109 & 9.7551 & 14.6568 & 5.0281 & 10.0155 & 14.9390\\\hline
    Projection Estimator & 7.1096 & 15.8166 & 24.8158 & 30.3166 & 66.1806 & 103.0456 & 73.4834 & 156.5076 & 241.1031\\\hline
    \begin{tabular}{@{}c@{}}\textbf{Bayes Estimator for}\\\textbf{average prior}\end{tabular} & \textbf{3.2611} & \textbf{6.5834} & \textbf{9.8189} & \textbf{4.2477} & \textbf{8.6564} & \textbf{13.0606} & \textbf{4.6359} & \textbf{9.2773} & \textbf{13.9678}\\\hline
\textbf{Averaged Estimator} & \textbf{3.2008} & \textbf{6.4763} & \textbf{9.7763} & \textbf{4.2260} & \textbf{8.6421} & \textbf{13.0353} & \textbf{4.6413} & \textbf{9.2760} & \textbf{13.9446} \\\hline
    \end{tabular}
    }}

\end{table*}
In this section we again consider the finite Gaussian sequence model, but with a different risk. We now measure the risk on only the first $k$ coordinates: $M(\theta_1,\theta_2) = \sum_{i=1}^{k} (\theta_1(i) - \theta_2(i))^2$. We present experimental results for $k = 1, d/2$. 
\vspace{-0.15in}
\paragraph{Proposed Technique.} Following Theorem~\ref{thm:sequence_model_invariance_few}, the original min-max objective can be reduced to the simpler problem in Equation~\eqref{eqn:sequence_model_few_simplified}. We use similar optimization oracles as in Algorithms~\ref{alg:mean_estimation_max_oracle},~\ref{alg:mean_estimation_min_oracle}, to solve this problem. The maximization problem is now a 2D optimization problem for which we use grid search. The minimization problem, which requires computation of Bayes estimators, can be solved analytically and has similar expression as the Bayes estimator in Algorithm~\ref{alg:mean_estimation_min_oracle} (see Appendix~\ref{sec:sequence_model_few} for details). We use a 2D grid of $0.05B$ width and length in the maximization oracle. We use the same hyper-parameters as above and run FTPL for $10000$ iterations for $k=1$ and $4000$ iterations for $k=d/2$.  
\vspace{-0.12in}
\paragraph{Worst-case Risk.} We compare our estimators with the same baselines described in the previous section. For the case of $k=1$, we also compare with the best linear estimator, which is known to be approximately  minimax with worst case risk smaller than $1.25$ times the minimax risk~\citep{donoho1994statistical}. Since all these estimators, except the best linear estimator, are invariant to the transformations of group $\mathbb{O}(k)\times\mathbb{O}(d-k)$, the max risk of these estimators can be written as $\max_{b_1^2 + b_2^2 \leq B^2} R(\hat{\theta}, [b_1 \mathbf{e}_{1,k}, b_2 \mathbf{e}_{1,d-k}])$. We solve this problem using $2D$ grid search.  The worst case risk of best linear estimator has a closed form expression.
\vspace{-0.15in}
\paragraph{Results.} Table~\ref{tab:mean_few_coordinates} shows the performance of various estimators for various values of $d, B$. It can be seen that for $B=\sqrt{d}$, our estimators have better performance than other baselines. The performance difference goes down for large $B$, which is as expected.  In order to gain insights about the estimator learned by our algorithm, we plot the contours of $\hat{\theta}_{\AVG}(X)$ in Figure~\ref{fig:mean_estimation_est_contours}, for the $k=1$ case, where the risk is measured on the first coordinate. It can be seen that when $X(1)$ is close to $0$, irrespective of other coordinates, the estimator just outputs $X(1)$ as its estimate of $\theta(1)$. When $X(1)$ if far from $0$, by looking along the corresponding vertical line, the estimator can be seen as outputting a shrinked version of $X(1)$, where the amount of shrinkage increases with the norm of $X(2:d)$. Note that this is unlike James Stein estimator which shrinks vectors with smaller norm more than larger norm vectors. 
\begin{figure}[tbh]
\centering
\includegraphics[width=0.3\textwidth]{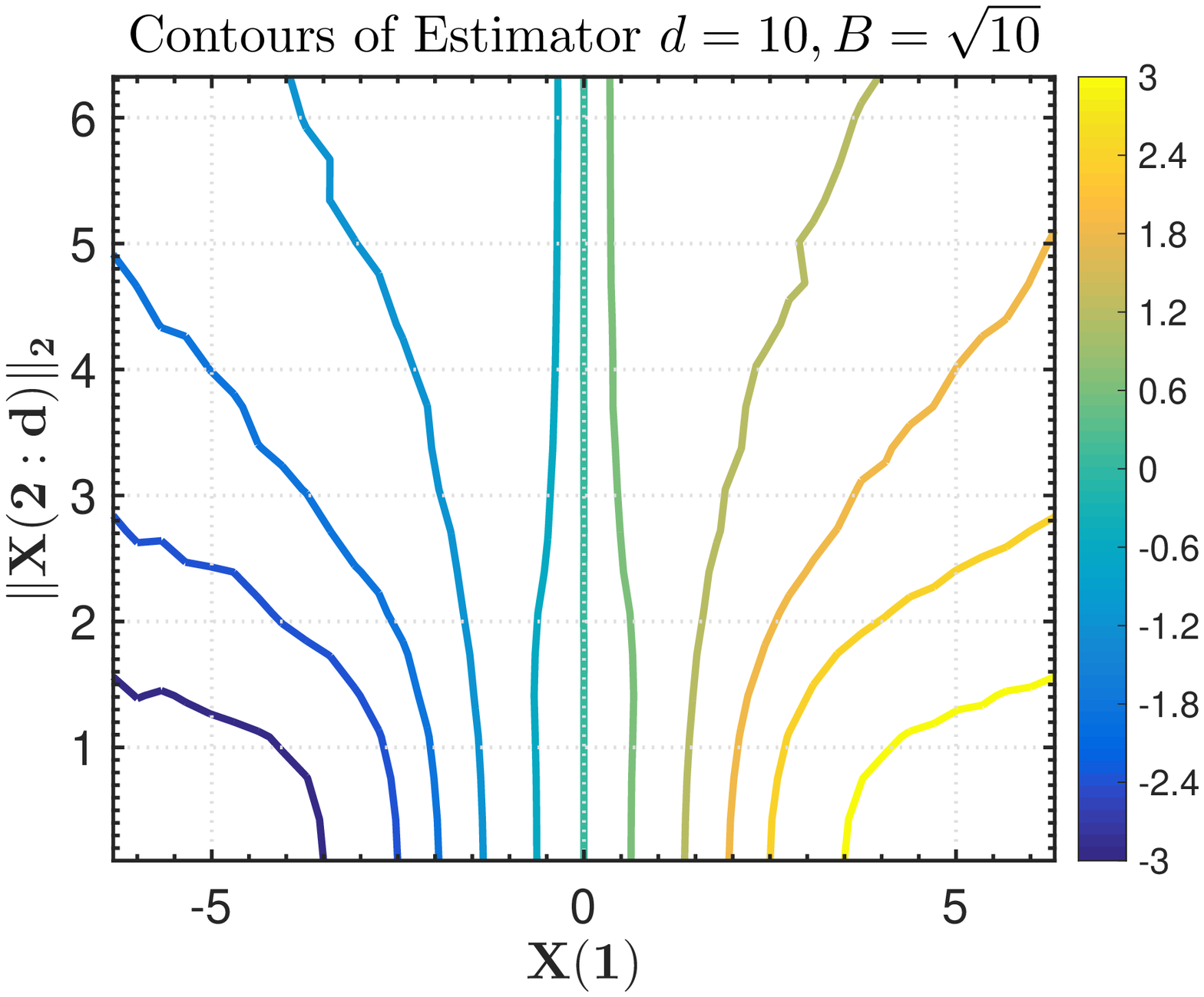}
\includegraphics[width=0.3\textwidth]{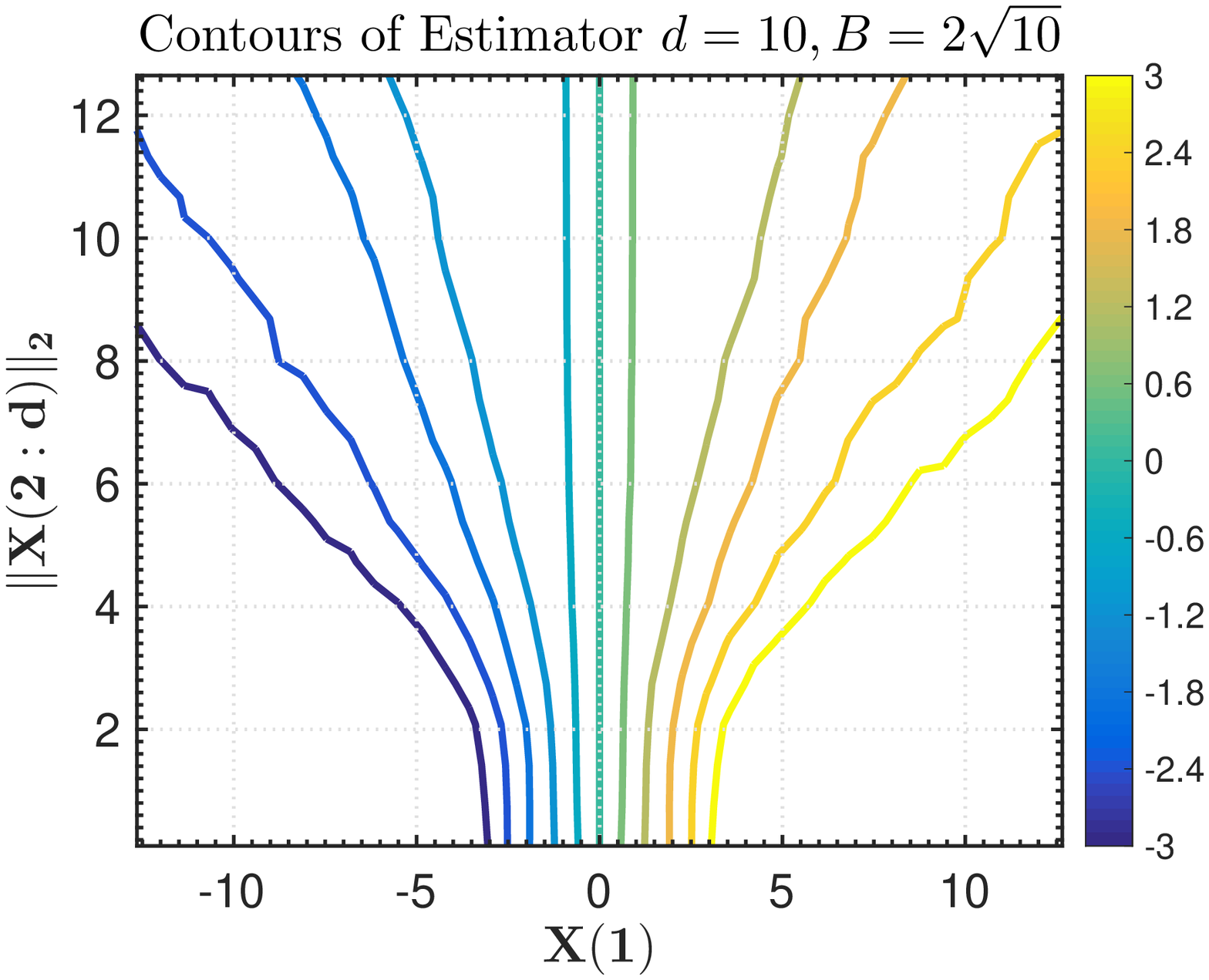}
\includegraphics[width=0.3\textwidth]{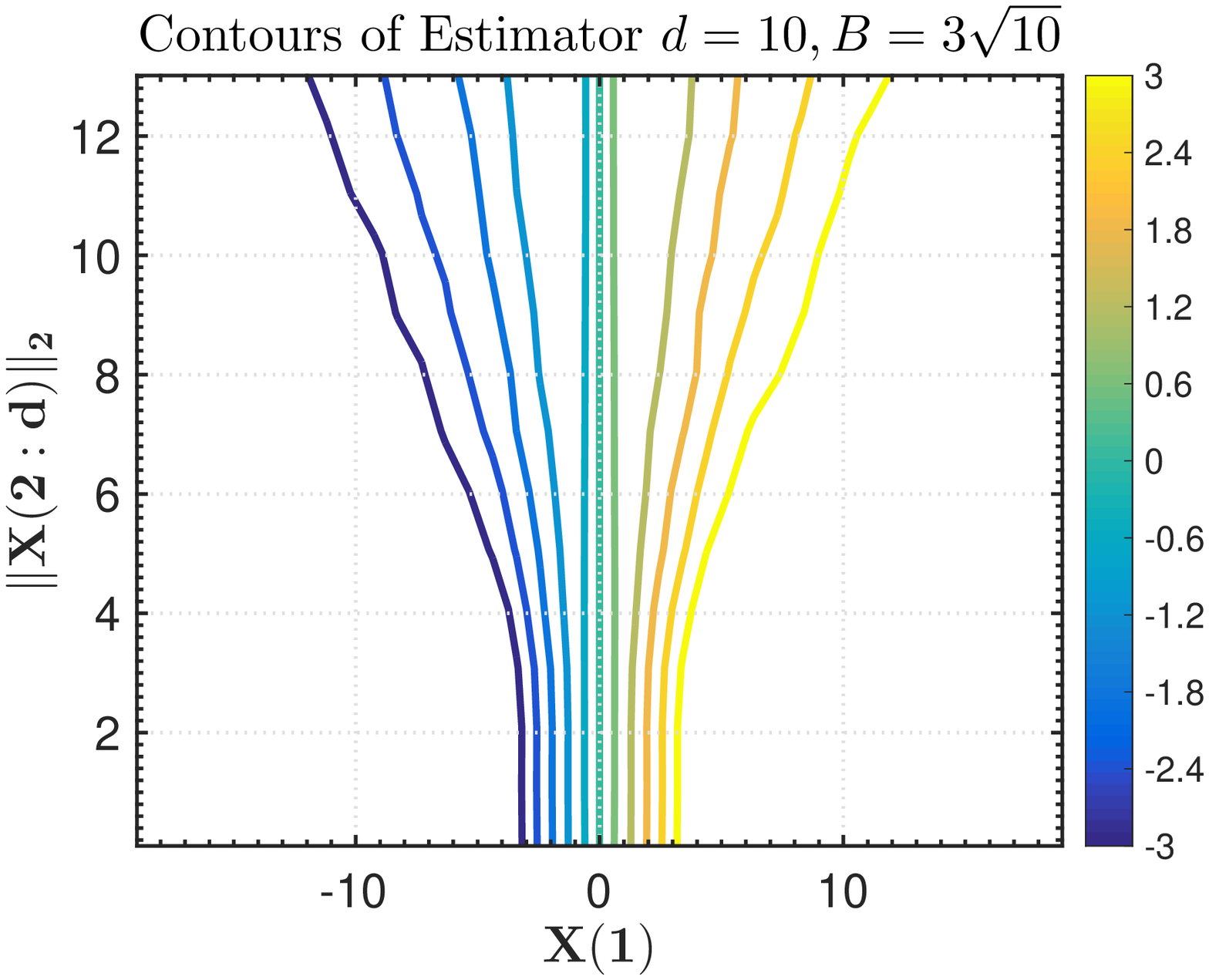}
\caption{\small{Contour plots of the estimator learned using Algorithm~\ref{alg:ftpl_stat_games} when the risk is evaluated on the first coordinate. $x$ axis shows the first coordinate of $X$, which is the input to the estimator. $y$ axis shows the norm of the rest of the coordinates of $X$. The contour bar shows $\hat{\theta}(1)$, the first co-ordinate of the output of the estimator. } }
\label{fig:mean_estimation_est_contours}
\vspace{-0.2in}
\end{figure}
\subsection{Linear Regression}
In this section we present experimental results on linear regression. We use Algorithm~\ref{alg:ftpl_stat_games} with optimization oracles described in Algorithms~\ref{alg:regression_max_oracle},~\ref{alg:regression_min_oracle} to find minimax estimators for this problem.
We use the same hyper-parameter settings as finite Gaussian sequence model, and run Algorithm~\ref{alg:ftpl_stat_games} for $T = 500$ iterations.  We compare the worst-case risk of minimax estimators obtained using our algorithm for various values of $(n,d, B)$, with ordinary least squares (OLS) and ridge regression estimators. Since all the estimators are invariant to the transformations of orthogonal group $\mathbb{O}(d)$, the max risk can be written as $\max_{b \in [0, B]} R(\hat{\theta}, b\mathbf{e}_1)$, which can be efficiently computed using grid search. Table~\ref{tab:linear_regression} presents the results from this experiment. It can be seen that we achieve  better performance than ridge regression for small values of $n/d$, $B$. For large values of $n/d$, $B$,  the performance of our estimator approaches ridge regression. The duality gap numbers presented in the Table suggest that the performance of our estimator can be improved for larger values of $n/d, B$, by choosing better hyper-parameters.

\vspace{-0.05in}
\begin{table*}[tbh]
  \caption{\small{Worst-case risk of various estimators for linear regression. The performance of ridge is obtained by choosing the best  regularization parameter. The numbers in the brackets for Averaged Estimator represent the duality gap.}}
  \label{tab:linear_regression}
  \hspace{-0.1in}
  \resizebox{\columnwidth}{!}{\small{
  \begin{tabular}{|c||c|c|c|c||c|c|c|c|}
    \hline
    &\multicolumn{8}{c|}{\textbf{Worst-case Risk}}\\
    \cline{2-9}
     & \multicolumn{4}{c||}{$\mathbf{n = 1.5\times d, B = 0.5\times \sqrt{d}}$}  & \multicolumn{4}{c|}{$\mathbf{n = 1.5\times d, B = \sqrt{d}}$}                      \\ \cline{2-9}
    \textbf{Estimator} & $\mathbf{d = 5}$  &  $\mathbf{d = 10}$ & $\mathbf{d = 15}$&$\mathbf{d = 20}$& $\mathbf{d = 5}$&   $\mathbf{d = 10}$   & $\mathbf{d = 15}$&$\mathbf{d = 20}$\\ \hline
    OLS&5.0000&2.5000&2.5000&2.2222&5.0000&2.5000&2.5000&2.2222\\\hline
    Ridge regression&0.6637&0.9048&1.1288&1.1926&1.3021&1.4837&1.6912 & 1.6704\\\hline
    \begin{tabular}{@{}c@{}}\textbf{Averaged} \\ \textbf{Estimator}\end{tabular}&\makecell{\textbf{0.5827}\\(0.0003)} &  \makecell{\textbf{0.8275}\\(0.0052)} & \makecell{\textbf{0.9839}\\(0.0187)} & \makecell{\textbf{1.0946}\\(0.0404)} & \makecell{\textbf{1.2030}\\(0.0981)} & \makecell{1.4615\\(0.1145)} & \makecell{1.6178\\(0.1768)} & \makecell{1.6593\\(0.1863)}\\\hline
    \begin{tabular}{@{}c@{}}\textbf{Bayes estimator} \\ \textbf{for avg. prior}\end{tabular} &\textbf{0.5827} & \textbf{0.8275} & \textbf{0.9844} & \textbf{1.0961} & \textbf{1.1750} & 1.4621 & 1.6265 & 1.6674\\\hline
    \end{tabular}
    }}
    
    \hspace{-0.1in}
  \resizebox{\columnwidth}{!}{\small{
  \begin{tabular}{|c||c|c|c|c||c|c|c|c|}
    \hline
    &\multicolumn{8}{c|}{\textbf{Worst-case Risk}}\\
    \cline{2-9}
     & \multicolumn{4}{c||}{$\mathbf{n = 2\times d, B = 0.5\times \sqrt{d}}$}    &   \multicolumn{4}{c|}{$\mathbf{n = 2\times d, B = \sqrt{d}}$}    \\ \cline{2-9}
    \textbf{Estimator} & $\mathbf{d = 5}$ &   $\mathbf{d = 10}$ & $\mathbf{d = 15}$&$\mathbf{d = 20}$ & $\mathbf{d = 5}$ &   $\mathbf{d = 10}$ & $\mathbf{d = 15}$&$\mathbf{d = 20}$\\ \hline
    OLS&1.2500&1.1111&1.0714&1.053&1.2500&1.1111&1.0714&1.053\\\hline
    Ridge regression&0.5225&0.6683&0.7594&0.8080&0.8166&0.8917&0.9305 & 0.9608\\\hline
    \begin{tabular}{@{}c@{}}\textbf{Averaged} \\ \textbf{Estimator}\end{tabular}&\makecell{0.4920\\(0.0038)} &  \makecell{\textbf{0.5991}\\(0.0309)} & \makecell{\textbf{0.6873}\\(0.0485)} & \makecell{\textbf{0.7339}\\(0.0428)} & \makecell{0.8044\\(0.0647)} &  \makecell{0.8615\\(0.0854)} & \makecell{0.9388\\(0.0996)} & \makecell{0.9621\\(0.1224)}\\\hline
    \begin{tabular}{@{}c@{}}\textbf{Bayes estimator} \\ \textbf{for avg. prior}\end{tabular} &0.4894  & \textbf{0.6004} & \textbf{0.6879} & \textbf{0.7320} & 0.8140 &  0.8618 & 0.9375 & 0.9656\\\hline
    \end{tabular}
    }}
\vspace{-0.2in}
\end{table*}

\subsection{Covariance Estimation}
\label{sec:covariance_estimation}

In this section we present experimental results on normal covariance estimation.

\vspace{-0.15in}
\paragraph{Minimization oracle.}
In our experiments we use neural networks, which are universal function approximators, to parameterize functions $f,g$ in Equation~\eqref{eqn:covariance_reduced2}. To be precise, we use two layer neural networks to parameterize each of these functions. Implementing the minimization oracle then boils down to finding the parameters of these networks which minimize $ \Eover{\lambda \sim P_t}{R(\hat{\Sigma}_{f,g}, \text{Diag}(\lambda))}$. In our experiments, we use stochastic gradient descent to learn these parameters.

\vspace{-0.15in}
\paragraph{Baselines.} We compare the performance of the estimators returned by Algorithm~\ref{alg:ftpl_stat_games} for various values of $(n,d,B)$, with empirical covariance $S_n$ and the James Stein estimator~\citep{james1992estimation} which is defined as
$
K_n\Delta_{JS} K_n^T,
$
where $K_n$ is a lower triangular matrix such that $S_n = K_nK_n^T$ and $\Delta_{JS}$ is a diagonal matrix with $i^{th}$ diagonal element equal to $\frac{1}{n+d-2i+1}$.

\vspace{-0.15in}
\paragraph{Results.}
We use worst-case risk to compare the performance of various estimators. To compute the worst-case risk, we again rely on DragonFly. We note that the worst-case computed using this approach may be inaccurate as DragonFly is not guaranteed to return a global optimum. So, we also compare the risk of various estimators at randomly generated $\Sigma$'s (see Appendix~\ref{sec:exps_appendix}). Table~\ref{tab:covariance} presents the results from this experiment. It can be seen that our estimators outperform empirical covariance for almost all the values of $n,d,B$ and outperform James Stein estimator for small values of $n/d$, $B$. For large values of $n/d$, $B$, our estimator has similar performance as JS. In this setting, we believe the performance of our estimators can be improved by running the algorithm with better hyper-parameters.

\vspace{-0.1in}
\begin{table*}[ht]
  \caption{\small{Worst-case risk of various estimators for covariance estimation for various configurations of $(n,d,B)$. The worst-case risks are obtained by taking a max of the worst-case risk estimate from DragonFly and the risks computed at randomly generated $\Sigma$'s.}}
  \label{tab:covariance}
  
  \hspace{-0.1in}
  \resizebox{\columnwidth}{!}{\small{
  \begin{tabular}{|c||c|c||c|c||c|c||c|c|}
    \hline
    &\multicolumn{8}{c|}{\textbf{Worst-case Risk}}\\
    \cline{2-9}
     & \multicolumn{2}{c||}{$\mathbf{n = 1.5\times d, B = 1}$}  & \multicolumn{2}{c||}{$\mathbf{n = 1.5\times d, B = 2}$} & \multicolumn{2}{c||}{$\mathbf{n = 1.5\times d, B = 4}$} & \multicolumn{2}{c|}{$\mathbf{n = 1.5\times d, B = 8}$}\\ \cline{2-9}
    \textbf{Estimator} & $\mathbf{d = 5}$  &  $\mathbf{d = 10}$ & $\mathbf{d = 5}$&$\mathbf{d = 10}$& $\mathbf{d = 5}$&   $\mathbf{d = 10}$   & $\mathbf{d = 5}$&$\mathbf{d = 10}$\\ \hline
    Empirical Covariance& 2.5245 & 5.1095 & 2.5245 & 5.1095 & 2.5245 & 5.1095 & 2.5245 & 5.1095\\\hline
    James-Stein Estimator& 2.1637 & 4.1704 & 2.1637 & 4.1704 & 2.1637 & 4.1704 & 2.1637 & 4.1704\\\hline
    \textbf{Averaged Estimator} & \textbf{1.8686} & \textbf{3.1910} & \textbf{1.9371} & \textbf{3.7019} & 2.0827 & 4.2454 & 2.1416 & \textbf{3.9864} \\\hline
    \end{tabular}
    }}
 
    \hspace{-0.1in}
  \resizebox{\columnwidth}{!}{\small{
  \begin{tabular}{|c||c|c||c|c||c|c||c|c|}
    \hline
    &\multicolumn{8}{c|}{\textbf{Worst-case Risk}}\\
    \cline{2-9}
     & \multicolumn{2}{c||}{$\mathbf{n = 2\times d, B = 1}$}  & \multicolumn{2}{c||}{$\mathbf{n = 2\times d, B = 2}$} & \multicolumn{2}{c||}{$\mathbf{n = 2\times d, B = 4}$} & \multicolumn{2}{c|}{$\mathbf{n = 2\times d, B = 8}$}\\ \cline{2-9}
    \textbf{Estimator} & $\mathbf{d = 5}$  &  $\mathbf{d = 10}$ & $\mathbf{d = 5}$&$\mathbf{d = 10}$& $\mathbf{d = 5}$&   $\mathbf{d = 10}$   & $\mathbf{d = 5}$&$\mathbf{d = 10}$\\ \hline
    Empirical Covariance& 1.8714 & 3.4550 & 1.8714 & 3.4550 & 1.8714 & 3.4550 & 1.8714 & 3.4550\\\hline
    James-Stein Estimator& 1.6686 & 2.9433 & 1.6686 & 2.9433 & 1.6686 & 2.9433 & 1.6686 & 2.9433\\\hline
    \textbf{Averaged Estimator}& \textbf{1.2330} & \textbf{2.1944} & 1.5237 & \textbf{2.6471} & 1.6050 & 3.0834 & 1.6500 & 2.9907\\\hline
    \end{tabular}
    }}
    
    \hspace{-0.1in}
    \resizebox{\columnwidth}{!}{\small{
  \begin{tabular}{|c||c|c||c|c||c|c||c|c|}
    \hline
    &\multicolumn{8}{c|}{\textbf{Worst-case Risk}}\\
    \cline{2-9}
     & \multicolumn{2}{c||}{$\mathbf{n = 3\times d, B = 1}$}  & \multicolumn{2}{c||}{$\mathbf{n = 3\times d, B = 2}$} & \multicolumn{2}{c||}{$\mathbf{n = 3\times d, B = 4}$} & \multicolumn{2}{c|}{$\mathbf{n = 3\times d, B = 8}$}\\ \cline{2-9}
    \textbf{Estimator} & $\mathbf{d = 5}$  &  $\mathbf{d = 10}$ & $\mathbf{d = 5}$&$\mathbf{d = 10}$& $\mathbf{d = 5}$&   $\mathbf{d = 10}$   & $\mathbf{d = 5}$&$\mathbf{d = 10}$\\ \hline
    Empirical Covariance& 1.1425 & 2.1224 & 1.1425 & 2.1224 & 1.1425 & 2.1224 & 1.1425 & 2.1224\\\hline
    James-Stein Estimator& 1.0487 & 1.9068 & 1.0487 & 1.9068 & 1.0487 & 1.9068 & \textbf{1.0487} & 1.9068\\\hline
    \textbf{Averaged Estimator} & \textbf{0.8579} & \textbf{1.3731} & 0.9557 & \textbf{1.7151} & 1.0879 & 1.9174 & 1.2266 & 2.0017\\\hline
    \end{tabular}
    }}

\end{table*}

\vspace{-0.15in}
\subsection{Entropy Estimation}
\label{sec:entropy_estimation}
In this section, we consider the problem of entropy estimation described in Section~\ref{sec:invariance_entropy}. Similar to covariance estimation, we use two layer neural networks to parameterize functions $g,h$ in Equation~\eqref{eqn:entropy_reduced2}. Implementing the minimization oracle then boils down to finding the parameters of these networks which minimize $\Eover{P \sim P_t}{R(\hat{f}_{g,h}, P)}$. We use stochastic gradient descent to solve this optimization problem.
\vspace{-0.15in}
\paragraph{Baselines.} We compare the performance of the estimators returned by Algorithm~\ref{alg:ftpl_stat_games} for various values of $(n,d),$ with the plugin MLE estimator $ -\sum_{i=1}^d\hat{p}_i\log{\hat{p}_i}$, and the minimax rate optimal estimator of~\citet{jiao2015minimax} (JVHW). The plugin estimator is known to be sub-optimal in the high dimensional regime, where $n < d$~\citep{jiao2015minimax}.
\vspace{-0.15in}
\paragraph{Results.} We compare the performance of various estimators based on their worst-case risk computed using DragonFly. Since DragonFly is not guaranteed to compute the worst-case risk, we also compare the estimators based on their risk at randomly generated distributions (see Appendix~\ref{sec:exps_appendix}).  Table~\ref{tab:entropy_estimation} presents the worst-case risk numbers. It can be seen that the plugin MLE estimator has a poor performance compared to JVHW and our estimator. Our estimator has similar performance as JVHW, which is the best known minimax estimator for entropy estimation. We believe the performance of our estimator can be improved with better hyper-parameters.

\vspace{-0.15in}
\begin{table*}[ht]
  \caption{\small{Worst-case risk of various estimators for entropy estimation, for various values of $(n,d)$. The worst-case risks are obtained by taking a max of the worst-case risk estimate from DragonFly and the risks computed at randomly generated distributions.}}
  \label{tab:entropy_estimation}
  \centering
   \resizebox{\textwidth}{!}{
\setlength\tabcolsep{4pt}
  \begin{tabular}{|p{0.75in}||p{0.45in}|p{0.45in}||p{0.45in}|p{0.45in}||p{0.45in}|p{0.45in}|p{0.45in}||p{0.45in}|p{0.45in}|p{0.45in}|p{0.45in}||}
    \hline
    &\multicolumn{10}{c|}{\textbf{Worst-case Risk}}\\
    \cline{2-11}
     & \multicolumn{2}{c||}{$\mathbf{d=10}$}  & \multicolumn{2}{c||}{$\mathbf{d=20}$} & \multicolumn{3}{c||}{$\mathbf{d=40}$} & \multicolumn{3}{c|}{$\mathbf{d=80}$}\\
     \cline{2-11}
     \textbf{Estimator} & $\mathbf{n=10}$ & $\mathbf{n=20}$ & $\mathbf{n=20}$ & $\mathbf{n=40}$ & $\mathbf{n=10}$ & $\mathbf{n=20}$ & $\mathbf{n=40}$ & $\mathbf{n=20}$ & $\mathbf{n=40}$ & $\mathbf{n=80}$\\
     \hline
    
    \textit{Plugin MLE} & 0.2895 & 0.1178 & 0.2512 & 0.0347 & 2.1613 & 0.8909 & 0.2710 & 2.2424 & 0.9142 & 0.2899\\
    \hline
     \textit{JVHW}~\citep{jiao2015minimax} & 0.3222 & 0.0797 & \textbf{0.1322} & 0.0489 & 0.6788 & 0.2699 & 0.0648 & \textbf{0.3751} & \textbf{0.1755} & 0.0974\\
    \hline
    \begin{tabular}{@{}c@{}}\textit{Averaged} \\\textit{Estimator}\end{tabular} & \textbf{0.1382} & 0.0723 & 0.1680 & 0.0439 & \textbf{0.5392} & \textbf{0.2320} & 0.0822 & 0.5084 & 0.2539 & 0.0672\\
    \hline
\end{tabular}
     }
\end{table*}

\vspace{-0.2in}
\section{Discussion}
\label{sec:conclusion}
In this paper, we introduced an algorithmic approach for constructing minimax estimators, where we attempt to directly solve the min-max statistical game associated with the estimation problem. This is unlike the traditional approach in statistics, where an estimator is first proposed and then its minimax optimality is certified by showing its worst-case risk matches the known lower bounds for the minimax risk. Our algorithm relies on techniques from online non-convex learning for solving the statistical game and requires access to certain optimization subroutines. Given access to these subroutines, our algorithm returns a minimax estimator and a least favorable prior. This reduces the problem of designing minimax estimators to a purely computational question of efficient implementation of these subroutines. While implementing these subroutines is computationally expensive in the worst case, we showed that one can rely on the structure of the problem to reduce their computational complexity. 
For the well studied problems of finite Gaussian sequence model and linear regression, we showed that our approach can be used to learn provably minimax estimators in $\text{poly}(d)$ time. For problems where provable implementation of the optimization subroutines is computationally expensive, we demonstrated that our framework can still be used together with heuristics to obtain estimators with better performance than existing (up to constant-factor) minimax estimators. We empirically demonstrated this on classical problems such as covariance and entropy estimation.
We believe our approach could be especially useful in high-dimensional settings where classical estimators are sub-optimal and not much is known about minimax estimators. In such settings, our approach can provide insights into least favourable priors and aid  statisticians in designing minimax estimators.

There are several avenues for future work. The most salient is a more comprehensive understanding of settings where the optimization subroutines can be efficiently implemented. In this work, we have mostly relied on invariance properties of statistical games to implement these subroutines. As described in Section~\ref{sec:games}, there are several other forms of problem structure that can be exploited to implement these subroutines. Exploring these directions can help us construct minimax estimators for several other estimation problems. Another direction for future work would be to modify our algorithm to learn an approximate minimax estimator (\emph{i.e.,} a rate optimal estimator), instead of an exact minimax estimator. There are several reasons why switching to approximate rather than exact minimaxity can be advantageous. First, with respect to our risk tolerance, it may suffice to construct an estimator whose worst-case risk is   constant factors worse than the minimax risk. Second, by switching to approximate minimaxity, we believe one can design algorithms requiring significantly weaker optimization subroutines than those required by our current algorithm. Third, the resulting algorithms might be less tailored or over-fit to the specific statistical model assumptions, so that the resulting algorithms will be much more broadly applicable. Towards the last point, we note that our minimax estimators could always be embedded within a model selection sub-routine, so that for any given data-set, one could select from a suite of minimax estimators using standard model selection criteria. Finally, it would be of interest to modify our algorithm to output a single estimator which is simultaneously minimax for various values of $n$, the number of observations.  
\small{
\bibliographystyle{unsrtnat}
\bibliography{main}{}
}

\clearpage
\onecolumn
\appendix
\section{Measurability of Bayes Estimators}
\label{sec:measurable_bayes_estimator}
For any prior $\Pi$, define $p_{\Pi}(\mathbb{X}^n)$ as 
\begin{align*}
    \int_{\theta} \prod_{i=1}^n p(X_i;\theta) d\Pi(\theta).
\end{align*}
For any prior $\Pi$, define estimator $\hat{\theta}_{\Pi}$ as follows
\begin{equation*}
\hat{\theta}_{\Pi}(\mathbb{X}^n) \in \argmin_{\tilde{\theta} \in \Theta} \Eover{\theta \sim \Pi(\cdot|\mathbb{X}^n)}{M(\tilde{\theta}, \theta)}.
\end{equation*}
Certain regularity conditions need to hold for this to be a Bayes estimator of $\Pi$. $\hat{\theta}_{\Pi}$ defined this way need not be a measurable function of $\mathbb{X}^n$. We now provide sufficient conditions on the statistical problem which guarantee measurability of $\hat{\theta}_{\Pi}$.
These conditions are from~\citet{brown1973measurable}. 
\begin{assumption}
The sample space $\mathcal{X}^n$ and the parameter set $\Theta$ are non-empty Borel sets.
\end{assumption}
\begin{assumption}
\label{asmpt:conditional_prob}
Let $\mathcal{B}(\mathcal{X}^n)$ be the Borel $\sigma$-algebra corresponding to the sample space $\mathcal{X}^n$ and $\mathcal{B}(\Theta)$ be the Borel $\sigma$-algebra corresponding to parameter space $\Theta$. Let $\Pi$ be a prior probability measure on $\Theta$. Suppose, for each $\theta \in \Theta$, $P_{\theta}$  is such that, for each $B\in\mathcal{B}(\mathcal{X}^n)$, the function $\theta \to P_{\theta}(B)$ is measurable w.r.t $\mathcal{B}(\Theta)$. 
\end{assumption}
\begin{assumption}
The loss function $M$ defined on $\Theta \times \Theta$ and taking non-negative real values, is measurable w.r.t $\mathcal{B}(\Theta)\times \mathcal{B}(\Theta)$. Moreover, $M(\cdot, \theta)$ is lower semi-continuous on $\Theta$, for each $\theta \in \Theta$.
\end{assumption}
Under these assumptions, when $\Theta$ is compact, \citet{brown1973measurable} show that there exists a Borel measurable function $\hat{\theta}_{\Pi}$ such that
\begin{equation*}
\hat{\theta}_{\Pi}(\mathbb{X}^n) \in \argmin_{\tilde{\theta} \in \Theta} \Eover{\theta \sim \Pi(\cdot|\mathbb{X}^n)}{M(\tilde{\theta}, \theta)}.
\end{equation*}
Moreover, $\hat{\theta}_{\Pi}$ is the Bayes estimator for $\Pi.$

\section{Minimax Estimators, LFPs and Nash Equilibirium}
\label{sec:minimax_lfp_ne}
\begin{proposition}
Consider the statistical game in Equation~\eqref{eqn:minimax_objective}.
If $(\hat{\theta}^*, P^*)$ is a mixed strategy NE of~\eqref{eqn:minimax_objective}, then the minmax and maxmin values of the linearized game are equal to each other. Moreover, $\hat{\theta}^*$ is a minimax estimator and  $P^*$ is an LFP. Conversely, if $\hat{\theta}^*$ is a minimax estimator, and $P^*$ is an LFP, and the minmax and maxmin values of the linearized game~\eqref{eqn:minimax_objective_mixed} are equal to each other, then $(\hat{\theta}^*, P^*)$ is a mixed strategy NE of~\eqref{eqn:minimax_objective}. Moreover, $\theta^*$ is a Bayes estimator for $P^*$.
\end{proposition}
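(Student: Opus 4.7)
The plan is to reduce both directions to sandwich arguments anchored on the universal inequality $\sup_{P} \inf_{\hat{\theta}} R(\hat{\theta},P) \leq \inf_{\hat{\theta}} \sup_{P} R(\hat{\theta},P)$ from~\eqref{eqn:min-max-ineq}, together with the identity $\sup_{P \in \mathcal{M}_\Theta} R(\hat{\theta},P) = \sup_{\theta \in \Theta} R(\hat{\theta},\theta)$ already noted in the excerpt. Nothing deeper than rearranging these inequalities is needed.

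For the forward direction, suppose $(\hat{\theta}^*, P^*)$ is a mixed strategy NE. By definition,
\[
\sup_{\theta \in \Theta} R(\hat{\theta}^*, \theta) \;\leq\; R(\hat{\theta}^*, P^*) \;\leq\; \inf_{\hat{\theta} \in \mathcal{M}_\mathcal{D}} R(\hat{\theta}, P^*).
\]
I would splice this between the trivial outer bounds to produce the chain
\[
\inf_{\hat{\theta} \in \mathcal{M}_\mathcal{D}} \sup_{P \in \mathcal{M}_\Theta} R(\hat{\theta},P) \;\leq\; \sup_{\theta} R(\hat{\theta}^*, \theta) \;\leq\; R(\hat{\theta}^*, P^*) \;\leq\; \inf_{\hat{\theta}} R(\hat{\theta}, P^*) \;\leq\; \sup_{P \in \mathcal{M}_\Theta} \inf_{\hat{\theta}} R(\hat{\theta}, P).
\]
Since the reverse inequality between the two endpoints always holds, every inequality in the chain is an equality. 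Reading off that common value $R^*$, the second term shows $\hat{\theta}^*$ is minimax and the fourth term shows $P^*$ is an LFP; equality of the endpoints themselves gives equality of the minmax and maxmin values of the linearized game.

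For the converse, assume $\hat{\theta}^*$ is minimax, $P^*$ is an LFP, and the minmax and maxmin values of~\eqref{eqn:minimax_objective_mixed} coincide at $R^*$. Then $\sup_{\theta} R(\hat{\theta}^*, \theta) = R^*$ by minimaxity (using that $\sup_P R(\hat{\theta}^*, P) = \sup_\theta R(\hat{\theta}^*, \theta)$), and $\inf_{\hat{\theta}} R(\hat{\theta}, P^*) = R^*$ by the LFP property. Since
\[
R(\hat{\theta}^*, P^*) \;\leq\; \sup_{P \in \mathcal{M}_\Theta} R(\hat{\theta}^*, P) = R^* \quad \text{and} \quad R(\hat{\theta}^*, P^*) \;\geq\; \inf_{\hat{\theta}} R(\hat{\theta}, P^*) = R^*,
\]
we get $R(\hat{\theta}^*, P^*) = R^*$, so all three quantities in the NE sandwich coincide and the NE conditions hold. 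Moreover, the equation $R(\hat{\theta}^*, P^*) = \inf_{\hat{\theta}} R(\hat{\theta}, P^*)$ says precisely that $\hat{\theta}^*$ attains the Bayes risk under prior $P^*$, hence is a Bayes estimator for $P^*$.

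Honestly there is no real obstacle here: the argument is a bookkeeping exercise in inequalities. The only point that might warrant care is keeping straight that $\hat{\theta}^*$ ranges over randomized estimators $\mathcal{M}_\mathcal{D}$, so that $\inf_{\hat{\theta}\in\mathcal{M}_\mathcal{D}} R(\hat{\theta},P^*) = \inf_{\hat{\theta}\in\mathcal{D}} R(\hat{\theta},P^*)$ (extremality of linear programs over convex hulls, as used earlier in the excerpt), and similarly that $\sup_P R(\hat{\theta},P) = \sup_\theta R(\hat{\theta},\theta)$; both were already justified when the notion of mixed NE was introduced, so they can simply be invoked.
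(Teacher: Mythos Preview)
Your proposal is correct and matches the paper's proof essentially line for line: the same sandwich chain for the forward direction, the same two-sided squeeze on $R(\hat{\theta}^*,P^*)$ for the converse, and the same identification of the Bayes property from $R(\hat{\theta}^*,P^*)=\inf_{\hat{\theta}}R(\hat{\theta},P^*)$. The only cosmetic difference is that the paper also reads off the Bayes-estimator conclusion already in the forward direction (since equality throughout the chain gives it there too), whereas you state it only after the converse; both are valid.
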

\begin{proof}
Suppose $(\hat{\theta}^*, P^*)$ is a mixed strategy NE. Then, from the definition of mixed strategy NE, we have
\[
\sup_{P\in\mathcal{M}_{\Theta}}R(\hat{\theta}^*, P)\leq R(\hat{\theta}^*, P^*) \leq \inf_{\hat\theta \in \mathcal{M}_{\mathcal{D}}} R(\hat\theta, P^*).
\]
This further implies
\[
\inf_{\hat\theta \in \mathcal{M}_{\mathcal{D}}} \sup_{P\in\mathcal{M}_{\Theta}}  R(\hat\theta, P) \stackrel{(a)}{\leq} \sup_{P\in\mathcal{M}_{\Theta}}R(\hat{\theta}^*, P)\leq R(\hat{\theta}^*, P^*) \stackrel{(b)}{\leq} \inf_{\hat\theta \in \mathcal{M}_{\mathcal{D}}} R(\hat\theta, P^*) \stackrel{(c)}{\leq} \sup_{P\in\mathcal{M}_{\Theta}} \inf_{\hat\theta \in \mathcal{M}_{\mathcal{D}}} R(\hat\theta, P).
\]
Since $\inf_{\hat\theta \in \mathcal{M}_{\mathcal{D}}} \sup_{P\in\mathcal{M}_{\Theta}}  R(\hat\theta, P) \geq \sup_{P\in\mathcal{M}_{\Theta}} \inf_{\hat\theta \in \mathcal{M}_{\mathcal{D}}} R(\hat\theta, P)$, the above set of inequalities all hold with an equality and  imply that the minmax and maxmin values of the linearized game are equal to each other. Moreover, from $(a)$, we have $\sup_{P\in\mathcal{M}_{\Theta}}R(\hat{\theta}^*, P) = \inf_{\hat\theta \in \mathcal{M}_{\mathcal{D}}} \sup_{P\in\mathcal{M}_{\Theta}}  R(\hat\theta, P) $. This implies $\hat\theta^*$ is a minimax estimator. From $(c)$, we have $\inf_{\hat\theta \in \mathcal{M}_{\mathcal{D}}} R(\hat\theta, P^*) = \sup_{P\in\mathcal{M}_{\Theta}} \inf_{\hat\theta \in \mathcal{M}_{\mathcal{D}}} R(\hat\theta, P).$ This implies $P^*$ is an LFP. Finally, from $(b)$, we have $R(\hat{\theta}^*, P^*) = \inf_{\hat\theta \in \mathcal{M}_{\mathcal{D}}} R(\hat\theta, P^*)$. This implies $\hat\theta^*$ is a Bayes estimator for $P^*$.

We now prove the converse. Since $\hat\theta^*$ is a minimax estimator and $P^*$ is an LFP, we have
\[
\sup_{P\in\mathcal{M}_{\Theta}}R(\hat{\theta}^*, P)= \inf_{\hat\theta \in \mathcal{M}_{\mathcal{D}}} \sup_{P\in\mathcal{M}_{\Theta}}  R(\hat\theta, P),\quad \inf_{\hat\theta \in \mathcal{M}_{\mathcal{D}}} R(\hat\theta, P^*) = \sup_{P\in\mathcal{M}_{\Theta}} \inf_{\hat\theta \in \mathcal{M}_{\mathcal{D}}} R(\hat\theta, P).
\]
Moreover, since minmax and maxmin values of the linearized game are equal to each other, all the above 4 quantities are equal to each other. Since $R(\hat\theta^*,P^*) \leq \sup_{P\in\mathcal{M}_{\Theta}}R(\hat{\theta}^*, P)$ and $R(\hat\theta^*,P^*) \geq \inf_{\hat\theta \in \mathcal{M}_{\mathcal{D}}} R(\hat\theta, P^*)$, we have
\[
\sup_{P\in\mathcal{M}_{\Theta}}R(\hat{\theta}^*, P) = R(\hat\theta^*,P^*) = \inf_{\hat\theta \in \mathcal{M}_{\mathcal{D}}} R(\hat\theta, P^*).
\]
This shows that $(\hat\theta^*,P^*)$ is a mixed strategy NE of the linear game in Equation~\eqref{eqn:minimax_objective_mixed}.
\end{proof}
\section{Follow the Perturbed Leader (FTPL)}
\label{sec:ftpl}
We now describe the FTPL algorithm in more detail. We first introduce the notion of an offline optimization oracle, which takes as input a function $f:\mathcal{X}\rightarrow \mathbb{R}$ and a perturbation vector $\sigma$ and returns an approximate minimizer of $f(\x)-\iprod{\sigma}{\x}$.  An optimization oracle is called ``$(\alpha, \beta)$-approximate optimization oracle'' if it returns $\x^*\in \mathcal{X}$ such that  
\[
f(\x^*) - \iprod{\sigma}{\x^*} \leq \inf_{\x \in \mathcal{X}} f(\x) - \iprod{\sigma}{\x} + \alpha + \beta \|\sigma\|_1.
\]
Denote such an oracle with $\oracle{\alpha, \beta}{f,\sigma}$. Given access to such an oracle, the FTPL algorithm  is given by the following prediction rule (see Algorithm~\ref{alg:ftpl}) $$\x_t = \oracle{\alpha, \beta}{\sum_{i = 1}^{t-1}f_i,\sigma},$$
where $\sigma \in \mathbb{R}^d$ is a random perturbation such that $\{\sigma_{j}\}_{j = 1}^d \stackrel{i.i.d}{\sim} \text{Exp}(\eta)$ and $\text{Exp}(\eta)$ is the exponential distribution with parameter $\eta$.
We now state the following result from \citet{ftpl_nonconvex} which provides an upper bound on the expected regret of Algorithm~\ref{alg:ftpl}.
\begin{theorem}[Regret Bound]
\label{thm:ftpl}
Let $D$ be the $\ell_{\infty}$ diameter of $\mathcal{X}$. Suppose the losses encountered by the learner are $L$-Lipschitz w.r.t $\ell_{1}$ norm. For any fixed $\eta$, the predictions of Algorithm~\ref{alg:ftpl} satisfy the following regret bound 
\[
\Eover{}{\frac{1}{T}\sum_{t = 1}^T f_t(\x_t) - \frac{1}{T}\inf_{\x \in \mathcal{X}}\sum_{t=1}^Tf_t(\x)} \leq O\left(\eta d^2D L^2  + \frac{d(\beta T + D)}{\eta T} + \alpha + \beta d L\right).
\]
\end{theorem}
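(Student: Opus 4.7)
The plan is to follow the template of Suggala--Netrapalli's FTPL analysis for non-convex losses, modified to absorb the $(\alpha,\beta)$-approximation error of the offline oracle. Throughout, write $h_t(\x) \defeq \sum_{i=1}^{t-1} f_i(\x) - \iprod{\sigma}{\x}$, and let $\x_t^\star \in \argmin_\x h_t(\x)$ denote the \emph{exact} FTPL iterate, while $\x_t = \oracle{\alpha,\beta}{\sum_{i<t} f_i,\sigma}$ is the actual (approximate) iterate.

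First I would establish a Be-The-Leader (BTL) inequality: by a standard induction on $T$, any sequence $\{\tilde\x_{t+1}\}$ of exact minimizers of $h_{t+1}$ satisfies $\sum_{t=1}^T f_t(\tilde\x_{t+1}) - \iprod{\sigma}{\tilde\x_{T+1}} \leq \inf_\x \sum_{t=1}^T f_t(\x) - \iprod{\sigma}{\x}$. Rearranging,
\[
\sum_{t=1}^T f_t(\x_t) - \inf_\x \sum_{t=1}^T f_t(\x) \;\leq\; \underbrace{\sum_{t=1}^T\bigl[f_t(\x_t) - f_t(\tilde\x_{t+1})\bigr]}_{\text{stability}} \;+\; \underbrace{\iprod{\sigma}{\,\tilde\x_{T+1}-\x^\star\,}}_{\text{perturbation}},
\]
where $\x^\star$ is any comparator. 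Taking expectation, the perturbation term contributes at most $D\cdot \mathbb{E}\|\sigma\|_1 = dD/\eta$, giving the $dD/(\eta T)$ piece of the bound after dividing by $T$.

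The heart of the argument is the \emph{stability} term. Here I would invoke the key lemma from \citet{ftpl_nonconvex}: for exponential perturbations, the expected one-step drift of the exact FTPL iterate satisfies $\mathbb{E}_\sigma[f_t(\x_t^\star) - f_t(\x_{t+1}^\star)] \leq O(\eta d^2 D L^2)$. The proof of this lemma couples $\sigma$ with a shifted perturbation $\sigma + t\,L\,e_j$ (summing over coordinates $j$), exploits the memoryless property of the exponential distribution, and uses the $L$-Lipschitz assumption to convert the $\ell_\infty$-diameter $D$ into the $\eta d^2 D L^2$ bound per round. Summing over $t$ and dividing by $T$ yields the $\eta d^2 D L^2$ term.

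Finally I would handle the approximation error. Since $\x_t$ is only an $(\alpha,\beta)$-approximate minimizer of $h_t$, the shift from $\x_t^\star$ to $\x_t$ costs at most $\alpha + \beta\|\sigma\|_1$ inside the $h_t$ objective. Using $L$-Lipschitzness to pass this from the $h_t$-gap to the $f_t$-gap, and noting that each round of BTL for the approximate sequence picks up the same $\alpha + \beta\|\sigma\|_1$ slack, the total approximation contribution to the per-round regret is $O(\alpha + \beta L \cdot \mathbb{E}\|\sigma\|_1 / T \cdot T) = O(\alpha + \beta d L) $ together with an additional $\beta d/\eta$ piece coming through the perturbation term, which merges into the $d(\beta T+D)/(\eta T)$ expression. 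Adding the three contributions recovers the stated bound.

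The main obstacle is the stability lemma in Step 3: standard FTRL/mirror-descent stability proofs require convex losses, but here $f_t$ is arbitrary. The resolution is the coupling-based argument of \citet{ftpl_nonconvex}, which does not need convexity and instead exploits only the memorylessness of the exponential distribution plus Lipschitzness of the losses; verifying that this coupling remains valid after substituting the approximate oracle (so that the oracle's $\alpha,\beta$ slack does not interact badly with the coupling) is the delicate bookkeeping step.
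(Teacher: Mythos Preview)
The paper does not give its own proof of this theorem: it is simply quoted from \citet{ftpl_nonconvex} (Suggala--Netrapalli), so there is no ``paper's proof'' to compare against beyond that citation. Your sketch is exactly the argument of that reference---BTL decomposition, the exponential-coupling stability lemma, plus bookkeeping for the $(\alpha,\beta)$ oracle slack---so you are aligned with the source.

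One caveat on your bookkeeping: the step ``using $L$-Lipschitzness to pass this from the $h_t$-gap to the $f_t$-gap'' is not how the approximation error actually enters, and your arithmetic $\beta L\cdot\mathbb{E}\|\sigma\|_1/T\cdot T = \beta d L$ drops an $\eta$. In the actual analysis, the $\alpha+\beta\|\sigma\|_1$ slack is absorbed directly into the BTL induction (contributing $\alpha + \beta d/\eta$ per round after taking expectation), while the separate $\beta d L$ term arises inside the coupling stability lemma: shifting $\sigma_j$ by $L$ changes $\|\sigma\|_1$ by $L$, which perturbs the oracle's guarantee by $\beta L$ per coordinate, summed over $d$ coordinates. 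You correctly flagged this interaction as ``the delicate bookkeeping step,'' but the mechanism you describe for it is off.
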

\begin{algorithm}[tbh]
\caption{Follow the Perturbed Leader (FTPL)}
\label{alg:ftpl}
\begin{algorithmic}[1]
  \small
  \State \textbf{Input:}   Parameter of exponential distribution $\eta$, approximate optimization subroutine $\mathcal{O}_{\alpha, \beta}$
  \For{$t = 1 \dots T$}
  \State Generate random vector $\sigma$ such that $\{\sigma_{j}\}_{j = 1}^d \stackrel{i.i.d}{\sim} \text{Exp}(\eta)$
  \State Predict $\x_t$ as
  \[
  \x_t = \oracle{\alpha, \beta}{\sum_{i = 1}^{t-1}f_i,\sigma}.\]
  \State Observe loss function $f_t$
  \EndFor
\end{algorithmic}
\end{algorithm}
\section{Minimax Estimation via Online Learning}
\subsection{Proof of Proposition~\ref{prop:no_regret_to_games}}
We have the following bounds on the regret of the minimization and maximization players
\begin{align*}
    \sum_{t=1}^TR(\hat{\theta}_t, P_t)-\inf_{\hat{\theta} \in \mathcal{D}} \sum_{t=1}^TR(\hat{\theta}, P_t) &\leq \epsilon_1(T),\\
    \sup_{\theta\in\Theta}\sum_{t=1}^TR(\hat{\theta}_t, \theta) -\sum_{t=1}^TR(\hat{\theta}_t, P_t)&\leq \epsilon_2(T).
\end{align*}
Now consider the following
\begin{equation}
\label{eqn:ftpl_games_approx_eql}
\begin{array}{ll}
    &\displaystyle\inf_{\hat{\theta} \in \mathcal{D}} \frac{1}{T}\sum_{t=1}^TR(\hat{\theta}, P_t) \vspace{0.1in}\\
    &\displaystyle \quad \geq \frac{1}{T}\sum_{t=1}^TR(\hat{\theta}_t, P_t) - \frac{\epsilon_1(T)}{T} \vspace{0.1in}\\
   & \displaystyle \quad \geq \sup_{\theta\in\Theta}\frac{1}{T}\sum_{t=1}^TR(\hat{\theta}_t, \theta) - \frac{\epsilon_1(T)+\epsilon_2(T)}{T},
   \end{array}
\end{equation}
where the first and the second inequalities follow from the regret bounds of the minimization and maximization players. We further bound the LHS and RHS of the above inequality as follows
\[
\inf_{\hat{\theta} \in \mathcal{D}} \frac{1}{T}\sum_{t=1}^TR(\hat{\theta}, P_t) \leq \frac{1}{T^2}\sum_{t=1}^T\sum_{t'=1}^T R(\hat{\theta}_{t'}, P_t) = R(\hat{\theta}_{\RND}, P_{\AVG}),
\]
\[
\sup_{\theta\in\Theta}\frac{1}{T} \sum_{t=1}^TR(\hat{\theta}_t, \theta) \geq \frac{1}{T^2}\sum_{t=1}^T\sum_{t'=1}^T R(\hat{\theta}_{t'}, P_t) = R(\hat{\theta}_{\RND}, P_{\AVG}).
\]
Combining the previous two sets of inequalities gives us
\[
R(\hat{\theta}_{\RND}, P_{\AVG}) \geq \sup_{\theta\in \Theta}R(\hat{\theta}_{\RND}, \theta) - \frac{\epsilon_1(T)+\epsilon_2(T)}{T},
\]
\[
R(\hat{\theta}_{\RND}, P_{\AVG}) \leq \inf_{\hat{\theta} \in \mathcal{D}} R(\hat{\theta}, P_{\AVG}) + \frac{\epsilon_1(T)+\epsilon_2(T)}{T}.
\]
\subsection{Proof of Theorem~\ref{thm:ftpl_games}}
To prove the Theorem we first bound the regret of each player and then rely on Proposition~\ref{prop:no_regret_to_games} to show that the iterates converge to a NE.
Since the maximization player is responding using FTPL to the actions of minimization player, we rely on Theorem~\ref{thm:ftpl} to bound her regret. First note that the sequence of reward functions seen by the maximization player $R(\hat{\theta}_i,\cdot)$ are $L$-Lipschitz. Moreover, the domain $\Theta$ has $\ell_{\infty}$ diameter of $D$. So applying Theorem~\ref{thm:ftpl} gives us the following regret bound
\[
\mathbb{E}_{\sigma}\left[\sup_{\theta\in\Theta}\frac{1}{T}\sum_{t=1}^TR(\hat{\theta}_t, \theta) - \frac{1}{T}\sum_{t=1}^TR(\hat{\theta}_t, \theta_t(\sigma))\right] \leq O\left(\eta d^2D L^2  + \frac{d(\beta T + D)}{\eta T} + \alpha + \beta d L\right).
\]
Taking the expectation inside, we get the following
\begin{equation}
\label{eqn:ftpl_maxp}
\sup_{\theta\in\Theta}\frac{1}{T}\sum_{t=1}^TR(\hat{\theta}_t, \theta) - \frac{1}{T}\sum_{t=1}^TR(\hat{\theta}_t, P_t) \leq O\left(\eta d^2D L^2  + \frac{d(\beta T + D)}{\eta T} + \alpha + \beta d L\right).
\end{equation}
Since the minimization player is using BR, her regret is upper bounded by $0$. Plugging in these two regret bounds in Proposition~\ref{prop:no_regret_to_games} gives us the required result.
\subsection{Proof of Corollary~\ref{cor:minimax_theorem}}
Note that this corollary is only concerned about existence of minimax estimators and LFPs, and showing that minmax and maxmin values of Equation~\eqref{eqn:minimax_objective_mixed} are equal to each other. So we can ignore the approximation errors introduced by the oracles and set $\alpha = \beta = \alpha'=0$ in the results of Theorem~\ref{thm:ftpl_games} (that is, we assume access to exact optimization oracles, as we are only concerned with existence of NE and not about computational tractability of the algorithm). 
\paragraph{Minimax Theorem.} To prove the first part of the corollary, we set $\eta = \sqrt{\frac{1}{dL^2T}}$ in Theorem~\ref{thm:ftpl_games} and let $ T \to \infty$. We get
\begin{align*}
    & \sup_{\theta \in \Theta} R(\hat{\theta}_{\RND}, \theta) =  \inf_{\hat{\theta} \in \mathcal{D}} R(\hat{\theta}, P_{\AVG})\\
    &\implies \sup_{P \in \mathcal{M}_{\Theta}} R(\hat{\theta}_{\RND}, P) =  \inf_{\hat{\theta} \in \mathcal{M}_\mathcal{D}} R(\hat{\theta}, P_{\AVG})\\
    &\implies \inf_{\hat{\theta} \in \mathcal{M}_\mathcal{D}} \sup_{P \in \mathcal{M}_{\Theta}}  R(\hat{\theta}, P) \leq \sup_{P \in \mathcal{M}_{\Theta}} \inf_{\hat{\theta} \in \mathcal{M}_\mathcal{D}} R(\hat{\theta}, P).
\end{align*}
Since minmax value of any game is always greater than or equal to maxmin value of the game, we get 
\[
\inf_{\hat{\theta} \in \mathcal{M}_\mathcal{D}} \sup_{P \in \mathcal{M}_{\Theta}}  R(\hat{\theta}, P) = \sup_{P \in \mathcal{M}_{\Theta}} \inf_{\hat{\theta} \in \mathcal{M}_\mathcal{D}} R(\hat{\theta}, P) R^*.
\]
\paragraph{Existence of LFP.} We now show that the statistical game has an LFP. To prove this result, we make use of the following result on the compactness of probability spaces. If $\Theta$ is a compact space, then $\mathcal{M}_{\Theta}$ is sequentially compact; that is, any sequence $P_n\in \mathcal{M}_{\Theta}$ has a convergent subsequence converging to a point in $\mathcal{M}_{\Theta}$ (the notion of convergence here is weak convergence). Let $P_{\AVG,t} = \frac{1}{t}\sum_{i=1}^t P_i$ be the mixture distribution obtained from the first $t$ iterates of Algorithm~\ref{alg:ftpl_stat_games} when run with $\eta = \sqrt{\frac{1}{dL^2T}}$ and exact optimization oracles. Consider the sequence of probability measures $\{P_{\AVG,t}\}_{t=1}^{\infty}$. Since the parameter space $\Theta$ is compact, we know that there exists a converging subsequence $\{P_{\AVG,t_i}\}_{i=1}^{\infty}$. Let $P^*\in\mathcal{M}_{\Theta}$ be the limit of this sequence. In the rest of the proof, we show that $P^*$ is an LFP; that is, $\inf_{\hat{\theta} \in \mathcal{D}}R(\hat{\theta}, P^*) = R^*.$ Since $R(\hat{\theta}, \theta)$ is bounded, and Lipschitz in its second argument, we have 
\begin{align}
\label{eqn:lfp_existence_eq0}
\forall \hat{\theta}\in \mathcal{M}_{\mathcal{D}}\quad \lim_{i \to \infty}R(\hat{\theta}, P_{\AVG,t_i}) = R(\hat{\theta}, P^*).
\end{align}
This follows from the equivalent formulations of weak convergence of measures.
We now make use of the following result from Corollary~\ref{cor:ftpl_minimax_estimators} (which we prove later in Appendix~\ref{sec:proof_cor_ftpl_minimax_estimators})
\[
\inf_{\hat{\theta} \in \mathcal{D}} R(\hat{\theta}, P_{\AVG,t}) \geq  R^* - O(t^{-\frac{1}{2}}).
\]
Combining this with the fact that $\sup_{P \in \mathcal{M}_{\Theta}}\inf_{\hat{\theta} \in \mathcal{D}} R(\hat{\theta}, P) = R^*$, we get
\begin{align}
\label{eqn:lfp_existence_eq1}
\lim_{i \to \infty} \inf_{\hat{\theta} \in \mathcal{D}} R(\hat{\theta}, P_{\AVG,t_i}) = R^*.
\end{align}
Equations~\eqref{eqn:lfp_existence_eq0},~\eqref{eqn:lfp_existence_eq1} show that $\inf_{\hat{\theta} \in \mathcal{D}} R(\hat{\theta}, P_{\AVG,t_i})$, $R(\tilde{\theta}, P_{\AVG,t_i})$ are converging sequences as $i \to \infty$.
Since $\inf_{\hat{\theta} \in \mathcal{D}} R(\hat{\theta}, P_{\AVG,t_i}) \leq R(\tilde{\theta}, P_{\AVG,t_i})$ for all $i, \tilde{\theta} \in \mathcal{D}$, we have
\begin{align*}
&\lim_{i \to \infty} \inf_{\hat{\theta} \in \mathcal{D}} R(\hat{\theta}, P_{\AVG,t_i}) \leq  \lim_{i \to \infty} R(\tilde{\theta}, P_{\AVG,t_i}), \quad \forall \tilde{\theta} \in \mathcal{D}.
\end{align*}
From Equations~\eqref{eqn:lfp_existence_eq0},~\eqref{eqn:lfp_existence_eq1}, we then have
\begin{align*}
& R^* \leq  R(\tilde{\theta}, P^*), \quad \forall \tilde{\theta} \in \mathcal{D}\\
&\implies R^* \leq  \inf_{\hat{\theta} \in \mathcal{D}}R(\hat{\theta}, P^*),
\end{align*}
Combining this with the fact that $\sup_{P \in \mathcal{M}_{\Theta}}\inf_{\hat{\theta} \in \mathcal{D}} R(\hat{\theta}, P) = R^*$, we get
\[
\inf_{\hat{\theta} \in \mathcal{D}}R(\hat{\theta}, P^*) = R^*.
\]
This shows that $P^*$ is an LFP.
\paragraph{Existence of Minimax Estimator.}  To show the existence of a minimax estimator, we make use of the following result from~\citet{wald1949statistical}, which is concerned about the ``compactness'' of the space of estimators $\mathcal{M}_{\mathcal{D}}$. 
\begin{proposition}
Suppose $\Theta$ is compact w.r.t $\Delta_M(\theta_1,\theta_2) = \sup_{\theta \in \Theta}|M(\theta_1, \theta) - M(\theta_2,\theta)|.$ Moreover, suppose the risk $R$ is bounded. Then for any sequence of $\{\hat\theta_i\}_{i=1}^{\infty}$ of estimators there exists a subsequence $\{\hat\theta_{i_j}\}_{j=1}^{\infty}$ such that $\lim_{j\to\infty}\hat\theta_{i_j} = \hat\theta_0$ and for any $\theta \in \Theta$
\[
\lim\inf_{i\to\infty} R(\hat\theta_{i_j}, \theta) \geq R(\hat\theta_{0}, \theta).
\]
\end{proposition}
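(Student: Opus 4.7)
The plan is to establish this compactness/lower-semicontinuity property via a diagonal extraction combined with a measure-theoretic construction of the limiting estimator, followed by a Fatou-type inequality.

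First, I would use the compactness of $\Theta$ under $\Delta_M$ to fix a countable $\Delta_M$-dense subset $\{\theta_k\}_{k \geq 1}$ of $\Theta$. Since $R$ is bounded by some constant $R_{\max}$, for every fixed $k$ the sequence $\{R(\hat\theta_i,\theta_k)\}_i$ lies in the compact interval $[0, R_{\max}]$, so by a standard Cantor diagonal argument I can pass to a subsequence $\{\hat\theta_{i_j}\}_j$ along which $R(\hat\theta_{i_j},\theta_k)$ converges for every $k$.

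The main obstacle is the construction of the limit estimator $\hat\theta_0$ itself, together with a precise notion of the convergence $\hat\theta_{i_j} \to \hat\theta_0$. I would identify each estimator with a Markov kernel $K_{\hat\theta}(\cdot \mid x)$ from $\mathcal{X}^n$ to $\Theta$ (which covers randomized estimators and includes deterministic ones as degenerate kernels). Since $\Theta$ is compact, the space of probability measures on $\Theta$ is compact and metrizable under weak convergence, and a fiberwise Prohorov-plus-measurable-selection argument (e.g., via Castaing's representation theorem) yields, after a further subsequence, a limiting measurable kernel $K_0$ which defines $\hat\theta_0$ and realizes the appropriate mode of convergence.

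Next, I would extend convergence from $\{\theta_k\}$ to all of $\Theta$. The definition $\Delta_M(\theta_1,\theta_2) = \sup_\theta |M(\theta_1,\theta) - M(\theta_2,\theta)|$ gives a uniform-in-estimator bound of the form $|R(\hat\theta,\theta_1) - R(\hat\theta,\theta_2)| \leq \Delta_M(\theta_1,\theta_2) + R_{\max}\,\mathrm{TV}(P_{\theta_1}, P_{\theta_2})$, so the family $\{R(\hat\theta,\cdot)\}_{\hat\theta}$ is equicontinuous under an appropriate joint metric on $\Theta$, and density extends pointwise convergence of risks to all of $\Theta$. Finally, the lower-semicontinuity inequality follows by writing $R(\hat\theta_{i_j},\theta) = \int M(\hat\theta_{i_j}(x),\theta)\,dP_\theta(x)$, using weak convergence of $K_{\hat\theta_{i_j}}(\cdot \mid x)$ to $K_0(\cdot \mid x)$ together with lower-semicontinuity of $M$ in its first argument, and applying Fatou's lemma.

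The hardest part will be Step 2: rigorously producing $\hat\theta_0$ as a genuine measurable object and making the notion of $\lim_j \hat\theta_{i_j} = \hat\theta_0$ precise. This is the delicate measure-theoretic core of Wald's original compactness argument for statistical decision functions and requires measurable-selection machinery; for the specific statistical models treated in the main body of this paper (finite Gaussian sequence model, linear regression) the argument admits substantial simplification because the invariance reductions of Section~\ref{sec:invariance} collapse the effective parameter space to a compact subset of $\mathbb{R}$ or $\mathbb{R}^2$, reducing the question to compactness in a finite-dimensional function space where the measurable-selection step becomes routine.
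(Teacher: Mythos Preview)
The paper does not actually prove this proposition: it is quoted verbatim as a result from \citet{wald1949statistical} and used as a black box in the existence argument for a minimax estimator. So there is no ``paper's proof'' to compare your proposal against; you are sketching a proof where the paper simply invokes a citation.

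That said, your sketch has a concrete misstep in how it uses the metric $\Delta_M$. By definition $\Delta_M(\theta_1,\theta_2)=\sup_{\theta}|M(\theta_1,\theta)-M(\theta_2,\theta)|$ measures variation of $M$ in its \emph{first} argument (the estimate), not the second (the true parameter). Your equicontinuity bound $|R(\hat\theta,\theta_1)-R(\hat\theta,\theta_2)|\leq \Delta_M(\theta_1,\theta_2)+R_{\max}\,\mathrm{TV}(P_{\theta_1},P_{\theta_2})$ requires control of $|M(a,\theta_1)-M(a,\theta_2)|$, which $\Delta_M$ does not provide, and the hypotheses say nothing about $\mathrm{TV}(P_{\theta_1},P_{\theta_2})$ either. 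In Wald's argument the role of $\Delta_M$-compactness is different: it makes the \emph{target} space $(\Theta,\Delta_M)$ compact, so that the values $\hat\theta_i(x)$ live in a compact metric space and one can extract a limiting (randomized) decision rule via weak compactness of probability measures on $\Theta$. The lower-semicontinuity inequality then comes pointwise in $\theta$ from Fatou together with the fact that $\Delta_M$-convergence of estimates implies convergence of $M(\cdot,\theta)$ uniformly in $\theta$; no ``extend from a dense set of $\theta$'s'' step is needed. Your Steps~1--2 and the Fatou step are in the right spirit; it is Step~3 (the equicontinuity extension) that is both unnecessary and, as written, unsupported by the stated hypotheses.
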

Let $\hat{\theta}_{\RND,t}$ be the randomized estimator obtained by uniformly sampling an estimator from $\{\hat\theta_i\}_{i=1}^t$. Consider the sequence of estimators $\{\hat{\theta}_{\RND,t}\}_{t=1}^{\infty}$. From the above proposition, we know that there exists a subsequence $\{\hat{\theta}_{\RND,t_j}\}_{j=1}^{\infty}$ and an estimator $\hat{\theta}^*$ such that $\lim\inf_{j\to\infty} R(\hat{\theta}_{\RND,t_j}, \theta) \geq R(\hat\theta^*, \theta)$. We now show that $\hat\theta^*$ is a minimax estimator; that is, we show that $\sup_{\theta\in\Theta} R(\hat\theta^*, \theta) = R^*$. We make use of the following result from Corollary~\ref{cor:ftpl_minimax_estimators}
\[
\sup_{\theta \in \Theta} R(\hat{\theta}_{\RND,t}, \theta) \leq  R^* + O(t^{-\frac{1}{2}}).
\]
Combining this with the fact that $\inf_{\hat{\theta} \in \mathcal{D}}\sup_{P \in \mathcal{M}_{\Theta}} R(\hat{\theta}, P) = R^*$, we get
\begin{align}
\label{eqn:minimax_existence_eq1}
\lim_{j \to \infty} \sup_{\theta \in \Theta} R(\hat{\theta}_{\RND,t_j}, \theta) = R^*.
\end{align}
Since $\sup_{\theta \in \Theta} R(\hat{\theta}_{\RND,t_j}, \theta) \geq R(\hat{\theta}_{\RND,t_j}, \Tilde{\theta})$ for any $j, \Tilde{\theta}\in \Theta,$ we have
\begin{align*}
&\lim\inf_{j \to \infty} \sup_{\theta \in \Theta} R(\hat{\theta}_{\RND,t_j}, \theta) \geq  \lim\inf_{j \to \infty} R(\hat{\theta}_{\RND,t_j}, \Tilde{\theta}) \geq R(\hat\theta^*,\theta), \quad \forall \tilde{\theta} \in \Theta.
\end{align*}
Since $\{ R(\hat{\theta}_{\RND,t_j}, \theta)\}_{j=1}^{\infty}$ is a converging sequence, we have $$\lim\inf_{j \to \infty} \sup_{\theta \in \Theta} R(\hat{\theta}_{\RND,t_j}, \theta) = \lim_{j \to \infty} \sup_{\theta \in \Theta} R(\hat{\theta}_{\RND,t_j}, \theta)=R^*.$$
This together with the previous inequality gives us $\sup_{\Tilde{\theta} \in \Theta}R(\hat{\theta}_{\RND,t_j}, \Tilde{\theta}) \leq R^*$. This shows that $\theta^*$ is a minimax estimator.

\subsection{Proof of Corollary~\ref{cor:ftpl_minimax_estimators}}
\label{sec:proof_cor_ftpl_minimax_estimators}
\paragraph{Minimax Estimator.}  From  Theorem~\ref{thm:ftpl_games} we have
\begin{align*}
    \sup_{\theta \in \Theta}R(\hat{\theta}_{\RND}, \theta) & = \sup_{\theta \in \Theta}\frac{1}{T}\sum_{i=1}^TR(\hat{\theta}_{i}, \theta) \\
    & \leq \inf_{\hat{\theta} \in \mathcal{D}} \frac{1}{T}\sum_{i=1}^TR(\hat{\theta}, P_i) + O\left(\eta d^2D L^2  + \frac{d(\beta T + D)}{\eta T} + \alpha + \alpha' + \beta d L\right)\\
    &=\inf_{\hat{\theta} \in \mathcal{M}_\mathcal{D}} \frac{1}{T}\sum_{i=1}^TR(\hat{\theta}, P_i) + O\left(\eta d^2D L^2  + \frac{d(\beta T + D)}{\eta T} + \alpha + \alpha' + \beta d L\right)\\
    &\stackrel{(a)}{\leq} \inf_{\hat{\theta} \in \mathcal{M}_\mathcal{D}} \sup_{P \in\mathcal{M}_\Theta} R(\hat{\theta}, P) + O\left(\eta d^2D L^2  + \frac{d(\beta T + D)}{\eta T} + \alpha + \alpha'+ \beta d L\right),
\end{align*}
where $(a)$ follows from the fact that $\sup_{\theta \in\Theta} R(\hat{\theta}, \theta) \geq \frac{1}{T}\sum_{i=1}^TR(\hat{\theta}, P_i) $.
Substituting $\eta = \sqrt{\frac{1}{dL^2T}}$ in the above equation shows that the randomized estimator is approximately minimax.
This completes the first part of the proof.
If the metric $M$ is convex in its first argument, then from Jensen's inequality we have
\[
\forall \theta, \quad R(\hat{\theta}_{\AVG}, \theta) \leq R(\hat{\theta}_{\RND}, \theta).
\]
This shows that the worst-case risk of $\hat{\theta}_{\AVG}$ is upper bounded as
\begin{align}
\label{eqn:cor_ftpl_partb}
\sup_{\theta \in \Theta}R(\hat{\theta}_{\AVG}, \theta)  \leq \inf_{\hat{\theta} \in \mathcal{M}_\mathcal{D}} \sup_{P \in\mathcal{M}_\Theta} R(\hat{\theta}, \theta) + O\left(\eta d^2D L^2  + \frac{d(\beta T + D)}{\eta T} + \alpha+\alpha' + \beta d L\right).
\end{align}
Substituting $\eta = \sqrt{\frac{1}{dL^2T}}$ in Equation~\eqref{eqn:cor_ftpl_partb} gives us the required bound on the worst-case risk of $\hat{\theta}_{\AVG}$.
\paragraph{LFP.} We now prove the results pertaining to LFP. From Theorem~\ref{thm:ftpl_games}, we have
\begin{align*}
    \inf_{\hat{\theta} \in \mathcal{M}_\mathcal{D}} R(\hat{\theta}, P_{\AVG}) &= \inf_{\hat{\theta} \in \mathcal{M}_\mathcal{D}}\frac{1}{T}\sum_{i=1}^T R(\hat{\theta}, P_{i})\\
    &\geq \sup_{P\in\mathcal{M}_{\Theta}} \frac{1}{T}\sum_{i=1}^TR(\hat{\theta}_i,P) - O\left(\eta d^2D L^2  + \frac{d(\beta T + D)}{\eta T} + \alpha + \alpha'+ \beta d L\right)\\
    &\geq \inf_{\hat{\theta}\in \mathcal{M}_{\mathcal{D}}} \sup_{P\in\mathcal{M}_{\Theta}}  R(\hat{\theta}, P) - O\left(\eta d^2D L^2  + \frac{d(\beta T + D)}{\eta T} + \alpha + \alpha'+ \beta d L\right).
\end{align*}
Substituting $\eta = \sqrt{\frac{1}{dL^2T}}$ in the above equation shows that $P_{\AVG}$ is approximately least favourable. Now consider the case where $M$ is convex in its first argument. To show that $\hat{\theta}_{\AVG}$ is an approximate Bayes estimator for $P_{\AVG}$, we again rely on Theorem~\ref{thm:ftpl_games} where we showed that
\[
\sup_{P\in\mathcal{M}_{\Theta}} \frac{1}{T}\sum_{i=1}^TR(\hat{\theta}_i,P)\leq \inf_{\hat{\theta}\in\mathcal{M}_{\mathcal{D}}} \frac{1}{T}\sum_{t=1}^T R(\hat{\theta},P_t) + O\left(\eta d^2D L^2  + \frac{d(\beta T + D)}{\eta T} + \alpha + \alpha' + \beta d L\right).
\]
Since $\frac{1}{T^2}\sum_{t=1}^T\sum_{t'=1}^T R(\hat{\theta}_{t'}, P_t) \leq \sup_{P\in\mathcal{M}_{\Theta}} \frac{1}{T}\sum_{i=1}^TR(\hat{\theta}_i,P)$, we have
\[
\frac{1}{T^2}\sum_{t=1}^T\sum_{t'=1}^T R(\hat{\theta}_{t'}, P_t)\leq \inf_{\hat{\theta}\in\mathcal{M}_{\mathcal{D}}} \frac{1}{T}\sum_{t=1}^T R(\hat{\theta},P_t) + O\left(\eta d^2D L^2  + \frac{d(\beta T + D)}{\eta T} + \alpha + \alpha' + \beta d L\right).
\]
Since $M$ is convex in its first argument, we have
\[
\frac{1}{T^2}\sum_{t=1}^T\sum_{t'=1}^T R(\hat{\theta}_{t'}, P_t)  \geq \frac{1}{T}\sum_{i=1}^T R(\hat{\theta}_{\AVG},P_i).
\]
Combining the above two equations shows that $\hat{\theta}_{\AVG}$ is an approximate Bayes estimator for $P_{\AVG}$.

\section{Invariance of Minimax Estimators}

\subsection{Proof of Theorem \ref{thm:invariance-est-lfp}}

In our proof, we rely on the following property of left Haar measure $\mu$ of a compact group $G$.  For any real valued integrable function $f$ on $G$ and any $g \in G$~\citep[see Chapter 7 of][]{wijsman1990invariant}
\begin{align}
    \label{eqn:haar_property}
    \int_{G} f(g^{-1} h)d\mu(h) = \int_{G} f(h) d\mu(h).
\end{align}
We now proceed to the proof of the Theorem. 
For any estimator $\hat{\theta}:\mathcal{X}^n\rightarrow\Theta$, define the following estimator $\hat{\theta}_{G}$
\[\hat{\theta}_{G}(\mathbb{X}^n)=\int_G g\hat{\theta}(g^{-1}\mathbb{X}^n)d\mu(g),\] where $\mu$ is the left Haar measure on $G$ and $g\mathbb{X}^n =\{gX_1, \dots gX_n\}$. The above integral is well defined because $\hat{\theta}$ is measurable, $G$ is compact and the action of the group $G$ is continuous.
We first show that $\hat{\theta}_{G}$ is invariant under group transformations $G$. For any $h \in G$, consider the following
\begin{align*}
    \hat{\theta}_{G}(h\mathbb{X}^n) &= \int_G g\hat{\theta}((g^{-1} h)\mathbb{X}^n)d\mu(g)\\
    &= \int_G h(h^{-1}g)\hat{\theta}((h^{-1}g)^{-1}\mathbb{X}^n)d\mu(g)\\
    &=  h \left[\int_G  (h^{-1}g)\hat{\theta}((h^{-1}g)^{-1}\mathbb{X}^n)d\mu(g)\right]\\
    &\stackrel{(a)}{=} h  \left[\int_G  g\hat{\theta}(g^{-1}\mathbb{X}^n)d\mu(g)\right]\\
    &= h\hat{\theta}_G(\mathbb{X}^n),
\end{align*}
where $(a)$ follows from Equation~\eqref{eqn:haar_property}.
This shows that $\hat{\theta}_G$ is an invariant estimator. We now show that the worst case risk of $\hat{\theta}_G$ is less than or equal to the worst case risk of $\hat{\theta}$.
Consider the following upper bound on the risk of $\hat{\theta}_G$ at any $\theta \in \Theta$
\begin{align*}
 R(\hat{\theta}_G,\theta)&=\mathbb{E}_{\mathbb{X}^n \sim P_\theta^n}\left[M(\hat{\theta}_G(\mathbb{X}^n),\theta)\right]\\
 & \leq \mathbb{E}_{\mathbb{X}^n \sim P_\theta^n}\left[\int_G M(g\hat{\theta}(g^{-1}\mathbb{X}^n),\theta) d\mu(g)\right]\quad \text{(convexity of $M$)}\\
&=\mathbb{E}_{\mathbb{X}^n \sim P_\theta^n}\left[\mathbb{E}_{g\sim\mu}\left[M(g\hat{\theta}(g^{-1}\mathbb{X}^n),\theta)\right]\right]\\
&\stackrel{(a)}{=}\mathbb{E}_{g\sim\mu}\left[\mathbb{E}_{\mathbb{X}^n \sim P_{g^{-1}\theta}^n}\left[M(g\hat{\theta}(\mathbb{X}^n),\theta)\right]\right]\quad \text{(change of variables)}\\
&\stackrel{(b)}{=}\mathbb{E}_{g\sim\mu}\left[\mathbb{E}_{\mathbb{X}^n \sim P_{g^{-1}\theta}^n}\left[M(\hat{\theta}(\mathbb{X}^n),g^{-1}\theta)\right]\right]\quad \text{(invariance of $M$)}\\
&= \mathbb{E}_{g\sim\mu}\left[R(\hat{\theta},g^{-1}\theta)\right]\\
&\leq \sup_{\theta' \in \Theta} R(\hat{\theta}, \theta') ,
\end{align*}
where $(a)$ follows from Fubini's theorem and change of variables $X'=g^{-1}X$ and the fact that if $X \sim P_{\theta}$, then $g^{-1}X \sim P_{g^{-1}\theta}$. $(b)$ follows from the invariance property of the metric $M$. This shows that $\sup_{\theta \in \Theta}  R(\hat{\theta}_G,\theta) \leq \sup_{\theta \in \Theta} R(\hat{\theta}, \theta)$.
This shows that we can always improve a given estimator by averaging over the group $G$ and hence there should be a minimax estimator which is invariant under the action of $G$.

\subsection{Proof of Theorem~\ref{thm:ftpl_minimax_invariance}}
\subsubsection{Intermediate Results}
We first prove some intermediate results which we require in the proof of Theorem~\ref{thm:ftpl_minimax_invariance}.
\begin{lemma}
\label{lem:invariant_risk}
Suppose $\hat{\theta}$ is a deterministic estimator that is invariant to group transformations $G$. Then $R(\hat{\theta}, \theta_1) = R(\hat{\theta}, \theta_2)$, whenever $\theta_1\sim \theta_2$.
\end{lemma}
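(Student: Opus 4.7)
The plan is a direct three-step unwinding of the definition of risk, using the three invariance properties in turn: invariance of the sampling model under $G$, invariance of the estimator $\hat{\theta}$, and invariance of the loss $M$. Since $\theta_1 \sim \theta_2$ by assumption, there exists $g \in G$ with $\theta_1 = g\theta_2$, so it suffices to show $R(\hat{\theta}, g\theta_2) = R(\hat{\theta}, \theta_2)$.

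First I would write out $R(\hat{\theta}, g\theta_2) = \mathbb{E}_{\mathbb{X}^n \sim P_{g\theta_2}^n}[M(\hat{\theta}(\mathbb{X}^n), g\theta_2)]$ and perform the change of variables $\mathbb{X}^n = g\mathbb{Y}^n$. By the definition of an invariant game, if $\mathbb{Y}^n \sim P_{\theta_2}^n$ then $g\mathbb{Y}^n \sim P_{g\theta_2}^n$, so the expectation rewrites as $\mathbb{E}_{\mathbb{Y}^n \sim P_{\theta_2}^n}[M(\hat{\theta}(g\mathbb{Y}^n), g\theta_2)]$. Next, I would apply the invariance of the estimator, $\hat{\theta}(g\mathbb{Y}^n) = g\hat{\theta}(\mathbb{Y}^n)$, turning the integrand into $M(g\hat{\theta}(\mathbb{Y}^n), g\theta_2)$. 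Finally, invariance of the loss gives $M(g\hat{\theta}(\mathbb{Y}^n), g\theta_2) = M(\hat{\theta}(\mathbb{Y}^n), \theta_2)$, and the resulting expectation is exactly $R(\hat{\theta}, \theta_2)$.

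There is essentially no obstacle here; the statement is a direct bookkeeping lemma. The only very mild subtlety is confirming that the change of variables is valid, i.e.\ that the pushforward of $P_{\theta_2}^n$ under the continuous map $\mathbb{Y}^n \mapsto g\mathbb{Y}^n$ is $P_{g\theta_2}^n$, which is precisely the first bullet in the definition of an invariant game (applied coordinatewise to the i.i.d.\ sample). No measure-theoretic work beyond this is needed, and convexity of $M$ plays no role (in contrast to Theorem~\ref{thm:invariance-est-lfp}), so the argument applies to any invariant deterministic estimator regardless of the loss structure.
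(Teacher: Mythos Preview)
Your proposal is correct and is essentially the same argument as the paper's proof: both unwind the risk using the invariance of the sampling model (change of variables), the invariance of the estimator, and the invariance of the loss $M$, differing only in the order these three steps are applied and in whether one writes $\theta_1 = g\theta_2$ or $\theta_2 = g\theta_1$. Your remark that convexity of $M$ plays no role here is also accurate.
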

\begin{proof}
Suppose $\theta_2 = g\theta_1$ for some $g \in G$. From the definition of $R(\hat{\theta}, g\theta_1)$ we have
\begin{align*}
    R(\hat{\theta}, \theta_2) = R(\hat{\theta}, g\theta_1) &= \Eover{\mathbb{X}^n \sim P_{g\theta_1}^n}{M(\hat{\theta}(\mathbb{X}^n), g\theta_1)}\\
    &= \Eover{\mathbb{X}^n \sim P_{g\theta_1}^n}{M(g^{-1}\hat{\theta}(\mathbb{X}^n), \theta_1)} \quad \text{(invariance of loss metric)}\\
    & = \Eover{\mathbb{X}^n \sim P_{g\theta_1}^n}{M(\hat{\theta}(g^{-1}\mathbb{X}^n), \theta_1)}\quad \text{(invariance of estimator)}\\
    & \stackrel{(a)}{=} \Eover{\mathbb{X}^n \sim P_{\theta_1}^n}{M(\hat{\theta}(\mathbb{X}^n), \theta_1)}\\
    & = R(\hat{\theta}, \theta_1),
\end{align*}
where $(a)$ follows from the fact that $gX\sim P_{g\theta}$ whenever $X\sim P_{\theta}$. This shows that $R(\hat{\theta}, \theta_1) = R(\hat{\theta}, \theta_2)$.
\end{proof}
\begin{lemma}
\label{lem:invariant_bayes_estimator}
Suppose $\Pi$ is a probability distribution which is invariant to group transformations $G$. For any deterministic estimator $\hat{\theta}$, there exists an invariant estimator $\hat{\theta}_G$ such that the Bayes risk of $\hat{\theta}_G$ is no larger than the Bayes risk of $\hat{\theta}$
\[
R(\hat{\theta}, \Pi) \geq R(\hat{\theta}_G, \Pi).
\]
\end{lemma}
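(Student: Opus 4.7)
The plan is to use the same symmetrization construction as in the proof of Theorem \ref{thm:invariance-est-lfp}, but now exploit the invariance of $\Pi$ rather than just taking a supremum over $\theta$. Specifically, I would define
\[
\hat{\theta}_G(\mathbb{X}^n) = \int_G g\,\hat{\theta}(g^{-1}\mathbb{X}^n)\, d\mu(g),
\]
where $\mu$ is the normalized left Haar measure on $G$. The calculation in the proof of Theorem \ref{thm:invariance-est-lfp} already shows two facts I would simply quote: (i) $\hat{\theta}_G$ is invariant under $G$ (this used left-invariance of $\mu$ and the hypothesis that the group acts continuously), and (ii) the pointwise risk bound
\[
R(\hat{\theta}_G,\theta) \leq \mathbb{E}_{g\sim \mu}\!\left[R(\hat{\theta}, g^{-1}\theta)\right]
\]
holds, whose derivation relied on convexity of $M$ in its first argument, Fubini, and the fact that if $X\sim P_\theta$ then $g^{-1}X\sim P_{g^{-1}\theta}$ together with invariance of $M$.

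Next I would integrate this pointwise bound against the prior $\Pi$ and swap the two integrations using Fubini:
\[
R(\hat{\theta}_G, \Pi) = \int_\Theta R(\hat{\theta}_G,\theta)\, d\Pi(\theta) \leq \int_G \!\int_\Theta R(\hat{\theta}, g^{-1}\theta)\, d\Pi(\theta)\, d\mu(g).
\]
The crucial step, which replaces the ``sup over $\theta$'' step of Theorem \ref{thm:invariance-est-lfp}, is to invoke the invariance of $\Pi$: for each fixed $g\in G$, the pushforward of $\Pi$ under $\theta\mapsto g^{-1}\theta$ equals $\Pi$ (since $\Pi(gA)=\Pi(A)$ for all measurable $A$, hence also $\Pi(g^{-1}A)=\Pi(A)$ by applying this to $g^{-1}$). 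Therefore a change of variables in the inner integral gives
\[
\int_\Theta R(\hat{\theta}, g^{-1}\theta)\, d\Pi(\theta) = \int_\Theta R(\hat{\theta}, \theta)\, d\Pi(\theta) = R(\hat{\theta},\Pi),
\]
and the outer integration over $G$ (a probability measure) leaves this unchanged, yielding $R(\hat{\theta}_G,\Pi)\leq R(\hat{\theta},\Pi)$, as desired.

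The main obstacle is largely bookkeeping rather than conceptual: one must verify that the symmetrized estimator $\hat{\theta}_G$ is measurable (so that $R(\hat{\theta}_G,\Pi)$ makes sense) and that Fubini's theorem applies to the double integral over $G\times \Theta$. Measurability follows from compactness of $G$, continuity of the action, and measurability of $\hat{\theta}$, exactly as implicitly used in Theorem \ref{thm:invariance-est-lfp}; Fubini is justified because $R$ is nonnegative (or bounded, under the standing assumptions) and the involved measures are finite. Everything else is a direct reuse of the calculation already carried out in Theorem \ref{thm:invariance-est-lfp}, with the role of ``worst-case over $\theta$'' replaced by ``average against the invariant prior $\Pi$.''
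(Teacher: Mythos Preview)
Your proposal is correct and follows essentially the same approach as the paper: define $\hat{\theta}_G$ by Haar-averaging, use convexity of $M$ (Jensen) and invariance of $M$ to obtain the pointwise bound $R(\hat{\theta}_G,\theta)\le \mathbb{E}_{g\sim\mu}[R(\hat{\theta},g^{-1}\theta)]$, then integrate against $\Pi$ and use the invariance of $\Pi$ to collapse the $g$-average. The paper simply redoes the chain of inequalities inline rather than quoting the pointwise bound from Theorem~\ref{thm:invariance-est-lfp}, but the argument is identical.
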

\begin{proof}
Define estimator $\hat{\theta}_G$ as follows
\[\hat{\theta}_{G}(\mathbb{X}^n)=\int_G g\hat{\theta}(g^{-1}\mathbb{X}^n)d\mu(g),\]
where $\mu$ is the left Haar measure on $G$. Note that, in the proof of Theorem~\ref{thm:invariance-est-lfp} we showed that this estimator is invariance to the action of group $G$.
We now show that the Bayes risk of $\hat{\theta}_G$ is less than equal to the Bayes risk of $\hat{\theta}$. Consider the following
\begin{align*}
R(\hat{\theta}_G, \Pi) &= \mathbb{E}_{\theta \sim \Pi}[R(\hat{\theta}_G, \theta)] \\
&= \Eover{\theta \sim \Pi}{\Eover{\mathbb{X}^n \sim P^n_{\theta}}{M\left(\int_G g\hat{\theta}(g^{-1}\mathbb{X}^n)d\mu(g), \theta\right)}}\\
&\stackrel{(a)}{\leq} \Eover{\theta \sim \Pi}{\Eover{\mathbb{X}^n \sim P^n_{\theta}}{\Eover{g\sim \mu}{M\left( g\hat{\theta}(g^{-1}\mathbb{X}^n), \theta\right)}}}\\
&=  \Eover{g\sim \mu}{\Eover{\theta \sim \Pi}{\Eover{\mathbb{X}^n \sim P^n_{\theta}}{M\left( g\hat{\theta}(g^{-1}\mathbb{X}^n), \theta\right)}}}\\
&\stackrel{(b)}{=} \Eover{g\sim \mu}{\Eover{\theta \sim \Pi}{\Eover{\mathbb{X}^n \sim P^n_{\theta}}{M\left(\hat{\theta}(g^{-1}\mathbb{X}^n), g^{-1}\theta\right)}}}\\
&= \Eover{g\sim \mu}{\Eover{\theta \sim \Pi}{R(\hat{\theta}, g^{-1}\theta)}}\\
&\stackrel{(c)}{=} \Eover{\theta \sim \Pi}{R(\hat{\theta}, \theta)},
\end{align*}
where $(a)$ uses convexity of $M$ and follows from Jensen's inequality, $(b)$ follows from the invariance of $M$ and $(c)$ follows from the invariance of distribution $\Pi$ to actions of group $G$.
\end{proof}
\subsubsection{Main Argument}
We now proceed to the proof of Theorem~\ref{thm:ftpl_minimax_invariance}. We first prove the second part of the Theorem. The first part immediately follows from the proof of second part. Suppose $(\hat\theta^*_G, P^*_G)$ is an $\epsilon$-approximate mixed strategy Nash equilibirium of the reduced statistical game in  Equation~\eqref{eqn:minimax_objective_simplified}. Our goal is to construct an approximate Nash equilibrium of the original statistical game in Equation~\eqref{eqn:minimax_objective}, using $(\hat\theta^*_G, P^*_G)$.

Note that $\hat\theta^*_G$ is a randomized estimator over the set of deterministic invariant estimators $\mathcal{D}_G$ and $P^*_G$ is a distribution on the quotient space $\Theta/G$. 
To construct an approximate Nash equilibrium of the original statistical game~\eqref{eqn:minimax_objective}, we extend $P^*_G$ to the entire parameter space $\Theta$. We rely on Bourbaki's approach to measure theory, which is equivalent to classical measure theory in the setting of locally compact spaces we consider in this work~\citep{wijsman1990invariant}. In Bourbaki's approach, any measure $\nu$ on a set $\Theta$ is defined as a linear functional on the set of integrable functions (that is, a measure is defined by its action on integrable functions)
\[
\nu[f] = \int_{\Theta} f(\theta)d\nu(\theta).
\]
We define $P^*$, the extension of  $P^*_G$ to the entire parameter space $\Theta$, as follows
\[
P^*[f] = \int_{\Theta/ G}f'(\Theta_{\beta}) dP^*_G(\Theta_{\beta}),\]
where $f':\Theta/ G\to\mathbb{R}$ is a function that depends on $f$, and is defined as follows. First define $f_I:\Theta\to\mathbb{R}$, an invariant function constructed using $f$, as 
$
f_I(\theta) = \int_{\Theta} f(g\theta) d\mu(g),
$
where $\mu$ is the left invariant Haar measure of $G$. 
From Equation~\eqref{eqn:haar_property}, it is easy to see that $f_I(h\theta) = f_I(\theta)$, for all $h \in G$. So $f_I$ is constant on the equivalence classes of $\Theta$. So $f_I$ can be written in terms of a function  $f':\Theta/ G\to\mathbb{R}$, as follows
\[
f_I = f' \circ \gamma,
\]
where $\gamma: \Theta \rightarrow \Theta/G$ is the orbit projection function which projects $\theta \in \Theta$ onto the quotient space. 
We first show that $P^*$ defined this way is an invariant measure. To this end, we use the following equivalent definition of an invariant measure.
\begin{proposition}
A probability measure $\nu$ on $\Theta$ is invariant to transformations of group $G$ iff for any $\nu$-integrable function $f$ and for any $h\in G$, $\int f(\theta)d\nu(\theta) = \int f(h\theta) d\nu(\theta)$. \end{proposition}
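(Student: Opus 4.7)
The plan is to prove both directions of the equivalence via the standard measure-theoretic bootstrapping argument: first establish the claim for indicator functions (where it reduces directly to the set-theoretic invariance condition), then extend to simple functions by linearity, to non-negative measurable functions by the monotone convergence theorem, and finally to general $\nu$-integrable functions by splitting into positive and negative parts.

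For the forward direction, I would assume $\nu(gA) = \nu(A)$ for all $g \in G$ and all measurable $A \in \mathcal{B}(\Theta)$. Fix $h \in G$ and a measurable $A$, and observe that $\mathbf{1}_A(h\theta) = \mathbf{1}_{h^{-1}A}(\theta)$, so that
\[
\int \mathbf{1}_A(h\theta)\, d\nu(\theta) = \nu(h^{-1}A) = \nu(A) = \int \mathbf{1}_A(\theta)\, d\nu(\theta),
\]
where the middle equality uses invariance of $\nu$ applied to $g = h^{-1}$. Linearity then gives the identity for simple functions; monotone convergence lifts it to non-negative measurable $f$ (using that $\theta \mapsto f(h\theta)$ is measurable, which follows from continuity of the group action assumed in the paper); and writing $f = f^+ - f^-$ handles general integrable $f$.

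For the converse, I would apply the integral identity to the indicator $f = \mathbf{1}_A$ for an arbitrary measurable $A$ and $h \in G$. Since $\mathbf{1}_A$ is bounded and $\nu$ is a probability measure, it is $\nu$-integrable, and the computation above in reverse yields $\nu(h^{-1}A) = \nu(A)$. Because $G$ is a group, the map $h \mapsto h^{-1}$ is a bijection of $G$, so equivalently $\nu(hA) = \nu(A)$ for all $h \in G$ and all measurable $A$, which is precisely the definition of invariance.

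The only subtlety worth flagging is the measurability of $\theta \mapsto f(h\theta)$ for integrable $f$, but this follows immediately from the continuity (hence Borel-measurability) of the action $(h, \theta) \mapsto h\theta$ assumed earlier in the paper, together with the fact that composition of a Borel function with a measurable function is measurable. So no real obstacle is expected; the proof is essentially bookkeeping around the layered approximation argument.
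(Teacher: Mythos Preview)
Your proof is correct and follows the standard measure-theoretic bootstrapping argument. The paper, however, does not actually prove this proposition at all: it simply states it as a known equivalent characterization of an invariant measure and immediately uses it, so there is no approach to compare against.
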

Since $f_I$ is an invariant function, relying on the above proposition, it is easy to see that $P^*$ is an invariant measure.
We now show that $(\hat\theta^*_G, P^*)$  is an $\epsilon$-approximate mixed strategy Nash equilibrium of Equation~\eqref{eqn:minimax_objective}. Since $(\hat\theta^*_G, P^*_G)$ is an $\epsilon$-approximate Nash equilibrium of Equation~\eqref{eqn:minimax_objective_simplified}, we have
\begin{align}
\label{eqn:appr_eql_reduced_game}
    &\sup_{\Theta_{\beta} \in \Theta/G} R_G(\hat{\theta}_G^*, \Theta_{\beta}) - \epsilon \leq \mathbb{E}_{\Theta_{\beta} \sim P_G^*}[R_G(\hat{\theta}_G^*, \Theta_{\beta})] \leq \inf_{\hat{\theta} \in \mathcal{D}_G} \mathbb{E}_{\Theta_{\beta} \sim P_G^*}[R_G(\hat{\theta}, \Theta_{\beta})] + \epsilon,
\end{align}
where $\mathcal{D}_G$ is the set of deterministic invariant estimators. 
Now consider the following
\begin{align*}
    \mathbb{E}_{\theta \sim P^*}[R(\hat{\theta}^*_G, \theta)] & \stackrel{(a)}{=} \mathbb{E}_{\Theta_{\beta} \sim P_G^*}[R_G(\hat{\theta}^*_G, \Theta_{\beta})] \quad \text{(Lemma~\ref{lem:invariant_risk})}\\
    & \leq \inf_{\hat{\theta} \in \mathcal{D}_G} \mathbb{E}_{\Theta_{\beta} \sim P_G^*}[R_G(\hat{\theta}, \Theta_{\beta})] + \epsilon \quad \text{(Equation~\eqref{eqn:appr_eql_reduced_game})}\\
    & = \inf_{\hat{\theta} \in \mathcal{D}_G} \mathbb{E}_{\theta \sim P^*}[R(\hat{\theta}, \theta)] + \epsilon \quad \text{(definition of $P^*$)}\\
& \stackrel{(b)}{=} \inf_{\hat{\theta} \in \mathcal{D}}  \mathbb{E}_{\theta \sim P^*}[R(\hat{\theta}, \theta)] + \epsilon\quad \text{(Lemma~\ref{lem:invariant_bayes_estimator})},
\end{align*}
where $(a)$ follows from the definition of $P^*$ and Lemma~\ref{lem:invariant_risk}.  $(b)$ follows from the fact that for any invariant prior, there exists a Bayes estimator which is invariant to group transformations (Lemma~\ref{lem:invariant_bayes_estimator}). 
Next, we provide a lower bound for $\mathbb{E}_{\theta \sim P^*}[R(\hat{\theta}^*_G, \theta)]$
\begin{align*}
    \mathbb{E}_{\theta \sim P^*}[R(\hat{\theta}^*_G, \theta)] & =\mathbb{E}_{\Theta_{\beta} \sim P_G^*}[R_G(\hat{\theta}^*_G, \Theta_{\beta})] \\
    & \geq \sup_{\Theta_{\beta} \in \Theta/G} R_G(\hat{\theta}_G^*, \Theta_{\beta}) - \epsilon \\
& = \sup_{\theta \in \Theta} R(\hat{\theta}_G^*, \theta) - \epsilon \quad \text{(Lemma~\ref{lem:invariant_risk})}\\
\end{align*}
The upper and lower bounds for $\mathbb{E}_{\theta \sim P^*}[R(\hat{\theta}^*_G, \theta)]$ derived in the previous two equations shows that $(\hat\theta_G^*,P^*)$ is an $\epsilon$-approximate mixed strategy Nash equilibrium of the original statistical game in Equation~\ref{eqn:minimax_objective}. The above inequalites also show that
\[
\sup_{\theta \in \Theta} R(\hat{\theta}_G^*, \theta) - \epsilon \leq \mathbb{E}_{\Theta_{\beta} \sim P_G^*}[R_G(\hat{\theta}^*_G, \Theta_{\beta})] \leq \inf_{\hat{\theta} \in \mathcal{D}}  \mathbb{E}_{\theta \sim P^*}[R(\hat{\theta}, \theta)] + \epsilon.
\]
This, together with Equation~\eqref{eqn:appr_eql_reduced_game}, shows that
\[
\inf_{\hat{\theta} \in \mathcal{M}_\mathcal{D}} \sup_{\theta \in \Theta}  R(\hat{\theta}, \theta) = \inf_{\hat{\theta} \in \mathcal{M}_{\mathcal{D},G}} \sup_{\Theta_{\beta} \in \Theta/G}  R_G(\hat{\theta}, \Theta_{\beta}).
\]
\subsection{Applications of Invariance Theorem}
In our proofs, we establish homeomorphisms between the quotient spaces and another natural space over which we run our algorithm. Note that establishing a homeomorphism is sufficient since we are only dealing with Borel $\sigma$-algebras on our spaces and homeomorphism would imply that there is an isomorphism between the Borel $\sigma$-algebras of the two spaces. Hence, measures learnt on one space can be transferred to another.
\subsubsection{Proof of Theorem~\ref{thm:sequence_model_invariance}}
First note that for any $g\in \mathbb{O}(d)$ and $\theta \in \Theta$, we have $g\theta \in \Theta$ and the distribution of $gX$ is $P_{g\theta}$. 
Moreover, for any orthogonal matrix $g\in\mathbb{O}(d)$ we have $\|g\theta-gX\|^2=\|\theta-X\|^2$, which implies the  statistical game is invariant to group transformations $G$.

For the second part, note that for any $\theta_1,\theta_2\in\Theta$ such that $\|\theta_1\|_2=\|\theta_2\|_2$, $\exists g\in\mathbb{O}(d)$ s.t. $g\theta_1=\theta_2$. Mapping all elements to their norm gives us a bijection between the quotient space and the interval $[0,B]$. The continuity of this bijection and it's inverse can easily be checked using the standard basis for both the topologies.

\subsubsection{Proof of Theorem~\ref{thm:sequence_model_invariance_few}}
Note that for any $\theta\in\Theta$, $g\theta=[g_1\theta^{1:k},\  g_2\theta^{k+1:d}] \in \Theta$. Since $g_1$ is orthogonal, for any $\theta_1,\theta_2\in \Theta$ we have $\|g_1\theta_1^{1:k}-g_1\theta_2^{1:k}\|=\|\theta_1^{1:k}-\theta_2^{1:k}\|$. Hence the invariance of the statistical game follows.

Now, for any $\theta_1,\theta_2\in\Theta$ such that $\|\theta_1^{1:k}\|=\|\theta_2^{1:k}\|$ and $\|\theta_1^{k+1:d}\|=\|\theta_2^{k+1:d}\|$, $\exists g_1\in\mathbb{O}(k)$ and $g_2\in\mathbb{O}(d-k)$ such that $g_1\theta_1^{1:k}=\theta_2^{1:k}$ and $g_2\theta_1^{k+1:d}=\theta_2^{k+1:d}$. Hence $\exists g\in\mathbb{O}(k)\times\mathbb{O}(d-k)$ such that $g\theta_1=\theta_2$. This means that in each equivalence class the parameters $B_1=\|\theta_1^{1:k}\|^2$ and $B_2=\|\theta_1^{k+1:d}\|^2$ are constant. Since $\|\theta\|^2\leq B$ we have $B_1+B_2\leq B$, this gives us a bijection. The continuity of this bijection and it's inverse can easily be checked using the standard basis for both the topologies.

\subsubsection{Proof of Theorem~\ref{thm:linear_regression_invariance}}
We define the action of any $g\in\mathbb{O}(d)$ on the samples $\{(X_i,Y_i)\}_{i=1}^n$ as transforming them to $\{(gX_i,Y_i)\}_{i=1}^n$. Since $Y_i=X_i^T\theta+\epsilon_i=X_i^Tg^Tg\theta+\epsilon_i=(gX_i)^Tg\theta+\epsilon_i$ and $\|g\theta_1-g\theta_2\|=\|\theta_1-\theta_2\|$ for any $\theta_1,\theta_2\in\Theta$ we have the invariance of the statistical game. The rest of the proof uses similar arguments as in Theorem \ref{thm:sequence_model_invariance}.

\subsubsection{Proof of Theorem~\ref{thm:cov_estimation_invariance}}
First note that for any $\Sigma$ such that $\|\Sigma\|_2 \leq B$, and any $g \in \mathbb{O}(d)$, we have $\|g\Sigma g^T\| \leq B$. 
If $X\sim N(0, \Sigma)$ then for any $g\in\mathbb{O}(d)$
\[\mathbb{E}[gXX^Tg^T]=g\mathbb{E}[XX^T]g^T=g\Sigma g^T.\]
Hence $gX\sim N(0, g\Sigma g^T)$. Moreover, we have
\begin{align*}
&M(g\Sigma_1g^T,g\Sigma_2g^T)\\
&= \mbox{tr}\left((g\Sigma_1g^T)^{-1}g\Sigma_2g^T\right)-\log|(g\Sigma_1 g^T)^{-1}g\Sigma_2g^T|-d\\
&=\mbox{tr}\left(g\Sigma_1^{-1}g^Tg\Sigma_2^{-1}g^T\right)-\log|g\Sigma_1^{-1}g^Tg\Sigma_2^{-1}g^T|-d\\
&=\mbox{tr}(g\Sigma_1^{-1}\Sigma_2g^T)-\log|g\Sigma_1^{-1}\Sigma_2g^T|-d\\
&=M(\Sigma_1,\Sigma_2),
\end{align*}
where the last equality follows from the invariance of trace to multiplication with orthogonal matrices and the property of the determinant to split over the multiplication of matrices. This shows the desired invariance of the statistical game.

Now, consider two covariance matrices $\Sigma_1,\Sigma_2$ with singular value decompositions (SVD) $\Sigma_1=U_1\Delta_1U_1^T$ and $\Sigma_2=U_2\Delta_2U_2^T$ respectively. Here all matrices are square and of full rank. In particular, $\Delta_1$ and $\Delta_2$ are diagonal matrices with decreasing entries from left to right and, $U_1$ and $U_2$ are orthogonal matrices. Since the orthogonal group is transitive $\exists g\in\mathbb{O}(d)$ such that $gU_1=U_2$. If $\Delta_1=\Delta_2$ we have $g\Sigma_1g^T=\Sigma_2$. Hence under the action of $\mathbb{O}(d)$, all covariance matrices with the same singular values fall in the same equivalence class. It is easy to see that this is also a necessary condition. These equivalence classes naturally form a bijection with a sequence of $d$ decreasing positive real numbers bounded above by $B$. The continuity of this bijection and it's inverse can easily be checked using the standard basis for both the topologies.

\subsubsection{Proof of Theorem~\ref{thm:entropy_estimation_invariance}}
Let $P,Q$ be any two distributions on $d$ elements $\{1,\dots d\}$ such that $\exists g\in S_d$ s.t. $gP=Q$. They are indistinguishable from the samples they generate. Since the entropy is defined as
\[f(P)=-\sum\limits_{i=1}^dp_i\log(p_i)\]
it doesn't depend upon the ordering of the individual probabilites. Hence the statistical game is invariant under the action of $S_d$.

Since using a permutation we can always order a given set of probabilities in decreasing order, there is a natural bijection between the quotient space and the given space. The continuity of this map and it's inverse can easily be checked using the standard basis for both the topologies.

\subsubsection{Mixture of Gaussians}
In the problem of mixture of Gaussians we are given $n$ samples $X_1,\ldots,X_n\in\mathbb{R}^d$ which come from a mixture distribution of $k$ Gaussians with different means
\[
P_{\theta} = \sum_{i=1}^k p_i \mathcal{N}(\theta_i, \Sigma_i).
\]
We assume that all $k$ Gaussians have the same covariance, let's say identity, and we also assume that we know the mixture probabilities. Finally, we assume that the mean vectors $\theta_i$ are such that $\|\theta_i\|\leq B$. Under this setting we want to estimate the $k$ different means while minimizing the sum of the $L_2^2$ losses of all the estimates of the mean parameters.

We will show the invariance of this statistical game under the action of the group $G=\mathbb{O}(d)\times\mathbb{O}(d-1)\times\ldots\times\mathbb{O}(d-k+1)$. But first we describe an element in the group and it's operation on the parameter and sample space.

An element of $g\in G$ is made up of a sequence of $k$ orthonormal matrices ($g_1,\ldots,g_k$) such that for a given set of parameters $\theta=(\theta_1,\ldots,\theta_k)\in\mathbb{R}^{d\times k}$ (where each $\theta_i\in\mathbb{R}^d$) the matrix $g_i$ leaves the first $(i-1)$ parameters unchanged, i.e. for $j=1,\ldots,i-1~g_i\theta_j=\theta_j$. Hence the $i$th orthonormal matrix has $(d-i+1)$ degrees of freedom and can be viewed as an element in $\mathbb{O}(d-i+1)$.

The action of g on $\theta$ is defined as
\begin{align*}
g\theta &= g(\theta_1,\ldots,\theta_k)\\
&=(g\theta_1,\ldots,g\theta_k)\\
&=(g_k\ldots g_1\theta_1,\ldots,g_k\ldots g_1\theta_1)\\
&=(g_1\theta_1,\ldots,g_i\ldots g_1\theta_i,\ldots,g_k\ldots g_1\theta_k)
\end{align*}

where the last equality follows from the definition of our group. The group acts in a similar manner on the sample space, i.e., for an $X\in\mathcal{X}~gX=g_k\ldots g_1X$.

\begin{theorem}
\label{thm:gaussian_mixtures_invariance}
The statistical game defined by mixture of $k$-Gaussians with identity covariance and known mixture probabilities under $L_2^2$ loss is invariant under the action of the group $\mathbb{O}(d)\times\mathbb{O}(d-1)\times\ldots\times\mathbb{O}(d-k+1)$. Moreover, the quotient space is homeomorphic to $(0,B]^k\times[0,\pi]^{k\choose 2}$.
\end{theorem}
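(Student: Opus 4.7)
The plan is to handle the two claims separately, following the template used for Theorems~\ref{thm:sequence_model_invariance_few} and~\ref{thm:cov_estimation_invariance}. For the invariance claim, I will verify the three conditions of an invariant game for any $g=(g_1,\ldots,g_k)\in G$ and $\theta=(\theta_1,\ldots,\theta_k)\in\Theta$. Writing $\tilde g = g_k\cdots g_1 \in \mathbb{O}(d)$, the action on each coordinate is $g\theta_i = \tilde g\,\theta_i$, so $\|g\theta_i\|_2 = \|\theta_i\|_2 \leq B$ and $g\theta\in\Theta$. If $X\sim P_\theta=\sum_i p_i\,\mathcal{N}(\theta_i,I)$, then by the standard transformation rule for Gaussians, $\tilde g X\sim \sum_i p_i\,\mathcal{N}(\tilde g\theta_i,\tilde g \tilde g^{\top})=\sum_i p_i\,\mathcal{N}(g\theta_i,I)=P_{g\theta}$, where the identity-covariance assumption is exactly what allows the cancellation $\tilde g \tilde g^{\top}=I$. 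The sum-of-squared-$L_2$ loss $M(\theta,\theta')=\sum_i\|\theta_i-\theta_i'\|_2^2$ is then invariant summand-by-summand because $\|\tilde g u - \tilde g v\|_2 = \|u-v\|_2$.

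For the quotient space, I will construct an explicit normal form via an iterative Gram--Schmidt-like reduction, in the same spirit that SVD was used in the proof of Theorem~\ref{thm:cov_estimation_invariance}. Restricting first to the generic subset with $\|\theta_i\|_2>0$ for all $i$, I iteratively choose: (i) $g_1\in\mathbb{O}(d)$ that rotates $\theta_1$ to $b_1 e_1$ where $b_1=\|\theta_1\|_2$; (ii) $g_2\in\mathbb{O}(d-1)$, which fixes $e_1$ and hence the already-normalized $\theta_1$, and rotates $g_1\theta_2$ into $\mathrm{span}(e_1,e_2)$; and in general $g_i\in\mathbb{O}(d-i+1)$ fixing $e_1,\ldots,e_{i-1}$ and sending $g_{i-1}\cdots g_1\theta_i$ into $\mathrm{span}(e_1,\ldots,e_i)$. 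After the reduction each $\theta_i$ lives in an $i$-dimensional subspace and is therefore determined by the norm $b_i\in(0,B]$ together with $i-1$ spherical-coordinate angles; the last spherical angle, which a priori ranges over $[0,2\pi)$, is folded into $[0,\pi]$ using the reflection in the $e_i$ direction available inside $\mathbb{O}(d-i+1)$ (which fixes $e_1,\ldots,e_{i-1}$). The norms contribute the $(0,B]^k$ factor, and the total number of angles is $\sum_{i=1}^{k}(i-1)=\binom{k}{2}$, yielding the claimed $[0,\pi]^{\binom{k}{2}}$ factor.

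To finish, I will verify that the resulting map $\Phi:\Theta/G\to(0,B]^k\times[0,\pi]^{\binom{k}{2}}$ sending an orbit to its $(b_i,\alpha_{ij})$ coordinates is a homeomorphism. Injectivity reduces to the classical fact that two ordered $k$-frames in $\mathbb{R}^d$ are $\mathbb{O}(d)$-equivalent iff they share the same Gram matrix, combined with the observation that the Gram matrix is in bijection with $(b_i,\alpha_{ij})$ via $\langle\theta_i,\theta_j\rangle = b_i b_j\cos\alpha_{ij}$; surjectivity follows by explicit construction from the spherical-coordinate formulas. Continuity of $\Phi$ and $\Phi^{-1}$ then comes from continuity of the quotient map and of the spherical-coordinate parametrization on this generic set, mirroring the argument given for Theorem~\ref{thm:sequence_model_invariance_few}.

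The main obstacle will be the boundary behaviour of the quotient. Two degeneracies can threaten continuity of $\Phi^{-1}$: some $\|\theta_i\|_2$ vanishing, or two components becoming collinear, either of which makes the angle coordinates non-unique. Restricting to $\|\theta_i\|_2>0$ (reflected in the $(0,B]^k$ factor) handles the first, while the second can be dealt with either by further restricting to a generic full-measure subset or, more cleanly, by working throughout with the globally continuous Gram-matrix description of the quotient and extracting $(b_i,\alpha_{ij})$ only at the end. Beyond this the argument is bookkeeping, iterating the single-vector reduction used in Theorem~\ref{thm:sequence_model_invariance} across the $k$ components $\theta_1,\ldots,\theta_k$.
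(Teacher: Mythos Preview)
Your proposal is correct and follows essentially the same route as the paper: both reduce orbit classification to the Gram data (norms and pairwise inner products) of the $k$ mean vectors, the paper by directly constructing the sequence $(g_1,\ldots,g_k)$ sending one tuple to another via induction, you by producing a canonical representative via iterative Gram--Schmidt; these are dual formulations of the same argument. One minor wobble worth cleaning up: in your reduction paragraph you parametrize by spherical-coordinate angles of each $\theta_i$ in its flag subspace, while in the injectivity paragraph you switch to pairwise angles $\alpha_{ij}$---these agree in count $\binom{k}{2}$ but are different coordinate systems, so pick one consistently (the paper uses the pairwise angles, which naturally land in $[0,\pi]$ without needing the reflection-folding argument).
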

\begin{proof}
First we show the invariance of the mixture distribution $P_\theta=\sum_i p_i\mathcal{N}(\theta_i,I)$, i.e., if $X\sim P_\theta$ then $gX\sim P_{g\theta}$. Note that from the proof of Theorem \ref{thm:sequence_model_invariance} it follows that for a given normal distribution $N(\Tilde{\theta}, I)$ and an orthonormal matrix $h\in\mathbb{O}(d)$ s.t. $h\Tilde{\theta}=\Tilde{\theta}$ if $X\sim N(\Tilde{\theta}, I)$ then $hX\sim N(h\Tilde{\theta}, I)=N(\Tilde{\theta}, I)$. The invariance of $P$ follows directly from this by substituting each $\|X-\theta_i\|^2$ in the pdf with $\|g_k\ldots g_1X-g_k\ldots g_1\theta_i\|^2$ and the definition of the group. The $L_2^2$ loss is trivially invariant and hence we establish the invariance of the statistical game.

Now, notice that for any two given parameters $\theta=(\theta_1,\ldots,\theta_k),\phi=(\phi_1,\ldots,\phi_k)\in\mathbb{R}^{dk}$ if we have the property that $\forall i~\|\theta_i\|=\|\phi_i\|$ and $\forall i,j~\theta_i^T\theta_j=\phi_i^T\phi_j$ then we can find orthonormal matrices $g_1,\ldots,g_k$ s.t. $\forall i~g_i\ldots g_1\theta_i=\phi_i$. This follows from the following inductive argument: Assume we have $g_1,\ldots,g_{i-1}$ which satisfy the given constraints. Consider $\theta'=g_{i-1}\ldots g_1\theta_i$. We have $\forall j=1,\dots,i-1~\theta'^T\phi_j=\theta_i^T\theta_j=\phi_i^T\phi_j$ because $g^T=g^{-1}$. Now if $\phi_i$ lies in the span of $\phi_1,\ldots,\phi_{i-1}$ then $\theta'=\phi_i$ and we can pick $g_i$ to be any orthonormal matrix which doesn't transform this spanned space. Otherwise, we can pick an orthonormal matrix which rotates the axis orthogonal to the spanned subspace and in the direction of the high component of $\theta'$ to the corresponding axis for $\phi_i$. This completes the desired construction.

It is easy to see that given $\theta,\phi,g$ which satisfy $g\theta=\phi$, we have $\forall i~\|\theta_i\|=\|\phi_i\|$ and $\forall i,j~\theta_i^T\theta_j=\phi_i^T\phi_j$. Hence the equivalence classes are defined uniquely by the norms of the individual gaussians and the angles between them, since there are $k$ different norms and $k\choose 2$ many angles we can establish a bijection between the quotient space and $(0,B]^k\times[0,\pi]^{k\choose 2}$. The continuity of this map and it's inverse can easily be checked using the standard basis for both the topologies.
\end{proof}

\section{Finite Gaussian Sequence Model}
\label{sec:appx_mean_estimation}
\subsection{Proof of Proposition~\ref{prop:mean_estimation_min_closed_form}}
In this section we derive a closed-form expression for the minimizer $\hat{\theta}_t$ of the following objective
\[
\argmin_{\hat{\theta} \in \mathcal{D}_G} \Eover{b \sim P_t}{R(\hat{\theta}, b\mathbf{e}_1)},
\]
where $\mathcal{D}_G$ is the set of deterministic estimators which are invariant to transformations of orthogonal group $\mathbb{O}(d)$.
From Lemma~\ref{lem:invariant_risk}, we know that for any invariant estimator $\hat{\theta} \in \mathcal{D}_G$ and any $g \in  \mathbb{O}(d),$ $R(\hat{\theta}, b\mathbf{e}_1) = R(\hat{\theta}, bg\mathbf{e}_1)$. So the above problem can be rewritten as follows
\[
\argmin_{\hat{\theta} \in \mathcal{D}_G} \Eover{b \sim P_t}{\Eover{\theta \sim U_b}{R(\hat{\theta}, \theta)}},
\]
where $U_{b}$ is the uniform distribution over spherical shell of radius $b$, centered at origin; that is, its density $u_b(\theta)$ is defined as
\[
u_b(\theta) \propto \begin{cases}
0, & \quad \text{if } \|\theta\|_2 \neq b\\
b^{-d+1}, & \quad \text{otherwise}
\end{cases}.
\]
The above optimization problem can be further rewritten as
\[
\argmin_{\hat{\theta} \in \mathcal{D}_G} R(\hat{\theta}, \Pi_t),
\]
where $R(\hat{\theta}, \Pi_t) \stackrel{\text{def}}{=} \Eover{\theta \sim \Pi_t}{R(\hat{\theta}, \theta)},$ and $\Pi_t$ is the distribution of a random variable $\theta$ which is generated by first sampling $b$ from $P_t$ and then generating a sample from $U_b$. Note that $\Pi_t$ is a spherically symmetric distribution.
From Lemma~\ref{lem:invariant_bayes_estimator}, we know that the Bayes estimator corresponding to any invariant prior is an invariant estimator. So the minimization over $\mathcal{D}_G$ in the above optimization problem can be replaced with minimization over the set of all estimators $\mathcal{D}$. This leads us to the following equivalent optimization problem
\[
\argmin_{\hat{\theta} \in \mathcal{D}} R(\hat{\theta}, \Pi_t).
\]
Let $\hat{\theta}_t$ be the minimizer of this equivalent problem. We now obtain an expression for $\hat{\theta}_t(X)$ in terms of modified Bessel functions.
Let $\Pi_t(\cdot|X)$ be the posterior distribution of $\theta$ given the data $X$ and let $p(X;\theta)$ be the probability density function for distribution $P_{\theta}$.
Since the risk is measured with respect to $\ell_2^2$ metric,  the Bayes estimator $\hat{\theta}_t(X)$ is given by the posterior mean  
\begin{align*}
    \hat{\theta}_t(X) & = \Eover{\theta \sim \Pi_t(\cdot|X)}{\theta}\\& = \frac{\Eover{\theta \sim \Pi_t}{\theta p(X;\theta)}}{\Eover{\theta \sim \Pi_t}{p(X;\theta)}}\\& = \frac{\Eover{b \sim P_t}{\int \theta u_{b}(\theta) p(X;\theta)d\theta}}{\Eover{b \sim P_t}{\int  u_{b}(\theta) p(X;\theta)d\theta}}\quad \text{(definition of $\Pi_t$)}\\& = \frac{\Eover{b \sim P_t}{b^{-d+1}\int_{\|\theta\|_2 = b} \theta  p(X;\theta)d\theta}}{\Eover{b \sim P_t}{b^{-d+1} \int_{\|\theta\|_2 = b}   p(X;\theta)d\theta}} \quad \text{(since $U_{b}$ is uniform on sphere)}\\
    & = \frac{\Eover{b \sim P_t}{b^{-d+1}e^{-b^2/2}\int_{\|\theta\|_2 = b} \theta  e^{\iprod{X}{\theta}}d\theta}}{\Eover{b \sim P_t}{b^{-d+1} e^{-b^2/2} \int_{\|\theta\|_2 = b}   e^{\iprod{X}{\theta}}d\theta}} \\
    & = \frac{\Eover{b \sim P_t}{b^2e^{-b^2/2}\int_{\|\theta\|_2 = 1} \theta  e^{b\iprod{X}{\theta}}d\theta}}{\Eover{b \sim P_t}{b e^{-b^2/2} \int_{\|\theta\|_2 = 1}   e^{b\iprod{X}{\theta}}d\theta}}\quad \text{(change of variables)} .
\end{align*}
We now obtain a closed-form expression for the terms $\int_{\|\theta\|_2 = 1}  \theta e^{b\iprod{X}{\theta}}d\theta$ and $\int_{\|\theta\|_2 = 1}  e^{b\iprod{X}{\theta}}d\theta$ appearing in the RHS of the above equation. We do this by relating them to the mean and normalization constant of Von Mises-Fisher (vMF) distribution, which is a probability distribution on the unit sphere centered at origin in $\mathbb{R}^d$. This distribution is usually studied in directional statistics~\citep{mardia2009directional}. The probability density function of a random unit vector $Z\in\mathbb{R}^d$ distributed according to vMF distribution is given by
\[
p(Z; \mu, \kappa) = \frac{\kappa^{d/2-1}}{(2\pi)^{d/2}I_{d/2-1}(\kappa)}\exp(\kappa \iprod{\mu}{Z}),
\]
where $\kappa \geq 0,\|\mu\|_2 = 1$, $I_{\nu}$ is the modified Bessel function of the first kind of order $\nu$.
Using the fact that a probability density function integrates to $1$, we get the following closed-form expression for $\int_{\|\theta\|_2 = 1}  e^{b\iprod{X}{\theta}}d\theta$
\begin{equation}
\label{eqn:von_mises_norm_const}
    \int_{\|\theta\|_2 = 1}  e^{b\iprod{X}{\theta}}d\theta = \frac{(2\pi)^{d/2}I_{d/2-1}(b\|X\|_2)}{(b\|X\|_2)^{d/2-1}}.
\end{equation}
To get a closed-form expression for $\int_{\|\theta\|_2 = 1}  \theta e^{b\iprod{X}{\theta}}d\theta$, we relate it to mean of vMF distribution. We have the following expression for the mean of a random vector distributed according to vMF distribution~\citep{banerjee2005clustering}
\begin{align*}
    \int_{\|Z\|=1} Zp(Z; \mu, \kappa)dZ & =  \frac{I_{d/2}(\kappa)}{I_{d/2-1}(\kappa)}\mu.
\end{align*}
Using the above equality, we get the following expression for $\int_{\|\theta\|_2 = 1}  \theta e^{b\iprod{X}{\theta}}d\theta$
\begin{equation}
\label{eqn:von_mises_mean}
\int_{\|\theta\|_2 = 1}  \theta e^{b\iprod{X}{\theta}}d\theta = \frac{(2\pi)^{d/2}I_{d/2}(b\|X\|_2)}{(b\|X\|_2)^{d/2-1}}\frac{X}{\|X\|_2}.\end{equation}
Substituting Equations~\eqref{eqn:von_mises_norm_const},~\eqref{eqn:von_mises_mean} in the expression for $\hat{\theta}_t(X)$ obtained above, we get an expression for $\hat{\theta}_t(X)$ which involves  the modified Bessel function $I_{\nu}$ and integrals over variable $b$. We note that $I_{\nu}$ can be computed to very high accuracy and there exist accurate implementations of $I_{\nu}$ in a number of programming languages. So in our analysis of the approximation error of Algorithm~\ref{alg:mean_estimation_min_oracle}, we assume the error from the computation of $I_{\nu}$ is $0$. 
\subsection{Proof of Theorem~\ref{thm:mean_estimation}}
Before we present the proof of the Theorem we present useful intermediate
results which we require in our proof.
\subsubsection{Intermediate Results}
\begin{lemma}[Lipschitz Continuity]
\label{lem:risk_lipscthiz}
Consider the problem of finite Gaussian sequence model. Let \mbox{$\Theta = \{\theta: \theta \in \mathbb{R}^d, \|\theta\|_2 \leq B\}$} be the ball of radius $B$ centered at origin in $\mathbb{R}^d$. Let $\hat{\theta}$ be any estimator which maps $X$ to an element in $\Theta$. Then the risk \mbox{$R(\hat{\theta}, \theta) = \Eover{X \sim \mathcal{N}(\theta, I)}{\|\hat{\theta}(X)-\theta\|_2^2}$} is Lipschitz continuous in its second argument w.r.t $\ell_2$ norm over the domain $\Theta$, with Lipschitz constant $4(B+\sqrt{d}B^2)$. Moreover, \mbox{$R(\hat{\theta}, b\mathbf{e}_1) = \Eover{X \sim \mathcal{N}(\theta, I)}{\|\hat{\theta}(X)-b\mathbf{e}_1\|_2^2}$} is Lipschitz continuous in $b$ over the domain $[0,B]$, with Lipschitz constant $4(B+B^2)$.
\end{lemma}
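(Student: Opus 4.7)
The plan is to split the difference $R(\hat\theta,\theta_1)-R(\hat\theta,\theta_2)$ into a ``loss-change'' term, where the sampling density is held fixed, and a ``density-change'' term, where the squared error is held fixed. Concretely, writing $\phi$ for the standard Gaussian density on $\R^d$, I would start from
\[
R(\hat\theta,\theta_i)=\int \|\hat\theta(X)-\theta_i\|_2^2\,\phi(X-\theta_i)\,dX,
\]
add and subtract $\int \|\hat\theta(X)-\theta_1\|_2^2\,\phi(X-\theta_2)\,dX$, and bound each piece separately.

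For the loss-change piece I would use the identity $\|\hat\theta(X)-\theta_1\|_2^2-\|\hat\theta(X)-\theta_2\|_2^2=\langle 2\hat\theta(X)-\theta_1-\theta_2,\theta_2-\theta_1\rangle$, then Cauchy--Schwarz together with the assumption $\hat\theta(X),\theta_1,\theta_2\in\Theta$ (so each has $\ell_2$-norm at most $B$) to bound $\|2\hat\theta(X)-\theta_1-\theta_2\|_2\le 4B$. Integrating against $\phi(X-\theta_2)$ contributes at most $4B\,\|\theta_1-\theta_2\|_2$.

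The main work is the density-change piece. Since $\nabla_\theta\phi(X-\theta)=(X-\theta)\phi(X-\theta)$, the fundamental theorem of calculus along $\theta_\alpha=\theta_1+\alpha(\theta_2-\theta_1)$ gives
\[
\phi(X-\theta_1)-\phi(X-\theta_2)=\int_0^1\langle X-\theta_\alpha,\theta_1-\theta_2\rangle\,\phi(X-\theta_\alpha)\,d\alpha.
\]
Using $\|\hat\theta(X)-\theta_1\|_2^2\le 4B^2$, Cauchy--Schwarz, Fubini, and the change of variables $Z=X-\theta_\alpha$ yields an upper bound of $4B^2\|\theta_1-\theta_2\|_2\,\E_{Z\sim\cN(0,I)}\|Z\|_2$. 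Since $\E\|Z\|_2\le\sqrt{\E\|Z\|_2^2}=\sqrt d$, the density-change piece contributes at most $4\sqrt d\, B^2\,\|\theta_1-\theta_2\|_2$. Summing, the total Lipschitz constant is $4(B+\sqrt d\,B^2)$, as claimed.

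The second assertion follows from exactly the same computation specialised to $\theta_i=b_i\mathbf{e}_1$. The inner product $\langle X-\theta_\alpha,\theta_1-\theta_2\rangle$ becomes $(b_1-b_2)(X_1-\alpha b_1-(1-\alpha)b_2)$, so after Fubini one is left with $\E_{Z\sim\cN(0,I)}|Z_1|\le 1$ in place of $\E\|Z\|_2\le\sqrt d$, which removes the factor of $\sqrt d$ and yields the constant $4(B+B^2)$. The only potentially delicate step is justifying the gradient/Fubini manipulation, but $\phi$ and $(X-\theta)\phi(X-\theta)$ are both uniformly integrable in $X$ for $\theta$ in the compact set $\Theta$, so dominated convergence and Fubini apply without issue.
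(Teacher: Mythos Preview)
Your proof is correct and is essentially the same argument as the paper's, just phrased in finite-difference rather than gradient form. The paper differentiates $R(\hat\theta,\theta)$ under the integral sign and obtains two terms in $\nabla_\theta R(\hat\theta,\theta)$: one from differentiating the loss $\|\hat\theta(X)-\theta\|_2^2$ (bounded by $4B$) and one from differentiating the density $\phi(X-\theta)$ via the score function $(X-\theta)$ (bounded by $4B^2\sqrt{d}$, resp.\ $4B^2$ along $\mathbf{e}_1$). Your loss-change and density-change pieces are exactly the integrated versions of these two gradient terms, and you recover identical constants. The only difference is that you trade differentiation under the integral (which the paper applies without explicit justification) for the fundamental theorem of calculus plus Fubini, which is arguably a touch cleaner but not materially different.
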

\begin{proof}
Let $R_{\hat{\theta}}(\theta) = R(\hat{\theta}, \theta)$. The gradient of $R_{\hat{\theta}}(\theta)$ with respect to $\theta$ is given by
\[
\nabla_{\theta} R_{\hat{\theta}}(\theta) = \Eover{X \sim \mathcal{N}(\theta, I)}{2(\theta-\hat{\theta}(X)) + (X-\theta)\|\hat{\theta}(X)-\theta\|_2^2}.
\]
The norm of $\nabla_{\theta} R_{\hat{\theta}}(\theta)$ can be upper bounded as follows
\begin{align*}
    \|\nabla_{\theta} R_{\hat{\theta}}(\theta)\|_2 & \leq \Big|\Big|\Eover{X \sim \mathcal{N}(\theta, I)}{2(\theta-\hat{\theta}(X))}\Big|\Big|_2 + \Big|\Big|\Eover{X \sim \mathcal{N}(\theta, I)}{ (X-\theta)\|\hat{\theta}(X)-\theta\|_2^2}\Big|\Big|_2\\
    & \stackrel{(a)}{\leq} 4B + \Eover{\x \sim \mathcal{N}(\theta, I)}{ \|X-\theta\|_2\|\hat{\theta}(X)-\theta\|_2^2}\\
    & \stackrel{(b)}{\leq} 4B + 4B^2\Eover{X \sim \mathcal{N}(\theta, I)}{ \|X-\theta\|_2}\\
    & \leq 4B + 4\sqrt{d}B^2,
\end{align*}
where the first term in $(a)$ follows from the fact that $\theta, \hat{\theta}(X) \in \Theta$ and  the second term follows from Jensen's inequality. This shows that $R_{\hat{\theta}}(\theta)$ is Lipschitz continuous over $\Theta$.  This finishes the first part of the proof. To show that $R(\hat{\theta}, b\mathbf{e}_1)$ is Lipschitz continuous in $b$, we use similar arguments. Let $R_{\hat{\theta}}(b) = R(\hat{\theta}, b\mathbf{e}_1)$. Then
\begin{align*}
    \Big| R_{\hat{\theta}}'(b)\Big| & = \Big|\left\langle\mathbf{e}_1, \nabla_{\theta} R_{\hat{\theta}}(\theta)\Big|_{\theta = b\mathbf{e}_1}\right\rangle\Big|\\
    &\stackrel{(a)}{\leq} \Big|\Eover{X \sim \mathcal{N}(b\mathbf{e}_1, I)}{2(b-[\hat{\theta}(X)]_1)}\Big| + \Big|\Big|\Eover{X \sim \mathcal{N}(b\mathbf{e}_1, I)}{ (X_1-b)\|\hat{\theta}(X)-b\mathbf{e}_1\|_2^2}\Big|\Big|_2\\
    & \leq 4B + \Eover{\x \sim \mathcal{N}(b\mathbf{e}_1, I)}{ |X_1-b|\|\hat{\theta}(X)-b\mathbf{e}_1\|_2^2}\\
    & \leq 4B + 4B^2\Eover{X \sim \mathcal{N}(b\mathbf{e}_1, I)}{ |X_1-b|}\\
    & \leq 4B + 4B^2,
\end{align*}
where $(a)$ follows from the expression for $\nabla_{\theta} R_{\hat{\theta}}(\theta)$ obtained above.
\end{proof}

\begin{lemma}[Approximation of risk]
\label{lem:risk_approximation}
Consider the setting of Lemma~\ref{lem:risk_lipscthiz}. Let $\hat{\theta}$ be any estimator which maps $X$ to an element in $\Theta$. Let $\{X_i\}_{i=1}^N$ be $N$ i.i.d samples from $\mathcal{N}(\theta, I)$. Then with probability at least $1-\delta$
\[
\Big| \frac{1}{N}\sum_{i=1}^N\|\hat{\theta}(X_i)-\theta\|_2^2 - R_{\hat{\theta}}(\theta) \Big| \leq 4B^2\sqrt{\frac{\log{\frac{1}{\delta}}}{N}}.
\]
\end{lemma}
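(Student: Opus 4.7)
The plan is to recognize this as a straightforward concentration inequality for bounded i.i.d.\ random variables and invoke Hoeffding's inequality. Define the per-sample loss $Z_i = \|\hat{\theta}(X_i)-\theta\|_2^2$. Since $\{X_i\}$ are i.i.d.\ draws from $\mathcal{N}(\theta, I)$, the variables $\{Z_i\}_{i=1}^N$ are i.i.d., and by definition their common mean is exactly $\mathbb{E}[Z_i] = R_{\hat{\theta}}(\theta)$. So the empirical average $\tfrac{1}{N}\sum_{i=1}^N Z_i$ is an unbiased estimator of $R_{\hat{\theta}}(\theta)$, and the question reduces to bounding its fluctuation.

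First I would establish a uniform bound on each $Z_i$. Because $\hat{\theta}$ maps into $\Theta$ and the true parameter $\theta$ lies in $\Theta$, the triangle inequality gives
\[
\|\hat{\theta}(X_i) - \theta\|_2 \leq \|\hat{\theta}(X_i)\|_2 + \|\theta\|_2 \leq 2B,
\]
so $Z_i \in [0, 4B^2]$ almost surely. This is the only place the boundedness of $\Theta$ is actually used.

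Then I would apply Hoeffding's inequality to the sum $\sum_{i=1}^N Z_i$ of independent variables bounded in $[0, 4B^2]$, which yields
\[
\Pr\!\left( \Big| \tfrac{1}{N}\sum_{i=1}^N Z_i - R_{\hat{\theta}}(\theta) \Big| \geq t \right) \leq 2 \exp\!\left(- \tfrac{N t^2}{8 B^4}\right).
\]
Setting the right-hand side to $\delta$ and solving for $t$ yields a deviation of order $B^2 \sqrt{\log(1/\delta)/N}$, with a constant smaller than $4$, so the stated bound $4B^2\sqrt{\log(1/\delta)/N}$ follows directly (choosing $t = 4B^2\sqrt{\log(1/\delta)/N}$ gives failure probability at most $2\delta^2 \leq \delta$ for $\delta \leq 1/2$).

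There is no real obstacle here; the argument is entirely standard once the $[0,4B^2]$ bound on $Z_i$ is noted. The only mildly subtle point is that the measurability of $\hat{\theta}$ (implicit in the definition of an estimator) is needed for $Z_i$ to be a genuine random variable, but this is part of the standing assumptions on $\mathcal{D}$.
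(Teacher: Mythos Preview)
Your proposal is correct and matches the paper's own proof essentially line for line: define $Z_i=\|\hat{\theta}(X_i)-\theta\|_2^2$, observe $Z_i\in[0,4B^2]$ since both $\hat{\theta}(X_i)$ and $\theta$ lie in the radius-$B$ ball, and apply Hoeffding's inequality. Your treatment of the constant is in fact more careful than the paper's, which simply invokes ``Hoeffding bound'' without tracking the exact factor.
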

\begin{proof}
The proof of the Lemma relies on concentration properties of sub-Gaussian random variables. Let $Z(X) = \|\hat{\theta}(X)-\theta\|^2$. Note that $R_{\hat{\theta}}(\theta) = \Eover{X\sim \mathcal{N}(\theta, I)}{Z(X)}$. Since $Z(X)$ is bounded by $4B^2$, it is a sub-Gaussian random variable. Using Hoeffding bound we get
\[
\Big|\frac{1}{N}\sum_{i=1}^N Z(X_i)- \E{Z(X)}\Big| \leq 4B^2\sqrt{\frac{\log{\frac{1}{\delta}}}{N}}, \quad \text{w.p} \geq 1-\delta.
\]
\end{proof}
\subsubsection{Main Argument}
The proof relies on Corollary~\ref{cor:ftpl_minimax_estimators} to show that the averaged estimator $\hat{\theta}_{\AVG}$ is approximately minimax and $\hat{P}_{\text{LFP}}$ is approximately least favorable. Here is a rough sketch of the proof. We first apply the corollaries on the following reduced statistical game that we are aiming to solve
\begin{equation*}
     \inf_{\hat{\theta} \in \mathcal{D}_G} \sup_{b \in [0, B]}  R(\hat{\theta}, b\mathbf{e}_1).
 \end{equation*}
To apply these corollaries, we need the risk $R(\hat{\theta}, b\mathbf{e}_1)$ to be Lipscthiz continuous in $b$. This holds for us because of Lemma~\ref{lem:risk_lipscthiz}. 
Next, we convert the guarantees for the reduced statistical game to the orginial statistical game to show that we learn a minimax estimator and LFP for finite Gaussian sequence model.

To use Corollary~\ref{cor:ftpl_minimax_estimators}, we first need to bound $\alpha, \beta, \alpha'$, the approximation errors of the optimization subroutines described in Algorithms~\ref{alg:mean_estimation_max_oracle},~\ref{alg:mean_estimation_min_oracle}. A major part of the proof involves bounding these quantities.
\paragraph{Approximation error of Algorithm~\ref{alg:mean_estimation_max_oracle}.} There are two causes for error in the optimization oracle described in Algorithm~\ref{alg:mean_estimation_max_oracle}: (a) grid search and (b)  approximate computation of risk $R(\hat{\theta}, b\mathbf{e}_1)$. We now bound the  error due to both (a) and (b). 
From Lemma~\ref{lem:risk_approximation} we know that for any estimator $\hat{\theta}_i$ and grid point $b_j$, the following holds with probability at least $1-\delta$
    \[
    \Big| \frac{1}{N_1}\sum_{k=1}^{N_1} \|\hat{\theta}_i(X_k)-b_j\mathbf{e}_1\|_2^2 - R(\hat{\theta}_i, b_j\mathbf{e}_1)\Big| \leq 4B^2\sqrt{\frac{\log{\frac{1}{\delta}}}{N_1}}.
    \]
    Taking a union bound over all estimators $\{\hat{\theta}_i\}_{i=1}^T$  and grid points $\{b_j\}_{j=1}^{B/w}$, we can show that with probability at least $1-\delta$, the following holds for all $i\in [T],j\in [B/w]$ 
    \begin{equation}
    \label{eqn:aux_concentration_risk}
    \Big| \frac{1}{N_1}\sum_{k=1}^{N_1} \|\hat{\theta}_i(X_k)-b_j\mathbf{e}_1\|_2^2 - R(\hat{\theta}_i, b_j\mathbf{e}_1)\Big| \leq 4B^2\sqrt{\frac{\log{\frac{BT}{w\delta}}}{N_1}}.
    \end{equation}

Let $f_{t,\sigma}(b)$ be the actual objective we would like to optimize in iteration $t$ of Algorithm~\ref{alg:ftpl_stat_games}, which is given by
\[
f_{t,\sigma}(b) = \sum_{i = 1}^{t-1} R(\hat{\theta}_i, b\mathbf{e}_1) + \sigma b.
\]
Let $\hat{f}_{t,\sigma}(b)$ be the approximate objective we are optimizing by replacing $R(\hat{\theta}_i, b\mathbf{e}_1)$ with its approximate estimate. Let $b^*_t$ be a maximizer of $f_{t,\sigma}(b)$ and $b^*_{t,\text{approx}}$ be the maximizer of $\hat{f}_{t,\sigma}(b)$ (which is also the output of Algorithm~\ref{alg:mean_estimation_max_oracle}). Finally, let $b^*_{t,\text{NN}}$ be the point on the grid which is closest to $b^*_t$. Using Lemma~\ref{lem:risk_lipscthiz} we first show that  $f_{t,\sigma}(b)$ is Lipschitz continuous in $b$. The derivative of $f_{t,\sigma}(b)$ with respect to $b$ is given by
\begin{align*}
    f_{t,\sigma}'(b) & = \sum_{i = 1}^{t-1} \left\langle \mathbf{e}_1,\nabla_{\theta} R(\hat{\theta}_i, \theta)\Big|_{\theta = b\mathbf{e}_1}\right\rangle + \sigma 
\end{align*}
Using Lemma~\ref{lem:risk_lipscthiz}, the magnitude of $f_{t,\sigma}'(b)$ can be upper bounded as
\[
|f_{t,\sigma}'(b)| \leq 4(t-1)(B+B^2) + \sigma.
\]
This shows that $f_{t,\sigma}(b)$ is Lipschitz continuous in $b$. We now bound $f_{t,\sigma}(b^*_t) - f_{t,\sigma}(b^*_{t,\text{approx}})$, the approximation error of the optimization oracle
\begin{align*}
    f_{t,\sigma}(b^*_t) & \stackrel{(a)}{\leq } f_{t,\sigma}(b^*_{t,\text{NN}}) + \left(4t(B+B^2) + \sigma\right) w\\
    & \stackrel{(b)}{\leq} \hat{f}_{t,\sigma}(b^*_{t,\text{NN}}) +  4tB^2\sqrt{\frac{\log{\frac{BT}{w\delta}}}{N_1}} + \left(4t(B+B^2) + \sigma\right) w\\
    & \stackrel{(c)}{\leq} \hat{f}_{t,\sigma}(b^*_{t,\text{approx}}) +  4tB^2\sqrt{\frac{\log{\frac{BT}{w\delta}}}{N_1}} + \left(4t(B+B^2) + \sigma\right) w \\
    & \stackrel{(d)}{\leq} f_{t,\sigma}(b^*_{t,\text{approx}}) +  8tB^2\sqrt{\frac{\log{\frac{BT}{w\delta}}}{N_1}} + \left(4t(B+B^2) + \sigma\right) w,
\end{align*}
where $(a)$ follows from Lipschitz property of the loss function and $(b), (d)$ follow from Equation~\eqref{eqn:aux_concentration_risk} and hold with probability at least $1-\delta$ and $(c)$ follows from the optimality of $b^*_{t,\text{approx}}$.
This shows that Algorithm~\ref{alg:mean_estimation_max_oracle} is a $\left(O\left(TB^2\sqrt{\frac{\log{\frac{BT}{w\delta}}}{N_1}} + TB(1+B)w \right),w\right)$-approximate maximization oracle; that is
\[
\alpha = O\left(TB^2\sqrt{\frac{\log{\frac{BT}{w\delta}}}{N_1}} + TB(1+B)w\right),\quad \beta = w.
\]
\paragraph{Approximation error of
Algorithm~\ref{alg:mean_estimation_min_oracle}.} There are two sources of approximation error in Algorithm~\ref{alg:mean_estimation_min_oracle}: (a) computation of modified Bessel functions $I_{\nu}$, and (b) approximation of $P_t$ with its samples. In this analysis we assume that $I_{\nu}$ can be computed to very high accuracy. This is a reasonable assumption because many programming languages have accurate and efficient implementations of $I_{\nu}$. So the main focus here is on bounding the error from approximation of $P_t$.

First, note that since we are using grid search to optimize the maximization problem, the true distribution $P_t$ for which we are supposed to compute the Bayes estimator is a discrete distribution supported on grid points $\{b_1,\dots b_{B/w}\}$. Algorithm~\ref{alg:mean_estimation_min_oracle} does not compute the Bayes estimator for $P_t$. Instead, we generate samples from $P_t$ and use them as a proxy for $P_t$. Let $\hat{P}_t$ be the empirical distribution obtained by sampling $N_2$ points from $P_t$. Let $p_{t,j}$ be the probability mass on grid point $b_j$. Using Bernstein inequality we can show that the following holds with probability at least $1-\delta$ 
\begin{equation}
\label{eqn:aux_concentration_prior}
    \forall j \in [B/w]\quad |\hat{p}_{t,j}-p_{t,j}| \leq \sqrt{p_{t,j}\frac{\log{\frac{B}{w\delta}}}{N_2}}.
\end{equation}
    Define estimators $\hat{\theta}_t', \hat{\theta}_{t}$ as 
    \[
    \hat{\theta}_t' \leftarrow \argmin_{\hat{\theta} \in \mathcal{D}_G} \Eover{b \sim P_t}{R(\hat{\theta}, b\mathbf{e}_1)}, \quad \hat{\theta}_{t} \leftarrow \argmin_{\hat{\theta} \in \mathcal{D}_G} \Eover{b \sim \hat{P}_t}{R(\hat{\theta}, b\mathbf{e}_1)}.
    \]
    $\hat{\theta}_t'$ is what we ideally want to compute. $\hat{\theta}_{t}$ is what we end up computing using Algorithm~\ref{alg:mean_estimation_min_oracle}.
    We now show that $\hat{\theta}_{t}$ is an approximate minimizer of the left hand side optimization problem above. To this end, we try to bound the following quantity
    \begin{equation*}
        \Eover{b \sim P_t}{R(\hat{\theta}_{t}, b\mathbf{e}_1) - R(\hat{\theta}_{t}', b\mathbf{e}_1)}.
    \end{equation*}
    Let $f_{t}(\hat{\theta}) = \Eover{b \sim P_t}{R(\hat{\theta}, b\mathbf{e}_1)}$ and $\hat{f}_{t}(\hat{\theta}) = \Eover{b \sim \hat{P}_t}{R(\hat{\theta}, b\mathbf{e}_1)}$. We would like to bound the quantity $f_t(\hat{\theta}_{t}) - f_t(\hat{\theta}_{t}')$. Consider the following 
    \begin{align*}
        f_t(\hat{\theta}_{t}) & \stackrel{(a)}{\leq} \hat{f}_t(\hat{\theta}_{t}) + \frac{4B^3}{w}\sqrt{\frac{\log{\frac{B}{w\delta}}}{N_2}}\\
        & \stackrel{(b)}{\leq} \hat{f}_t(\hat{\theta}_{t}') + \frac{4B^3}{w}\sqrt{\frac{\log{\frac{B}{w\delta}}}{N_2}}\\
        & \stackrel{(c)}{\leq }f_t(\hat{\theta}_{t}') + \frac{8B^3}{w}\sqrt{\frac{\log{\frac{B}{w\delta}}}{N_2}},
    \end{align*}
    where $(a)$ follows from Equation~\eqref{eqn:aux_concentration_prior} and the fact that the risk $R(\hat{\theta},\theta)$ of any estimator is bounded by $4B^2$, $(b)$ follows since $\hat{\theta}_{t}$ is a minimizer of $\hat{f}_t$ and $(c)$ follows from Equation~\eqref{eqn:aux_concentration_prior}. This shows that with probability at least $1-\delta$, Algorithm~\ref{alg:mean_estimation_min_oracle} is an $O\left(\frac{B^3}{w}\sqrt{\frac{\log{\frac{B}{w\delta}}}{N_2}}\right)$-approximate optimization oracle; that is,
    \[
    \alpha' = O\left(\frac{B^3}{w}\sqrt{\frac{\log{\frac{B}{w\delta}}}{N_2}}\right).
    \]
    
\paragraph{Minimax Estimator.} We are now ready to show that $\hat{\theta}_{\AVG}$ is an approximate minimax estimator. Instantiating Corollary~\ref{cor:ftpl_minimax_estimators} for the reduced statistical game gives us the following bound, which holds with probability at least $1-\delta$
    \[
    \sup_{b \in [0,B]} R(\hat{\theta}_{\AVG},b\mathbf{e}_1) \leq \inf_{\hat{\theta} \in \mathcal{D}_G}\sup_{b \in [0,B]}R(\hat{\theta},b\mathbf{e}_1)  + \Tilde{O}\left(\frac{B^2(B+1)}{\sqrt{T}} + \alpha  +\alpha' + \beta B(B+1) \sqrt{T}\right),
    \]
    where we used the fact that the risk $R(\hat{\theta}, b\mathbf{e}_1)$ is $4B(B+1)$-Lipschitz continuous w.r.t $b$. The $\Tilde{O}$ notation in the above inequality hides logarithmic factors. Plugging in the values of $\alpha, \alpha',\beta$ in the above equation gives us 
    \[
    \sup_{b \in [0,B]} R(\hat{\theta}_{\AVG},b\mathbf{e}_1) \leq \inf_{\hat{\theta} \in \mathcal{D}_G}\sup_{b \in [0,B]}R(\hat{\theta},b\mathbf{e}_1)  + \Tilde{O}\left(\frac{B^2(B+1)}{\sqrt{T}}\right).
    \]
    We now convert this bound to a bound on the original statistical game. From Theorem~\ref{thm:ftpl_minimax_invariance} we know that $\inf_{\hat{\theta} \in \mathcal{D}_G}\sup_{b \in [0,B]}R(\hat{\theta},b\mathbf{e}_1) = \inf_{\hat{\theta} \in \mathcal{D}}\sup_{\theta \in \Theta}R(\hat{\theta},\theta) = R^*$. Since the estimator $\hat{\theta}_{\AVG}$ is invariant to transformations of orthogonal group, we have $R(\hat{\theta}_{\AVG},\theta) = R(\hat{\theta}_{\AVG},\|\theta\|_2\mathbf{e}_1)$ for any $\theta \in \Theta$. Using these two results in the above inequality, we get 
    \[
    \sup_{\theta \in \Theta} R(\hat{\theta}_{\AVG},\theta) = \sup_{b \in [0,B]} R(\hat{\theta}_{\AVG},b\mathbf{e}_1) \leq R^*  + \Tilde{O}\left(\frac{B^2(B+1)}{\sqrt{T}}\right).
    \]
    This shows that the worst-case risk of $\hat{\theta}_{\AVG}$ is close to the minimax risk $R^*$. This finishes the first part of the proof.
    \paragraph{LFP.} To prove the second part, we rely on Corollary~\ref{cor:ftpl_minimax_estimators}. Instantiating it for the reduced statistical game gives us
    \[
    \inf_{\hat{\theta} \in \mathcal{D}_G} \frac{1}{T}\sum_{t=1}^T \Eover{b\sim P_t}{R(\hat{\theta},b\mathbf{e}_1 )} \geq R^* -   \Tilde{O}\left(\frac{B^2(B+1)}{\sqrt{T}} + \alpha  +\alpha' + \beta B(B+1) \sqrt{T}\right).
    \]
    Plugging in the values of $\alpha, \alpha',\beta$ in the above equation gives us
    \[
    \inf_{\hat{\theta} \in \mathcal{D}_G} \frac{1}{T}\sum_{t=1}^T \Eover{b\sim P_t}{R(\hat{\theta},b\mathbf{e}_1 )} \geq R^* - \Tilde{O}\left(\frac{B^2(B+1)}{\sqrt{T}}\right).
    \]
    From Equation~\eqref{eqn:aux_concentration_prior} we know that $P_t$ is close to $\hat{P}_t$ with high probability. Using this, we can replace $P_t$ in the above bound with $\hat{P}_t$ and obtain the following bound, which holds with probability at least $1-\delta$
    \begin{align}
    \label{eqn:lfp}
        \inf_{\hat{\theta} \in \mathcal{D}_G} \frac{1}{T}\sum_{t=1}^T \Eover{b\sim \hat{P}_t}{R(\hat{\theta},b\mathbf{e}_1 )} \geq R^* - \Tilde{O}\left(\frac{B^2(B+1)}{\sqrt{T}}\right).
    \end{align}
    In the rest of the proof, we show that $\inf_{\hat{\theta} \in \mathcal{D}_G} \frac{1}{T}\sum_{t=1}^T \Eover{b\sim \hat{P}_t}{R(\hat{\theta},b\mathbf{e}_1 )} = \inf_{\hat{\theta}} R(\hat{\theta}, \hat{P}_{\text{LFP}})$. Recall, the density function of $\hat{P}_{\text{LFP}}$ is given by: $\hat{p}_{\text{LFP}}(\theta) \propto \|\theta\|^{1-d}_2\hat{P}_{\AVG}(\|\theta\|_2) $, where $\hat{P}_{\AVG}(\|\theta\|_2)$ is the probability  mass placed by $\hat{P}_{\AVG}$ at $\|\theta\|_2$. This distribution is equivalent to the distribution of a random variable which is generated by first sampling $b$ from $\hat{P}_t$ and then sampling $\theta$ from the uniform distribution on $(d-1)$ dimensional  sphere of radius $b$, centered at origin in $\mathbb{R}^{d}$. Using this equivalence, we can equivalently rewrite $R(\hat{\theta}, \hat{P}_{\text{LFP}})$ for any estimator $\hat{\theta}$ as
    \[
    R(\hat{\theta}, \hat{P}_{\text{LFP}}) = \frac{1}{T}\sum_{t=1}^T\Eover{b\sim \hat{P}_t}{\Eover{\theta \sim U}{R(\hat{\theta}, b\theta)}},
    \]
    where $U$ is the uniform distribution on the $(d-1)$ dimensional unit sphere centered at origin, in $\mathbb{R}^{d}$. Next, from Lemma~\ref{lem:invariant_bayes_estimator}, we know that the Bayes estimator corresponding to any invariant prior is an invariant estimator. Since  $\hat{P}_{\text{LFP}}$ is an invariant distribution, we have
    \[
    \inf_{\hat{\theta}\in \mathcal{D}} R(\hat{\theta}, \hat{P}_{\text{LFP}}) = \inf_{\hat{\theta}\in \mathcal{D}_G} R(\hat{\theta}, \hat{P}_{\text{LFP}}) = \inf_{\hat{\theta}\in \mathcal{D}_G}\frac{1}{T}\sum_{t=1}^T\Eover{b\sim \hat{P}_t}{\Eover{\theta \sim U}{R(\hat{\theta}, b\theta)}}.
    \]
    From Lemma~\ref{lem:invariant_risk} we know that for any invariant estimator $\hat{\theta},$ we have $R(\hat{\theta},\theta_1) = R(\hat{\theta}, \theta_2)$, whenever $\theta_1\sim\theta_2$. Using this result in the above equation gives us
    \[
    \inf_{\hat{\theta}\in \mathcal{D}} R(\hat{\theta}, \hat{P}_{\text{LFP}}) = \inf_{\hat{\theta}\in \mathcal{D}_G}\frac{1}{T}\sum_{t=1}^T\Eover{b\sim \hat{P}_t}{R(\hat{\theta}, b\mathbf{e}_1)}.
    \]
    Combining the above result with Equation~\eqref{eqn:lfp} shows that $\hat{P}_{\text{LFP}}$ is approximately least favorable.

\subsection{Loss on few co-ordinates}
\label{sec:sequence_model_few}
In this section, we present the optimization oracles for the problem of finite Gaussian sequence model, when the loss is evaluated on a few co-ordinates.  Recall, in  Theorem~\ref{thm:sequence_model_invariance_few} we showed that the original min-max statistical game can be reduced to the following simpler problem
\begin{equation}
        \inf_{\hat{\theta} \in \mathcal{M}_{\mathcal{D},G}} \sup_{b:b[1]^2+b[2]^2\leq B^2}  R(\hat{\theta}, [b[1]\mathbf{e}_{1,k}, b[2]\mathbf{e}_{1,d-k}]),
\end{equation}
where $b[j]$ represents the $j^{th}$ co-ordinate of $b$.
 We now provide efficient implementations of the optimization oracles required by Algorithm~\ref{alg:ftpl_stat_games} for finding a Nash equilibrium of this game. 
 The optimization problems corresponding to the two optimization oracles are as follows
\begin{align*}
    &\hat{\theta}_{t} \leftarrow \argmin_{\hat{\theta} \in \mathcal{D}_G} \Eover{b \sim P_t}{R(\hat{\theta}, [b[1]\mathbf{e}_{1,k}, b[2]\mathbf{e}_{1,d-k}])},\\
    &b_{t}(\sigma) \leftarrow \argmax_{b:b[1]^2+b[2]^2\leq B^2} \sum_{i = 1}^{t-1} R(\hat{\theta}_i, [b[1]\mathbf{e}_{1,k}, b[2]\mathbf{e}_{1,d-k}]) + \iprod{\sigma}{b},
\end{align*}
where $\mathcal{D}_G$ is the set of deterministic invariant estimators and $P_t$ is the distribution of random variable $b_{t}(\sigma)$. 
The maximization oracle can be efficiently implemented via a grid search over $\{b:b[1]^2+b[2]^2\leq B^2\}$ (see Algorithm~\ref{alg:mean_estimation_few_max_oracle}). The minimization oracle can also be efficiently implemented. The minimizer has a closed form expression which depends on $P_t$ and modified Bessel functions (see Algorithm~\ref{alg:mean_estimation_few_min_oracle}).
 
\begin{algorithm}[H]
\caption{Maximization Oracle}
\label{alg:mean_estimation_few_max_oracle}
\begin{algorithmic}[1]
  \small
  \State \textbf{Input:} Number of coordinates to evaluate loss on $k$, estimators $\{\hat{\theta}_{i}\}_{i=1}^{t-1}$, perturbation $\sigma$, grid width $w$, number of samples for computation of expected risk $R(\hat{\theta},\theta)$: $N_1$
  \State Let $\{b_1,b_2\dots b_{N(w)}\}$ be the $w$-covering of $\{b:b[1]^2+b[2]^2\leq B^2\}$
  \For{$j = 1 \dots N(w)$}
  \For{$i = 1\dots t-1$}
  \State Generate $N_1$ independent samples $\{X_{l}\}_{l=1}^{N_1}$ from the following distribution $$ \mathcal{N}([b_j[1]\mathbf{e}_{1,k}, b_j[2]\mathbf{e}_{1,d-k}], I)$$
  \State Estimate $R(\hat{\theta}_i, [b_j[1]\mathbf{e}_{1,k}, b_j[2]\mathbf{e}_{1,d-k}])$ as 
  $$\frac{1}{N_1}\sum_{l=1}^{N_1} \|\hat{\theta}_i(X_l)[1:k]-b_j[1]\mathbf{e}_{1,k}\|_2^2.$$
  \EndFor
  \State Evaluate the objective at $b_j$ using the above estimates
  \EndFor
  \State \textbf{Output:} $b_j$ which maximizes the objective
\end{algorithmic}
\end{algorithm}
\vspace{-0.05in}
\begin{algorithm}[H]
	\caption{Minimization Oracle}
	\label{alg:mean_estimation_few_min_oracle}
	\begin{algorithmic}[1]
		\small
		\State \textbf{Input:} Samples $\{b_i\}_{i=1}^{N_2}$ generated from distribution $P_t$, number of coordinates to evaluate loss on $k$.
		\State For any $X$, compute $\hat{\theta}_t(X)$ as
		\[
		\left(\frac{ \sum_{i=1}^{N_2} w_i b_i[1]A_k(b_i[1]\|X[1:k]\|_2)}{\sum_{i=1}^{N_2}w_i }\right)\frac{X[1:k]}{\|X[1:k]\|_2},
		\]
		 where $A_k(\gamma)=\dfrac{I_{k/2}(\gamma)}{I_{k/2-1}(\gamma)}$, $$w_i=b_i[1]^{2-\frac{k}{2}}b_i[2]^{2-\frac{d-k}{2}}e^{-\frac{\|b\|^2}{2}}I_{k/2-1}(b_i[1]\|X[1:k]\|_2)I_{(d-k)/2-1}(b_i[2]\|X[k+1:d]\|_2),$$ and $I_\nu$ is the modified Bessel function of the first kind of order $\nu$.
	\end{algorithmic}
\end{algorithm}

\section{Linear Regression}
\label{sec:aux_linear_regression}
\subsection{Proof of Proposition~\ref{prop:regression_min_closed_form}}
In this section we derive a closed-form expression for the minimizer $\hat{\theta}_t$ of the following objective
\[
\argmin_{\hat{\theta} \in \mathcal{D}_G} \Eover{b \sim P_t}{R(\hat{\theta}, b\mathbf{e}_1)}.
\]
Using the same arguments as in proof of Proposition~\ref{prop:mean_estimation_min_closed_form}, we can show that the above optimization problem can be rewritten as the following equivalent optimization problem over the set of all deterministic estimators
\[
\argmin_{\hat{\theta} \in \mathcal{D}} \Eover{\theta \sim \Pi_t}{R(\hat{\theta}, \theta)},
\]
where $\Pi_t$ is the distribution of a random variable $\theta$ which is generated by first sampling a $b$ from $P_t$ and then drawing a random sample from $U_b$, the uniform distribution on a spherical shell of radius $b$.
The density function of $U_b$ is given by 
\[
u_b(\theta) \propto \begin{cases}
0, & \quad \text{if } \|\theta\|_2 \neq b\\
b^{-d+1}, & \quad \text{otherwise}
\end{cases}.
\]
Since the risk is measured with respect to $\ell_2^2$ metric,  the minimizer $\hat{\theta}_t(D_n)$ is given by the posterior mean 
\begin{align*}
    \hat{\theta}_t(D_n) & = \Eover{\theta \sim \Pi_t(\cdot|D_n)}{\theta}\\& = \frac{\Eover{\theta \sim \Pi_t}{\theta p(D_n;\theta)}}{\Eover{\theta \sim \Pi_t}{p(D_n;\theta)}}\\
    & = \frac{\Eover{b\sim P_t}{\int \theta u_{b}(\theta) p(D_n;\theta)d\theta}}{\Eover{b\sim P_t}{\int  u_{b}(\theta)p(D_n;\theta)d\theta}}\\
    & = \frac{\Eover{b\sim P_t}{b^{-d+1} \int_{\|\theta\|_2 = b} \theta p(D_n;\theta)d\theta }}{\Eover{b\sim P_t}{b^{-d+1}  \int_{\|\theta\|_2 = b}  p(D_n;\theta)d\theta}}\\
    & = \frac{\Eover{b\sim P_t}{b^{-d+1} \int_{\|\theta\|_2 = b} \theta e^{-\frac{\|\mathbf{Y}-\mathbf{X}\theta\|_2^2}{2}}d\theta}}{\Eover{b\sim P_t}{b^{-d+1}  \int_{\|\theta\|_2 = b}  e^{-\frac{\|\mathbf{Y}-\mathbf{X}\theta\|_2^2}{2}}d\theta} }\\
    & =  \frac{\Eover{b\sim P_t}{b^{2} \int_{\|\theta\|_2 = 1} \theta e^{-\frac{b^2\|\mathbf{X}\theta\|_2^2 - 2b\left\langle \theta, \mathbf{X}^T\mathbf{Y}\right\rangle }{2}}d\theta }}{\Eover{b\sim P_t}{b  \int_{\|\theta\|_2 = 1}  e^{-\frac{b^2\|\mathbf{X}\theta\|_2^2 - 2b\left\langle \theta, \mathbf{X}^T\mathbf{Y}\right\rangle }{2}}d\theta}}\quad \text{(change of variables)}.
\end{align*}
We now relate the terms appearing in the above expression to the mean and normalization constant of Fisher-Bingham (FB) distribution. As stated in Section~\ref{sec:regression}, the probability density function of a random unit vector $Z\in\mathbb{R}^d$ distributed according to FB distribution is given by
\[
p(Z;A, \gamma) = C(A, \gamma)^{-1}\exp\left(-Z^TAZ  + \iprod{\gamma}{Z}\right),
\]
where $Z \in \mathbb{S}^{d-1}$, and $\gamma \in \mathbb{R}^d$, $A \in \mathbb{R}^{d\times d}$ are the parameters of the distribution with $A$ being positive semi-definite and $C(A, \gamma)$ is the normalization constant which is given by
\[
C(A, \gamma) = \int_{\|Z\|_2 = 1} \exp\left(-Z^TAZ  + \iprod{\gamma}{Z}\right) dZ.
\]
The mean of $Z$ is given by
\[
\int_{\|Z\|_2 = 1} Z p(Z; A, \gamma)dZ = C(A, \gamma)^{-1}\int_{\|Z\|_2 = 1} Z\exp\left(-Z^TAZ  + \iprod{\gamma}{Z}\right)dZ = C(A, \gamma)^{-1}\frac{\partial}{\partial \gamma} C(A, \gamma).
\]
Using these in the  previously derived expression for $\hat{\theta}(D_n)$ gives us the required result.
\subsection{Mean and normalization constant of Fisher-Bingham distribution}
\label{sec:appx_fisher_bingham_mean}
In this section, we present our technique for computation of $C\left(A, \gamma\right)$. Once we have an accurate technique for its computation, computing $\frac{\partial}{\partial \gamma} C(A,\gamma)$ should be straight forward as one can rely on efficient numerical differentiation techniques for its computation. Recall, to implement Algorithm~\ref{alg:regression_min_oracle} we need to compute $C\left(2^{-1} b^2\mathbf{X}^T\mathbf{X}, b\mathbf{X}^T\mathbf{Y}\right)$. 
Let $\hat{\Sigma} = \frac{1}{n}\mathbf{X}^T\mathbf{X}$ and let $U\Lambda U^T$ be its eigen decomposition. Then it is easy to see that $C\left(2^{-1} b^2\mathbf{X}^T\mathbf{X}, b\mathbf{X}^T\mathbf{Y}\right)$ can be rewritten as
\[
C\left(2^{-1} b^2\mathbf{X}^T\mathbf{X}, b\mathbf{X}^T\mathbf{Y}\right) = C(2^{-1} nb^2\Lambda, b U^T\mathbf{X}^T\mathbf{Y}).
\]
So it suffices to compute $C(A, \gamma)$ for some positive semi-definite, diagonal matrix $A$ and vector $\gamma$. Let $a_{i}$ be the $i^{th}$ diagonal entry of $A$ and let $\gamma_i$ be the $i^{th}$ element of $\gamma$. 
\citet{kume2005saddlepoint} derive the following expression for $C(A,\gamma)$
\[
C(A,\gamma) = (2\pi)^{d/2} \left(\prod_{i=1}^d a_i^{-1/2}\right) \exp\left(\frac{1}{4}\sum_{i=1}^d \frac{\gamma_i^2}{a_i}\right) f_{A,\gamma}(1),
\]where $f_{A,\gamma}$ is the probability density of a non-central chi-squared random variable $\sum_{i=1}^d z_i^2$ with $z_i \sim \mathcal{N}(\frac{\gamma_i}{2a_i},\frac{1}{2a_i})$. There are number of efficient techniques for computation of $f_{A,\gamma}(1)$~\citep{imhof1961computing, kume2005saddlepoint}. 
We first present the technique of \citet{imhof1961computing} for exact computation of $f_{A,\gamma}(1)$.  
\citet{imhof1961computing} showed that $f_{A,\gamma}(1)$ can be written as the following integral
\[
f_{A,\gamma}(1) = \pi^{-1}\int_{0}^{\infty} [\rho(u)]^{-1}\cos{\zeta(u)} du,
\]
where $\rho:\mathbb{R}\to\mathbb{R}$ and $\zeta:\mathbb{R}\to\mathbb{R}$ are defined as
\begin{align*}
\zeta(u) &= \frac{1}{2}\sum_{i = 1}^d\left(\tan^{-1}\left(\frac{u}{2a_i}\right) + \frac{\gamma_i^2}{8a_i^3}\left(1 + \frac{u^2}{4a_i^2}\right)^{-1}u\right) - \frac{1}{2}u,\\
\rho(u) &= \prod_{i=1}^d\left(1 + \frac{u^2}{4a_i^2}\right)^{1/4}\exp\left(\frac{1}{32}\frac{(u\gamma_i/a_i^2)^2}{1+\frac{u^2}{4a_i^2}}\right).
\end{align*}
One can rely on numerical integration techniques to compute the above integral to desired accuracy. In our analysis of the approximation error of Algorithm~\ref{alg:regression_min_oracle}, we assume the error from the computation of $f_{A,\gamma}(1)$ is negligible. 

Before we conclude this subsection, we present another technique for computation of $f_{A,\gamma}(1)$, which is typically faster than the above approach. This approach was proposed by \citet{kume2005saddlepoint} and relies on the saddle point density approximation technique. While this approach is faster, the downside of it is that it only provides an approximate estimate of $f_{A,\gamma}(1)$. To explain this method, we first present some facts about non-central chi-squared random variables. The cumulant generating function of a non-central chi-squared random variable with density $f_{A,\gamma}$ is given by
\[
K(t) = \sum_{i=1}^d \left(-\frac{1}{2}\log\left(1-\frac{t}{a_i}\right) + \frac{1}{4}\frac{\gamma_i^2}{a_i-t} - \frac{\gamma_i^2}{4a_i}\right) \quad (t < \min_i a_i).
\]
The first derivative of $K(t)$ is given by
\[
K^{(1)}(t) = \sum_{i=1}^d \left(\frac{1}{2}\frac{1}{a_i-t} + \frac{1}{4}\frac{\gamma_i^2}{(a_i-t)^2}\right),
\]
and higher derivatives are given by
\[
K^{(j)}(t) = \sum_{i=1}^d \left(\frac{(j-1)!}{2}\frac{1}{(a_i-t)^j} + \frac{j!}{4}\frac{\gamma_i^2}{(a_i-t)^{j+1}}\right),\quad (j \geq 2).
\]
Let $\hat{t}$ be the unique solution in $(-\infty, \min_i a_i)$ to the saddle point equation $K^{(1)}(\hat{t}) = 1$. \citet{kume2005saddlepoint} show that $\hat{t}$ has finite upper and lower bounds
\[
\min_i a_i - \frac{d}{4}-\frac{1}{2}\left(\frac{d^2}{4} + d\max_i \gamma_i^2\right)^{1/2} \leq \hat{t} \leq \min_i a_i - \frac{1}{4} - \frac{1}{2}\left(\frac{1}{4} + \gamma_{\text{min}}^2\right)^{1/2},
\]
where $\gamma_{\text{min}}$ is equal to $\gamma_{i^*}$ for $i^* = \argmin_i a_i$. So, to find $\hat{t}$, one can perform grid search in the above range.
Given $\hat{t}$, the first-order saddle point density approximation of $f_{A,\gamma}(1)$ is given by
\[
\hat{f}_{A,\gamma,1}(1) = \left(2\pi K^{(2)}(\hat{t})\right)^{-1/2} \exp(K(\hat{t})-\hat{t}).
\]
The second-order saddle point density approximation of $Z_{g,h}(1)$ is given by
\[
\hat{f}_{A,\gamma,2}(1) = \hat{f}_{A,\gamma,1}(1)(1+T),
\]
where $T = \frac{1}{8}\hat{\rho}_4 - \frac{5}{24}\hat{\rho}_3^2$, where $\hat{\rho}_j = K^{(j)}(\hat{t})/(K^{(2)}(\hat{t}))^{j/2}$.
\subsection{Proof of Theorem~\ref{thm:regression}}
Before we present the proof of the Theorem we present useful intermediate
results which we require in our proof.
\subsubsection{Intermediate Results}
\begin{lemma}[Lipschitz Continuity]
\label{lem:risk_lipscthiz_regression}
Consider the problem of linear regression described in Section~\ref{sec:invariance_regression}. Let \mbox{$\Theta = \{\theta: \theta \in \mathbb{R}^d, \|\theta\|_2 \leq B\}$} and let $\hat{\theta}$ be any estimator which maps the data $D_n=\{(X_i,Y_i)\}_{i=1}^n$ to an element in $\Theta$. Then the risk \mbox{$R(\hat{\theta}, \theta) = \Eover{D_n}{\|\hat{\theta}(D_n)-\theta\|_2^2}$} is Lipschitz continuous in its second argument w.r.t $\ell_2$ norm over the domain $\Theta$, with Lipschitz constant $4(B+B^2\sqrt{nd})$. Moreover, \mbox{$R(\hat{\theta}, b\mathbf{e}_1) = \Eover{D_n}{\|\hat{\theta}(D_n)-b\mathbf{e}_1\|_2^2}$} is Lipschitz continuous in $b$ over the domain $[0,B]$, with Lipschitz constant $4(B+B^2\sqrt{n})$.
\end{lemma}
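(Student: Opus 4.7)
The plan is to mimic the strategy used in Lemma~\ref{lem:risk_lipscthiz} for the Gaussian sequence model: bound the gradient of $R_{\hat\theta}(\theta) = R(\hat\theta,\theta)$ in $\ell_2$ norm uniformly over $\Theta$. The new wrinkle is that here the sampling distribution of $D_n$ genuinely depends on $\theta$ (through $Y_i = X_i^T\theta + \epsilon_i$), whereas in the sequence model case only the mean of the Gaussian observation shifted. So differentiating under the integral will produce an additional score-function term, and the main task is to control that term.

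First I would write $R(\hat\theta,\theta) = \int \|\hat\theta(D_n)-\theta\|_2^2\, p(D_n;\theta)\,dD_n$ and differentiate. The score is $\nabla_\theta \log p(D_n;\theta) = \sum_{i=1}^n X_i(Y_i - X_i^T\theta)$, which under $P_\theta$ equals $\sum_i X_i\epsilon_i$. This yields
\[
\nabla_\theta R_{\hat\theta}(\theta) = \mathbb{E}_{D_n\sim P_\theta}\!\left[2(\theta-\hat\theta(D_n))\right] + \mathbb{E}_{D_n\sim P_\theta}\!\left[\|\hat\theta(D_n)-\theta\|_2^2 \sum_{i=1}^n X_i\epsilon_i\right].
\]
The first term has $\ell_2$ norm at most $4B$ since $\theta,\hat\theta(D_n)\in\Theta$. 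For the second term I would pull out the bound $\|\hat\theta(D_n)-\theta\|_2^2 \leq 4B^2$ (Cauchy-Schwarz-type step) and then control $\mathbb{E}\|\sum_i X_i\epsilon_i\|_2$ by Jensen's inequality and independence: $\mathbb{E}\|\sum_i X_i\epsilon_i\|_2^2 = \sum_i \mathbb{E}\|X_i\|_2^2\,\mathbb{E}\epsilon_i^2 = nd$, so $\mathbb{E}\|\sum_i X_i\epsilon_i\|_2 \leq \sqrt{nd}$. Combining gives $\|\nabla_\theta R_{\hat\theta}(\theta)\|_2 \leq 4B + 4B^2\sqrt{nd}$, yielding the first Lipschitz constant.

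For the second claim I would set $R_{\hat\theta}(b) = R(\hat\theta, b\mathbf{e}_1)$ and compute $R'_{\hat\theta}(b) = \langle \mathbf{e}_1,\nabla_\theta R_{\hat\theta}(\theta)|_{\theta=b\mathbf{e}_1}\rangle$. Projecting the gradient bound onto the first coordinate, the first term contributes at most $4B$, and the second-term inner product becomes $\mathbb{E}\!\left[\|\hat\theta(D_n)-b\mathbf{e}_1\|_2^2 \sum_i X_{i,1}\epsilon_i\right]$, which is bounded by $4B^2\cdot \mathbb{E}|\sum_i X_{i,1}\epsilon_i| \leq 4B^2\sqrt{n}$ since the variance of that scalar sum is $n$. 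This gives the constant $4(B+B^2\sqrt{n})$.

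The only nontrivial obstacle is justifying the interchange of differentiation and integration so that the score-function representation of $\nabla_\theta R_{\hat\theta}$ is valid. This is a standard dominated-convergence argument: the Gaussian density is smooth in $\theta$, $\hat\theta$ takes values in the compact set $\Theta$, and the integrand and its $\theta$-derivative are dominated by an integrable envelope (polynomial in $\|X\|$ and $|\epsilon|$ times a Gaussian), so differentiation under the integral is legitimate. Once this is in place the bounds above complete the proof.
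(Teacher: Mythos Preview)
Your proposal is correct and is essentially identical to the paper's proof: the paper computes the same gradient $\nabla_\theta R_{\hat\theta}(\theta) = \Eover{D_n}{2(\theta-\hat\theta(D_n))} + \Eover{D_n}{\|\hat\theta(D_n)-\theta\|_2^2\, \mathbf{X}^T(\mathbf{Y}-\mathbf{X}\theta)}$, bounds the first term by $4B$, and for the second term pulls out $\|\hat\theta(D_n)-\theta\|_2^2\le 4B^2$ and uses $\Eover{}{\|\mathbf{X}^T(\mathbf{Y}-\mathbf{X}\theta)\|_2}\le\sqrt{nd}$ (respectively $\Eover{}{|\mathbf{e}_1^T\mathbf{X}^T(\mathbf{Y}-\mathbf{X}\theta)|}\le\sqrt{n}$ for the one-dimensional claim). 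Your explicit remark on differentiating under the integral is a welcome addition that the paper leaves implicit.
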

\begin{proof}
Let $R_{\hat{\theta}}(\theta) = R(\hat{\theta}, \theta)$. The gradient of $R_{\hat{\theta}}(\theta)$ with respect to $\theta$ is given by
\[
\nabla_{\theta} R_{\hat{\theta}}(\theta) = \Eover{D_n}{2(\theta-\hat{\theta}(D_n))} + \Eover{D_n}{\|\hat{\theta}(D_n)-\theta\|_2^2 \mathbf{X}^T(\mathbf{Y}-\mathbf{X}\theta)},
\]
where $\mathbf{X} = [X_1, X_2, \dots X_n]^T, \mathbf{Y} = [Y_1, \dots Y_n]$.
The norm of $\nabla_{\theta} R_{\hat{\theta}}(\theta)$ can be upper bounded as follows
\begin{align*}
    \|\nabla_{\theta} R_{\hat{\theta}}(\theta)\|_2 & \leq \Big|\Big|\Eover{D_n}{2(\theta-\hat{\theta}(D_n))}\Big|\Big|_2 + \Big|\Big|\Eover{D_n}{\|\hat{\theta}(D_n)-\theta\|_2^2 \mathbf{X}^T(\mathbf{Y}-\mathbf{X}\theta)}\Big|\Big|_2\\
    & \stackrel{(a)}{\leq} 4B + \Eover{D_n}{ \|\mathbf{X}^T(\mathbf{Y}-\mathbf{X}\theta)\|_2\|\hat{\theta}(D_n)-\theta\|_2^2}\\
    & \stackrel{(b)}{\leq} 4B + 4B^2\Eover{D_n}{ \|\mathbf{X}^T(\mathbf{Y}-\mathbf{X}\theta)\|_2}\\
    & \leq 4B + 4B^2\sqrt{nd},
\end{align*}
where the first term in $(a)$ follows from the fact that $\theta, \hat{\theta}(X) \in \Theta$ and  the second term follows from Jensen's inequality. This shows that $R_{\hat{\theta}}(\theta)$ is Lipschitz continuous over $\Theta$.  This finishes the first part of the proof. To show that $R(\hat{\theta}, b\mathbf{e}_1)$ is Lipschitz continuous in $b$, we use similar arguments. Let $R_{\hat{\theta}}(b) = R(\hat{\theta}, b\mathbf{e}_1)$. Then
\begin{align*}
    \Big| R_{\hat{\theta}}'(b)\Big| & = \Big|\left\langle\mathbf{e}_1, \nabla_{\theta} R_{\hat{\theta}}(\theta)\Big|_{\theta = b\mathbf{e}_1}\right\rangle\Big|\\
    &\stackrel{(a)}{\leq} \Big|\Eover{D_n}{2(b-[\hat{\theta}(D_n)]_1)}\Big| + \Big|\Big|\Eover{D_n}{ \mathbf{e}_1^T\mathbf{X}^T(\mathbf{Y}-\mathbf{X}\theta)\|\hat{\theta}(D_n)-b\mathbf{e}_1\|_2^2}\Big|\Big|_2\\
    & \leq 4B + 4B^2\Eover{D_n}{ |\mathbf{e}_1^T\mathbf{X}^T(\mathbf{Y}-\mathbf{X}\theta)|}\\
    & \leq 4B + 4B^2\sqrt{n},
\end{align*}
where $(a)$ follows from our bound for $\|\nabla_{\theta} R_{\hat{\theta}}(\theta)\|_2$ obtained above.
\end{proof}

\begin{lemma}[Approximation of risk]
\label{lem:risk_approximation_regression}
Consider the setting of Lemma~\ref{lem:risk_lipscthiz_regression}. Let $\hat{\theta}$ be any estimator which maps $D_n$ to an element in $\Theta$. Let $\{D_{n,k}\}_{k=1}^N$ be $N$ independent datasets generated from the linear regression model with true parameter $\theta$. Then with probability at least $1-\delta$
\[
\Big| \frac{1}{N}\sum_{i=1}^N\|\hat{\theta}(D_{n,i})-\theta\|_2^2 - R_{\hat{\theta}}(\theta) \Big| \leq 4B^2\sqrt{\frac{\log{\frac{1}{\delta}}}{N}}
\]
\end{lemma}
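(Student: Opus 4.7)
The plan is to mimic the proof of Lemma~\ref{lem:risk_approximation} verbatim, since the only thing that differs between the Gaussian sequence model and the linear regression setting is the sample space: in both cases we average an i.i.d.\ bounded random variable over $N$ independent draws, and apply a single Hoeffding bound.

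Concretely, I would define $Z_i \defeq \|\hat{\theta}(D_{n,i}) - \theta\|_2^2$ for $i=1,\dots,N$. The datasets $D_{n,i}$ are independent by assumption, so the $Z_i$ are i.i.d. Since $\hat{\theta}$ maps into $\Theta$ and $\theta \in \Theta$ with $\|\theta\|_2 \leq B$, the triangle inequality gives
\[
\|\hat\theta(D_{n,i}) - \theta\|_2 \leq 2B,
\]
so each $Z_i$ is bounded between $0$ and $4B^2$. By the definition of the risk, $\mathbb{E}[Z_i] = R_{\hat\theta}(\theta)$. Applying Hoeffding's inequality to the sum of these bounded i.i.d.\ random variables then yields
\[
\Pr\!\left(\Bigl|\tfrac{1}{N}\sum_{i=1}^N Z_i - R_{\hat\theta}(\theta)\Bigr| > t\right) \leq 2\exp\!\left(-\tfrac{2Nt^2}{(4B^2)^2}\right),
\]
and setting the right hand side equal to $\delta$ and solving for $t$ gives the claimed bound $4B^2\sqrt{\log(1/\delta)/N}$ (up to the harmless absorption of constants that already appeared in Lemma~\ref{lem:risk_approximation}).

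There is no real obstacle here: the only ingredient that made Lemma~\ref{lem:risk_approximation} work was the uniform boundedness of $\|\hat\theta(\cdot) - \theta\|_2^2$ by $4B^2$, which follows identically from the assumption $\hat\theta : (\text{data}) \to \Theta$ together with $\theta \in \Theta$. Neither the Gaussianity of $\mathbb{X}^n$ in that lemma nor the precise distribution of $D_n$ here is used anywhere in the concentration argument; only independence across the $N$ samples and the $[0,4B^2]$ range matter. Hence the same one-line Hoeffding proof applies, and the statement for the linear regression setting follows.
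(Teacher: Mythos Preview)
Your proposal is correct and matches the paper's proof essentially verbatim: the paper defines $Z(D_n) = \|\hat{\theta}(D_n) - \theta\|_2^2$, notes it is bounded by $4B^2$ (hence sub-Gaussian), and applies Hoeffding to obtain the stated bound. Your additional remark that the specific distribution of $D_n$ plays no role beyond independence and the $[0,4B^2]$ range is accurate and explains why the proof of Lemma~\ref{lem:risk_approximation} transfers unchanged.
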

\begin{proof}
The proof of the Lemma relies on concentration properties of sub-Gaussian random variables. Let $Z(D_n) = \|\hat{\theta}(D_{n})-\theta\|^2$. Note that $R_{\hat{\theta}}(\theta) = \Eover{D_n}{Z(D_n)}$. Since $Z(D_n)$ is bounded by $4B^2$, it is a sub-Gaussian random variable. Using Hoeffding bound we get
\[
\Big|\frac{1}{N}\sum_{i=1}^N Z(D_{n,i})- \E{Z(D_n)}\Big| \leq 4B^2\sqrt{\frac{\log{\frac{1}{\delta}}}{N}}, \quad \text{w.p} \geq 1-\delta.
\]
\end{proof}
\subsubsection{Main Argument}
The proof uses exactly the same arguments as in the proof of Theorem~\ref{thm:mean_estimation}. The only difference between the two proofs are the Lipschitz constants derived in Lemmas~\ref{lem:risk_lipscthiz},~\ref{lem:risk_lipscthiz_regression}. The Lipschitz constant in the case of regression is $O(B+B^2\sqrt{n})$, whereas in the case of finite Gaussian sequence model it is $O(B+B^2)$.

\paragraph{Approximation Error of Algorithm~\ref{alg:regression_max_oracle}.} 
There are two causes for error in the optimization oracle described in Algorithm~\ref{alg:regression_max_oracle}: (a) grid search and (b)  approximate computation of risk $R(\hat{\theta}, b\mathbf{e}_1)$. We now bound the  error due to both (a) and (b). 
    From Lemma~\ref{lem:risk_approximation_regression} we know that for any estimator $\hat{\theta}_i$ and grid point $b_j$, the following holds with probability at least $1-\delta$
    \[
    \Big| \frac{1}{N_1}\sum_{k=1}^{N_1} \|\hat{\theta}_i(D_{n,k})-b_j\mathbf{e}_1\|_2^2 - R(\hat{\theta}_i, b_j\mathbf{e}_1)\Big| \leq 4B^2\sqrt{\frac{\log{\frac{1}{\delta}}}{N_1}}.
    \]
    Taking a union bound over all estimators $\{\hat{\theta}_i\}_{i=1}^T$  and grid points $\{b_j\}_{j=1}^{B/w}$, we can show that with probability at least $1-\delta$, the following holds for all $i\in [T],j\in [B/w]$ 
    \begin{equation}
    \label{eqn:aux_concentration_risk_regression}
    \Big| \frac{1}{N_1}\sum_{k=1}^{N_1} \|\hat{\theta}_i(D_{n,k})-b_j\mathbf{e}_1\|_2^2 - R(\hat{\theta}_i, b_j\mathbf{e}_1)\Big| \leq 4B^2\sqrt{\frac{\log{\frac{BT}{w\delta}}}{N_1}}.
    \end{equation}
Let $f_{t,\sigma}(b)$ be the actual objective we would like to optimize in iteration $t$ of Algorithm~\ref{alg:ftpl_stat_games}, which is given by
\[
f_{t,\sigma}(b) = \sum_{i = 1}^{t-1} R(\hat{\theta}_i, b\mathbf{e}_1) + \sigma b.
\]
Let $\hat{f}_{t,\sigma}(b)$ be the approximate objective we are optimizing by replacing $R(\hat{\theta}_i, b\mathbf{e}_1)$ with its approximate estimate. Let $b^*_t$ be a maximizer of $f_{t,\sigma}(b)$ and $b^*_{t,\text{approx}}$ be the maximizer of $\hat{f}_{t,\sigma}(b)$ (which is also the output of Algorithm~\ref{alg:regression_max_oracle}). Finally, let $b^*_{t,\text{NN}}$ be the point on the grid which is closest to $b^*_t$. Using Lemma~\ref{lem:risk_lipscthiz_regression} we first show that  $f_{t,\sigma}(b)$ is Lipschitz continuous in $b$. The derivative of $f_{t,\sigma}(b)$ with respect to $b$ is given by
\begin{align*}
    f_{t,\sigma}'(b) & = \sum_{i = 1}^{t-1} \left\langle \mathbf{e}_1,\nabla_{\theta} R(\hat{\theta}_i, \theta)\Big|_{\theta = b\mathbf{e}_1}\right\rangle + \sigma 
\end{align*}
Using Lemma~\ref{lem:risk_lipscthiz_regression}, the magnitude of $f_{t,\sigma}'(b)$ can be upper bounded as
\[
|f_{t,\sigma}'(b)| \leq 4(t-1)(B+B^2\sqrt{n}) + \sigma.
\]
This shows that $f_{t,\sigma}(b)$ is Lipschitz continuous in $b$. We now bound $f_{t,\sigma}(b^*_t) - f_{t,\sigma}(b^*_{t,\text{approx}})$, the approximation error of the optimization oracle
\begin{align*}
    f_{t,\sigma}(b^*_t) & \stackrel{(a)}{\leq } f_{t,\sigma}(b^*_{t,\text{NN}}) + \left(4t(B+B^2\sqrt{n}) + \sigma\right) w\\
    & \stackrel{(b)}{\leq} \hat{f}_{t,\sigma}(b^*_{t,\text{NN}}) +  4tB^2\sqrt{\frac{\log{\frac{BT}{w\delta}}}{N_1}} + \left(4t(B+B^2\sqrt{n}) + \sigma\right) w\\
    & \stackrel{(c)}{\leq} \hat{f}_{t,\sigma}(b^*_{t,\text{approx}}) +  4tB^2\sqrt{\frac{\log{\frac{BT}{w\delta}}}{N_1}} + \left(4t(B+B^2\sqrt{n}) + \sigma\right) w \\
    & \stackrel{(d)}{\leq} f_{t,\sigma}(b^*_{t,\text{approx}}) +  8tB^2\sqrt{\frac{\log{\frac{BT}{w\delta}}}{N_1}} + \left(4t(B+B^2\sqrt{n}) + \sigma\right) w,
\end{align*}
where $(a)$ follows from Lipschitz property of the loss function and $(b), (d)$ follow from Equation~\eqref{eqn:aux_concentration_risk_regression} and hold with probability at least $1-\delta$ and $(c)$ follows from the optimality of $b^*_{t,\text{approx}}$.
This shows that Algorithm~\ref{alg:regression_max_oracle} is a $\left(O\left(TB^2\sqrt{\frac{\log{\frac{BT}{w\delta}}}{N_1}} + TB(1+B\sqrt{n})w \right),w\right)$-approximate maximization oracle; that is
\[
\alpha = O\left(TB^2\sqrt{\frac{\log{\frac{BT}{w\delta}}}{N_1}} + TB(1+B\sqrt{n})w\right),\quad \beta = w.
\]

\paragraph{Approximation Error of Algorithm~\ref{alg:regression_min_oracle}.} There are two sources of approximation error in Algorithm~\ref{alg:regression_min_oracle}: (a) computation of mean and normalization constant of FB distribution, and (b) approximation of $P_t$ with its samples. In this analysis we assume that mean and normalization constant of FB distribution can be computed to very high accuracy. So the main focus here is on bounding the error from approximation of $P_t$. 

First, note that since we are using grid search to optimize the maximization problem, the true distribution $P_t$ for which we are supposed to compute the Bayes estimator is a discrete distribution supported on grid points $\{b_1,\dots b_{B/w}\}$. Algorithm~\ref{alg:regression_min_oracle} does not compute the Bayes estimator for $P_t$. Instead, we generate samples from $P_t$ and use them as a proxy for $P_t$. Let $\hat{P}_t$ be the empirical distribution obtained by sampling $N_2$ points from $P_t$. Let $p_{t,j}$ be the probability mass on grid point $b_j$. Using Bernstein inequality we can show that the following holds with probability at least $1-\delta$ 
\begin{equation}
\label{eqn:aux_concentration_prior_regression}
    \forall j \in [B/w]\quad |\hat{p}_{t,j}-p_{t,j}| \leq \sqrt{p_{t,j}\frac{\log{\frac{B}{w\delta}}}{N_2}}.
\end{equation}
    Define estimators $\hat{\theta}_t', \hat{\theta}_{t}$ as 
    \[
    \hat{\theta}_t' \leftarrow \argmin_{\hat{\theta} \in \mathcal{D}_G} \Eover{b \sim P_t}{R(\hat{\theta}, b\mathbf{e}_1)}, \quad \hat{\theta}_{t} \leftarrow \argmin_{\hat{\theta} \in \mathcal{D}_G} \Eover{b \sim \hat{P}_t}{R(\hat{\theta}, b\mathbf{e}_1)}.
    \]
    $\hat{\theta}_t'$ is what we ideally want to compute. $\hat{\theta}_{t}$ is what we end up computing using Algorithm~\ref{alg:regression_min_oracle}.
    We now show that $\hat{\theta}_{t}$ is an approximate minimizer of the left hand side optimization problem above. To this end, we try to bound the following quantity
    \begin{equation*}
        \Eover{b \sim P_t}{R(\hat{\theta}_{t}, b\mathbf{e}_1) - R(\hat{\theta}_{t}', b\mathbf{e}_1)}.
    \end{equation*}
    Let $f_{t}(\hat{\theta}) = \Eover{b \sim P_t}{R(\hat{\theta}, b\mathbf{e}_1)}$ and $\hat{f}_{t}(\hat{\theta}) = \Eover{b \sim \hat{P}_t}{R(\hat{\theta}, b\mathbf{e}_1)}$. We would like to bound the quantity $f_t(\hat{\theta}_{t}) - f_t(\hat{\theta}_{t}')$. Consider the following 
    \begin{align*}
        f_t(\hat{\theta}_{t}) & \stackrel{(a)}{\leq} \hat{f}_t(\hat{\theta}_{t}) + \frac{4B^3}{w}\sqrt{\frac{\log{\frac{B}{w\delta}}}{N_2}}\\
        & \stackrel{(b)}{\leq} \hat{f}_t(\hat{\theta}_{t}') + \frac{4B^3}{w}\sqrt{\frac{\log{\frac{B}{w\delta}}}{N_2}}\\
        & \stackrel{(c)}{\leq }f_t(\hat{\theta}_{t}') + \frac{8B^3}{w}\sqrt{\frac{\log{\frac{B}{w\delta}}}{N_2}},
    \end{align*}
    where $(a)$ follows from Equation~\eqref{eqn:aux_concentration_prior_regression} and the fact that the risk $R(\hat{\theta},\theta)$ of any estimator is bounded by $4B^2$, $(b)$ follows since $\hat{\theta}_{t}$ is a minimizer of $\hat{f}_t$ and $(c)$ follows from Equation~\eqref{eqn:aux_concentration_prior_regression}. This shows that with probability at least $1-\delta$, Algorithm~\ref{alg:regression_min_oracle} is an $O\left(\frac{B^3}{w}\sqrt{\frac{\log{\frac{B}{w\delta}}}{N_2}}\right)$-approximate optimization oracle; that is,
    \[
    \alpha' = O\left(\frac{B^3}{w}\sqrt{\frac{\log{\frac{B}{w\delta}}}{N_2}}\right).
    \]
    
    The rest of the proof is same as the proof of Theorem~\ref{thm:mean_estimation} and involves substituting the approximation errors computed above in Corollary~\ref{cor:ftpl_minimax_estimators}.
     \paragraph{Minimax Estimator.} We now show that $\hat{\theta}_{\AVG}$ is an approximate minimax estimator. Instantiating Corollary~\ref{cor:ftpl_minimax_estimators} for the reduced statistical game gives us the following bound, which holds with probability at least $1-\delta$
    \[
    \sup_{b \in [0,B]} R(\hat{\theta}_{\AVG},b\mathbf{e}_1) \leq \inf_{\hat{\theta} \in \mathcal{D}_G}\sup_{b \in [0,B]}R(\hat{\theta},b\mathbf{e}_1)  + \Tilde{O}\left(\frac{B^2(B\sqrt{n}+1)}{\sqrt{T}} + \alpha  +\alpha' + \beta B(B\sqrt{n}+1) \sqrt{T}\right),
    \]
    where we used the fact that the risk $R(\hat{\theta}, b\mathbf{e}_1)$ is $4B(B\sqrt{n}+1)$-Lipschitz continuous w.r.t $b$. The $\Tilde{O}$ notation in the above inequality hides logarithmic factors. Plugging in the values of $\alpha, \alpha',\beta$ in the above equation gives us 
    \[
    \sup_{b \in [0,B]} R(\hat{\theta}_{\AVG},b\mathbf{e}_1) \leq \inf_{\hat{\theta} \in \mathcal{D}_G}\sup_{b \in [0,B]}R(\hat{\theta},b\mathbf{e}_1)  + \Tilde{O}\left(\frac{B^2(B\sqrt{n}+1)}{\sqrt{T}}\right).
    \]
    We now convert this bound to a bound on the original statistical game. From Theorem~\ref{thm:ftpl_minimax_invariance} we know that $\inf_{\hat{\theta} \in \mathcal{D}_G}\sup_{b \in [0,B]}R(\hat{\theta},b\mathbf{e}_1) = \inf_{\hat{\theta} \in \mathcal{D}}\sup_{\theta \in \Theta}R(\hat{\theta},\theta) = R^*$. Since the estimator $\hat{\theta}_{\AVG}$ is invariant to transformations of orthogonal group, we have $R(\hat{\theta}_{\AVG},\theta) = R(\hat{\theta}_{\AVG},\|\theta\|_2\mathbf{e}_1)$ for any $\theta \in \Theta$. Using these two results in the above inequality, we get 
    \[
    \sup_{\theta \in \Theta} R(\hat{\theta}_{\AVG},\theta) = \sup_{b \in [0,B]} R(\hat{\theta}_{\AVG},b\mathbf{e}_1) \leq R^*  + \Tilde{O}\left(\frac{B^2(B\sqrt{n}+1)}{\sqrt{T}}\right).
    \]
    This shows that the worst-case risk of $\hat{\theta}_{\AVG}$ is close to the minimax risk $R^*$. This finishes the first part of the proof.
    \paragraph{LFP.} To prove the second part, we rely on Corollary~\ref{cor:ftpl_minimax_estimators}. Instantiating it for the reduced statistical game gives us
    \[
    \inf_{\hat{\theta} \in \mathcal{D}_G} \frac{1}{T}\sum_{t=1}^T \Eover{b\sim P_t}{R(\hat{\theta},b\mathbf{e}_1 )} \geq R^* -   \Tilde{O}\left(\frac{B^2(B\sqrt{n}+1)}{\sqrt{T}} + \alpha  +\alpha' + \beta B(B\sqrt{n}+1) \sqrt{T}\right).
    \]
    Plugging in the values of $\alpha, \alpha',\beta$ in the above equation gives us
    \[
    \inf_{\hat{\theta} \in \mathcal{D}_G} \frac{1}{T}\sum_{t=1}^T \Eover{b\sim P_t}{R(\hat{\theta},b\mathbf{e}_1 )} \geq R^* - \Tilde{O}\left(\frac{B^2(B\sqrt{n}+1)}{\sqrt{T}}\right).
    \]
    From Equation~\eqref{eqn:aux_concentration_prior} we know that $P_t$ is close to $\hat{P}_t$ with high probability. Using this, we can replace $P_t$ in the above bound with $\hat{P}_t$ and obtain the following bound, which holds with probability at least $1-\delta$
    \begin{align}
    \label{eqn:lfp_regression}
        \inf_{\hat{\theta} \in \mathcal{D}_G} \frac{1}{T}\sum_{t=1}^T \Eover{b\sim \hat{P}_t}{R(\hat{\theta},b\mathbf{e}_1 )} \geq R^* - \Tilde{O}\left(\frac{B^2(B\sqrt{n}+1)}{\sqrt{T}}\right).
    \end{align}
    In the rest of the proof, we show that $\inf_{\hat{\theta} \in \mathcal{D}_G} \frac{1}{T}\sum_{t=1}^T \Eover{b\sim \hat{P}_t}{R(\hat{\theta},b\mathbf{e}_1 )} = \inf_{\hat{\theta}} R(\hat{\theta}, \hat{P}_{\text{LFP}})$. From the definition of $\hat{P}_{\text{LFP}}$, we can equivalently rewrite $R(\hat{\theta}, \hat{P}_{\text{LFP}})$ for any estimator $\hat{\theta}$ as
    \[
    R(\hat{\theta}, \hat{P}_{\text{LFP}}) = \frac{1}{T}\sum_{t=1}^T\Eover{b\sim \hat{P}_t}{\Eover{\theta \sim U}{R(\hat{\theta}, b\theta)}},
    \]
    where $U$ is the uniform distribution on the $(d-1)$ dimensional unit sphere centered at origin, in $\mathbb{R}^{d}$. Next, from Lemma~\ref{lem:invariant_bayes_estimator}, we know that the Bayes estimator corresponding to any invariant prior is an invariant estimator. Since  $\hat{P}_{\text{LFP}}$ is an invariant distribution, we have
    \[
    \inf_{\hat{\theta}\in \mathcal{D}} R(\hat{\theta}, \hat{P}_{\text{LFP}}) = \inf_{\hat{\theta}\in \mathcal{D}_G} R(\hat{\theta}, \hat{P}_{\text{LFP}}) = \inf_{\hat{\theta}\in \mathcal{D}_G}\frac{1}{T}\sum_{t=1}^T\Eover{b\sim \hat{P}_t}{\Eover{\theta \sim U}{R(\hat{\theta}, b\theta)}}.
    \]
    From Lemma~\ref{lem:invariant_risk} we know that for any invariant estimator $\hat{\theta},$ we have $R(\hat{\theta},\theta_1) = R(\hat{\theta}, \theta_2)$, whenever $\theta_1\sim\theta_2$. Using this result in the above equation gives us
    \[
    \inf_{\hat{\theta}\in \mathcal{D}} R(\hat{\theta}, \hat{P}_{\text{LFP}}) = \inf_{\hat{\theta}\in \mathcal{D}_G}\frac{1}{T}\sum_{t=1}^T\Eover{b\sim \hat{P}_t}{R(\hat{\theta}, b\mathbf{e}_1)}.
    \]
    Combining the above result with Equation~\eqref{eqn:lfp_regression} shows that $\hat{P}_{\text{LFP}}$ is approximately least favorable.
\section{Covariance Estimation}
\subsection{Proof of Proposition~\ref{prop:covariance_minimax_estimator}}
In this proof, we rely on permutation invariant functions and a representer theorem for such functions. A function $f:\mathbb{R}^d\to \mathbb{R}$ is called permutation invariant, if for any permutation $\pi$ and any $X\in\mathbb{R}^d$
\[f(\pi(X)) = f(X).\] 
The following proposition provides a representer theorem for such functions.
\begin{proposition}[\citet{zaheer2017deep}]
\label{prop:representer_thm_perm_invariant}
A function $f(X)$ from $\mathbb{R}^d$ to $\mathbb{R}$ is permutation invariant and continuous iff it can be decomposed in the form $\rho(\sum_{i=1}^d \phi(X_i))$, for some suitable transformations $\phi:\mathbb{R}\to\mathbb{R}^{d+1}$ and $\rho:\mathbb{R}^{d+1}\to\mathbb{R}$.
\end{proposition}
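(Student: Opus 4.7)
}
The plan is to obtain the claimed representation through three successive reductions of the class of estimators under which the infimum in Equation~\eqref{eqn:covariance_simplified} is taken, with no loss in the minimax value. First, I will invoke Blackwell's theorem (cited in the text preceding Proposition~\ref{prop:covariance_minimax_estimator}) to restrict attention to estimators that depend on the data $\mathbb{X}^n$ only through the sufficient statistic $S_n = \frac{1}{n}\sum_i X_iX_i^T$; this applies because the entropy loss is convex when expressed through the natural parameter $\Sigma^{-1}$, so the standard Rao--Blackwellization argument goes through. Second, I will apply the invariance theorem (Theorem~\ref{thm:invariance-est-lfp}) together with Theorem~\ref{thm:cov_estimation_invariance} to further restrict to orthogonal-equivariant estimators $\hat{\Sigma}$, i.e. those satisfying $\hat{\Sigma}(g S_n g^T) = g\,\hat{\Sigma}(S_n)\,g^T$ for all $g \in \mathbb{O}(d)$. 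Third, I will use Proposition~\ref{prop:representer_thm_perm_invariant} to put the equivariant estimator into the coordinatewise sum-decomposition form required by the statement.

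The core computation is the characterization of orthogonal-equivariant estimators on $S_n$. Writing the spectral decomposition $S_n = U\Delta U^T$, equivariance under $g$ forces $\hat{\Sigma}(S_n) = U\,\tilde{\Sigma}(\Delta)\,U^T$ for some matrix-valued function $\tilde\Sigma$; moreover, feeding $\hat{\Sigma}$ a decomposition with permuted columns (replacing $U$ by $UP$ and $\Delta$ by $P^T\Delta P$ for any permutation matrix $P$) must yield the same output, which forces $\tilde\Sigma(\Delta)$ to be diagonal and its diagonal entries $\tilde\sigma_i(\lambda) := [\tilde\Sigma(\Delta)]_{ii}$ (where $\lambda = \operatorname{diag}(\Delta)$) to satisfy the equivariance relation $\tilde\sigma_i(\lambda) = \tilde\sigma_{\pi(i)}(\pi\lambda)$ for every permutation $\pi$. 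Equivalently, there exists a single function $h:\mathbb{R}\times\mathbb{R}^{d-1}\to\mathbb{R}$ that is symmetric in its second argument such that $\tilde\sigma_i(\lambda) = h\!\bigl(\lambda_i,(\lambda_j)_{j\neq i}\bigr)$. For each fixed value of $\lambda_i$, the map $(\lambda_j)_{j\neq i}\mapsto h(\lambda_i,\cdot)$ is a continuous permutation-invariant function on $\mathbb{R}^{d-1}$ (after a continuity/approximation step on the class of estimators considered), so Proposition~\ref{prop:representer_thm_perm_invariant} yields the decomposition $h(\lambda_i,\cdot) = \rho_{\lambda_i}\!\bigl(\sum_{j\neq i}\phi_{\lambda_i}(\lambda_j)\bigr)$; absorbing the $\lambda_i$-dependence of $\rho$ and $\phi$ into a pair of joint functions $f:\mathbb{R}^{d+1}\to\mathbb{R}$ and $g:\mathbb{R}^2\to\mathbb{R}^{d}$ gives exactly $\tilde\Sigma_{f,g,i}(\Delta) = f\!\bigl(\Delta_i,\sum_{j\neq i} g(\Delta_i,\Delta_j)\bigr)$.

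The reduction of Equation~\eqref{eqn:covariance_simplified} to Equation~\eqref{eqn:covariance_reduced2} then follows immediately: each of the three reductions leaves the minimax value unchanged (Blackwell preserves it because conditioning on a sufficient statistic does not increase Bayes risk; invariance preserves it by Theorem~\ref{thm:invariance-est-lfp}; the representer step preserves it because the $f,g$ class is dense in the class of continuous orthogonal-equivariant functions of $S_n$, which is the substance of the ``approximated arbitrarily well'' qualifier). The main obstacle I anticipate is the density/approximation claim in the last step: Proposition~\ref{prop:representer_thm_perm_invariant} is stated for continuous functions, whereas a priori the restricted optimal estimator need only be measurable, and the dependence of $\rho,\phi$ on $\lambda_i$ needs to be handled uniformly enough that the resulting $f,g$ can be chosen continuous. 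I would address this by first approximating any equivariant measurable estimator by a continuous equivariant one in Bayes risk (using continuity of $\Sigma\mapsto R(\hat\Sigma,\Sigma)$ on the compact set $\Xi$ and a mollification/averaging argument on the orthogonal group), then applying the representer theorem to the approximating continuous estimator, which justifies the ``arbitrarily well'' wording and yields the claimed equality $R^*$ in Equation~\eqref{eqn:covariance_reduced2}.
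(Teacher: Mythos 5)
You have not addressed the statement you were asked to prove. The statement is Proposition~\ref{prop:representer_thm_perm_invariant} itself --- the representer theorem asserting that a continuous $f:\mathbb{R}^d\to\mathbb{R}$ is permutation invariant if and only if $f(X)=\rho\bigl(\sum_{i=1}^d\phi(X_i)\bigr)$ for suitable $\phi:\mathbb{R}\to\mathbb{R}^{d+1}$ and $\rho:\mathbb{R}^{d+1}\to\mathbb{R}$. Your proposal instead outlines a proof of Proposition~\ref{prop:covariance_minimax_estimator} (the reduction of the covariance-estimation game to the $\hat{\Sigma}_{f,g}$ class), and it \emph{uses} Proposition~\ref{prop:representer_thm_perm_invariant} as a black-box ingredient in its second reduction. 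Invoking the result you are supposed to establish is circular as a proof of that result, so there is no argument here for the statement in question; the paper itself does not reprove it either but attributes it to \citet{zaheer2017deep}, which is why a blind proof attempt needs to supply the actual argument.

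To prove the proposition you would need two directions. The ``if'' direction is immediate: $\sum_{i=1}^d\phi(X_i)$ is unchanged under any permutation of the coordinates, hence so is $\rho$ of it. The substantive ``only if'' direction proceeds by choosing the explicit embedding $\phi(x)=(1,x,x^2,\dots,x^d)$, so that $\sum_{i=1}^d\phi(X_i)$ records the power sums $p_k(X)=\sum_i X_i^k$ for $k=0,\dots,d$. By Newton's identities the power sums determine the elementary symmetric polynomials, hence determine the multiset $\{X_1,\dots,X_d\}$ (the roots of the associated monic polynomial), so the map $X\mapsto\sum_i\phi(X_i)$ is injective modulo permutations; one then defines $\rho$ on the image so that $\rho\bigl(\sum_i\phi(X_i)\bigr)=f(X)$, which is well defined precisely because $f$ is permutation invariant, and checks that $\rho$ is continuous on that image because the map from coefficients (equivalently power sums) back to the sorted root vector is continuous, after which $\rho$ can be extended to all of $\mathbb{R}^{d+1}$. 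None of these steps --- the choice of embedding, the injectivity via symmetric polynomials, and the continuity of the induced $\rho$ --- appears in your proposal, so the attempt as written leaves the entire content of the proposition unproved.
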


We now prove Proposition~\ref{prop:covariance_minimax_estimator}. First note that from Blackwell's theorem we know that there exists a minimax estimator which is just a function of the sufficient statistic, which in this case is the empirical covariance $S_n = \frac{1}{n}\sum_{i=1}^nX_iX_i^T$~\citep[see Theorem 2.1 of][]{Ibragimov81book}. So we restrict ourselves to estimators which are functions of $S_n$. This, together with Theorem \ref{thm:invariance-est-lfp}, shows that there is a minimax estimator which is a function $S_n$ and which is invariant under the action of the orthogonal group $\mathbb{O}(d)$. Let $\hat\Sigma$ be such an estimator. Since $\hat\Sigma$ is an invariant estimator, it satisfies the following equality for any orthogonal matrix $V$
\[\hat{\Sigma}(VS_nV^T)=V\hat{\Sigma}(S_n)V^T.\] Setting $V = U^T$ in the above equation, we get $\hat{\Sigma}(S_n)=U\hat{\Sigma}(\Delta)U^T.$
Hence, $\hat\Sigma$ is completely determined by it's action on diagonal matrices. So, in the rest of the proof we try to understand $\hat{\Sigma}(\Delta)$. Again relying on invariance of $\hat\Sigma$ and setting $V = \Delta'U^T$ for some diagonal matrix $\Delta'$ with diagonal elements $\pm 1$, we get
\[\hat{\Sigma}(\Delta'\Delta\Delta')= \Delta'U^T\hat{\Sigma}(S_n)U\Delta'\stackrel{(a)}{=} \Delta'\hat{\Sigma}(\Delta)\Delta',\]
where $(a)$ follows from the fact that $\hat{\Sigma}(S_n)=U\hat{\Sigma}(\Delta)U^T.$
Since $\Delta'\Delta\Delta' = \Delta$, the above equation shows that $\Delta'\hat{\Sigma}(\Delta)\Delta'=\hat{\Sigma}(\Delta)$ for any diagonal matrix $\Delta'$ with diagonal elements $\pm 1$. This shows that $\hat{\Sigma}(\Delta)$ is a diagonal matrix. 
Next, we set $V = P_{\pi}U^T$, where $P_{\pi}$ is the permutation matrix corresponding to some permutation $\pi$. This gives us
\[\hat{\Sigma}(P_{\pi}\Delta P_{\pi}^T) = P_{\pi}\hat{\Sigma}(\Delta)P_{\pi}^T.\]
This shows that for any permutation $\pi$, $\hat{\Sigma}(\pi(\Delta)) = \pi(\hat{\Sigma}(\Delta)),$ where $\pi(\Delta)$ represents permutation of the diagonal elements of $\Delta$.
In the rest of the proof, we use the notation $\Delta_i$ to denote the $i^{th}$ diagonal entry of  $\Delta$ and $\hat{\Sigma}_i(\Delta)$ to denote the $i^{th}$ diagonal entry of $\hat{\Sigma}(\Delta)$. The above property of $\hat\Sigma$ shows that $\hat{\Sigma}_i(\Delta)$ doesn't depend on the ordering of the elements in $\{\Delta_{j}\}_{j\neq i}$. This follows by choosing any permutation $\pi$ which keeps the $i^{th}$ element fixed. Next, by considering the permutation which only exchanges positions $1$ and $i$, we get
\[
\hat\Sigma_i(\Delta_1,\dots \Delta_i, \dots \Delta_d) = \hat\Sigma_1(\Delta_i,\dots \Delta_1, \dots \Delta_d).
\]
Thus $\hat\Sigma_i$ can be expressed in terms of $\hat\Sigma_1$. Represent $\hat\Sigma_1$ by $\hat\Sigma_0$. Combining the above two properties, we have
\begin{align*}
    \hat{\Sigma}_i(\Delta) = \hat\Sigma_0(\Delta_i, \{\Delta_{j}\}_{j\neq i}),
\end{align*}
where $\{\Delta_{j}\}_{j\neq i}$ represents the independence of $\hat\Sigma_0$ on the ordering of elements $\{\Delta_{j}\}_{j\neq i}$.
Now, consider the function $\hat\Sigma_0(\Delta_1,\{\Delta_j\}_{j=2}^d)$. 
For any fixed $a$, and $\Delta_1=a$, $\hat{\Sigma}_0(a, \{\Delta_{j}\}_{j=2}^d)$ is a permutation invariant function. 
Using Proposition~\ref{prop:representer_thm_perm_invariant}, $\hat{\Sigma}_0(a, \{\Delta_{j}\}_{j=2}^d)$ can be written as
\begin{align*}
    \hat{\Sigma}_0(a, \{\Delta_{j}\}_{j=2}^d) =  f_{a}\left(\sum_{j=2}^d g_{a}(\Delta_{j})\right),
\end{align*}
for some functions $f_a, g_a$. We overload the notation and define $f_a(x) = f(a,x)$ and $g_a(x) = g(a,x)$. Using this, we can represent $\hat{\Sigma}_i(\Delta)$  as
\[
\hat{\Sigma}_i(\Delta) = f\left(\Delta_i, \sum_{j\neq i} g(\Delta_i,\Delta_{j})\right),
\] 
for some functions $f,g$. 
There is a small technicality which we ignored while using Proposition~\ref{prop:representer_thm_perm_invariant} on $\hat{\Sigma}_0$. Proposition~\ref{prop:representer_thm_perm_invariant} only holds for continuous functions. Since $\hat{\Sigma}_0$ is not guaranteed to be continuous, the proposition can't be used on this function.  However, this is not an issue because any measurable function is a limit of continuous functions. Since $\hat{\Sigma}_0$ is a measurable function, it can be approximated arbitrarily close in the form of $f_{a}\left(\sum_{j=2}^d g_{a}(\Delta_{j})\right)$. 

To conclude the proof of the proposition, we note that 
\[
\inf_{\hat{\Sigma} \in \mathcal{M}_{\mathcal{D},G}} \sup_{\lambda\in\Xi_G}  R(\hat{\Sigma}, \text{Diag}(\lambda))= \inf\limits_{\hat{\Sigma} \in \mathcal{M}_{f,g}}\sup\limits_{\lambda \in \Xi_G}R(\hat{\Sigma},\text{Diag}(\lambda)).
\]
This is because the minimax estimator can be approximated arbitrarily well using estimators of the form $\hat{\Sigma}_i(\Delta) = f\left(\Delta_i, \sum_{j\neq i} g(\Delta_i,\Delta_{j})\right)$ and the fact that the model class has absolutely continuous distributions.

\section{Entropy Estimation}
\subsection{Proof of Proposition~\ref{prop:entropy_minimax_estimator}}
\label{sec:entropy-proof}
First note that any estimator of entropy is a function of $\hat{P}_n$, which is a sufficient statistic for the problem. This, together with Theorem~\ref{thm:invariance-est-lfp}, shows that there is a minimax estimator which is a function of $\hat{P}_n$ and which is invariant under the action of permutation group. Let $\hat{f}:\mathbb{R}^d\to\mathbb{R}$ be such an estimator. Since $\hat{f}$ is invariant, it satisfies the following property for any permutation $\pi$
\[
\hat{f}(\pi(\hat{P}_n)) = \hat{f}(\hat{P}_n).
\]
If $\hat{f}(\hat{P}_n)$ is continuous, then Proposition~\ref{prop:representer_thm_perm_invariant} shows that it can written as $g\left(\sum_{j=1}^d h(\hat{p}_j)\right)$, for some functions $h:\mathbb{R}\to\mathbb{R}^{d+1}, g:\mathbb{R}^{d+1}\to\mathbb{R}$. Even if it is not continuous, since it is a measurable function, it is a limit of continuous functions. So $\hat{f}$ can be approximated arbitrarily close in the form of $g\left(\sum_{j=1}^d h(\hat{p}_j)\right)$. This also implies the statistical game in Equation~\eqref{eqn:entropy_simplified} can reduced to the following problem
\[
\inf_{\hat{f} \in \mathcal{M}_{\mathcal{D},G}} \sup_{P\in\mathcal{P}_G}  R(\hat{f}, P) = \inf_{\hat{f} \in \mathcal{M}_{g,h}} \sup_{P\in\mathcal{P}_G}  R(\hat{f}, P).
\]

\section{Experiments}
\label{sec:exps_appendix}
\subsection{Covariance Estimation}
In this section, we compare the performance of various estimators at randomly generated $\Sigma$'s. 
 We use beta distribution to randomly generate $\Sigma$'s with varying spectral decays and compute the average risks of all the estimators at these $\Sigma$'s. Figure~\ref{fig:cov_estimation_beta} presents the results from this experiment. It can be seen that our estimator has better average case performance than empirical and James Stein estimators.
\begin{figure}[tbh]
\centering
\includegraphics[width=0.45\textwidth]{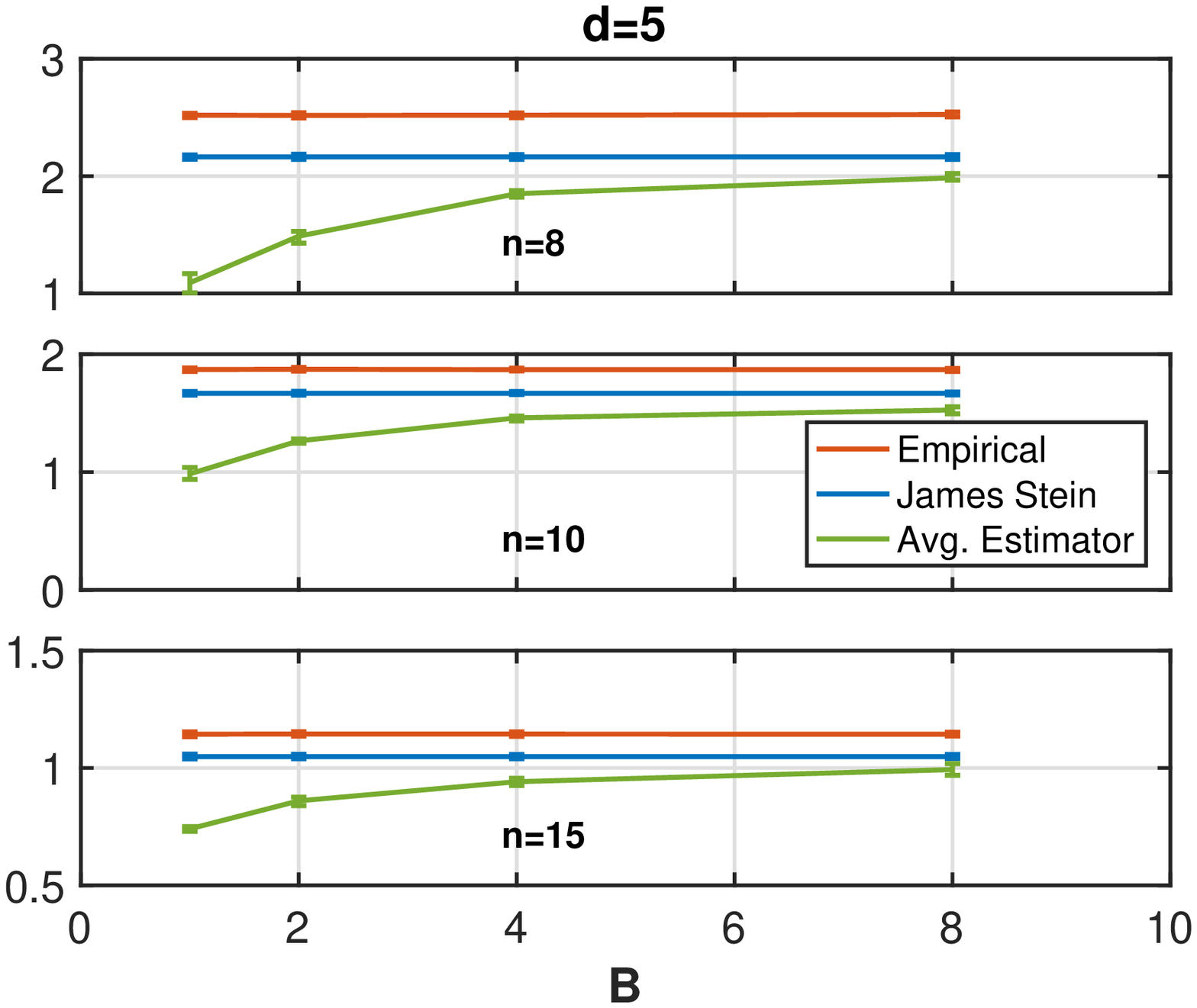}
\includegraphics[width=0.45\textwidth]{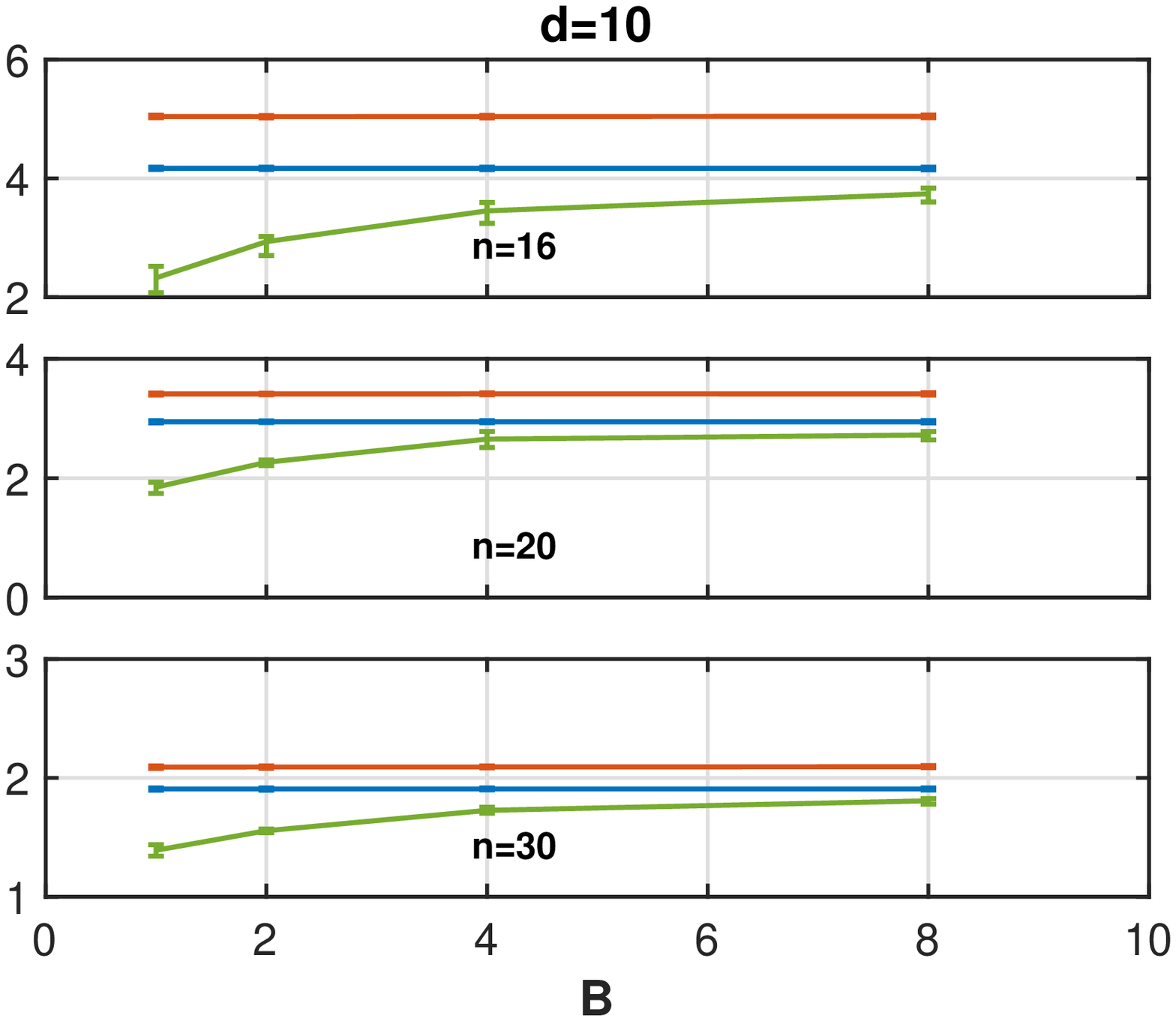}
\caption{\small{Risk of various estimators for covariance estimation evaluated at randomly generated $\Sigma$'s. We generated multiple $\Sigma$'s whose eigenvalues are randomly sampled from a Beta distribution with various parameters and averaged the risks of estimators at these $\Sigma$'s. Plots on the left correspond to $d=5$ and the plots on the right correspond to $d=10$.}}
\label{fig:cov_estimation_beta}
\end{figure}

\subsection{Entropy Estimation}
In this section, we compare the performance of various estimators at randomly generated $P$'s. We use beta distribution to randomly generate $P$'s and compute the average risks of all the estimators at these $P$'s. Figure~\ref{fig:entropy_estimation_beta}  presents the results from this experiment. 
\begin{figure}[ht]
\centering
\includegraphics[trim={3cm 8cm 3cm 8cm},clip,width=0.45\textwidth]{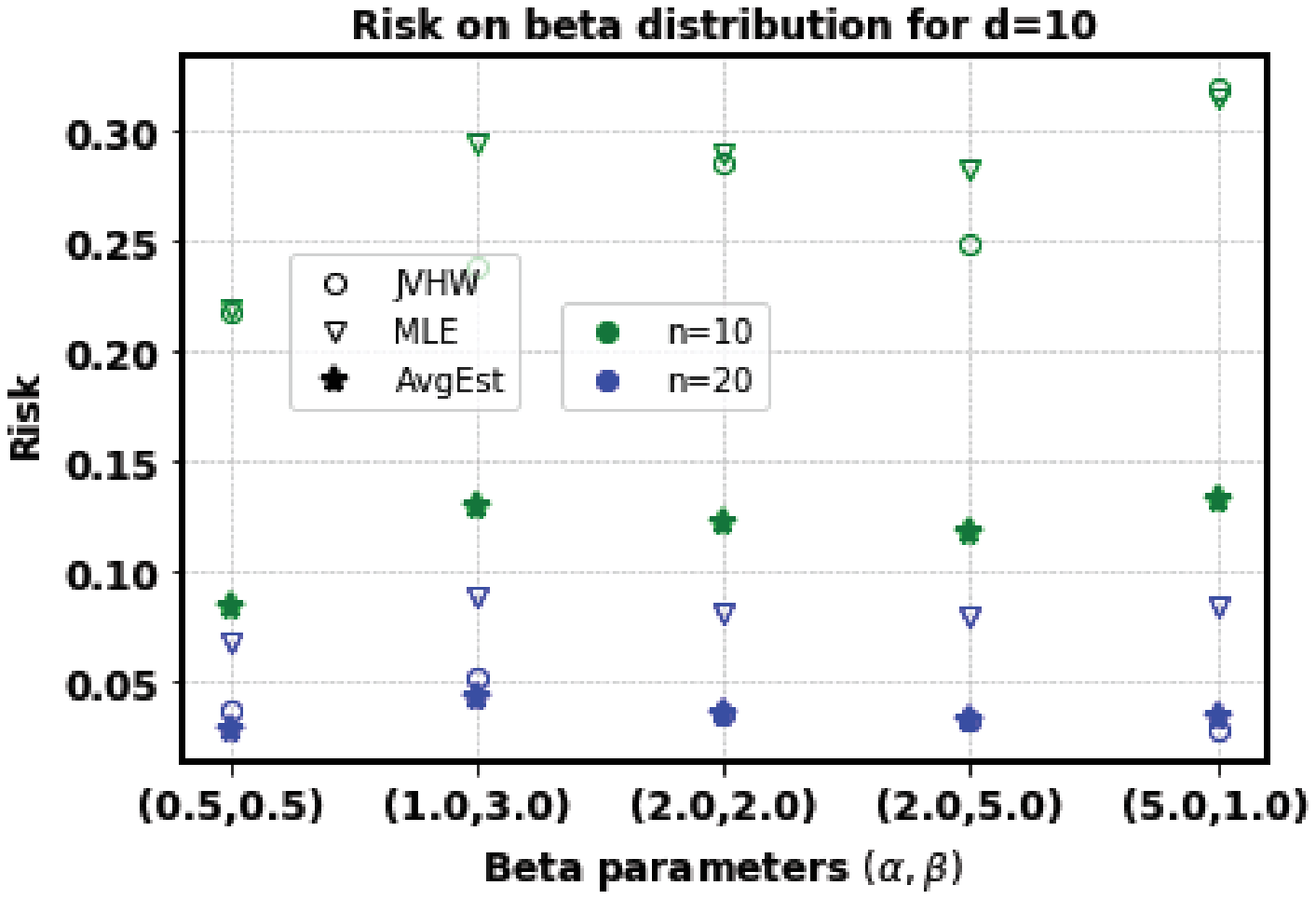}
\includegraphics[trim={3cm 8cm 3cm 8cm},clip,width=0.45\textwidth]{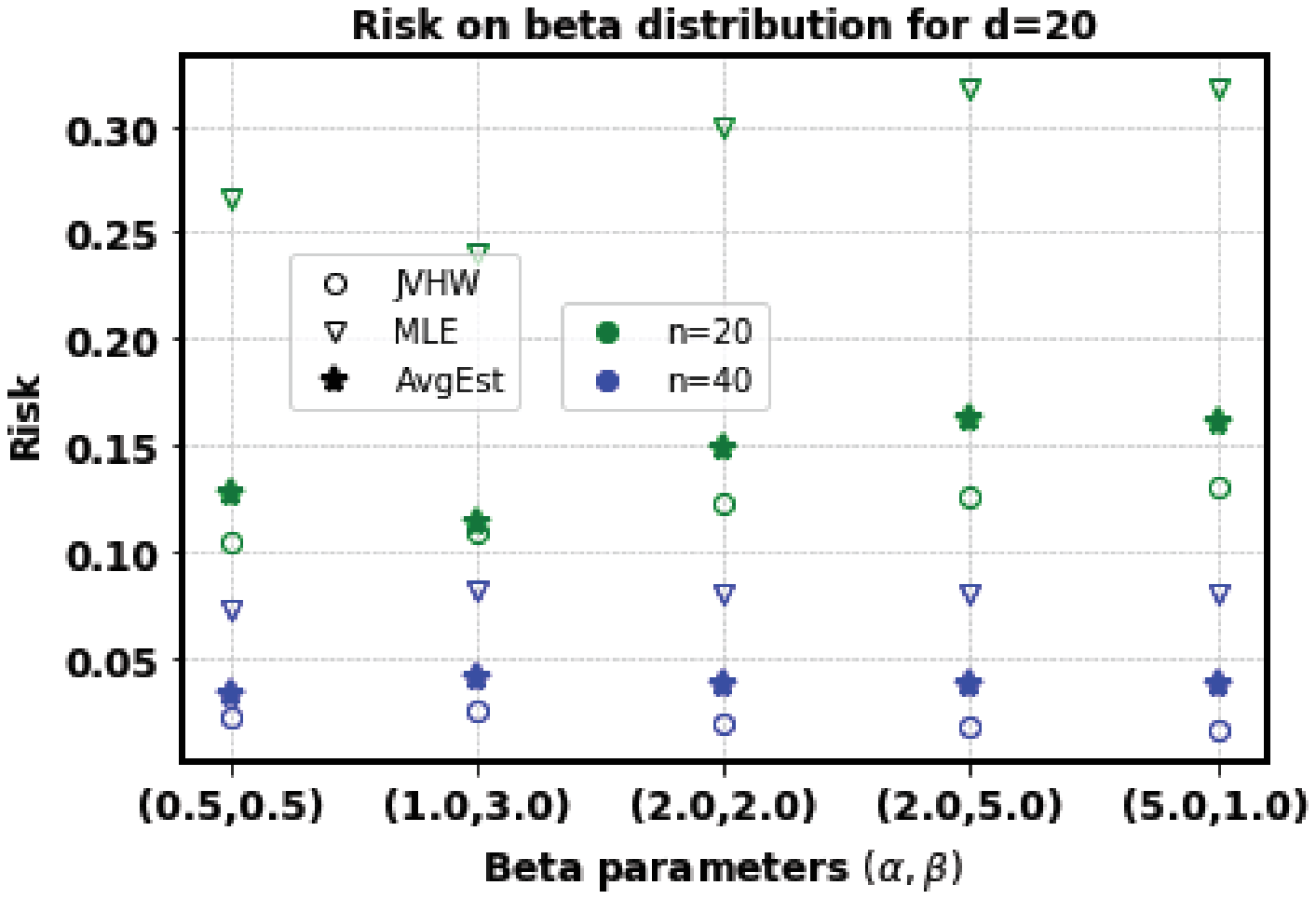}
\includegraphics[trim={3cm 8cm 3cm 8cm},clip,width=0.45\textwidth]{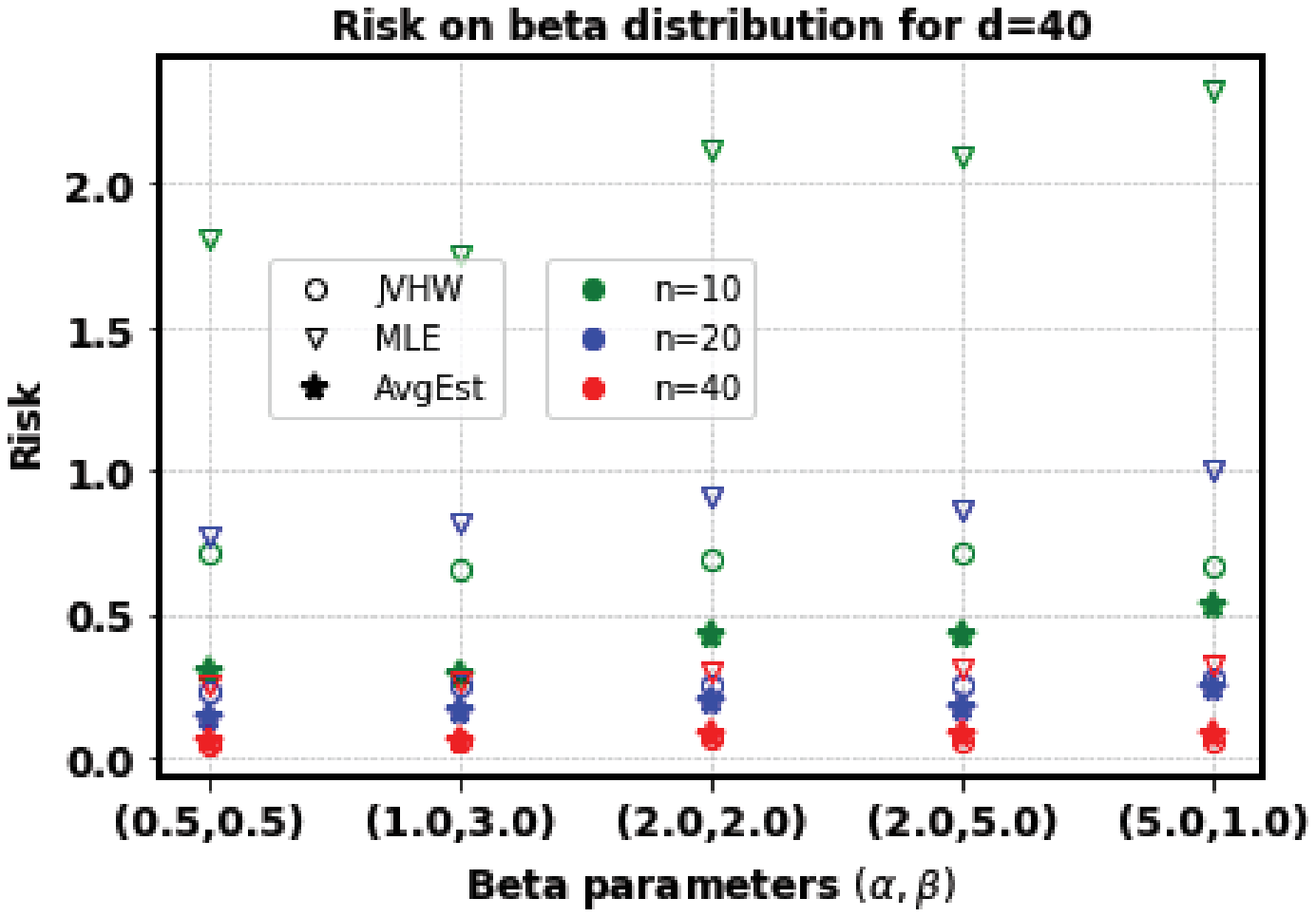}
\includegraphics[trim={3cm 8cm 3cm 8cm},clip,width=0.45\textwidth]{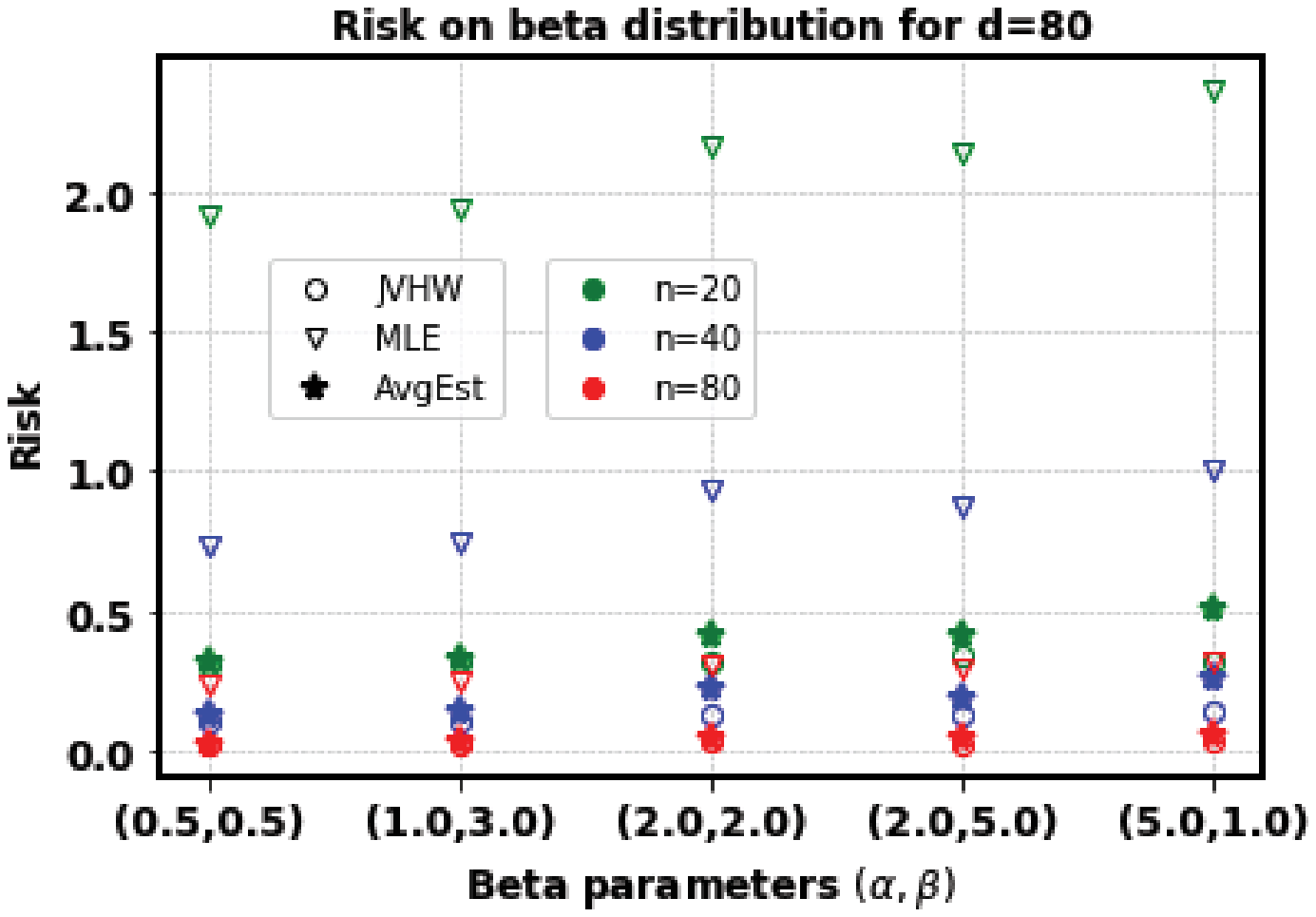}
\caption{\small{Risk of various estimators for entropy estimation evaluated at randomly generated distributions. We generated multiple $P$'s with $p_i$'s sampled from a Beta distribution and averaged the risks of estimators at these $P$'s. }}
\label{fig:entropy_estimation_beta}
\end{figure}

 \end{document}